\documentclass{article}

\usepackage{arxiv}

\usepackage[utf8]{inputenc} % allow utf-8 input
\usepackage[T1]{fontenc}    % use 8-bit T1 fonts
\usepackage{hyperref}       % hyperlinks
\usepackage{url}            % simple URL typesetting
\usepackage{booktabs}       % professional-quality tables
\usepackage{amsfonts}       % blackboard math symbols
\usepackage{nicefrac}       % compact symbols for 1/2, etc.
\usepackage{microtype}      % microtypography
\usepackage{graphicx}
\usepackage{natbib}
\usepackage{doi}

\usepackage{multirow}
\usepackage{layout}

% AMS
\usepackage{amsmath}
\usepackage{amsthm}
\usepackage{amssymb}
\usepackage{amsfonts}
\usepackage{mathtools}
\usepackage{siunitx}

\usepackage{enumitem}

\usepackage{bbm}

\usepackage{url}
\usepackage{color}
\usepackage{graphicx}
\usepackage{verbatim}

\usepackage{cleveref}

\usepackage{apptools}
\usepackage{thmtools}
\usepackage{mdframed}
\usepackage{algorithm, algpseudocode}

\usepackage[flushleft]{threeparttable}
\usepackage{array,booktabs,makecell}

\usepackage{listings}
\usepackage{xcolor}

% paper specific macros

% general macros

\newcommand{\norm}[1]{\left\| #1 \right\|}

 % inner product
\newcommand{\inp}[2]{\left\langle#1,#2\right\rangle} % inner product

\newcommand{\parens}[1]{\left( #1 \right)}
\newcommand{\brac}[1]{\left\{ #1 \right\}}

% caligraphic

\newcommand{\cB}{\mathcal{B}}

\newcommand{\cD}{\mathcal{D}}

\newcommand{\cL}{\mathcal{L}}

\newcommand{\cO}{\mathcal{O}}
\newcommand{\cP}{\mathcal{P}}

\newcommand{\cX}{\mathcal{X}}

% bold vectors

\newcommand{\supp}{\textnormal{supp}}

% strange stuff

\newcommand{\del}[1]{}

% basic macros
\newcommand{\R}{\mathbb{R}} % reals

\newcommand{\I}[1]{\mathbb{I}\left(#1\right)}

 % such that
% \newcommand{\eqdef}{\stackrel{\text{def}}{=}}
\newcommand{\eqdef}{:=} %\vcentcolon

\newcommand{\Prob}[1]{{\mathbb{P}}\left(#1\right)} % probability
% \newcommand{\Prob}[1]{{\mathbf{Prob}}\left(#1\right)} % probability

 % identity

% statistics
\newcommand{\Exp}[1]{{\mathbb{E}}\left[#1\right]}

\newcommand{\ExpCond}[2]{{\mathbb{E}}\left[\left.#1\right\vert#2\right]}
\newcommand{\ExpSub}[2]{{\mathbb{E}}_{#1}\left[#2\right]}

       % correlation

% functions and operators
         % domain
         % epigraph
        % nullspace/kernel
  % nullpsace
     % range
        % image

% topology
    % interior
         % strong relative interior
       % relative interior
       % boundary
           % closure
         % set of zeros

% vectors, matrices

\DeclareMathOperator{\tr}{tr}           % trace
       % rank
           % stable rank
       % convex hull
       % Diag(v) = diagonal matrix with v_i on the diagonal
       % diag(D) = the diagonal vector of matrix D

\DeclareMathOperator*{\argmax}{arg\,max}
\DeclareMathOperator*{\argmin}{arg\,min}

\def\ceil#1{\left\lceil #1 \right\rceil}
\def\flr#1{\left\lfloor #1 \right\rfloor}

%%%%%%%%

% TODO: Fix here fonts

%\newcommand{\norm}[1]{\left\lVert#1\right\rVert_2}

\usepackage{tcolorbox}
\usepackage{pifont}
\definecolor{mydarkgreen}{RGB}{39,130,67}
\definecolor{mydarkred}{RGB}{192,47,25}
\definecolor{mydarkorange}{RGB}{39,130,67}
\definecolor{yaleblue}{rgb}{0.06, 0.3, 0.57}
\definecolor{myred}{RGB}{215,60,50}
\definecolor{coral}{HTML}{FF7F50}
\definecolor{peach}{HTML}{CC5500}
\definecolor{NavyBlue}{RGB}{0, 0, 128}
\definecolor{CrimsonRed}{RGB}{220, 20, 60}
\definecolor{Gold}{RGB}{204, 172, 0}
\definecolor{PaleRed}{rgb}{0.95, 0.3, 0.3}
\definecolor{MPLOrange}{rgb}{1.0, 0.6470588235294118, 0.0}
\definecolor{MPLGreen}{rgb}{0.0, 0.5019607843137255, 0.0}
\definecolor{ForestGreen}{RGB}{34,139,34}
\definecolor{OliveGreen}{RGB}{107,142,35}
\definecolor{PurplePrint}{RGB}{117, 112, 179}
\definecolor{GreenPrint}{RGB}{27, 158, 119}
\definecolor{RedPrint}{RGB}{217, 95, 2}
\definecolor{VeryLightGray}{rgb}{0.9,0.9,0.9}

\newcommand{\algname}[1]{{\sf #1}}
\newcommand{\algnamesmall}[1]{{\small \sf #1}}
 % for subsections
 % for sections

\newcommand{\newalgsmall}{\algnamesmall{Drop-Muon}}

\newcommand{\lmo}[2]{{\rm LMO}_{#1}(#2)}

%%%%%%%%

\mdfdefinestyle{theoremstyle}{
    backgroundcolor=lightgray!12, % Light gray background
    linecolor=lightgray!12, % Border color
    innertopmargin=6pt, % Inner margins
    innerbottommargin=4pt,
    innerrightmargin=5pt,
    innerleftmargin=5pt
}

\declaretheoremstyle[
    headfont=\bfseries,
    bodyfont=\normalfont,
    mdframed={
        style=theoremstyle
    }
]{graytheoremstyle}

\declaretheorem[
    name=Theorem,
    style=graytheoremstyle,
    numberwithin=section
]{theorem}

\declaretheorem[
    name=Lemma,
    style=graytheoremstyle,
    numberwithin=section
]{lemma}

\declaretheorem[
    name=Assumption,
    style=graytheoremstyle,
    numberwithin=section
]{assumption}

\theoremstyle{plain}

\theoremstyle{definition}
\newtheorem{example}{Example}
\newtheorem{remark}[theorem]{Remark}

% Hack that enables labeling of lines in algorithms

\definecolor{codegreen}{rgb}{0,0.6,0}
\definecolor{codegray}{rgb}{0.5,0.5,0.5}
\definecolor{codepurple}{rgb}{0.58,0,0.82}
\definecolor{backcolour}{rgb}{0.95,0.95,0.92}

\lstdefinestyle{mystyle}{
    backgroundcolor=\color{backcolour},   
    commentstyle=\color{codegreen},
    keywordstyle=\color{magenta},
    numberstyle=\tiny\color{codegray},
    stringstyle=\color{codepurple},
    basicstyle=\ttfamily\footnotesize,
    breakatwhitespace=false,         
    breaklines=true,                 
    captionpos=b,                    
    keepspaces=true,                 
    numbers=left,                    
    numbersep=5pt,                  
    showspaces=false,                
    showstringspaces=false,
    showtabs=false,                  
    tabsize=2
}

\lstset{style=mystyle}

\newcommand{\squeeze}{}

\title{Drop-Muon: Update Less, Converge Faster}

%\date{September 9, 1985}	% Here you can change the date presented in the paper title
%\date{} 					% Or removing it

\author{Kaja Gruntkowska \\
	    KAUST\thanks{King Abdullah University of Science and Technology}\\
    	Center of Excellence for Generative AI\\ 	
	    Thuwal, Saudi Arabia \\
        \And
    	Yassine Maziane \\
    	KAUST$^*$ \\	
    	Center of Excellence for Generative AI\\ 
        Thuwal, Saudi Arabia \\
        \And
    	Zheng Qu \\
    	Shenzhen University \\	
    	School of Mathematical Sciences\\ 
        Shenzhen, China \\
    	\And
    	Peter Richt\'{a}rik \\
    	KAUST$^*$ \\	
    	Center of Excellence for Generative AI\\ 
        Thuwal, Saudi Arabia \\
}

% Uncomment to remove the date
\date{}

% Uncomment to override  the `A preprint' in the header
%\renewcommand{\headeright}{Technical Report}
%\renewcommand{\undertitle}{Technical Report}

\usepackage[hyperpageref]{backref}
\hypersetup{ 
    colorlinks = true,
    urlcolor = yaleblue,
    linkcolor = yaleblue,
    citecolor = yaleblue% {blue!50!black}
}
\renewcommand*{\backref}[1]{}% for backref < 1.33 necessary
\renewcommand*{\backrefalt}[4]{%
   \ifcase #1 %
     \footnotesize{(Not cited.)}%
   \or
     \footnotesize{(Cited on page~#2)}%
   \else
     \footnotesize{(Cited on page~#2)}%
\fi }

\allowdisplaybreaks

\begin{document}
\maketitle

\begin{abstract}
	Conventional wisdom in deep learning optimization dictates updating all layers at every step--a principle followed by all recent state-of-the-art optimizers such as \algnamesmall{Muon}. In this work, we challenge this assumption, showing that full-network updates can be fundamentally suboptimal, both in theory and in practice. We introduce a non-Euclidean Randomized Progressive Training method---{\newalgsmall}---a simple yet powerful framework that updates only a subset of layers per step according to a randomized schedule, combining the efficiency of progressive training with layer-specific non-Euclidean updates for top-tier performance. We provide rigorous convergence guarantees under both layer-wise smoothness and layer-wise $(L^0, L^1)$-smoothness, covering deterministic and stochastic gradient settings, marking the first such results for progressive training in the stochastic and non-smooth regime. Our cost analysis further reveals that full-network updates are not optimal unless a very specific relationship between layer smoothness constants holds. Through controlled CNN experiments, we empirically demonstrate that {\newalgsmall} consistently outperforms full-network \algnamesmall{Muon}, achieving the same accuracy up to $1.4\times$ faster in wall-clock time. Together, our results suggest a shift in how large-scale models can be efficiently trained, challenging the status quo and offering a highly efficient, theoretically grounded alternative to full-network updates.
\end{abstract}

\section{Introduction}\label{sec:intro}

Since their debut, \algnamesmall{Adam} and related methods \citep{kingma2015adam, loshchilov2019decoupled} have dominated deep learning optimization. Yet, the field is now at an inflection point. Recent advances highlight a new generation of algorithms designed to better capture the geometry of modern models, with
\algnamesmall{Muon} \citep{jordan2024muon} and its successors---\algnamesmall{Scion} \citep{pethick2025training} and \algnamesmall{Gluon} \citep{riabinin2025gluon}---emerging as promising alternatives. Fueled by state-of-the-art performance in large language models (LLMs) training \citep{liu2025muon, shah2025practical, therien2025muloco, moonshotai2025, wen2025fantastic} and emerging theoretical developments \citep{pethick2025training, kovalev2025understanding, riabinin2025gluon}, these methods are on track to disrupt entrenched practices.

Central to their design are the layer-specific \emph{linear minimization oracles} (LMOs) over non-Euclidean norm balls, enabling better alignment with the highly anisotropic loss landscapes of neural networks.
Concretely, let $X = [X_1, \ldots, X_b]$ denote the parameters of a $b$-layer model, with $X_i$ indexing the parameters of layer $i \in[b] \eqdef \{1, \ldots, b\}$. Each of the aforementioned optimizers can be viewed as an instance of the general update rule
\begin{align}\label{eq:muon}
    X_i^{k+1} = X_i^k + \lmo{\cB_i(0,t_i^k)}{M_i^k}, \qquad {\color{myred} i\in\{1, \ldots, b\}}
\end{align}
(see \Cref{sec:muon}). Here, $\cB_i(X_i,t_i) \eqdef \{Z_i \in \cX_i: \norm{X_i - Z_i}_{(i)} \leq t_i\}$ is a ball of radius $t_i$ centered at $X_i$ in the vector space $\cX_i$, where $\norm{\cdot}_{(i)}$ is a norm chosen for the $i$th layer, $M_i^k$ is a momentum term, and the linear minimization oracle is defined as $\lmo{\cB_i(X_i,t_i)}{M_i} \eqdef \argmin_{Z_i \in \cB_i(X_i,t_i)} \inp{M_i}{Z_i}$.
Different choices of $\norm{\cdot}_{(i)}$ produce different algorithms (for example, \algnamesmall{Muon} uses the spectral norm for hidden layers). Crucially, however, all these updates share a common characteristic: {\color{myred} all layers are updated at every iteration}.  
In this work, we question this design choice. Our central hypothesis is simple yet fundamental:
\begin{center}
    \emph{Updating the entire network at every step may \textbf{not} be optimal.}
\end{center}
In the remainder of this paper, we demonstrate that the default practice of full-network updates is indeed not universally the best choice, from both \emph{theoretical} and \emph{practical} points of view, calling into question a core principle of standard training protocols.

\subsection{Background}

Let us begin by formalizing the setup. We consider the optimization problem
\begin{align}\label{eq:problem}
    \min_{X\in\cX} \brac{f(X) \eqdef \ExpSub{\xi\sim\cP}{f(X; \xi)}},
\end{align}
where $X \in \cX$ represents the collection of trainable parameters of a neural network. Specifically, $X$ is composed of block variables $X_i \in \cX_i \eqdef \R^{m_i \times n_i}$ corresponding to layer $i \in[b]$; we write $X = [X_1, \ldots, X_b]$. In this context, $\cX$ is the $d$-dimensional product space
\begin{align*}
    \squeeze \cX \eqdef \bigotimes_{i = 1}^b \cX_i \equiv \cX_1 \otimes \ldots \otimes \cX_b,
\end{align*}
where $d \eqdef \sum_{i=1}^b m_i n_i $. Each function $f(\cdot; \xi): \cX \to \R$ is continuously differentiable, potentially nonconvex and non-smooth, and represents the loss of the model evaluated at a data point $\xi$ sampled from the probability distribution $\cP$. We denote by $\nabla_i f(X) \in \cX_i$ the gradient component corresponding to the $i$th layer, so that $\nabla f(X) = [\nabla_1 f(X), \dots, \nabla_b f(X)] \in \cX$.
Each space $\cX_i$ is equipped with the trace inner product, defined as  $\langle X_i, Y_i \rangle_{(i)} \eqdef \tr(X_i^{\top} Y_i)$ for $X_i,Y_i \in \cX_i$, which induces the standard Euclidean norm, denoted by $\norm{\cdot}_2$. In addition, each space is endowed with an arbitrary norm $\norm{\cdot}_{(i)}$ (which need not be induced by this inner product). We let $\norm{\cdot}_{(i) \star}$ be the dual norm associated with~$\norm{\cdot}_{(i)}$ (i.e., $\|X_i\|_{(i) \star} \eqdef \sup_{\norm{Z_i}_{(i)} \leq 1} \inp{X_i}{Z_i}_{(i)}$ for any $X_i\in \cX_i$).

\begin{algorithm}[t]
\caption{{\newalgsmall}}\label{alg:rt_arbitrary_stoch}
\begin{algorithmic}[1]
\State \textbf{Input:} initial iterate $X^0 = [X_1^0, \dots, X_b^0] \in \cX$; momentum $M^0=[M_1^0, \dots, M_b^0]\in \cX$; stepsizes $\gamma_i^k > 0$; momentum parameters $\beta^k \in [0,1)$
\For{$k = 0, \dots, K-1$}
    \State Sample $\xi^k \sim \cP$ and the set of \emph{active} layers $S^k\sim\cD$
    \For{$i \not\in S^k$} \hfill {\scriptsize\color{gray} $\triangleright$ Freeze layers not selected as \emph{active}}
        \State $M_i^k = M^{k-1}_i$
        \State $X_i^{k+1} = X_i^k$
    \EndFor
    \For{$i \in S^k$} \hfill {\scriptsize\color{gray} $\triangleright$ Update \emph{active} layers}
        \State Update momentum $M_i^k = (1-\beta_i) M^{k-1}_i + \beta_i \nabla_i f(X^k; \xi^k)$
        \State Update parameters via
        \begin{align}\label{eq:sharp_upd_stoch}
            X_i^{k+1} = \lmo{\cB(X_i^k,t_i^k)}{M_i^k}
			= X_i^k - \gamma_i^k \parens{M_i^k}^{\sharp}
        \end{align}
    \EndFor
\EndFor
\end{algorithmic}
\end{algorithm}

\section{The Algorithm}

Before summarizing our main contributions (see the end of \Cref{sec:cost_model}), we dive directly into presenting our method. Motivated by the considerations in \Cref{sec:intro}, we propose {\newalgsmall} (\Cref{alg:rt_arbitrary_stoch})--a non-Euclidean layer-wise optimizer for deep learning based on the idea of sub-network training.
At each iteration~$k$, instead of updating the entire network as in standard \algnamesmall{Muon}, {\newalgsmall} samples a random subset $S^k \subseteq [b]$ of layers according to a user-defined distribution $\cD$ and updates only the parameters of layers in~$S^k$, keeping all other layers frozen.
As the reader may have noticed, the main LMO update step \eqref{eq:sharp_upd_stoch} admits two equivalent formulations. The alternative representation of \eqref{eq:muon} uses the \emph{sharp operator} \citep{nesterov2012efficiency, kelner2014almost}, defined for any $M \in \cX$ by $M^{\sharp} \eqdef \argmax_{X \in \cX} \{\inp{M}{X} - \frac{1}{2} \norm{X}^2\}$. It is well known that $M^{\sharp}$ relates to the LMO via $M^{\sharp} = - \norm{M}_{\star} \lmo{\cB(0,1)}{M}$, and hence
\begin{eqnarray}\label{eq:lmo_sharp}
    \squeeze X_i^{k+1} = X_i^k + t_i^k \lmo{\cB_i(0,1)}{M_i^k} = X_i^k - \frac{t_i^k}{\norm{M_i^k}_{(i) \star}} \parens{M_i^k}^{\sharp},
\end{eqnarray}
which corresponds to a layer-wise normalized \emph{steepest descent} step with stepsize $\gamma_i^k \eqdef \nicefrac{t_i^k}{\norm{M_i^k}_{\star}}$.
When $\norm{\cdot}_{(i)}=\norm{\cdot}_2$ is the standard Euclidean norm, the sharp operator reduces to the identity mapping, so that $M^{\sharp} = M$, and the update coincides with Stochastic Gradient Descent with momentum (\algnamesmall{SGDM}) \citep{cutkosky2020momentum}, though here performed layer-wise. These equivalent formulations will be repeatedly invoked in the proofs of the results from Sections~\ref{sec:iter_det} and~\ref{sec:iter_stoch}.

\section{Cost Model}\label{sec:cost_model}

Our key theoretical contribution is that strategically skipping some layer updates may lead to performance gains.
To isolate the core phenomenon, we first study a simplified variant of the method without stochasticity in the gradients and momentum, described in \Cref{alg:rt_arbitrary}. This deterministic version follows the same fundamental principles as \Cref{alg:rt_arbitrary_stoch}, with the difference that momentum terms $M_i^k$ are replaced by the components of the full gradient $\nabla_i f(X^k)$.
When $\norm{\cdot}_{(i)}=\norm{\cdot}_2$ is the Euclidean norm, the update \eqref{eq:sharp_upd} coincides with that of layer-wise Gradient Descent~(\algnamesmall{GD}).
 
\begin{algorithm}[t]
    \begin{algorithmic}[1]
    \State {\bf Input:} initial iterate $X^0 = [X_1^0, \ldots, X_b^0] \in \cX$; stepsizes $\gamma_i^k > 0$, $i\in[b]$, $k\geq0$
    \For{$k = 0, \dots, K-1$}
        \State Sample $S^k\sim\cD$
        \State Freeze the layers $\{X_i\}_{i\not\in S^k}$ ($X_i^{k+1} = X_i^k$ for $i\not\in S^k$)
        \State Update the layers $\{X_i\}_{i\in S^k}$ via
		\begin{align}\label{eq:sharp_upd}
			X_i^{k+1} = X_i^k - \gamma_i^k \parens{\nabla_i f(X^k)}^{\sharp}
		\end{align}
    \EndFor
    \caption{{\newalgsmall} (deterministic gradient variant)}
	\label{alg:rt_arbitrary}
	\end{algorithmic}
\end{algorithm}

How expensive is one step of \Cref{alg:rt_arbitrary}? The answer is governed by the sampling distribution $\cD$. Consider iteration $k$ and denote $s^k \eqdef \min S^k$ (the smallest index of an active layer). The operations performed by the algorithm can be summarized as follows:
\begin{enumerate}[label=(\roman*)]
    \item \textbf{Backward pass:}
    Backpropagate gradients through layers $[X^k_{s^k}, \ldots, X^k_b]$. Since layers $1, \ldots, s^k - 1$ are frozen, no gradients are computed for them, effectively truncating the gradient flow at the first active layer.
    
    \item \textbf{Forward pass:}
    To evaluate the loss, activations must in principle be propagated through all layers $1, \ldots, b$. However, since only layers $[X^k_{s^k}, \ldots, X^k_b]$ are updated at iteration $k$, the activations up to layer $s^k - 1$ may be cached and reused in the next step.
    
    \item \textbf{Gradient transformation:}
    Given the gradients $\{\nabla_i f(X^k)\}_{i\in S^k}$, compute the corresponding sharp operators $\{(\nabla_i f(X^k))^{\sharp}\}_{i\in S^k}$ (or, equivalently, the LMOs; see \eqref{eq:lmo_sharp}).
        
    \item \textbf{Parameter updates:} 
    Update the parameters of layers $\{X_i^k\}_{i\in S^k}$ using their computed (transformed) gradients, while keeping the frozen layers unchanged.
\end{enumerate}

To model the total computational effort of the optimization procedure, we associate a \emph{cost} with each step (measured, for example, in FLOPs or wall-clock time). Let $c_{\mathrm{ov}} \geq 0$ denote the fixed per-iteration overhead (e.g., data loading).
As noted above, backpropagation must be performed from the last layer $b$ down to layer $s^k$, while forward-pass activations up to layer $s^k - 1$ can be cached and reused in subsequent iterations. Hence, the costs of steps (i) and (ii) can be aggregated into a single per-layer constant $c_i > 0$ for each $i \in [b]$.
In the non-Euclidean setting (where $M^{\sharp} \neq M$), an additional cost arises from computing sharp operators. We denote by $c_i^{\sharp} \geq 0$ the combined cost of evaluating this operator and performing the corresponding parameter update for layer $i$ (steps (iii) and (iv)).
Under this model, the total compute cost of iteration $k$ is  
\begin{eqnarray}\label{eq:cost_k}
    \squeeze \mathrm{cost}(S^k) \eqdef c_{\mathrm{ov}} + \sum\limits_{i=s^k}^b c_i + \sum\limits_{i \in S^k} c_i^{\sharp},
\end{eqnarray}
and, consequently, for a fixed target accuracy $\varepsilon > 0$, the expected cost of the entire optimization procedure can be expressed as
\begin{eqnarray}\label{eq:exp_cost_gen}
    \mathrm{cost}_{\varepsilon}(\cD) \eqdef K \times \Exp{\mathrm{cost}(\hat{S})},
\end{eqnarray}
where we write $\hat{S} \sim \cD$ to denote a random variable with the same distribution as that of the samplings (since $\{S^k\}_{k\geq0}$ are i.i.d.), and $K$ is the number of iterations to reach convergence (interpreted in the nonconvex case as reaching an $\varepsilon$-stationary point in expectation).
In the remainder of this paper, we compute $K$ under two smoothness regimes: layer-wise smoothness (\Cref{as:arbitrary_layer_smoothness}) and layer-wise $(L^0, L^1)$--smoothness (\Cref{as:arbitrary_layer_gen_smoothness}). We then evaluate various layer-update strategies and compute their expected costs as in \eqref{eq:exp_cost_gen}. These results converge to a single conclusion:
\begin{mdframed}[linecolor=lightgray!12, backgroundcolor=lightgray!12]
    \begin{center}
        Full-network updates are \emph{not} optimal unless a very specific relationship \\ between layer smoothness constants holds.
    \end{center}
\end{mdframed}
We provide a rigorous statement and justification for this claim in the sections that follow. The main takeaway, however, is simple: as this condition is highly unlikely to be realized in practice, updating only a subset of layers at each iteration should be seen as more efficient than the default strategy of updating all parameters at each iteration.

Our \textbf{contributions} can be summarized as follows:

1. \textbf{Challenging full network updates.}
We provide, to our knowledge, the first systematic investigation of the practice of updating \emph{all layers} of a network at every iteration. We argue---and rigorously demonstrate both in theory and in practice---that this design choice is \emph{generally suboptimal}.

2. \textbf{General framework for sub-network optimization.}
We introduce {\newalgsmall} (\Cref{alg:rt_arbitrary_stoch} and its deterministic gradient counterpart \Cref{alg:rt_arbitrary}), a principled layer-wise optimization framework with randomized layer subsampling. {\newalgsmall} strictly generalizes LMO-type methods (including \algnamesmall{Muon} \citep{jordan2024muon}, \algnamesmall{Scion} \citep{pethick2025training}, and \algnamesmall{Gluon} \citep{riabinin2025gluon}) by allowing random subsets of layers to be updated per step, with full-network training as a special case.
{\newalgsmall} supports virtually any layer sampling scheme (\Cref{sec:arbitrary_sampling}). In the main part of this paper, we focus on Randomized Progressive Training (\algnamesmall{RPT}) \citep{szlendak2024understanding}, a natural strategy aligned with backpropagation mechanics that avoids redundant gradient computations and reduces compute cost while maintaining strong convergence guarantees.

3. \textbf{Tight iteration complexity guarantees under novel smoothness regimes.}
We establish convergence guarantees for {\newalgsmall} under two regimes: \emph{layer-wise smoothness} (\Cref{thm:rpt_smooth_iter}) and \emph{layer-wise $(L^0,L^1)$--smoothness} (Theorems \ref{thm:rpt_l0l1_iter} and \ref{thm:stoch_rpt_l0l1_iter}). Our rates recover the state of the art for \algnamesmall{SGD}- and \algnamesmall{Muon}-type methods, and, to our knowledge, provide the first convergence guarantees for progressive training-style methods in the non-smooth setting.

4. \textbf{Theoretical compute-optimality results.}
To isolate the key phenomena, we first consider a deterministic gradient variant of {\newalgsmall} (\Cref{alg:rt_arbitrary}). Using a simple yet expressive cost model (\Cref{eq:cost_k}) accounting for per-layer forward/backward passes, gradient transformations, and parameter updates, we prove that full-network updates are \emph{not} optimal unless a very specific condition on layer smoothness constants holds (Theorems \ref{thm:rpt_p1} and \ref{thm:rpt_p1_l0l1}), which is unlikely in practice. This formally justifies selective layer updates as the compute-optimal default.

5. \textbf{Empirical validation.} Controlled CNN experiments on \texttt{MNIST}, \texttt{Fashion-MNIST}, and \texttt{CIFAR-10} show that {\newalgsmall} consistently outperforms standard full-network \algnamesmall{Muon}, achieving the same accuracy up to $1.4\times$ faster in wall-clock time.

\section{Randomized Progressive Training}\label{sec:rpt}

The general framework in \Cref{alg:rt_arbitrary} allows virtually any sampling strategy. However, due to the mechanics of backpropagation, it is most natural to update all layers from the last one down to some sampled minimal index. Specifically, if the smallest sampled index at iteration $k$ is $s^k$, then computing the gradient $\nabla_{s^k} f(X^k)$ requires backpropagating from the last layer $b$ up to layer $s^k$, which automatically produces all gradient components $[\nabla_{s^k} f(X^k), \ldots, \nabla_b f(X^k)]$.

Formally, we can define the sampling distribution $\cD$ as follows: at each iteration $k$, sample $s^k \in [b]$ with probabilities $p_i \eqdef \Prob{s^k=i}$, where $\sum_{i=1}^b p_i = 1$ and $p_1>0$, and set $S^k = \{s^k, \ldots, b\}$. Algorithms \ref{alg:rt_arbitrary_stoch} and \ref{alg:rt_arbitrary} then update the layers $[X^k_{s^k}, \ldots, X^k_b]$, while $[X_1^k, \ldots, X_{s^k-1}^k]$ remain frozen at their previous values (with the convention that $[X_1^k, X_0^k] = \emptyset$ when $s^k = 1$). We refer to this sampling scheme as \emph{Randomized Progressive Training} (\algnamesmall{RPT}), or \algnamesmall{Drop}-training for short (see \Cref{sec:pt_lit_rev}).

\subsection{Iteration Complexity -- Deterministic Gradient Setting}\label{sec:iter_det}

We now analyze the iteration complexity of \Cref{alg:rt_arbitrary} under two smoothness regimes; the proofs are deferred to the Appendix. Throughout, we make the standard assumption that the objective function is lower-bounded.
\begin{assumption}\label{as:lower_bound}
   There exist $f^{\star} \in \R$ such that $f(X) \geq f^{\star}$ for all $X \in \cX$.
\end{assumption}
This ensures the existence of an approximately stationary point for any desired level of accuracy.

\paragraph{Smooth case.} We first establish convergence under the \emph{layer-wise smoothness} assumption.

\begin{assumption}[$\supp(\cD)$--layer-wise smoothness]\label{as:arbitrary_layer_smoothness}
    The function $f: \cX \mapsto \R$ is $\supp(\cD)$--layer-wise $L^0$--smooth with constants $L^0 \eqdef \{(L_{1,S}^0, \ldots, L_{b,S}^0)\}_{S\in\supp(\cD)}$, $(L_{1,S}^0, \ldots, L_{b,S}^0) \in \R^b_+$, i.e., for any $S\in\supp(\cD)$,
	\begin{eqnarray*}
		\squeeze f(X + \Gamma) - f(X) - \inp{\nabla f(X)}{\Gamma}
		\leq \sum\limits_{i\in S} \frac{L_{i,S}^0}{2}\norm{\Gamma_i}_{(i)}^2.
	\end{eqnarray*}
    for all $X = [X_1, \ldots, X_b]\in \cX$ and $\Gamma = [\Gamma_1, \ldots, \Gamma_b] \in \cX$ such that $\Gamma_i = 0$ for all $i\not\in S$.
\end{assumption}
We take $L^0$ to be the smallest collection of constants satisfying the above. Throughout, we use $\supp(\cD)$ to denote the subsets of $[b]$ assigned positive probability mass by $\cD$. In the progressive training setting, these supported sets take the form  $\supp(\cD) = \{\{j,\ldots,b\}, j\in[b]\}$. By definition, $L_{i,S}^0 = 0$ whenever $i \notin S$. Moreover, if $S_1, S_2 \in \supp(\cD)$ are such that $S_1 \subseteq S_2$, then $L_{i,S_1}^0 \le L_{i,S_2}^0$ (see \Cref{lemma:smooth_subs}).

The assumption is inspired by the coordinate descent (\algnamesmall{CD}) literature \citep{wright2015coordinate}, reducing to the standard block-wise Lipschitz continuity of the gradient~\citep{rich14, fer15, qu16} in the special case when $\supp(\cD) = \{\{1\}, \{2\}, \ldots, \{b\}\}$. \Cref{as:arbitrary_layer_smoothness} captures the intuition that each subset of layers can have its own effective smoothness constant, allowing tighter bounds on the local curvature of $f$ and better reflecting the structure of the model. Importantly, it is \emph{not} more restrictive than standard smoothness--rather, it offers a richer parametric description by assigning separate constants to different layer subsets, allowing a more precise analysis without shrinking the function class.

With the assumptions in place, we are now ready to state the first formal convergence result.

\begin{theorem}\label{thm:rpt_smooth_iter}
	Let Assumptions \ref{as:lower_bound} and \ref{as:arbitrary_layer_smoothness} hold, and let $\{X^k\}_{k=0}^{K-1}$ be the iterates of \Cref{alg:rt_arbitrary} run with stepsizes $\gamma_i^k = \nicefrac{1}{L_{i,S^k}^0}$. Then
	\begin{eqnarray*}
        \squeeze \frac{1}{K} \sum\limits_{k=0}^{K-1} \sum\limits_{i=1}^b \frac{w_i}{\frac{1}{b} \sum_{j=1}^b w_j} \Exp{\norm{\nabla_i f(X^k)}^2_{(i) \star}}
		\leq \frac{f(X^0) - f^{\star}}{K \parens{\frac{1}{b} \sum_{j=1}^b w_j}},
    \end{eqnarray*}
	where $w_i \eqdef \sum_{s=1}^i \frac{p_s}{2 L_{i, \{s,\dots,b\}}^0}$.
\end{theorem}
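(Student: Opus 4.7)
The plan is a standard descent-lemma argument adapted to the layer-wise LMO update and the progressive sampling, with the main work being the careful bookkeeping that turns the expectation over $S^k$ into the weights $w_i$.

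First, I would apply Assumption \ref{as:arbitrary_layer_smoothness} to the increment $\Gamma^k \eqdef X^{k+1}-X^k$. By construction of \Cref{alg:rt_arbitrary}, $\Gamma_i^k = 0$ for $i\notin S^k$ and $\Gamma_i^k = -\gamma_i^k (\nabla_i f(X^k))^\sharp$ for $i\in S^k$, so the assumption applies with the set $S=S^k$. Next I would invoke the two standard identities for the sharp operator (which follow from its definition): $\inp{M_i}{(M_i)^\sharp}_{(i)} = \|M_i\|_{(i)\star}^2$ and $\|(M_i)^\sharp\|_{(i)} = \|M_i\|_{(i)\star}$. Substituting these into the smoothness inequality yields
\begin{equation*}
    f(X^{k+1}) - f(X^k) \;\leq\; \sum_{i\in S^k} \left(-\gamma_i^k + \tfrac{L_{i,S^k}^0}{2}(\gamma_i^k)^2\right)\|\nabla_i f(X^k)\|_{(i)\star}^2.
\end{equation*}
Plugging in the prescribed stepsize $\gamma_i^k = 1/L_{i,S^k}^0$ collapses the parenthesis to $-1/(2L_{i,S^k}^0)$, giving a clean per-step descent bound
\begin{equation*}
    f(X^{k+1}) - f(X^k) \;\leq\; -\sum_{i\in S^k} \frac{1}{2L_{i,S^k}^0}\|\nabla_i f(X^k)\|_{(i)\star}^2.
\end{equation*}

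The second step is to take the conditional expectation over the RPT sampling $S^k = \{s^k,\ldots,b\}$ given $X^k$. Writing the inner sum as $\sum_{i=1}^b \mathbb{1}(i \geq s^k)\,\tfrac{1}{2 L_{i,\{s^k,\ldots,b\}}^0}\|\nabla_i f(X^k)\|_{(i)\star}^2$ and swapping the order of summation,
\begin{equation*}
    \ExpCond{f(X^{k+1})-f(X^k)}{X^k} \;\leq\; -\sum_{i=1}^b \left(\sum_{s=1}^i \frac{p_s}{2L_{i,\{s,\ldots,b\}}^0}\right)\|\nabla_i f(X^k)\|_{(i)\star}^2 \;=\; -\sum_{i=1}^b w_i\,\|\nabla_i f(X^k)\|_{(i)\star}^2,
\end{equation*}
by the very definition of $w_i$. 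This is the place where the progressive structure is used: layer $i$ is active exactly when $s^k \leq i$, which is why $s$ ranges only up to $i$ in $w_i$.

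Finally, I would take total expectation, telescope over $k=0,\ldots,K-1$, and use Assumption \ref{as:lower_bound} to bound $\Exp{f(X^K)}\geq f^\star$, obtaining
\begin{equation*}
    \sum_{k=0}^{K-1}\sum_{i=1}^b w_i\,\Exp{\|\nabla_i f(X^k)\|_{(i)\star}^2} \;\leq\; f(X^0)-f^\star.
\end{equation*}
Dividing both sides by $K\cdot\tfrac{1}{b}\sum_{j=1}^b w_j$ and recognizing the left-hand side as a convex combination with weights $w_i/(\tfrac{1}{b}\sum_j w_j)$ (which sum to $b$, matching the outer sum over $i$) yields the claimed bound. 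There is no genuinely hard step; the only subtle point is the index swap that produces $w_i$, and keeping the constants $L_{i,\{s,\ldots,b\}}^0$ paired with the right sampled set throughout.
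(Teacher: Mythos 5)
Your proof is correct and follows essentially the same route as the paper: apply the layer-wise descent bound from Assumption \ref{as:arbitrary_layer_smoothness} together with the sharp-operator identities, plug in $\gamma_i^k = 1/L^0_{i,S^k}$, take the conditional expectation and swap the order of summation over $s$ and $i$ to produce the weights $w_i$, then telescope and normalize. The only cosmetic difference is that the paper proves a general sampling version (\Cref{thm:rt_smooth_iter}) with $w_i = \Exp{\I{i\in\hat S}/(2L^0_{i,\hat S})}$ and specializes to \algnamesmall{RPT} in a remark, whereas you work in the \algnamesmall{RPT} setting directly.
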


\Cref{thm:rpt_smooth_iter} establishes an $\cO(K^{-1})$ convergence rate for a weighted sum of squared gradient component norms, matching the theoretical rates previously established for \algnamesmall{Muon}-type methods under classical smoothness \citep{li2025note, pethick2025training, kovalev2025understanding}.
For the result to be meaningful, every gradient component must contribute to the weighted average. In other words, the sampling distribution must satisfy $w_i>0$ for all $i\in[b]$.
This is a natural requirement, equivalent to ensuring that $p_1>0$, i.e., that all layers are updated with nonzero probability. Obviously, if this was not the case, the first layer would be completely ignored, making convergence impossible.

\paragraph{Generalized smooth case.}

Layer-wise optimizers considered here are designed for deep learning, where the classical smoothness assumption is often violated \citep{zhang2020why, riabinin2025gluon}. Consequently, the layer-wise smoothness model in \Cref{as:arbitrary_layer_smoothness} may not accurately capture the local geometry of the loss. To address this, we adopt a more expressive framework building upon $(L^0, L^1)$--smoothness \citep{zhang2020why, chen2023generalized}. \Cref{as:arbitrary_layer_gen_smoothness} below generalizes \Cref{as:arbitrary_layer_smoothness} by letting the local curvature of each layer depend not only on fixed constants $L_{i,S}^0$, but also on the magnitude of the layer's gradient via additional terms $L_{i,S}^1 \norm{\nabla_i f(X)}_{(i) \star}$.

\begin{assumption}[$\supp(\cD)$--layer-wise $(L^0, L^1)$--smoothness]\label{as:arbitrary_layer_gen_smoothness}
    The function $f: \cX \mapsto \R$ is $\supp(\cD)$--layer-wise $(L^0, L^1)$--smooth with constants $L^\alpha \eqdef \{(L_{1,S}^\alpha, \ldots, L_{b,S}^\alpha)\}_{S\in\supp(\cD)}$, $(L_{1,S}^\alpha, \ldots, L_{b,S}^\alpha) \in \R^b_+$, $\alpha\in\{0,1\}$, i.e., for any $S\in\supp(\cD)$,
    \begin{eqnarray*}
		\squeeze f(X + \Gamma) - f(X) - \inp{\nabla f(X)}{\Gamma}
		\leq \sum\limits_{i\in S} \frac{L_{i,S}^0 + L_{i,S}^1 \norm{\nabla_i f(X)}_{(i) \star}}{2}\norm{\Gamma_i}_{(i)}^2,
	\end{eqnarray*}
    for all $X = [X_1, \ldots, X_b]\in \cX$ and $\Gamma = [\Gamma_1, \ldots, \Gamma_b] \in \cX$ such that $\Gamma_i = 0$ for all $i\not\in S$.
\end{assumption}

As with \Cref{as:arbitrary_layer_smoothness}, assigning separate constants to each subset of layers $S$ allows for tighter, subset-specific bounds, reflecting the interactions among layers.

\begin{theorem}\label{thm:rpt_l0l1_iter}
	Let Assumptions \ref{as:lower_bound} and \ref{as:arbitrary_layer_gen_smoothness} hold, fix $\varepsilon>0$, and let $\{X^k\}_{k=0}^{K-1}$ be the iterates of \Cref{alg:rt_arbitrary} run with stepsizes $\gamma_i^k = \big(L_{i,S^k}^0 + L_{i,S^k}^1 \norm{\nabla_i f(X^k)}_{(i) \star}\big)^{-1}$.
    Then, to guarantee that
    \begin{eqnarray*}
        \squeeze \min_{k=0,\ldots,K-1} \sum\limits_{i=1}^b \left[\frac{w_i}{\frac{1}{b} \sum_{l=1}^b w_l} \Exp{\norm{\nabla _i f(X^k)}_{(i) \star}}\right] \leq \varepsilon,
    \end{eqnarray*}
    it suffices to run the algorithm for
	\begin{eqnarray*}
        \squeeze K = \left\lceil \frac{2 \delta^0 \sum\limits_{i=1}^b \frac{\parens{\sum_{s=1}^i p_s}^2 \parens{\sum_{s=1}^i p_s L^0_{i, \{s,\dots,b\}}}}{\parens{\sum_{s=1}^i p_s L^1_{i, \{s,\dots,b\}}}^2}}{\varepsilon^2 \parens{\frac{1}{b} \sum_{l=1}^b w_l}^2}
        + \frac{2 \delta^0}{\varepsilon \parens{\frac{1}{b} \sum_{l=1}^b w_l}} \right\rceil
    \end{eqnarray*}
	iterations, where $\delta^0 \eqdef f(X^0) - f^{\star}$ and $w_i \eqdef \frac{\parens{\sum_{s=1}^i p_s}^2}{\sum_{s=1}^i p_s L^1_{i, \{s,\dots,b\}}}$.
\end{theorem}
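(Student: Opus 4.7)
The plan is to combine the layer-wise descent inequality with two carefully chosen Cauchy--Schwarz compressions and one Jensen step, reducing the analysis to a single scalar recursion on $G_k := \sum_{i=1}^b w_i \Exp{\norm{\nabla_i f(X^k)}_{(i) \star}}$. First, I would apply \Cref{as:arbitrary_layer_gen_smoothness} with $\Gamma := X^{k+1} - X^k$, which is supported on $S^k$. For $i \in S^k$ the update reads $X_i^{k+1} - X_i^k = -\gamma_i^k (\nabla_i f(X^k))^{\sharp}$, and the standard sharp-operator identities $\inp{\nabla_i f(X^k)}{(\nabla_i f(X^k))^{\sharp}} = \norm{\nabla_i f(X^k)}_{(i)\star}^2$ and $\norm{(\nabla_i f(X^k))^{\sharp}}_{(i)} = \norm{\nabla_i f(X^k)}_{(i)\star}$ together with the prescribed stepsize collapse the descent inequality into
\[ f(X^{k+1}) - f(X^k) \le - \sum_{i \in S^k} \frac{g_{i,k}^2}{2(L^0_{i,S^k} + L^1_{i,S^k} g_{i,k})}, \qquad g_{i,k} := \norm{\nabla_i f(X^k)}_{(i)\star}. \]

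Next I would take conditional expectation over $S^k \sim \cD$. Under \algnamesmall{RPT}, $i \in S^k$ iff $s^k \le i$, which turns the right-hand side into a double sum $\sum_{i=1}^b \sum_{s=1}^i p_s g_{i,k}^2 / (2(L^0_{i,\{s,\ldots,b\}} + L^1_{i,\{s,\ldots,b\}} g_{i,k}))$. A first application of Cauchy--Schwarz to the inner $s$-sum, in the form $\sum_s p_s b_s \ge P_i^2 / \sum_s (p_s/b_s)$, consolidates the $s$-dependent constants into $P_i := \sum_{s=1}^i p_s$ and $\tilde L^\alpha_i := \sum_{s=1}^i p_s L^\alpha_{i,\{s,\ldots,b\}}$, producing a lower bound $\sum_i P_i^2 g_{i,k}^2 / (2(\tilde L^0_i + \tilde L^1_i g_{i,k}))$. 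Taking total expectation and noting that $\phi(g) := g^2/(A + Bg)$ is convex (since $\phi''(g) = 2A^2/(A+Bg)^3 \ge 0$), Jensen's inequality then lets me replace $g_{i,k}$ by $\bar g_{i,k} := \Exp{g_{i,k}}$ throughout.

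The key compression is a second Cauchy--Schwarz engineered so that the denominator mass equals exactly $C + G_k$, where $C := \sum_i P_i^2 \tilde L^0_i/(\tilde L^1_i)^2$. Writing $w_i \bar g_{i,k} = \tfrac{P_i \bar g_{i,k}}{\sqrt{\tilde L^0_i + \tilde L^1_i \bar g_{i,k}}} \cdot \tfrac{P_i \sqrt{\tilde L^0_i + \tilde L^1_i \bar g_{i,k}}}{\tilde L^1_i}$ and squaring yields
\[ G_k^2 \le \left(\sum_{i=1}^b \frac{P_i^2 \bar g_{i,k}^2}{\tilde L^0_i + \tilde L^1_i \bar g_{i,k}}\right)(C + G_k), \]
because $\sum_i P_i^2 (\tilde L^0_i + \tilde L^1_i \bar g_{i,k})/(\tilde L^1_i)^2 = C + G_k$. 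Plugging this back gives the clean one-step bound $\Exp{f(X^k) - f(X^{k+1})} \ge G_k^2 / (2(C + G_k))$. Summing over $k = 0, \ldots, K-1$ and exploiting monotonicity of $x \mapsto x^2/(C+x)$ on $(0,\infty)$ produces $\delta^0 \ge K (G^*)^2/(2(C + G^*))$ with $G^* := \min_{k<K} G_k$. Assuming for contradiction that $G^* > \bar w \varepsilon$ with $\bar w := b^{-1}\sum_l w_l$ rearranges to $K < 2\delta^0 C/(\bar w \varepsilon)^2 + 2\delta^0/(\bar w \varepsilon)$, contradicting the stated choice of $K$; hence $G^*/\bar w \le \varepsilon$, which is the claim.

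The main obstacle is picking the right Cauchy--Schwarz pairing in the layer-aggregation step. A naive per-layer case split on whether $\tilde L^0_i$ or $\tilde L^1_i \bar g_{i,k}$ dominates produces a min-type lower bound that mixes $\bar g_{i,k}^2$ and $\bar g_{i,k}$ terms across layers with incompatible weightings, and the resulting expression does not collapse into the $w_i = P_i^2/\tilde L^1_i$ weights of the theorem. Engineering the second Cauchy--Schwarz so that $\sum_i y_i^2$ reduces to exactly $C + G_k$ is what enables the monotone-function argument and cleanly separates the $\varepsilon^{-2}$ and $\varepsilon^{-1}$ contributions in the final iteration count.
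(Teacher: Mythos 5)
Your proposal is correct and follows essentially the same route as the paper: descent lemma with the prescribed stepsizes and sharp-operator identities, a Jensen/Cauchy--Schwarz compression over the sampling index (the paper does this via conditional expectation and convexity of $1/t$, you do it via the equivalent Cauchy--Schwarz form), Jensen for the convex map $g\mapsto g^2/(A+Bg)$, and then the engineered second Cauchy--Schwarz across layers---which is precisely the paper's Lemma on $\sum y_i^2/z_i \ge (\sum x_iy_i)^2/\sum z_ix_i^2$---chosen so that the denominator collapses to $C + G_k$; the final step (monotonicity of $\psi(t)=t^2/(C+t)$, take minimum, invert) is identical. The only cosmetic difference is that you work directly in \algnamesmall{RPT} notation rather than the paper's general $\hat S$ notation with a subsequent specialization.
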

Similar to \Cref{thm:rpt_smooth_iter}, \Cref{thm:rpt_l0l1_iter} guarantees an $\cO(K^{-1/2})$ convergence rate for a weighted sum of gradient component norms.\footnote{\Cref{thm:rpt_l0l1_iter} bounds the gradient norms directly, while \Cref{thm:rpt_smooth_iter} bounds their squares; this difference naturally arises from the distinct smoothness models used in each analysis.} Again, the sampling distribution must ensure that $w_i > 0$ for all $i\in[b]$, which amounts to requiring that $p_1>0$. In the extreme case when $(p_1,p_2,\ldots,p_b) = (1,0,\ldots,0)$, corresponding to full-network training, the weights simplify to $w_i = \nicefrac{1}{L^1_{i,[b]}}$, exactly recovering the convergence rate of deterministic \algnamesmall{Gluon} \citep[Theorem 1]{riabinin2025gluon}, demonstrating the tightness of our guarantees. Importantly, the stepsizes naturally scale inversely with the layer-specific smoothness constants and gradient magnitudes. This automatic adaptation to local geometry prevents overshooting, ensures stable convergence, and allows more aggressive updates when the $(L^0, L^1)$ constants and gradient norms are small.

\subsubsection{Cost Optimization}\label{sec:cost_opt_det}

Let us now make clear why performing full-network updates in all iterations is \emph{not} in general an optimal strategy. We first consider the layer-wise smooth case. According to \Cref{thm:rpt_smooth_iter}, under \Cref{as:arbitrary_layer_smoothness}, \Cref{alg:rt_arbitrary} guarantees that $\frac{1}{K} \sum_{k=0}^{K-1} \sum_{i=1}^b \Exp{\norm{\nabla_i f(X^k)}^2_{(i) \star}} \leq \varepsilon$ after
\begin{align*}
    \squeeze K = \ceil{\frac{f(X^0) - f^{\star}}{\varepsilon \times \min_{i\in[b]} \left[\sum\limits_{s=1}^i \frac{p_s}{2 L_{i, \{s,\dots,b\}}^0}\right]}}
\end{align*}
iterations. Since in this case $\Exp{\mathrm{cost}(\hat{S})} = c_{\mathrm{ov}} + \sum_{j=1}^b (c_j + c_j^{\sharp}) \sum_{i=1}^j p_i$, the expected total cost of the optimization procedure satisfies
\begin{eqnarray}\label{eq:smooth_cost_full}
    \squeeze \mathrm{cost}_{\varepsilon}(\cD)
    \,\overset{\eqref{eq:exp_cost_gen}}{=}\, K \times \Exp{\mathrm{cost}(\hat{S})}
    \,\propto\, \frac{c_{\mathrm{ov}} + \sum_{i=1}^b (c_i + c_i^{\sharp}) \sum_{s=1}^i p_s}{\min_{i\in[b]} \left[\sum\limits_{s=1}^i \frac{p_s}{2 L_{i, \{s,\dots,b\}}^0}\right]}
\end{eqnarray}
(see \Cref{sec:rpt_sol} for the details of the derivation). Finding the sampling distribution that minimizes $\mathrm{cost}_{\varepsilon}(\cD)$ is equivalent to optimizing over $\{p_i\}_{i\in[b]}$. Letting $(p_1,p_2,\ldots,p_b) = (1,0,\ldots,0)$ recovers full-network training, which serves as a natural baseline.
Yet, can we do better? The following theorem shows that this configuration is optimal under very specific conditions only.

\begin{theorem}\label{thm:rpt_p1}
    The cost \eqref{eq:smooth_cost_full} is minimized by $(p_1, p_2, \ldots, p_b)=(1, 0, \ldots, 0)$ \textbf{if and only if} 
    \begin{equation*}
        L_{1,\{1,\ldots,b\}}=\max_{i\in[b]} L_{i,\{1,\ldots,b\}}.
    \end{equation*}
\end{theorem}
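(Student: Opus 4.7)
My plan is to evaluate the cost at the candidate distribution and then verify the two implications separately. Let $L_i \eqdef L^0_{i,\{1,\ldots,b\}}$, $L_{\max} \eqdef \max_{i\in[b]} L_i$, and $C \eqdef c_{\mathrm{ov}} + \sum_{i=1}^b (c_i + c_i^{\sharp})$. Substituting $(p_1, p_2, \ldots, p_b) = (1, 0, \ldots, 0)$ into the right-hand side of \eqref{eq:smooth_cost_full} gives $\sum_{s=1}^i p_s = 1$ for every $i$, so the numerator equals $C$ and each denominator term reduces to $1/(2 L_i)$, making the overall denominator $1/(2 L_{\max})$. Hence the value at this distribution is $2 C L_{\max}$, which matches the lower bound $2 C L_1$ (derived below) iff $L_1 = L_{\max}$.

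For the sufficiency direction I will assume $L_1 = L_{\max}$ and establish a universal lower bound on the cost. Denote by $N(p)$ and $D(p)$ the numerator and denominator of \eqref{eq:smooth_cost_full}. The denominator is bounded by its value at $i=1$, namely $D(p) \leq p_1 / (2 L_1)$. Simultaneously, since $\sum_{s=1}^i p_s \geq p_1$ for every $i$, the numerator satisfies $N(p) \geq c_{\mathrm{ov}} + p_1 (C - c_{\mathrm{ov}})$, whence $N(p)/p_1 \geq C$ because $p_1 \leq 1$. Multiplying these two estimates yields $N(p)/D(p) \geq 2 L_1 C$, matching the value attained at $(1,0,\ldots,0)$.

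For the necessity direction I will argue contrapositively: assuming $L_1 < L_{\max}$, I will exhibit a strictly better distribution. Let $i^*$ be the \emph{smallest} index achieving $L_{i^*} = L_{\max}$; then $i^* > 1$ and $L_j < L_{i^*}$ for every $j < i^*$. I will take the two-point distribution $p_1 = 1-\epsilon$, $p_{i^*} = \epsilon$ for $\epsilon > 0$ sufficiently small. The numerator strictly decreases to $N(p) = C - \epsilon \sum_{j=1}^{i^*-1}(c_j + c_j^{\sharp}) < C$. To control the denominator, I will verify index-by-index that each term stays above $1/(2 L_{i^*})$: for $i < i^*$, the sole nonzero term $(1-\epsilon)/(2 L_i)$ exceeds $1/(2 L_{i^*})$ for $\epsilon$ small, thanks to the strict inequality $L_i < L_{i^*}$; for $i \geq i^*$, \Cref{lemma:smooth_subs} gives $L^0_{i,\{i^*,\ldots,b\}} \leq L_i \leq L_{i^*}$, allowing me to upper-bound both $L$-factors in $(1-\epsilon)/(2 L_i) + \epsilon/(2 L^0_{i,\{i^*,\ldots,b\}})$ by $L_{i^*}$ and collapse the sum to $\geq 1/(2 L_{i^*})$. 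Therefore $N(p)/D(p) \leq 2 L_{i^*} N(p) < 2 L_{i^*} C$, contradicting optimality of the candidate.

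The main subtlety lies in the necessity argument: the non-smooth $\min$ in the denominator means a careless perturbation could easily make some other index the new minimizer and worsen the denominator. Selecting $i^*$ as the \emph{smallest} argmax of $L_i$ is what forces the strict inequality $L_j < L_{i^*}$ for $j < i^*$, which in turn keeps the perturbed terms at those indices above $1/(2 L_{i^*})$ for all sufficiently small $\epsilon$, making the construction admissible.
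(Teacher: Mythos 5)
Your proof is correct and takes a genuinely different route from the paper's. The paper reformulates the minimization of \eqref{eq:smooth_cost_full} as a linear--fractional program, passes to the equivalent linear program in $q = p/t$, proves an exchange lemma (\Cref{lemma:exchange}) showing that at optimality every positive coordinate forces its constraint to be tight, and derives a recursive closed-form for the optimal $q^\star$; \Cref{thm:rpt_p1} then drops out as a corollary (\Cref{rem:rpt_p1}) by asking when the recursion yields $q_2^\star = \cdots = q_b^\star = 0$. Your argument instead works directly with the ratio: for sufficiency you lower-bound the cost by the product of $N(p)/p_1 \geq C$ (using $p_1 \leq 1$ and $c_{\mathrm{ov}}\geq 0$) and $2L_1 \leq 1/D(p) \cdot p_1$ (from the $i=1$ term in the min), yielding the universal bound $2 L_1 C$ that is attained at the candidate precisely when $L_1 = L_{\max}$; for necessity you exhibit an explicit two-point perturbation $p_1=1-\epsilon$, $p_{i^*}=\epsilon$ at the \emph{smallest} argmax $i^*$ and verify term-by-term (using \Cref{lemma:smooth_subs} and the strictness forced by the minimal-argmax choice) that the denominator stays $\geq 1/(2L_{i^*})$ while the numerator strictly drops. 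What the paper's LP route buys is the complete description of the optimal sampling distribution for any choice of constants; what your route buys is an elementary, self-contained certificate for exactly the claim of \Cref{thm:rpt_p1}, without LP duality or the exchange machinery. One very minor point: in the sufficiency bound you divide by $p_1$, so the argument silently assumes $p_1>0$; the case $p_1=0$ is degenerate (the denominator vanishes, cost is $+\infty$) and is excluded by the standing requirement $p_1>0$, but it would be cleaner to say so.
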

Note that the condition in \Cref{thm:rpt_p1} is entirely independent of the cost parameters! In fact, one can derive a \emph{recursive construction of the optimal probabilities} that depends solely on the smoothness constants, from which \Cref{thm:rpt_p1} follows as a simple corollary (see \Cref{rem:rpt_p1}).

The layer-wise $(L^0, L^1)$–smooth case (\Cref{thm:rpt_p1_l0l1}) leads to the same conclusion: \textbf{full-network updates are optimal only if the first layer is associated with the largest smoothness constant}--a restrictive and rarely observed condition, as confirmed by experiments on \texttt{NanoGPT} \citep{riabinin2025gluon}. While broader validation is needed, there is no reason to expect this phenomenon to hold broadly.
Overall, from the theoretical standpoint, \emph{the prevalent practice of always updating all layers is fundamentally flawed}.

\subsection{Iteration Complexity -- Stochastic Gradient Setting}\label{sec:iter_stoch}

We now turn to the convergence analysis of the practical variant of {\newalgsmall} with stochastic gradients and momentum (\Cref{alg:rt_arbitrary_stoch}), within the general layer-wise $(L^0, L^1)$--smoothness framework.

\begin{assumption}[$\supp(\cD)$--layer-wise $(L^0, L^1)$--smoothness II]\label{as:arbitrary_layer_gen_smoothness2}
    The function $f: \cX \mapsto \R$ is $\supp(\cD)$--layer-wise $(L^0, L^1)$--smooth with constants $L^\alpha \eqdef \{(L_{1,S}^\alpha, \ldots, L_{b,S}^\alpha)\}_{S\in\supp(\cD)}$, $(L_{1,S}^\alpha, \ldots, L_{b,S}^\alpha) \in \R^b_+$, $\alpha\in\{0,1\}$, i.e., for any $S\in\supp(\cD)$,
    \begin{eqnarray*}
        \norm{\nabla_i f(X + \Gamma) - \nabla_i f(X)}_{(i) \star} \leq \parens{L_{i,S}^0 + L_{i,S}^1 \norm{\nabla_i f(X)}_{(i) \star}} \norm{\Gamma_i}_{(i)}
    \end{eqnarray*}
    for all $X = [X_1, \ldots, X_b]\in \cX$ and $\Gamma = [\Gamma_1, \ldots, \Gamma_b] \in \cX$ such that $\Gamma_i = 0$ for all $i\not\in S$.
\end{assumption}

\Cref{as:arbitrary_layer_gen_smoothness2} is a slightly stronger variant of  \Cref{as:arbitrary_layer_gen_smoothness} used in the analysis of \Cref{alg:rt_arbitrary} (see \Cref{lemma:arb_layer_gen_smooth}). Consequently, all results proven under \Cref{as:arbitrary_layer_gen_smoothness} carry over to this setting. This stronger form is necessary to rigorously establish convergence in the stochastic case.

Furthermore, we assume access to a standard stochastic gradient oracle with bounded variance.

\begin{assumption}\label{as:bounded_var}
    The stochastic gradient estimator $\nabla f(\cdot; \xi): \cX \mapsto \cX$ is unbiased and has bounded variance, i.e., $\ExpSub{\xi \sim \cD}{\nabla f(X; \xi)} = \nabla f(X)$ for all $X \in \cX$ and there exist $\sigma_i \geq 0$ such that $\ExpSub{\xi \sim \cD}{\norm{\nabla_i f(X; \xi) - \nabla_i f(X)}^2_2} \leq \sigma_i^2$ for all $X \in \cX$ and $i = 1, \dots, p$.
\end{assumption}
Note that, to facilitate the proofs, the variance bound in \Cref{as:bounded_var} is defined with respect to the Euclidean norm, consistent with prior analyses \citep{pethick2025training, kovalev2025understanding, riabinin2025gluon}. This is not restrictive: since $\cX$ is finite-dimensional, there exist $\underline{\rho}_i, \bar{\rho}_i > 0$ such that $\underline{\rho}_i \norm{X_i}_{(i)} \leq \norm{X_i}_2 \leq \bar{\rho}_i \norm{X_i}_{(i)}$ for all $X_i\in\cX_i$
(equivalently, $\underline{\rho}_i \norm{X_i}_2 \leq \norm{X_i}_{(i) \star} \leq \bar{\rho}_i \norm{X_i}_2$).

\begin{theorem}\label{thm:stoch_rpt_l0l1_iter}
    Let Assumptions \ref{as:lower_bound}, \ref{as:arbitrary_layer_gen_smoothness2}, and \ref{as:bounded_var} hold. Let $\{X^k\}_{k=0}^K$, $K \geq 0$, be the iterates of \Cref{alg:rt_arbitrary_stoch} initialized with $M_i^0 = \nabla_i f(X^0; \xi^0)$ and run with $t_i^k \equiv t_i = \nicefrac{\eta_i}{(K+1)^{3/4}}$,
    where $0 < \eta_i^2 \leq \min\bigg\{\frac{1}{4} (K+1)^{1/2} \Big(\sum_{s=1}^i p_s L^1_{i,\{s,\dots,b\}}\Big)^{-1} \Big(\sum_{s=1}^b p_s \max_{i\in[b]} L^1_{i,\{s,\dots,b\}}\Big)^{-1}$, $\frac{\underline{\rho}_i p_1}{16 \bar{\rho}_i (1-\beta_i)} \Big(\sum_{s=1}^i p_s\Big)^{-1} \Big(\sum_{s=1}^i p_s L^1_{i,\{s,\dots,b\}}\Big)^{-1} \Big(\sum_{s=1}^b p_s \max_{i\in[b]} L^1_{i,\{s,\dots,b\}}\Big)^{-1}, 1\bigg\}$, and $\beta_i \equiv \beta = (K+1)^{-1/2}$.
    Then
    \small{\begin{align*}
        &\squeeze \min\limits_{k=0,\ldots,K} \sum\limits_{i=1}^b \frac{\parens{\sum_{s=1}^i p_s} \eta_i}{\frac{1}{b} \sum_{l=1}^b \sum_{s=1}^l p_s \eta_l} \Exp{\norm{\nabla_i f(X^k)}_{(i) \star}} \\
        &\leq \squeeze \frac{3 \delta^0}{(K+1)^{1/4} \frac{1}{b} \sum_{l=1}^b \sum_{s=1}^l p_s \eta_l}
        + \frac{6}{(K+1)^{1/2}} \sum\limits_{i=1}^b \frac{\eta_i \bar{\rho}_i \sigma_i}{\frac{1}{b} \sum_{l=1}^b \sum_{s=1}^l p_s \eta_l} + \sum\limits_{i=1}^b \frac{2 \bar{\rho}_i \sigma_i \parens{\sum_{s=1}^i p_s} \eta_i}{(K+1)^{1/4} \frac{1}{b} \sum_{l=1}^b \sum_{s=1}^l p_s \eta_l} \\
        &\quad\squeeze + \sum\limits_{i=1}^b \frac{2 \bar{\rho}_i \eta_i^2}{\underline{\rho}_i (K+1)^{1/4} \frac{1}{b} \sum_{l=1}^b \sum_{s=1}^l p_s \eta_l} \parens{\sum\limits_{s=1}^i p_s L^0_{i,\{s,\dots,b\}} + \parens{\sum\limits_{s=1}^i p_s L^1_{i,\{s,\dots,b\}}} \parens{\sum\limits_{s=1}^i p_s \frac{L^0_{i,\{s,\dots,b\}}}{L^1_{i,\{s,\dots,b\}}}}} \\
        &\quad\squeeze + \sum\limits_{i=1}^b \frac{\eta_i^2}{2 (K+1)^{3/4} \frac{1}{b} \sum_{l=1}^b \sum_{s=1}^l p_s \eta_l} \parens{\sum\limits_{s=1}^i p_s L^0_{i, \{s,\dots,b\}} + \parens{\sum\limits_{s=1}^i p_s L^1_{i, \{s,\dots,b\}}} \parens{\sum\limits_{s=1}^i p_s \frac{L^0_{i,\{s,\dots,b\}}}{L^1_{i,\{s,\dots,b\}}}}}.
    \end{align*}}
\end{theorem}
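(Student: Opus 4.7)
The plan is to extend the non-Euclidean momentum SGD analysis (as performed for \algnamesmall{Gluon} and similar LMO-type methods) to the randomized layer-subset setting, using a Lyapunov argument that couples the function suboptimality with the per-layer momentum errors $e_i^k \eqdef M_i^k - \nabla_i f(X^k)$. First I would derive a one-step descent inequality. Applying \Cref{as:arbitrary_layer_gen_smoothness2} with $\Gamma = X^{k+1}-X^k$ (supported in $S^k$) and invoking the sharp operator identities $\langle M_i^k,(M_i^k)^\sharp\rangle = \|M_i^k\|_{(i)\star}^2$ and $\|(M_i^k)^\sharp\|_{(i)} = \|M_i^k\|_{(i)\star}$, together with $\|X_i^{k+1}-X_i^k\|_{(i)} = t_i$ and the reverse triangle bound $\|M_i^k\|_{(i)\star} \geq \|\nabla_i f(X^k)\|_{(i)\star} - \|e_i^k\|_{(i)\star}$, yields
\begin{equation*}
f(X^{k+1}) \leq f(X^k) - \sum_{i\in S^k} t_i\|\nabla_i f(X^k)\|_{(i)\star} + 2\sum_{i\in S^k} t_i \|e_i^k\|_{(i)\star} + \sum_{i\in S^k} \tfrac{t_i^2}{2}\bigl(L^0_{i,S^k} + L^1_{i,S^k}\|\nabla_i f(X^k)\|_{(i)\star}\bigr).
\end{equation*}
Taking expectation over $S^k$ (conditional on $\cF^k$), the indicator $\mathbb{1}[i\in S^k]$ has mean $\sum_{s=1}^i p_s$ (the active-layer probability under \algnamesmall{RPT}), and the products $\mathbb{1}[i\in S^k] L^\alpha_{i,S^k}$ expand to $\sum_{s=1}^i p_s L^\alpha_{i,\{s,\ldots,b\}}$, producing the coefficients visible in the final bound.

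Next I would control the momentum error. For $i\in S^{k+1}$, use the standard decomposition
\begin{equation*}
e_i^{k+1} = (1-\beta)e_i^k + (1-\beta)\bigl(\nabla_i f(X^k)-\nabla_i f(X^{k+1})\bigr) + \beta\bigl(\nabla_i f(X^{k+1};\xi^{k+1}) - \nabla_i f(X^{k+1})\bigr),
\end{equation*}
apply Young's inequality with parameter tied to $\beta$, bound the gradient-difference term via \Cref{as:arbitrary_layer_gen_smoothness2} (it contributes $\propto t_i^2 (L^0_{i,S^k}+L^1_{i,S^k}\|\nabla_i f(X^k)\|_{(i)\star})^2$), and use \Cref{as:bounded_var} together with norm equivalence $\|\cdot\|_{(i)\star} \leq \bar\rho_i\|\cdot\|_2$ for the noise term. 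For $i\notin S^{k+1}$, either $M_i$ is unchanged while the gradient drifts, handled analogously by invoking \Cref{as:arbitrary_layer_gen_smoothness2} with a larger $S\in\supp(\cD)$ containing both $i$ and $S^k$. This yields a recursion of the form $\mathbb{E}\|e_i^{k+1}\|_{(i)\star}^2 \leq (1-c\beta)\|e_i^k\|_{(i)\star}^2 + O(t_i^2/\beta)\cdot (\text{smoothness terms}) + O(\beta^2 \bar\rho_i^2\sigma_i^2)$, with the initialization $M_i^0 = \nabla_i f(X^0;\xi^0)$ ensuring $\mathbb{E}\|e_i^0\|_{(i)\star}^2 \leq \bar\rho_i^2\sigma_i^2$.

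I would then combine the two recursions into a Lyapunov function of the form $\Phi^k \eqdef f(X^k)-f^\star + \sum_i \alpha_i \|e_i^k\|_{(i)\star}$ (or squared, depending on how tight the coupling must be), choosing $\alpha_i$ proportional to $\bar\rho_i t_i /\underline{\rho}_i \beta$ so that the $2\sum_{i\in S^k} t_i\|e_i^k\|_{(i)\star}$ term from the descent step is absorbed by the contraction from the momentum recursion. Telescoping from $k=0$ to $K$, dividing by $\sum_{k=0}^K \sum_{i=1}^b \bar p_i t_i$ (where $\bar p_i \eqdef \sum_{s=1}^i p_s$), and substituting $t_i = \eta_i/(K+1)^{3/4}$ and $\beta = (K+1)^{-1/2}$, one obtains a bound of the shape
\begin{equation*}
\min_k \sum_i \tfrac{\bar p_i \eta_i}{\frac{1}{b}\sum_l \bar p_l \eta_l} \mathbb{E}\|\nabla_i f(X^k)\|_{(i)\star} \,\lesssim\, \tfrac{\delta^0}{(K+1)^{1/4}\cdot\cdot} + \tfrac{\sigma\text{-terms}}{(K+1)^{1/4}} + \tfrac{\text{smoothness residual}}{(K+1)^{1/4}},
\end{equation*}
matching the four summands in the theorem after tracking constants.

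The main obstacle will be the intricate balancing that produces the explicit upper bound on $\eta_i^2$ in the theorem statement: the $\bigl(\sum p_s L^1_{\cdot}\bigr)^{-1}\bigl(\sum p_s \max_i L^1_{\cdot}\bigr)^{-1}$ factor arises precisely to guarantee that the $L^1$-type term in the smoothness residual (which contains $\|\nabla_i f(X^k)\|_{(i)\star}$ multiplicatively) can be absorbed into the negative $-\sum_i \bar p_i t_i \|\nabla_i f(X^k)\|_{(i)\star}$ descent term, while the $\underline\rho_i/\bar\rho_i$ and $(1-\beta)$ factors arise from the Young splits in the momentum recursion. Making these absorptions compatible simultaneously across all $b$ layers, with stepsizes that are individually (not jointly) tuned, is the most delicate part; the rest is essentially a careful bookkeeping of constants through the telescoping.
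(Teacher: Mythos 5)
Your opening step---the LMO-based descent inequality with the $2t_i\norm{\nabla_i f(X^k)-M_i^k}_{(i)\star}$ surrogate, the $-t_i\norm{\nabla_i f(X^k)}_{(i)\star}$ descent term, and the $\frac{t_i^2}{2}(L^0_{i,S^k}+L^1_{i,S^k}\norm{\nabla_i f(X^k)}_{(i)\star})$ penalty---matches the paper's descent lemma exactly, and your reading of the RPT marginals $\Prob{i\in\hat S}=\sum_{s=1}^i p_s$ is also correct. But your momentum-error control has a concrete problem in the $(L^0,L^1)$ regime. You propose bounding $\Exp{\norm{e_i^{k+1}}^2}$ via Young's inequality, and note that the gradient-drift piece contributes $\propto t_i^2(L^0_{i,S^k}+L^1_{i,S^k}\norm{\nabla_i f(X^k)}_{(i)\star})^2$. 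That square produces $(L^1)^2\norm{\nabla_i f(X^k)}^2$, and once you pay the $\alpha_i\propto t_i/\beta$ Lyapunov weight you are left with a term quadratic in the gradient norm that the single power $-t_i\norm{\nabla_i f(X^k)}$ in the descent step cannot cancel; under generalized smoothness there is no a priori bound on $\norm{\nabla_i f}^2$ independent of $f-f^\star$. The paper circumvents this by controlling $\Exp{\norm{M_i^k-\nabla_i f(X^k)}_2}$ in \emph{first} moment, explicitly unrolling the recursion $e_i^{k+1}=(1-\beta_i^k)e_i^k+(1-\beta_i^k)U_{2,i}^k+\beta_i^kU_{3,i}^{k+1}$ with the random $\beta_i^k=\I{i\in S^k}\beta_i$, applying the triangle inequality termwise, and using Jensen only on the noise term, so that the gradient-drift contribution stays \emph{linear} in $\norm{\nabla_i f}$.

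Relatedly, you propose absorbing the resulting $L^1\norm{\nabla_i f(X^l)}$ sums directly into the negative descent term via the $\eta_i$ constraint. The paper does not absorb: it converts every such sum into a function-value term using the $(L^0,L^1)$ inequality $\sum_{i\in S}x_i\norm{\nabla_i f(X)}_{(i)\star}\le 4\max_i(x_iL^1_{i,S})(f(X)-f^\star)+\sum_ix_iL^0_{i,S}/L^1_{i,S}$, yielding a two-sided recursion of the form $\delta^{k+1}\le(1+c_1)\delta^k+c_2\sum_{l<k}(1-\min_j\Prob{j\in\hat S}\beta)^{k-1-l}\delta^l-\sum_it_i\Prob{i\in\hat S}\Exp{\norm{\nabla_i f(X^k)}_{(i)\star}}+\text{const}$, which is then closed by multiplying by a geometrically decaying weight $w^k$ and telescoping. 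The $\eta_i^2$ constraint is there to enforce $(K+1)\bigl(c_1+c_2/\min_j\Prob{j\in\hat S}\beta\bigr)\le1$ so that $W^K=\sum_kw^k\ge(K+1)/3$; it is a weight-growth control, not a descent-term absorption condition. Your Lyapunov framing is close in spirit to the weighted telescoping, but the gradient-to-function-value conversion is the step your outline is missing, and it is what makes the $L^1$-dependent terms tractable without ever invoking squared gradient norms.
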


\Cref{thm:stoch_rpt_l0l1_iter} establishes an $\cO(K^{-1/4})$ convergence rate for a weighted sum of gradient component norms, in line with the state-of-the-art results for \algnamesmall{SGD}- and \algnamesmall{Muon}-type methods \citep{cutkosky2020momentum, sun2023momentum, kovalev2025understanding, riabinin2025gluon}.
As in the deterministic setting, one could attempt a cost analysis; however, the stochastic bound’s complexity necessitates a case-by-case treatment depending on which term dominates. While this prevents deriving as clean analytic results as in \Cref{sec:iter_stoch}, our experiments (\Cref{sec:experiments}) clearly confirm that the approach remains effective in the stochastic setting, too.

\section{Prior Work on Progressive Training}\label{sec:pt_lit_rev}

The concept of partial network updates has been explored in several prior works. Progressive Training (\algnamesmall{PT}) \citep{karras2017progressive} was first introduced in the context of Generative Adversarial Networks (GANs) \citep{goodfellow2014generative}. The central idea is to begin with a shallow network trained on low-resolution inputs and gradually increase both the input resolution and network depth by adding layers over time. This progressive growing strategy has been shown to improve computational efficiency and stabilize the training process.
Despite its empirical success, \algnamesmall{PT} has remained a largely heuristic method without formal convergence guarantees. An early attempt to provide such guarantees was made by \citet{wang2022progfed} with \algnamesmall{ProgFed}, which also extended progressive training to distributed settings. However, the accompanying theoretical analysis has been criticized as vacuous \citep{szlendak2024understanding}.
A more rigorous treatment was proposed by \citet{szlendak2024understanding}, who introduced Randomized Progressive Training (\algnamesmall{RPT}). This method can be viewed as a randomized proxy for \algnamesmall{PT} and was the first to establish theoretical convergence rates for progressive training on general smooth objectives, building on the framework of Randomized Coordinate Descent (\algnamesmall{RCD}) \citep{nesterov2012efficiency, richtarik2014iteration}.
Conceptually, \algnamesmall{RPT} is closely related to our approach: like {\newalgsmall}, it updates a random subnetwork at each iteration. However, the method of \citet{szlendak2024understanding} is simply a form of sketched \algnamesmall{GD} with a particular choice of sketch operator, and has several limitations that restrict its practical utility. First, it treats network parameters as a flat vector, ignoring the layer-wise structure. Consequently, it fails to exploit layer-specific, non-Euclidean update rules--failing, to recover methods such as \algnamesmall{Muon} as special cases. Second, \algnamesmall{RPT} requires non-stochastic gradient computation, making it computationally infeasible for large-scale architectures.

\section{Experiments}\label{sec:experiments}

We evaluate {\newalgsmall} on three benchmarks--\texttt{MNIST}, \texttt{Fashion-MNIST}, and \texttt{CIFAR-10}--using 3-layer convolutional neural networks (CNNs) of varying capacity. For all $i \in [b]$, we set $\norm{\cdot}_{(i)} = \norm{\cdot}_{2\to2}$, i.e., the spectral norm, consistent with \algnamesmall{Muon}.
We study \algnamesmall{RPT} sampling strategy with several smallest-index sampling rules, including uniform and an epoch-shift distribution that gradually shifts probability mass from shallow to deep layers as training progresses (see \Cref{sec:experiments_app}).
Our baseline is standard \algnamesmall{Muon} (i.e., {\newalgsmall} with spectral norms and full-network training) run under identical hyperparameter configurations. This design allows us to isolate the effect of partial network updates while controlling for all other factors. For each configuration, we record accuracy as a function of both epochs and wall-clock time.
Further experiments and details of hyperparameter selection procedure are provided in \Cref{sec:experiments_app}.

\paragraph{Results on \texttt{MNIST}.}

\Cref{fig:MNIST_unif_tr} illustrates a representative run with uniform layer index sampling: although {\newalgsmall} converges slightly slower per epoch (left), its lower per-epoch cost ($27$s vs. $39$s for \algnamesmall{Muon}) yields faster overall training in wall-clock time (right).
This advantage is even clearer with epoch-shift sampling: \Cref{fig:MNIST_FashionMNIST_epochshift_bar} (left) aggregates time-to-target results over multiple seeds, showing that {\newalgsmall} consistently outperforms \algnamesmall{Muon} and achieves up to $1.4\times$ speed-up. 

\begin{figure}[t]
    \centering  
    \includegraphics[width=0.8\linewidth]{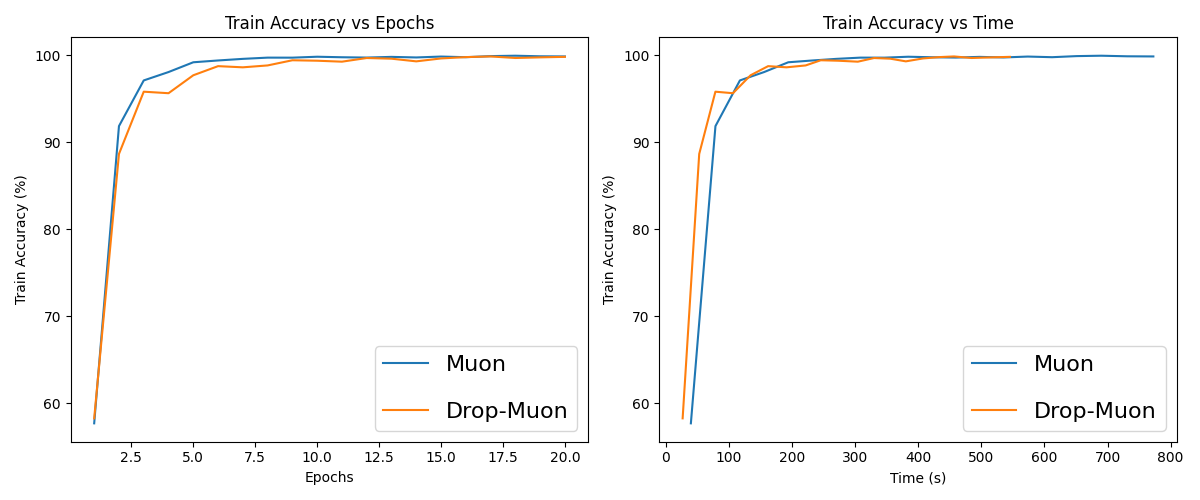}
    \caption{Evolution of the training accuracy for \algnamesmall{Muon} and {\newalgsmall} with uniform index sampling on \texttt{MNIST}. Batch size $=8192$, learning rate $=0.1$, channels $=[64,128,256]$.}
    \label{fig:MNIST_unif_tr}
\end{figure}

\paragraph{Results on \texttt{Fashion-MNIST}.}
We repeat the experiment on \texttt{Fashion-MNIST}.
{\newalgsmall} again delivers meaningful acceleration: as before, its per-epoch convergence is slightly slower, but it overtakes \algnamesmall{Muon} in wall-clock time.
\Cref{fig:MNIST_FashionMNIST_epochshift_bar} (right) shows aggregated time-to-target results over multiple seeds, with {\newalgsmall} achieving roughly $1.2\times$ faster convergence on average across accuracy thresholds (we omit the $99\%$ threshold since neither method reaches it).

\begin{figure}[t]
    \centering  
    \begin{minipage}{0.49\linewidth}
        \includegraphics[width=\linewidth]{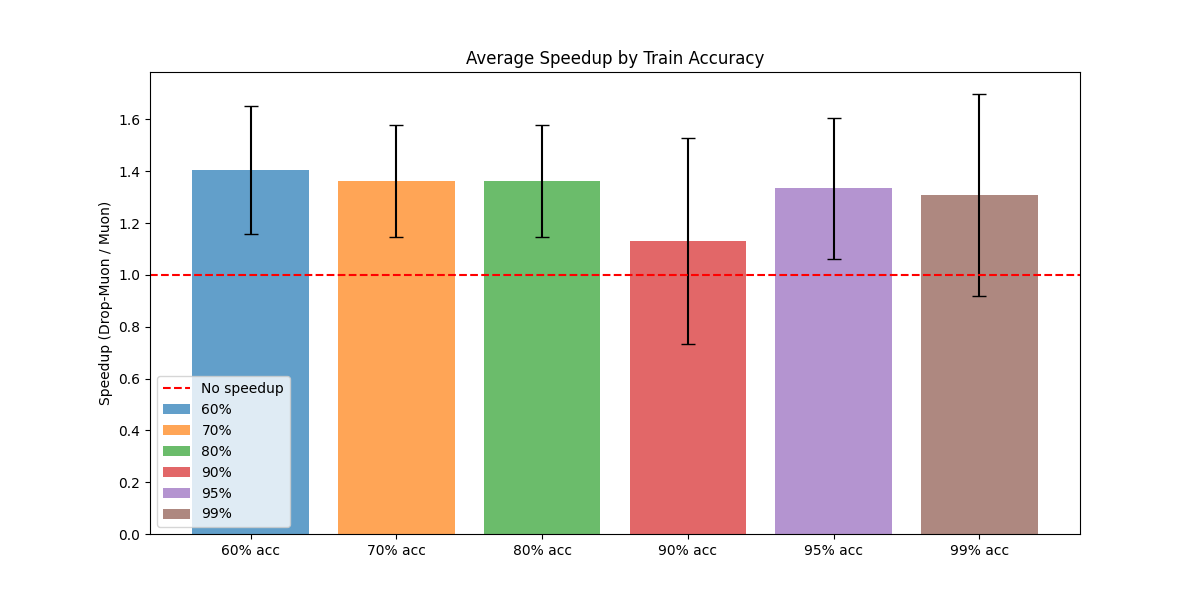}
    \end{minipage}
    \hfill
    \begin{minipage}{0.49\linewidth}
        \includegraphics[width=\linewidth]{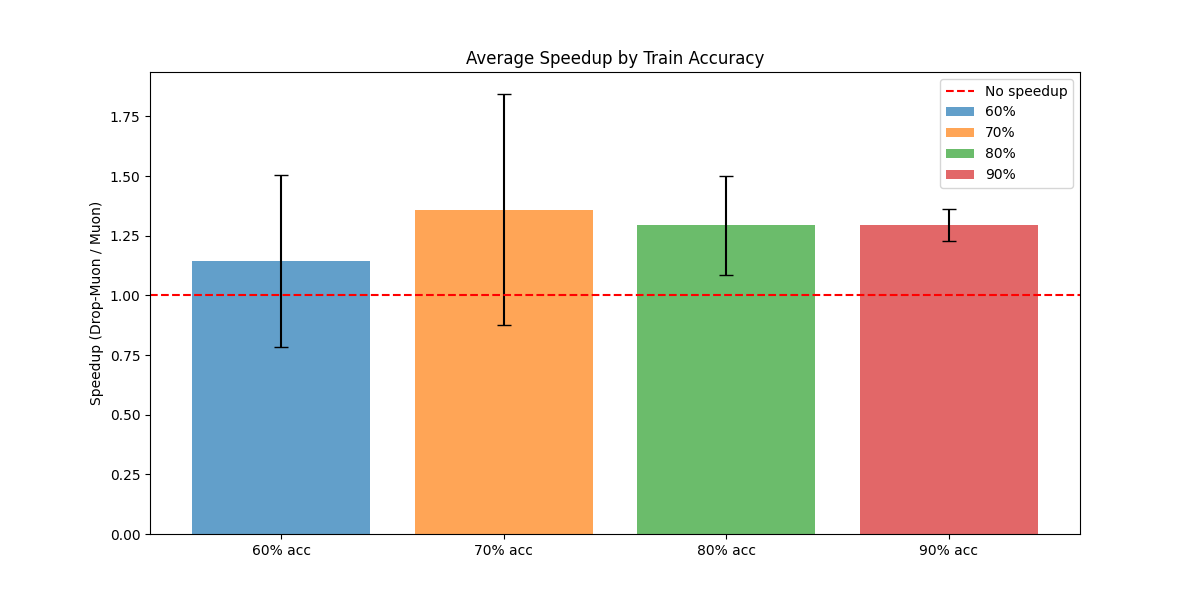}
    \end{minipage}
    \caption{Averaged time-to-target speed-up over multiple runs comparing \algnamesmall{Muon} and {\newalgsmall} with epoch-shift index sampling.
    Left: \texttt{MNIST} with batch size $=8192$, learning rate $=0.1$, and channels $=[64,128,256]$.
    Right: \texttt{Fashion-MNIST} with batch size $=32768$, learning rate $=0.1$, and channels $=[64,128,256]$.}
    \label{fig:MNIST_FashionMNIST_epochshift_bar}
\end{figure}

\paragraph{Results on \texttt{CIFAR-10}.}

\Cref{fig:cifar10_epochshift_tr} presents a representative run.
Although the absolute gain is smaller than on \texttt{MNIST} or \texttt{Fashion-MNIST}, {\newalgsmall} still reaches $90\%$ training accuracy earlier in wall-clock time (see also \Cref{fig:cifar10_epochshift_bar}).

\begin{figure}[h]
    \centering  
    \includegraphics[width=0.8\linewidth]{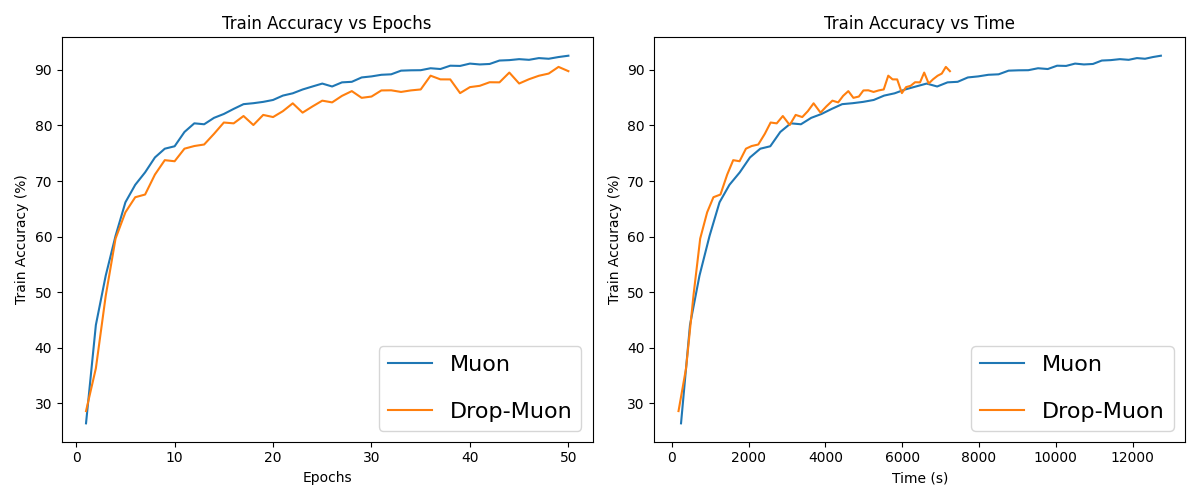}
    \caption{Evolution of the training accuracy for \algnamesmall{Muon} and {\newalgsmall} with epoch-shift index sampling on \texttt{CIFAR-10}. Batch size $=8192$, learning rate $=0.1$, channels $=[128,256,512]$.}
    \label{fig:cifar10_epochshift_tr}
\end{figure}

In summary, even though {\newalgsmall} trains on partial gradients, it reaches high accuracy levels earlier than \algnamesmall{Muon} in wall-clock time. Importantly, {\newalgsmall} is simple to implement, requiring only a few lines of code. The observed speed-up over \algnamesmall{Muon} could be even greater with dedicated tuning: in our experiments, both methods use the same constant learning rates, whereas theory (Theorems \ref{thm:rpt_smooth_iter}, \ref{thm:rpt_l0l1_iter}, and \ref{thm:stoch_rpt_l0l1_iter}) and the practice of coordinate descent (\Cref{sec:cd}) suggest that {\newalgsmall} can safely employ larger stepsizes than the full-network \algnamesmall{Muon} baseline. Here, we kept the learning rates identical for both optimizers to isolate the wall-clock benefit of partial layer updates, but additional gains are likely achievable with method-specific tuning. 
Overall, our results demonstrate that {\newalgsmall} is an effective, practical, and easily implementable strategy that, with a well-chosen sampling strategy, consistently accelerates training while retaining high accuracy, making it a compelling choice for modern neural network optimization.

\section*{Acknowledgments}
The research reported in this publication was supported by funding from King Abdullah University of Science and Technology (KAUST): i) KAUST Baseline Research Scheme, ii) CRG Grant ORFS-CRG12-2024-6460, and iii) Center of Excellence for Generative AI, under award number 5940.

\bibliographystyle{plainnat}
\bibliography{references}

\newpage

\appendix

\section*{Appendix}

\tableofcontents

\newpage

\section{Additional Literature Review}

\subsection{Muon (and Friends)}\label{sec:muon}

Recent work has revisited how neural network parameters are updated, moving beyond simple element-wise gradient steps toward more structured update rules. Notable examples include the \algnamesmall{Muon} optimizer \citep{jordan2024muon}, as well as the closely related \algnamesmall{Scion} \citep{pethick2025training} and \algnamesmall{Gluon} \citep{riabinin2025gluon}. All of these methods can be interpreted as instances of algorithms driven by a \emph{linear minimization oracle} (LMO) over norm balls.

\algnamesmall{Muon}, introduced by \citet{jordan2024muon}, is an optimizer for the hidden layers of neural networks (with first and last layers typically trained using \algnamesmall{AdamW} \citep{loshchilov2019decoupled}). Given a layer $X_i$ and the associated (stochastic) gradient $G_i$, the update is obtained by solving the constrained optimization problem
\begin{align}\label{eq:muon_lmo}
    \min_{\Delta X_i}\ \inp{G_i}{\Delta X_i}
    \quad \text{subject to} \quad
    \|\Delta X_i\|_{2\to 2} \le t_i,
\end{align}
where $t_i > 0$ plays the role of a trust-region radius or stepsize. The solution is characterized by the singular vectors of $G_i$: if $G_i = U_i \Sigma_i V_i^\top$ is its singular value decomposition, then 
\begin{align}\label{eq:muon_sol}
    \Delta X_i = -t_i U_i V_i^\top,
    \qquad
    X_i^{k+1} = X_i^k + \Delta X_i.
\end{align}
In other words, \algnamesmall{Muon} moves in the direction of steepest descent measured in the spectral norm.  

Since computing an exact SVD can be expensive, \algnamesmall{Muon} uses the Newton-Schulz method \citep{kovarik1970some,bjorck1971iterative} to approximate the orthogonalization. In practice, the algorithm also incorporates momentum, yielding the update
\begin{align*}
    M_i^k &= (1-\beta_i) M_i^{k-1} + \beta_i G_i^k, \\
    X_i^{k+1} &= X_i^k - t_i^k \mathrm{NewtonSchulz}(M_i^k),
\end{align*}
where $\beta_i \in (0,1]$ is the momentum parameter and $M_i^k$ is the running average of past gradients.

The update rule in \eqref{eq:muon_sol} can be interpreted as a special case of the more general LMO-based update described in \Cref{sec:intro}. By selecting different norms for the LMO ball constraint, one obtains different algorithmic variants. Since our primary focus is on deep learning applications, we are especially interested in matrix norms. Within this setting, a particularly important family is that of \emph{operator norms}, defined for any $A \in \R^{m \times n}$ as $$\norm{A}_{\alpha \to \beta} \eqdef \sup _{\norm{Z}_\alpha=1} \norm{A Z}_\beta,$$ where  $\norm{\cdot}_{\alpha}$ and $\norm{\cdot}_{\beta}$ denote norms on $\R^{n}$ and $\R^{m}$, respectively.

In particular, \algnamesmall{Muon}'s update in \eqref{eq:muon_lmo} corresponds to an LMO over the spectral norm ball $\cB_i^{2\to2}(0,t_i) \eqdef \{Z_i \in \cX_i : \norm{Z_i}_{2\to 2} \leq t_i\}$, leading to the equivalent update
\begin{align*}
    X_i^{k+1} = X_i^k + \lmo{\cB_i^{2\to2}(0,t_i)}{G_i}.
\end{align*}
Hence, \algnamesmall{Muon} is just one member of a broader family of methods parameterized by the geometry of the update set.

Building on this perspective, \citet{pethick2025training} introduced \algnamesmall{Scion}, which extends LMO-based updates to \emph{all} network layers, rather than being limited to the hidden matrix-shaped layers as in \algnamesmall{Muon}. To better capture layer-specific behavior, \algnamesmall{Scion} employs different norms depending on layer type (scaled spectral norms for the weight matrices of transformer blocks and the $\norm{\cdot}_{1\to \infty}$ norm for embedding and output layers).

Finally, \algnamesmall{Gluon} \citep{riabinin2025gluon} generalizes the theory behind both \algnamesmall{Muon} and \algnamesmall{Scion}. It provides a convergence framework for updates over \emph{arbitrary} norm balls. The analysis relies on a novel layer-wise smoothness model (closely related to our Assumptions \ref{as:arbitrary_layer_smoothness}, \ref{as:arbitrary_layer_gen_smoothness}, and \ref{as:arbitrary_layer_gen_smoothness2}), capturing heterogeneity through $(L^0,L^1)$ parameters, that more accurately reflects the non-uniform characteristics of deep learning models (see more details in \Cref{sec:gen_smooth}). As such, \algnamesmall{Gluon} can be seen as the theoretical foundation unifying the entire class of LMO-based optimizers for training deep networks.

\subsection{Generalized Smoothness}\label{sec:gen_smooth}

Gradient-based methods are traditionally analyzed under the assumption of Lipschitz smoothness of the gradient. Yet, many modern deep learning tasks violate this assumption \citep{zhang2020why, crawshaw2022robustness}, prompting the development of alternative smoothness models. One such model is $(L^0, L^1)$--smoothness, initially introduced by \citet{zhang2020why} for twice continuously differentiable functions and later generalized beyond this setting \citep{li2023convex, chen2023generalized}.

This framework has been further tailored to deep learning through the non-Euclidean layer-wise $(L^0, L^1)$–smoothness assumption \citep{riabinin2025gluon}, which closely relates to our Assumptions \ref{as:arbitrary_layer_smoothness}, \ref{as:arbitrary_layer_gen_smoothness}, and \ref{as:arbitrary_layer_gen_smoothness2}. Concretely, \citet{riabinin2025gluon} assume that
\begin{align}\label{eq:oagsbv}
   \norm{\nabla _i f(X) - \nabla _i f(Y)}_{(i) \star} \leq \parens{L^0_i + L^1_i \| \nabla _i f(X) \|_{(i) \star}} \norm{X_i - Y_i}_{(i)}
\end{align}
for all $i\in[b]$ and all $X = [X_1, \ldots, X_p]\in \cX$, $Y = [Y_1, \ldots, Y_p] \in \cX$. Condition \eqref{eq:oagsbv} can be interpreted as a global version of our \Cref{as:arbitrary_layer_gen_smoothness2}. However, it is less precise because it does not allow each subset of layers to have its own effective smoothness constants. By adopting \Cref{as:arbitrary_layer_gen_smoothness2} instead of \eqref{eq:oagsbv}, we can derive tighter bounds without restricting the function class under consideration.

Accounting for heterogeneous parameter structures is not novel and has been studied in the context of coordinate descent \citep{nesterov2012efficiency, richtarik2014iteration} and in the analysis of algorithms such as \algnamesmall{signSGD} \citep{bernstein2018signsgd, crawshaw2022robustness}, \algnamesmall{AdaGrad} \citep{jiang2024convergence, liu2024AdaGrad}, and \algnamesmall{Adam} \citep{xie2024adam}.

\subsection{Randomized Block Coordinate Descent}\label{sec:cd}

\begin{algorithm}[t]
    \begin{algorithmic}[1]
    \State {\bf Input:} Initial iterate $X^0 = [X_1^0, \ldots, X_b^0] \in \cX$; stepsizes $\gamma_i^k > 0$, $i\in[b]$, $k\geq0$
    \For{$k = 0, \dots, K-1$}
        \State Choose index $i_k\in [b]$ randomly \Comment{Serial sampling}
        \State Set $X_i^{k+1} = X_i^k$ for $i\neq i_k$
        \State Update
        $
        X_{i_k}^{k+1} = X_{i_k}^k - \gamma_{i_k}^k \nabla_{i_k} f(X^k)
        $
    \EndFor
    \caption{Randomized Block Coordinate Descent}
    \label{alg:bcd}
    \end{algorithmic}
\end{algorithm}

Coordinate Descent (\algnamesmall{CD}) algorithms have a long history \citep{southwell1940relaxation, powell1973search, nesterov2012efficiency, wright2015coordinate} and are among the most widely studied methods for large-scale optimization. Their randomized variants, known as Randomized Block Coordinate Descent (\algnamesmall{RBCD}), have emerged as a powerful class of first-order methods, particularly in high-dimensional problems. Instead of updating the entire parameter vector, \algnamesmall{RBCD} updates only a subset (``block'') of variables at each iteration, reducing per-iteration computational cost and improving scalability.

In the context of \algnamesmall{CD}, the parameters are viewed as a flat vector in $\R^d$. 
The general procedure of \algnamesmall{RBCD} is summarized in \Cref{alg:bcd}. At each iteration, one block is selected at random and updated, while the remaining blocks remain unchanged. 
Clearly, \Cref{alg:bcd} is a special case of \Cref{alg:rt_arbitrary} with the \emph{serial sampling} strategy (i.e., sampling a single coordinate per iteration) and standard Euclidean updates.

We define a family of linear maps $\{\mathbf{U}_i: \cX_i\rightarrow \cX : i\in [b]\}$ (where $\cX=\R^d$ and $\cX_i=\R^{d_i}$, with $\sum_{i=1}^b d_i = d$), forming a partition of the identity map, i.e., $X = \sum_{i\in [b]} \mathbf{U}_i \left(X_i\right)$.
The standard analysis of \algnamesmall{RBCD} relies on the assumption of \emph{blockwise Lipschitz continuity} of the gradient:
\begin{equation}\label{eq:sdfewr}
\left\|
\nabla_i f(X+\mathbf{U}_i (\Gamma_i)) - \nabla_i f(X)
\right\|_{(i)\star}\leq L_i \|\Gamma_i\|_{(i)},\quad \forall i\in [b], \,  X \in \cX, \, \Gamma_i \in \cX_i.
\end{equation}
From~\eqref{eq:sdfewr}, one obtains the standard quadratic upper bound
\begin{equation}\label{eq:fnf}
    f(X+ \mathbf{U}_i (\Gamma_i))-f(X)- \left\langle \nabla_i f(X), \Gamma_i \right\rangle \leq \frac{L_i}{2} \|\Gamma_i\|^2_{(i)}.
\end{equation}
This inequality plays a pivotal role in establishing the iteration complexity of \algnamesmall{RBCD}. Moreover, it explains why \algnamesmall{CD}-type methods can be advantageous compared to full gradient descent: they allow for larger stepsizes, which in turn leads to faster convergence. Indeed, the standard analysis of \algnamesmall{GD} relies on \emph{global} $L$-smoothness, enforcing stepsizes on the order of $\nicefrac{1}{L}$. By contrast, the blockwise smoothness assumption \eqref{eq:sdfewr} bounds the function's variation when only a single block of coordinates is updated. Consequently, in \algnamesmall{CD}, the safe stepsize for block $i$ scales with $\nicefrac{1}{L_i}$. Since $L$ must capture worst-case variations across all directions (including cross-block interactions), it is often much larger than the individual $L_i$'s.

Complexity analyses of \algnamesmall{RBCD} in the convex setting can be found in~\citet{nesterov2012efficiency,wright2015coordinate}. Generalizations to parallel and arbitrary sampling variants were developed by~\citet{rich16,qu16}. 
For the nonconvex case considered in this paper, iteration complexity results of \Cref{alg:rt_arbitrary} are only known for serial sampling~\citep{PatrascuNecoara15,DangLan15}. Otherwise, some work has focused on the global convergence properties of randomized \algnamesmall{CD} methods~\citep{ChenLiLu23}, rather than the iteration complexity analysis we develop here. It is worth noting that serial sampling is not suitable in the deep neural network context due to the special entanglement structure and the requirements of backpropagation for gradient computation.

\newpage

\section{More on Smoothness Assumptions}

Recall that a function $f: \cX \mapsto \R$ is $\supp(\cD)$--layer-wise $L^0$--smooth with constants $L^0 \eqdef \{(L_{1,S}^0, \ldots, L_{b,S}^0)\}_{S\in\supp(\cD)}$, $(L_{1,S}^0, \ldots, L_{b,S}^0) \in \R^b_+$ (\Cref{as:arbitrary_layer_smoothness}) if for any $S\in\supp(\cD)$,
\begin{eqnarray*}
    f(X + \Gamma) - f(X) - \inp{\nabla f(X)}{\Gamma}
    \leq \sum_{i\in S} \frac{L_{i,S}^0}{2}\norm{\Gamma_i}_{(i)}^2
\end{eqnarray*}
for all $X = [X_1, \ldots, X_b]\in \cX$ and $\Gamma = [\Gamma_1, \ldots, \Gamma_b] \in \cX$ such that $\Gamma_i = 0$ for all $i\not\in S$.

\begin{lemma}\label{lemma:smooth_subs}
    Let $S_1, S_2 \in \supp(\cD)$ satisfy $S_1 \subseteq S_2$, and suppose \Cref{as:arbitrary_layer_smoothness} holds. 
    Then one can choose the constants $\{L_{i,S_1}^0\}_{i\in S_1}$ so that
    \begin{eqnarray*}
        L_{i,S_1}^0 \leq L_{i,S_2}^0.
    \end{eqnarray*}
\end{lemma}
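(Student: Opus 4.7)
The plan is to exploit the monotonicity of the class of admissible perturbations as $S$ grows: any increment supported on $S_1$ is automatically supported on $S_2$ whenever $S_1 \subseteq S_2$, so the $S_2$-version of the smoothness inequality can be used to certify a collection of constants for $S_1$.

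Concretely, I would first fix arbitrary $S_1, S_2 \in \supp(\cD)$ with $S_1 \subseteq S_2$ and take any $X \in \cX$ together with any $\Gamma = [\Gamma_1, \ldots, \Gamma_b]$ such that $\Gamma_i = 0$ for all $i \notin S_1$. Because $S_1 \subseteq S_2$, this also forces $\Gamma_i = 0$ for all $i \notin S_2$, so $\Gamma$ is an admissible test direction for the $S_2$-instance of \Cref{as:arbitrary_layer_smoothness}. Invoking that inequality yields
\begin{eqnarray*}
    f(X + \Gamma) - f(X) - \inp{\nabla f(X)}{\Gamma}
    \leq \sum_{i\in S_2} \frac{L_{i,S_2}^0}{2}\norm{\Gamma_i}_{(i)}^2
    = \sum_{i\in S_1} \frac{L_{i,S_2}^0}{2}\norm{\Gamma_i}_{(i)}^2,
\end{eqnarray*}
where the equality uses that $\norm{\Gamma_i}_{(i)} = 0$ for every $i \in S_2 \setminus S_1$.

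The previous display shows that the collection $\{L_{i,S_2}^0\}_{i\in S_1}$ satisfies the defining inequality of $\supp(\cD)$--layer-wise $L^0$--smoothness \emph{for the index set} $S_1$. Hence it is an admissible choice of constants for $S_1$. Using the convention stated right after \Cref{as:arbitrary_layer_smoothness}---namely, that $L^0$ is taken to be the smallest such collection---the canonical constants must satisfy $L_{i,S_1}^0 \leq L_{i,S_2}^0$ for every $i \in S_1$, which is the claim. Alternatively, if one prefers not to invoke minimality, the same reasoning shows that one \emph{can} redefine $L_{i,S_1}^0 \eqdef \min\{L_{i,S_1}^0, L_{i,S_2}^0\}$ without violating the assumption, matching the ``one can choose'' phrasing in the statement.

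The argument is essentially a one-line observation, so there is no real obstacle; the only thing to be careful about is the bookkeeping for indices in $S_2 \setminus S_1$, which are handled automatically by the support condition on $\Gamma$. If the statement were upgraded to also compare constants for indices in $S_2 \setminus S_1$, the argument would not give anything, since those directions are invisible when testing on $S_1$-supported perturbations; but the lemma as stated only concerns $i \in S_1$, where the reduction is immediate.
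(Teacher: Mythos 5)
Your proof is correct and takes essentially the same route as the paper: both observe that any perturbation supported on $S_1$ is automatically admissible for the $S_2$-instance of \Cref{as:arbitrary_layer_smoothness}, apply that inequality, and note the sum over $S_2$ collapses to a sum over $S_1$ since $\Gamma_i = 0$ on $S_2 \setminus S_1$. Your extra remark about invoking the minimality convention (or redefining $L_{i,S_1}^0 \eqdef \min\{L_{i,S_1}^0, L_{i,S_2}^0\}$) makes the ``one can choose'' phrasing a little more explicit but does not change the argument.
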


\begin{proof}
    Let $X = [X_1, \ldots, X_b]\in \cX$ be arbitrary. By \Cref{as:arbitrary_layer_smoothness}, for any $\Gamma = [\Gamma_1, \ldots, \Gamma_b] \in \cX$ such that $\Gamma_i = 0$ for all $i\not\in S_1$, we have
    \begin{eqnarray*}
		f(X + \Gamma) - f(X) - \inp{\nabla f(X)}{\Gamma}
		\leq \sum_{i\in S_1} \frac{L_{i,S_1}^0}{2}\norm{\Gamma_i}_{(i)}^2.
	\end{eqnarray*}
    Now, since $S_1 \subseteq S_2$ and $\Gamma_i = 0$ for all $i \in S_2\setminus S_1$, \Cref{as:arbitrary_layer_smoothness} also gives
    \begin{eqnarray*}
		f(X + \Gamma) - f(X) - \inp{\nabla f(X)}{\Gamma}
		\leq \sum_{i\in S_2} \frac{L_{i,S_2}^0}{2}\norm{\Gamma_i}_{(i)}^2
		= \sum_{i\in S_1} \frac{L_{i,S_2}^0}{2}\norm{\Gamma_i}_{(i)}^2.
	\end{eqnarray*}
    Therefore, one can always choose $L_{i,S_1}^0 \leq L_{i,S_2}^0$.
\end{proof}

\begin{remark}
    We can show that, without further assumptions, there is no finite function $C: [b] \to (0, \infty)$ such that 
    \begin{eqnarray}\label{eq:oaivn}
        L_{i,S}^0 \leq C(|S|) L_{i,\{i\}}^0
    \end{eqnarray}
    holds for all $f$.
    Consider the case $b=2$ with scalar blocks and the Euclidean norm, suppose that $\supp(\cD) = \brac{\{1\}, \{2\}, \{1,2\}}$, and define
    \begin{eqnarray*}
        f(x,y) = \alpha x y
    \end{eqnarray*}
    for some $\alpha > 0$. For a single-block perturbation in the $x$-coordinate only, we have
    \begin{eqnarray*}
        f(x+\gamma_1,y) - f(x,y) - \partial_x f(x,y) \gamma_1
        = \alpha (x+\gamma_1) y - \alpha x y - \alpha \gamma_1 y
        = 0,
    \end{eqnarray*}
    which shows that $L_{1,\{1\}}^0 = 0$. By symmetry, $L_{2,\{2\}}^0 = 0$ as well. However, for a joint perturbation $\Gamma = (\gamma_1,\gamma_2)$, we obtain
    \begin{eqnarray*}
        f(x+\gamma_1, y+\gamma_2) - f(x,y) - \inp{\nabla f(x,y)}{\Gamma}
        &=& \alpha (x+\gamma_1)(y+\gamma_2) - \alpha x y - \alpha \gamma_1 y - \alpha \gamma_2 x \\
        &=& \alpha \gamma_1 \gamma_2.
    \end{eqnarray*}
    Now, suppose for a contradiction that \eqref{eq:oaivn} holds for some function $C:[b]\to\R$. Then, since $L_{1,\{1\}}^0 = 0$, it must be that $L_{1,\{1,2\}}^0 = 0$. By \Cref{as:arbitrary_layer_smoothness}, this requires
    \begin{eqnarray*}
        \alpha \gamma_1 \gamma_2
        \leq \frac{L_{1,\{1,2\}}^0}{2} \gamma_1^2 + \frac{L_{2,\{1,2\}}^0}{2} \gamma_2^2
        = \frac{L_{2,\{1,2\}}^0}{2} \gamma_2^2.
    \end{eqnarray*}
    But the right-hand side is independent of $\gamma_1$, whereas the left-hand side grows without bound as $\gamma_1\to\infty$ whenever $\gamma_2\neq 0$. Hence no such constant $L_{2,\{1,2\}}^0$ can exist. This proves that no uniform bound of the form \eqref{eq:oaivn} can hold without additional assumptions.
\end{remark}

\begin{remark}
    The above argument extends to any two sets with one contained in the other (not just singletons). Let $S_1, S_2 \in \supp(\cD)$ satisfy $S_1 \subseteq S_2$. We show that, without further assumptions, there is no finite function $C:[b]\times[b]\to(0,\infty)$ such that
    \begin{eqnarray}\label{eq:qwioavnf}
        L_{i,S_2}^0 \leq C(|S_1|,|S_2|) L_{i,S_1}^0.
    \end{eqnarray}    
    Without loss of generality we can restrict attention to the coordinates indexed by $S_2$ (so that $S_2=[b]$). Let $S_1 = \{1,\dots,s\}$ and $S_2 \setminus S_1 = \{s+1,\dots,s+u\}$, so $|S_1|=s$, $|S_2|=s+u$, and $|S_2 \setminus S_1|=u$. Again, consider scalar blocks and define a function $f: \R^b \to \R$ via
    \begin{eqnarray*}
        f(x_1,\dots,x_s,y_{1},\dots,y_{u}) = \alpha \parens{\sum_{i=1}^s x_i} \parens{\sum_{j=1}^u y_j}
    \end{eqnarray*}
    for some $\alpha > 0$.
    For any increment $\Gamma_{S_1}$ supported on $S_1$ only (i.e., $\gamma_j=0$ for all $j \not\in S_1$) the function~$f$ is affine in the $x$-variables with coefficients given by $\alpha \sum_j y_j$. Therefore, for any $X\in\R^b$,
    \begin{eqnarray*}
        &&\hspace{-8mm}f(X+\Gamma_{S_1})-f(X)-\inp{\nabla f(X)}{\Gamma_{S_1}} \\
        &=& \alpha \parens{\sum_{i=1}^s (x_i + \gamma_i)} \parens{\sum_{j=1}^u y_j}
        - \alpha \parens{\sum_{i=1}^s x_i} \parens{\sum_{j=1}^u y_j}
        - \alpha \sum_{i=1}^s \parens{\sum_{j=1}^u y_j} \gamma_i
        = 0,
    \end{eqnarray*}
    and hence $L_{i,S_1}^0 = 0$ for every $i\in S_1$.
    For a joint perturbation $\Gamma_{S_2}=(\gamma_1,\dots,\gamma_s,\eta_1,\dots,\eta_u)$ supported on $S_2$, we have
    \begin{eqnarray*}
        &&\hspace{-8mm}f(X+\Gamma_{S_2})-f(X)-\inp{\nabla f(X)}{\Gamma_{S_2}} \\
        &=& \alpha \parens{\sum_{i=1}^s (x_i + \gamma_i)} \parens{\sum_{j=1}^u (y_j + \eta_j)}
        - \alpha \parens{\sum_{i=1}^s x_i} \parens{\sum_{j=1}^u y_j} \\
        &&- \alpha \sum_{i=1}^s \parens{\sum_{j=1}^u y_j} \gamma_i
        - \alpha \sum_{i=1}^u \parens{\sum_{j=1}^s x_j} \eta_i \\
        &=& \alpha \parens{\sum_{i=1}^s \gamma_i} \parens{\sum_{j=1}^u \eta_j}.
    \end{eqnarray*}
    In particular, fix any nonzero vectors $\tilde{\gamma} =(\tilde{\gamma}_1,\dots,\tilde{\gamma}_s)$ and $\eta = (\eta_1,\dots,\eta_u)$ and let the $x$-perturbation scale by a factor $\lambda$, i.e. $\gamma_i=\lambda \tilde{\gamma}_i$. Then
    \begin{eqnarray*}
        f(X+\Gamma)-f(X)-\inp{\nabla f(X)}{\Gamma}
        = \alpha \lambda \parens{\sum_{i=1}^s \tilde{\gamma}_i} \parens{\sum_{j=1}^u \eta_j},
    \end{eqnarray*}
    which grows linearly in $\lambda$.
    Now, suppose, for contradiction that \eqref{eq:qwioavnf} holds with some finite $C(|S_1|,|S_2|)$. Since $L_{i,S_1}^0=0$ for every $i\in S_1$, the inequality forces $L_{i,S_2}^0=0$ for all $i\in S_1$. But then \Cref{as:arbitrary_layer_smoothness} would yield
    \begin{eqnarray*}
        \alpha \lambda \parens{\sum_{i=1}^s \tilde{\gamma}_i} \parens{\sum_{j=1}^u \eta_j}
        &=& \alpha \parens{\sum_{i=1}^t \gamma_i} \parens{\sum_{j=1}^u \eta_j} \\
        &\leq& \sum_{i\in S_1} \frac{L_{i,S_2}^0}{2} \gamma_i^2
        + \sum_{i\in S_2 \setminus S_1} \frac{L_{i,S_2}^0}{2} \eta_i^2 \\
        &=& \sum_{i\in S_2 \setminus S_1} \frac{L_{i,S_2}^0}{2} \eta_i^2,
    \end{eqnarray*}
    where the right-hand side is independent of the scaling factor $\lambda$. Taking $\lambda\to\infty$ makes the left-hand side arbitrarily large, resulting in a contradiction. Hence no such $C$ can exist.
\end{remark}

\newpage

\section{Arbitrary Sampling}\label{sec:arbitrary_sampling}

One might wonder why the cost model in \Cref{sec:cost_model} distinguishes between two sets of constants $\{c_i\}_{i\in[b]}$ and $\{c_i^\sharp\}_{i\in[b]}$, given that in the \algnamesmall{RPT} case, $c_i$ and $c_i^\sharp$ could be combined into a single constant. This is because Algorithms \ref{alg:rt_arbitrary_stoch} and \ref{alg:rt_arbitrary} do not restrict us to the layer sampling scheme studied in \Cref{sec:rpt}. One could argue that deviating from this progressive training framework is inefficient, as it discards gradients obtained ``for free'' during backpropagation: by design, at iteration~$k$, the gradients $[\nabla_{s^k} f(X^k), \ldots, \nabla_b f(X^k)]$, where $s^k \eqdef \min S^k$, are necessarily computed, and thus available for the update (if, for some block $i \in [b]$, the cost $c_i^{\sharp}$ of applying the sharp operator is large, one can simply bypass this step and instead use the standard gradient update; this corresponds to setting $\norm{\cdot}_{(i)}=\norm{\cdot}_2$ in the algorithm). Nevertheless, since the general iteration complexity results in Theorems \ref{thm:rt_smooth_iter}, \ref{thm:rt_l0l1_iter} and \ref{thm:stoch_rt_l0l1_iter} hold for any distribution~$\cD$, the framework naturally allows experimentation with alternative sampling schemes, even if only out of theoretical curiosity.

With this in mind, we generalize the results from \Cref{sec:rpt} to \emph{arbitrary} layer samplings.
Formally, at each iteration $k$, Algorithms \ref{alg:rt_arbitrary_stoch} and \ref{alg:rt_arbitrary} sample a random subset of layers $S^k \subseteq [b]$ from a distribution
\begin{align*}
	\cD: \mathfrak{P}([b]) \to [0,1], \qquad \sum_{S\subseteq[b]} \cD(S) = 1,
\end{align*}
with support $\supp(\cD) \eqdef \{S\in\mathfrak{P}([b]): \cD(S)>0\}$, where $\mathfrak{P}([b])$ denotes the power set of $[b]$. Only the layers in $S^k$ are updated, while the rest remain fixed. We write $\hat{S} \sim \cD$ for a set-valued random variable with the same distribution as $S^k$.
This formulation defines a broad family of algorithms, parameterized jointly by the sampling distribution $\cD$ and the choice of layer-wise norms~$\norm{\cdot}_{(i)}$.

Mirroring the earlier discussion, we organize the analysis around two smoothness regimes--layer-wise smooth (\Cref{as:arbitrary_layer_smoothness}) and generalized layer-wise smooth (Assumptions \ref{as:arbitrary_layer_gen_smoothness} and \ref{as:arbitrary_layer_gen_smoothness2}).

In what follows, we present convergence results for Algorithms \ref{alg:rt_arbitrary_stoch} (\Cref{thm:stoch_rt_l0l1_iter}) and \ref{alg:rt_arbitrary} (Theorems \ref{thm:rt_smooth_iter} and \ref{thm:rt_l0l1_iter}). When specialized to the \algnamesmall{RPT} setting, these general results recover the guarantees stated in Theorems \ref{thm:rpt_smooth_iter}, \ref{thm:rpt_l0l1_iter}, and \ref{thm:stoch_rpt_l0l1_iter}, as detailed in Remarks \ref{rem:rpt_smooth_iter}, \ref{rem:rpt_l0l1_iter}, and \ref{rem:stoch_rpt_l0l1_iter}.

\newpage

\section{Convergence Results -- Deterministic Gradient Case}

\subsection{Layer-Wise Smooth Case}

\begin{theorem}\label{thm:rt_smooth_iter}
	Let Assumptions \ref{as:lower_bound} and \ref{as:arbitrary_layer_smoothness} hold, and let $\{X^k\}_{k=0}^{K-1}$ be the iterates of \Cref{alg:rt_arbitrary} run with stepsizes $\gamma_i^k = \nicefrac{1}{L_{i,S^k}^0}$. Then
	\begin{eqnarray*}
        \squeeze \frac{1}{K} \sum\limits_{k=0}^{K-1} \sum\limits_{i=1}^b \frac{w_i}{\frac{1}{b} \sum_{j=1}^b w_j} \Exp{\norm{\nabla_i f(X^k)}^2_{(i) \star}}
		\leq \frac{f(X^0) - f^{\star}}{K \parens{\frac{1}{b} \sum_{j=1}^b w_j}},
    \end{eqnarray*}
	where $w_i \eqdef \Exp{\frac{\I{i\in \hat{S}}}{2 L_{i,\hat{S}}^0}}$.
\end{theorem}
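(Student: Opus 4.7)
The plan is to run a one-step descent argument tailored to the randomized sampling $S^k\sim\cD$, and then take expectations and telescope. Specifically, fix an iteration $k$ and set $\Gamma_i \eqdef X_i^{k+1}-X_i^k$, which equals $-\gamma_i^k(\nabla_i f(X^k))^\sharp$ for $i\in S^k$ and $0$ otherwise. Since $\Gamma$ is supported on $S^k$, I can invoke \Cref{as:arbitrary_layer_smoothness} with the set $S=S^k$ to obtain
\begin{equation*}
f(X^{k+1})-f(X^k)\;\leq\;\sum_{i\in S^k}\Bigl(-\gamma_i^k\inp{\nabla_i f(X^k)}{(\nabla_i f(X^k))^\sharp}_{(i)}+\tfrac{L_{i,S^k}^0}{2}(\gamma_i^k)^2\,\|(\nabla_i f(X^k))^\sharp\|_{(i)}^2\Bigr).
\end{equation*}

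Next I would simplify the right-hand side using the two standard identities for the sharp operator: $\inp{M}{M^\sharp}=\|M\|_{\star}^2$ and $\|M^\sharp\|=\|M\|_{\star}$ (both discussed in \Cref{sec:muon} via the LMO-sharp correspondence \eqref{eq:lmo_sharp}). Plugging in the prescribed stepsize $\gamma_i^k=1/L_{i,S^k}^0$, the linear and quadratic contributions collapse to a single negative term, yielding the clean per-iteration bound
\begin{equation*}
f(X^{k+1})-f(X^k)\;\leq\;-\sum_{i\in S^k}\frac{1}{2L_{i,S^k}^0}\,\|\nabla_i f(X^k)\|_{(i)\star}^2\;=\;-\sum_{i=1}^b\frac{\I{i\in S^k}}{2L_{i,S^k}^0}\,\|\nabla_i f(X^k)\|_{(i)\star}^2,
\end{equation*}
where rewriting with the indicator function is what makes the randomness over $S^k$ explicit.

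Now I would take conditional expectation over $S^k$ given $X^k$; since $S^k$ is independent of $X^k$ and distributed as $\hat S\sim\cD$, the deterministic factors $\|\nabla_i f(X^k)\|_{(i)\star}^2$ pull out, and the coefficient of the $i$th term becomes precisely $w_i=\Exp{\I{i\in\hat S}/(2L_{i,\hat S}^0)}$. Taking total expectation, telescoping from $k=0$ to $K-1$, and using \Cref{as:lower_bound} to bound $\Exp{f(X^K)}\geq f^\star$ gives
\begin{equation*}
\sum_{k=0}^{K-1}\sum_{i=1}^b w_i\,\Exp{\|\nabla_i f(X^k)\|_{(i)\star}^2}\;\leq\;f(X^0)-f^\star.
\end{equation*}

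The final step is cosmetic: divide both sides by $K\cdot\bigl(\tfrac{1}{b}\sum_{j=1}^b w_j\bigr)$, which converts the weights $\{w_i\}$ into a probability-like normalization $\{w_i/(\tfrac{1}{b}\sum_j w_j)\}$ and produces the claimed bound. I do not foresee a real obstacle here; the only subtlety is the careful application of \Cref{as:arbitrary_layer_smoothness} with $\Gamma$ supported exactly on $S^k$ (so that the right-hand side only sums over $i\in S^k$), and the correct use of the sharp-operator identities so that the stepsize $\gamma_i^k=1/L_{i,S^k}^0$ exactly halves the linear descent term rather than cancelling it.
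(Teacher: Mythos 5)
Your proposal follows exactly the same route as the paper's proof: apply \Cref{as:arbitrary_layer_smoothness} with $\Gamma$ supported on $S^k$, simplify via the sharp-operator identities $\inp{M}{M^\sharp}=\|M\|_\star^2$ and $\|M^\sharp\|=\|M\|_\star$, plug in $\gamma_i^k=1/L_{i,S^k}^0$ to collapse the linear and quadratic terms, take conditional expectation over $S^k\perp X^k$ to extract the weights $w_i$, and telescope using the lower bound $f^\star$. Correct, complete, and matching the paper.
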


\begin{remark}
    For \Cref{thm:rt_smooth_iter} to be meaningful, the sampling must ensure that $w_i > 0$ for every $i\in[b]$, i.e.,
    \begin{align*}
        \Exp{\frac{\I{i\in \hat{S}}}{2 L_{i,\hat{S}}^0}} > 0.
    \end{align*}
    Otherwise, some layers would receive zero weight and the bound would provide no control over them. This is a natural condition, requiring that every layer is sampled with positive probability. Indeed, if $\Prob{i\in \hat{S}} = 0$, then the expectation vanishes, so $w_i>0$ necessarily implies $\Prob{i\in \hat{S}} > 0$ for all $i\in[b]$.
\end{remark}

\begin{remark}\label{rem:rpt_smooth_iter}
    In the case of \algnamesmall{RPT} (see \Cref{sec:rpt}), the weights from \Cref{thm:rt_smooth_iter} become
    \begin{eqnarray*}
        \Exp{\frac{\I{i\in \hat{S}}}{2 L_{i,\hat{S}}^0}}
        = \sum_{s=1}^b \frac{\I{i \in \{s,\dots,b\}}}{2 L_{i, \{s,\dots,b\}}^0} p_s
        = \sum_{s=1}^i \frac{p_s}{2 L_{i, \{s,\dots,b\}}^0}.
    \end{eqnarray*}
    Substituting this expression into the rate yields the result in \Cref{thm:rpt_smooth_iter}.
\end{remark}

\begin{proof}[Proof of \Cref{thm:rt_smooth_iter}]
	Using \Cref{as:arbitrary_layer_smoothness}, we get
    \begin{eqnarray*}
        &&\hspace{-10mm}f(X^{k+1}) \\
        &\leq& f(X^k) + \inp{\nabla f(X^k)}{X^{k+1} - X^k} + \sum_{i\in S^k} \frac{L^0_{i,S^k}}{2} \norm{X_i^k - X_i^{k+1}}_{(i)}^2 \\
        &=& f(X^k) + \sum_{i\in S^k} \parens{\inp{\nabla_i f(X^k)}{X_i^{k+1} - X_i^k}_{(i)} + \frac{L^0_{i,S^k}}{2} \norm{X_i^k - X_i^{k+1}}_{(i)}^2} \\
        &=& f(X^k) + \sum_{i\in S^k} \parens{- \gamma_i^k \inp{\nabla_i f(X^k)}{\parens{\nabla_i f(X^k)}^{\sharp}}_{(i)} + \frac{L^0_{i,S^k} (\gamma_i^k)^2}{2} \norm{\parens{\nabla_i f(X^k)}^{\sharp}}_{(i)}^2} \\
        &\overset{\eqref{eq:inpsharp}, \eqref{eq:normsharp}}{=}& f(X^k) + \sum_{i\in S^k} \parens{- \gamma_i^k \norm{\nabla_i f(X^k)}^2_{(i) \star} + \frac{L^0_{i,S^k} (\gamma_i^k)^2}{2} \norm{\nabla_i f(X^k)}_{(i) \star}^2}.
    \end{eqnarray*}
	Choosing $\gamma_i^k = \nicefrac{1}{L^0_{i,S^k}}$ and rearranging, we get
	\begin{eqnarray*}
        \sum_{i\in S^k} \frac{\norm{\nabla_i f(X^k)}^2_{(i) \star}}{2 L^0_{i,S^k}}
		\leq f(X^k) - f(X^{k+1}).
    \end{eqnarray*}
	Taking expectation conditional on $X^k$ (denoted as $\ExpSub{k}{\cdot}$) gives
	\begin{eqnarray*}
        f(X^k) - \ExpSub{k}{f(X^{k+1})}
		&\geq& \ExpSub{k}{\sum_{i\in S^k} \frac{\norm{\nabla_i f(X^k)}^2_{(i) \star}}{2 L^0_{i,S^k}}} \\
		&=& \ExpSub{k}{\sum_{i=1}^b \I{i\in S^k} \frac{\norm{\nabla_i f(X^k)}^2_{(i) \star}}{2 L^0_{i,S^k}}} \\
		&=& \sum_{i=1}^b \Exp{\frac{\I{i\in \hat{S}}}{2 L_{i,\hat{S}}^0}} \norm{\nabla_i f(X^k)}^2_{(i) \star} \\
		&=& \sum_{i=1}^b w_i \norm{\nabla_i f(X^k)}^2_{(i) \star},
    \end{eqnarray*}
	where we denoted $w_i \eqdef \Exp{\frac{\I{i\in \hat{S}}}{2 L_{i,\hat{S}}^0}}$ and $\I{\cdot}$ is the indicator function (i.e., for any event $A$, $\I{A} = 1$ if $A$ and $\I{A} = 0$ otherwise).
	Taking full expectation, summing over the first $K$ iterations and dividing by $\frac{K}{b} \sum_{j=1}^b w_j$, we get
	\begin{align*}
        \frac{1}{K} \sum_{k=0}^{K-1} \sum_{i=1}^b \frac{w_i}{\frac{1}{b} \sum_{j=1}^b w_j} \Exp{\norm{\nabla_i f(X^k)}^2_{(i) \star}}
		&\leq \frac{1}{\frac{K}{b} \sum_{j=1}^b w_j} \sum_{k=0}^{K-1} \parens{\Exp{f(X^k)} - \Exp{f(X^{k+1})}} \\
		&\leq \frac{f(X^0) - f^{\star}}{K \parens{\frac{1}{b} \sum_{j=1}^b w_j}}.
    \end{align*}
\end{proof}

\subsection{Layer-Wise Generalized Smooth Case}

\begin{theorem}\label{thm:rt_l0l1_iter}
	Let Assumptions \ref{as:lower_bound} and \ref{as:arbitrary_layer_gen_smoothness} hold, fix $\varepsilon>0$, and let $\{X^k\}_{k=0}^{K-1}$ be the iterates of \Cref{alg:rt_arbitrary} run with stepsizes $\gamma_i^k = \parens{L_{i,S^k}^0 + L_{i,S^k}^1 \norm{\nabla_i f(X^k)}_{(i) \star}}^{-1}$.
    Then, to guarantee that
    \begin{eqnarray*}
        \squeeze \min_{k=0,\ldots,K-1} \sum\limits_{i=1}^b \left[\frac{w_i}{\frac{1}{b} \sum_{l=1}^b w_l} \Exp{\norm{\nabla _i f(X^k)}_{(i) \star}}\right] \leq \varepsilon,
    \end{eqnarray*}
    it suffices to run the algorithm for
	\begin{eqnarray*}
        \squeeze K = \left\lceil \frac{2 \delta^0 \sum\limits_{i=1}^b \frac{\Prob{i\in \hat{S}} \ExpCond{L^0_{i,\hat{S}}}{i\in \hat{S}}}{\parens{\ExpCond{L^1_{i,\hat{S}}}{i\in \hat{S}}}^2}}{\varepsilon^2 \parens{\frac{1}{b} \sum_{l=1}^b w_l}^2}
        + \frac{2 \delta^0}{\varepsilon \parens{\frac{1}{b} \sum_{l=1}^b w_l}} \right\rceil
    \end{eqnarray*}
	iterations, where $\delta^0 \eqdef f(X^0) - f^{\star}$ and $w_i \eqdef \frac{\Prob{i\in \hat{S}}}{\ExpCond{L^1_{i,\hat{S}}}{i\in \hat{S}}}$.
\end{theorem}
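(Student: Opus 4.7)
The plan is to extend the argument for Theorem~\ref{thm:rt_smooth_iter} to the generalized $(L^0,L^1)$ regime, with an additional constrained-optimization step at the end performed via Cauchy--Schwarz.

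First, I would apply Assumption~\ref{as:arbitrary_layer_gen_smoothness} with $\Gamma = X^{k+1}-X^k$ (which is supported on $S^k$ by construction), substitute the update \eqref{eq:sharp_upd} with the prescribed stepsize $\gamma_i^k = (L^0_{i,S^k}+L^1_{i,S^k}\norm{\nabla_i f(X^k)}_{(i)\star})^{-1}$, and invoke the standard sharp-operator identities $\inp{G}{G^{\sharp}} = \norm{G}_\star^{2}$ and $\norm{G^{\sharp}}^{2} = \norm{G}_\star^{2}$ exactly as in the proof of Theorem~\ref{thm:rt_smooth_iter}. The $-\gamma+\tfrac{L\gamma^2}{2}$ structure then collapses into
\begin{equation*}
    f(X^k) - f(X^{k+1}) \;\geq\; \sum_{i\in S^k} \frac{\norm{\nabla_i f(X^k)}_{(i)\star}^{2}}{2\parens{L^0_{i,S^k}+L^1_{i,S^k}\norm{\nabla_i f(X^k)}_{(i)\star}}}.
\end{equation*}

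Next, I would take conditional expectation over $S^k$ (with $X^k$ fixed), split the expectation via the identity $\Exp{\I{i\in \hat S}\,H(L^0_{i,\hat S},L^1_{i,\hat S})} = \Prob{i\in\hat S}\,\ExpCond{H(L^0_{i,\hat S},L^1_{i,\hat S})}{i\in\hat S}$, and apply Jensen's inequality twice: first on the convex map $(a,b)\mapsto 1/(a+bg)$, which lower-bounds the inner conditional expectation by $1/(\bar L^0_i + \bar L^1_i\norm{\nabla_i f(X^k)}_{(i)\star})$ with $\bar L^\alpha_i \eqdef \ExpCond{L^\alpha_{i,\hat S}}{i\in\hat S}$; and second, after taking full expectation, on the convex map $y\mapsto y^2/(a+by)$ to push the expectation inside. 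Telescoping over $k=0,\dots,K-1$ and using Assumption~\ref{as:lower_bound} produces the aggregated inequality
\begin{equation*}
    2\delta^0 \;\geq\; \sum_{k=0}^{K-1}\sum_{i=1}^b \pi_i\,\phi_i(\beta_i^k), \qquad \phi_i(y)\eqdef \frac{y^2}{\bar L^0_i+\bar L^1_i y},
\end{equation*}
where $\pi_i \eqdef \Prob{i\in\hat S}$ and $\beta_i^k \eqdef \Exp{\norm{\nabla_i f(X^k)}_{(i)\star}}$.

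The step I expect to be the main obstacle is tying the coordinate-wise quantity $\sum_i \pi_i \phi_i(\beta_i^k)$ to the single scalar $\sum_i w_i \beta_i^k$ that the theorem actually bounds. The right tool is a Cauchy--Schwarz with a carefully tailored pairing: the algebraic identity $w_i \beta_i^k = \sqrt{\pi_i\phi_i(\beta_i^k)}\cdot\sqrt{\pi_i(\bar L^0_i+\bar L^1_i \beta_i^k)/(\bar L^1_i)^2}$ (verified coordinate-wise from $w_i = \pi_i/\bar L^1_i$) yields, after summing in $i$ and applying C--S,
\begin{equation*}
    \sum_i \pi_i\phi_i(\beta_i^k) \;\geq\; \frac{\parens{\sum_i w_i \beta_i^k}^{2}}{C + \sum_i w_i \beta_i^k}, \qquad C \eqdef \sum_i \frac{\pi_i \bar L^0_i}{(\bar L^1_i)^2},
\end{equation*}
with $C$ matching exactly the coefficient in the first term of the stated $K$.

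Finally, I would close by contradiction. Suppose $\sum_i w_i \beta_i^k > \varepsilon \bar w$ for all $k<K$ with $\bar w \eqdef \tfrac{1}{b}\sum_l w_l$. Monotonicity of $y\mapsto y^2/(C+y)$ on $[0,\infty)$ lower-bounds each summand of the aggregated inequality by $\varepsilon^2\bar w^2/(C+\varepsilon\bar w)$, so $2\delta^0 > K\,\varepsilon^2\bar w^2/(C+\varepsilon\bar w)$. Rearranging yields $K < 2\delta^0 C/(\varepsilon^2\bar w^2) + 2\delta^0/(\varepsilon\bar w)$, contradicting the chosen value of $K$ and proving the claim.
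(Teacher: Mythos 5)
Your proposal is correct and takes essentially the same route as the paper: the same descent step and stepsize, the same two Jensen applications (convexity of $t\mapsto 1/t$ to pull the conditional expectation inside, then convexity of $y\mapsto y^2/(a+by)$ to pass to $\Exp{\norm{\nabla_i f(X^k)}_{(i)\star}}$), and the same Cauchy--Schwarz pairing, which the paper packages as Lemma~\ref{lemma:ineq}. Your final step (contradiction via monotonicity of $y\mapsto y^2/(C+y)$) is merely a rephrasing of the paper's closing step, which inverts the increasing function $\psi$.
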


\begin{remark}\label{rem:rt_l0l1_wi}
    The guarantee in \Cref{thm:rt_l0l1_iter} is meaningful only if $w_i > 0$ for all $i\in[b]$, which is equivalent to requiring that $\Prob{i\in \hat{S}} > 0$ for all $i\in[b]$.
\end{remark}

\begin{remark}\label{rem:rpt_l0l1_iter}
    For the \algnamesmall{RPT} case (see \Cref{sec:rpt}), we have
    \begin{eqnarray*}
    	\Prob{i\in \hat{S}}
    	= \Prob{s \leq i}
    	= \sum_{s=1}^i p_s
    \end{eqnarray*}
    and
    \begin{eqnarray*}
    	\ExpCond{L^{\alpha}_{i,\hat{S}}}{i\in \hat{S}}
    	= \frac{\Exp{L^{\alpha}_{i,\hat{S}} \I{i\in \hat{S}}}}{\Prob{i\in \hat{S}}}
        = \frac{\sum_{s=1}^i p_s L^{\alpha}_{i, \{s,\dots,b\}}}{\sum_{s=1}^i p_s}
    \end{eqnarray*}
    for $\alpha \in \{0,1\}$. Hence the weights from \Cref{thm:rt_l0l1_iter} become
    \begin{eqnarray*}
        \frac{\Prob{i\in \hat{S}}}{\ExpCond{L^1_{i,\hat{S}}}{i\in \hat{S}}}
        = \frac{\parens{\sum_{s=1}^i p_s}^2}{\sum_{s=1}^i p_s L^{\alpha}_{i, \{s,\dots,b\}}}.
    \end{eqnarray*}
    Substituting this expression into the rate yields the result in \Cref{thm:rpt_l0l1_iter}.
\end{remark}

\begin{remark}
    When $\hat{S} = [b]$ with probability $1$, the weights become $w_i = \nicefrac{1}{L^1_{i,[b]}}$, and hence \Cref{thm:rt_l0l1_iter} guarantees that
    \begin{eqnarray*}
        \min_{k=0,\ldots,K-1} \sum_{i=1}^b \left[\frac{\nicefrac{1}{L^1_{i,[b]}}}{\frac{1}{b} \sum_{l=1}^b \nicefrac{1}{L^1_{l,[b]}}} \norm{\nabla _i f(X^k)}_{(i) \star}\right] \leq \varepsilon,
    \end{eqnarray*}
    after
	\begin{eqnarray*}
        K &=& \left\lceil \frac{2 \delta^0 \sum_{i=1}^b \nicefrac{L^0_{i,[b]}}{\parens{L^1_{i,[b]}}^2}}{\varepsilon^2 \parens{\frac{1}{b} \sum_{l=1}^b \nicefrac{1}{L^1_{i,[b]}}}^2}
        + \frac{2 \delta^0}{\varepsilon \parens{\frac{1}{b} \sum_{l=1}^b \nicefrac{1}{L^1_{i,[b]}}}} \right\rceil
    \end{eqnarray*}
	iterations, recovering the rate of \algnamesmall{Gluon} \citep{riabinin2025gluon}.
\end{remark}

\begin{proof}[Proof of \Cref{thm:rt_l0l1_iter}]
    Starting with \Cref{as:arbitrary_layer_gen_smoothness}, we have
    \begin{eqnarray*}
        &&\hspace{-1cm}f(X^{k+1}) \\
		&\leq& f(X^k) + \sum_{i\in S^k} \Bigg(\inp{\nabla_i f(X^k)}{X_i^{k+1} - X_i^k}_{(i)} \\
        &&\qquad\qquad\qquad\quad+ \frac{L^0_{i,S^k} + L^1_{i,S^k} \norm{\nabla_i f(X^k)}_{(i) \star}}{2} \norm{X_i^{k+1} - X_i^k}_{(i)}^2\Bigg) \\
		&=& f(X^k) + \sum_{i\in S^k} \Bigg(- \gamma_i^k \inp{\nabla_i f(X^k)}{\parens{\nabla_i f(X^k)}^{\sharp}}_{(i)} \\
        &&\qquad\qquad\qquad\quad+ \frac{L^0_{i,S^k} + L^1_{i,S^k} \norm{\nabla_i f(X^k)}_{(i) \star}}{2} (\gamma_i^k)^2 \norm{\parens{\nabla_i f(X^k)}^{\sharp}}_{(i)}^2\Bigg) \\
		&\overset{\eqref{eq:inpsharp}, \eqref{eq:normsharp}}{=}& f(X^k) + \sum_{i\in S^k} \Bigg(- \gamma_i^k \norm{\nabla_i f(X^k)}^2_{(i) \star} \\
        &&\qquad\qquad\qquad\quad+ \frac{L^0_{i,S^k} + L^1_{i,S^k} \norm{\nabla_i f(X^k)}_{(i) \star}}{2} (\gamma_i^k)^2 \norm{\nabla_i f(X^k)}^2_{(i) \star}\Bigg).
    \end{eqnarray*}
	Taking $\gamma_i^k = \frac{1}{L^0_{i,S^k} + L^1_{i,S^k} \norm{\nabla_i f(X^k)}_{(i) \star}}$ gives
	\begin{eqnarray*}
        f(X^{k+1}) \leq f(X^k) - \sum_{i\in S^k} \frac{\norm{\nabla_i f(X^k)}^2_{(i) \star}}{2 \parens{L^0_{i,S^k} + L^1_{i,S^k} \norm{\nabla_i f(X^k)}_{(i) \star}}},
    \end{eqnarray*}
	and hence
	\begin{eqnarray*}
        \sum_{k=0}^{K-1} \sum_{i\in S^k} \frac{\norm{\nabla_i f(X^k)}^2_{(i) \star}}{2 \parens{L^0_{i,S^k} + L^1_{i,S^k} \norm{\nabla_i f(X^k)}_{(i) \star}}}
		&\leq& \sum_{k=0}^{K-1} \parens{f(X^k) - f(X^{k+1})} \\
		&\leq& f(X^0) - f^\star
		\eqdef \delta^0.
    \end{eqnarray*}
    Taking expectation
    \begin{align}\label{eq:naovaoqwf}
        \delta^0
        &\geq \sum_{k=0}^{K-1} \Exp{\sum_{i\in S^k} \frac{\norm{\nabla_i f(X^k)}^2_{(i) \star}}{2 \parens{L^0_{i,S^k} + L^1_{i,S^k} \norm{\nabla_i f(X^k)}_{(i) \star}}}} \nonumber \\
        &= \sum_{k=0}^{K-1} \Exp{\ExpCond{\sum_{i=1}^b \I{i\in S^k} \frac{\norm{\nabla_i f(X^k)}^2_{(i) \star}}{2 \parens{L^0_{i,S^k} + L^1_{i,S^k} \norm{\nabla_i f(X^k)}_{(i) \star}}}}{X^k}} \nonumber \\
        &= \sum_{k=0}^{K-1} \sum_{i=1}^b \Exp{\Prob{i\in S^k} \ExpCond{\frac{\norm{\nabla_i f(X^k)}^2_{(i) \star}}{2 \parens{L^0_{i,S^k} + L^1_{i,S^k} \norm{\nabla_i f(X^k)}_{(i) \star}}}}{X^k, \brac{i\in S^k}}} \nonumber \\
        &\overset{(i)}{\geq} \sum_{k=0}^{K-1} \sum_{i=1}^b \Exp{\frac{\norm{\nabla_i f(X^k)}^2_{(i) \star} \Prob{i\in S^k}}{2 \parens{\ExpCond{L^0_{i,S^k}}{X^k, \brac{i\in S^k}} + \ExpCond{L^1_{i,S^k}}{X^k, \brac{i\in S^k}} \norm{\nabla_i f(X^k)}_{(i) \star}}}} \nonumber \\
        &\overset{(ii)}{=} \frac{1}{2} \sum_{k=0}^{K-1} \sum_{i=1}^b \Exp{\frac{\norm{\nabla_i f(X^k)}^2_{(i) \star} \Prob{i\in \hat{S}}}{\ExpCond{L^0_{i,\hat{S}}}{i\in \hat{S}} + \ExpCond{L^1_{i,\hat{S}}}{i\in \hat{S}} \norm{\nabla_i f(X^k)}_{(i) \star}}} \nonumber \\
        &\overset{(iii)}{\geq} \frac{1}{2} \sum_{k=0}^{K-1} \sum_{i=1}^b \frac{\Exp{\norm{\nabla_i f(X^k)}_{(i) \star}}^2 \Prob{i\in \hat{S}}}{\ExpCond{L^0_{i,\hat{S}}}{i\in \hat{S}} + \ExpCond{L^1_{i,\hat{S}}}{i\in \hat{S}} \Exp{\norm{\nabla_i f(X^k)}_{(i) \star}}},
    \end{align}
    where in $(i)$ we used Jensen's inequality and convexity of the function $t \mapsto \frac{1}{t}$ for $t>0$, $(ii)$ follows from independence of $\hat{S}$ and $X^k$, and $(iii)$ is a consequence of convexity of the function $t \mapsto \frac{t^2 \Prob{i\in S^k}}{\ExpCond{L^0_{i,\hat{S}}}{i\in \hat{S}} + \ExpCond{L^1_{i,\hat{S}}}{i\in \hat{S}} t}$ and Jensen's inequality.

    Now, using \Cref{lemma:ineq} with $x_i = \nicefrac{\sqrt{\Prob{i\in \hat{S}}}}{\ExpCond{L^1_{i,\hat{S}}}{i\in \hat{S}}}$, $y_i = \sqrt{\Prob{i\in \hat{S}}} \Exp{\norm{\nabla_i f(X^k)}_{(i) \star}}$ and $z_i = \ExpCond{L^0_{i,\hat{S}}}{i\in \hat{S}} + \ExpCond{L^1_{i,\hat{S}}}{i\in \hat{S}} \Exp{\norm{\nabla_i f(X^k)}_{(i) \star}}$, we obtain
    \begin{align*}
        &\sum_{i=1}^b \frac{\Prob{i\in \hat{S}} \Exp{\norm{\nabla_i f(X^k)}_{(i) \star}}^2}{\ExpCond{L^0_{i,\hat{S}}}{i\in \hat{S}} + \ExpCond{L^1_{i,\hat{S}}}{i\in \hat{S}} \Exp{\norm{\nabla_i f(X^k)}_{(i) \star}}} \\
        &\geq \frac{\parens{\sum_{i=1}^b \frac{\Prob{i\in \hat{S}}}{\ExpCond{L^1_{i,\hat{S}}}{i\in \hat{S}}} \Exp{\norm{\nabla_i f(X^k)}_{(i) \star}}}^2}{\sum_{i=1}^b \parens{\frac{\Prob{i\in \hat{S}}}{\parens{\ExpCond{L^1_{i,\hat{S}}}{i\in \hat{S}}}^2} \ExpCond{L^0_{i,\hat{S}}}{i\in \hat{S}} + \frac{\Prob{i\in \hat{S}}}{\ExpCond{L^1_{i,\hat{S}}}{i\in \hat{S}}} \Exp{\norm{\nabla_i f(X^k)}_{(i) \star}}}}.
    \end{align*}
    Applying this in \eqref{eq:naovaoqwf}, we get
    \begin{eqnarray*}
        \delta^0
        &\geq& \frac{1}{2} \sum_{k=0}^{K-1}
        \sum_{i=1}^b \frac{\Exp{\norm{\nabla_i f(X^k)}_{(i) \star}}^2 \Prob{i\in \hat{S}}}{\ExpCond{L^0_{i,\hat{S}}}{i\in \hat{S}} + \ExpCond{L^1_{i,\hat{S}}}{i\in \hat{S}} \Exp{\norm{\nabla_i f(X^k)}_{(i) \star}}} \\
        &\geq& \frac{1}{2} \sum_{k=0}^{K-1} \frac{\parens{\sum_{i=1}^b \frac{\Prob{i\in \hat{S}}}{\ExpCond{L^1_{i,\hat{S}}}{i\in \hat{S}}} \Exp{\norm{\nabla_i f(X^k)}_{(i) \star}}}^2}{\sum_{i=1}^b \parens{\frac{\Prob{i\in \hat{S}}}{\parens{\ExpCond{L^1_{i,\hat{S}}}{i\in \hat{S}}}^2} \ExpCond{L^0_{i,\hat{S}}}{i\in \hat{S}} + \frac{\Prob{i\in \hat{S}}}{\ExpCond{L^1_{i,\hat{S}}}{i\in \hat{S}}} \Exp{\norm{\nabla_i f(X^k)}_{(i) \star}}}} \\
        &=& \frac{1}{2} \sum_{k=0}^{K-1} \psi \parens{\sum_{i=1}^b \frac{\Prob{i\in \hat{S}}}{\ExpCond{L^1_{i,\hat{S}}}{i\in \hat{S}}} \Exp{\norm{\nabla_i f(X^k)}_{(i) \star}}},
    \end{eqnarray*}
	where $\psi(t) \eqdef \frac{t^2}{\sum_{i=1}^b \frac{\Prob{i\in \hat{S}}}{\parens{\ExpCond{L^1_{i,\hat{S}}}{i\in \hat{S}}}^2} \ExpCond{L^0_{i,\hat{S}}}{i\in \hat{S}} + t}$. Since $\psi$ is increasing for $t>0$, we have
    \begin{eqnarray*}
        \delta^0
        &\geq& \frac{1}{2} \sum_{k=0}^{K-1} \psi \parens{\sum_{i=1}^b \frac{\Prob{i\in \hat{S}}}{\ExpCond{L^1_{i,\hat{S}}}{i\in \hat{S}}} \Exp{\norm{\nabla_i f(X^k)}_{(i) \star}}} \\
        &\geq& \frac{K}{2} \psi \parens{\min_{k=0,\ldots,K-1} \sum_{i=1}^b \frac{\Prob{i\in \hat{S}}}{\ExpCond{L^1_{i,\hat{S}}}{i\in \hat{S}}} \Exp{\norm{\nabla_i f(X^k)}_{(i) \star}}}.
    \end{eqnarray*}
    Moreover, since $\psi$ is monotonic, it has an inverse $\psi^{-1}$. Thus
    \begin{eqnarray*}
        \psi^{-1}\parens{\frac{2 \delta^0}{K}}
        &\geq& \min_{k=0,\ldots,K-1} \sum_{i=1}^b \frac{\Prob{i\in \hat{S}}}{\ExpCond{L^1_{i,\hat{S}}}{i\in \hat{S}}} \Exp{\norm{\nabla_i f(X^k)}_{(i) \star}} \\
        &=& \min_{k=0,\ldots,K-1} \sum_{i=1}^b w_i \Exp{\norm{\nabla_i f(X^k)}_{(i) \star}},
    \end{eqnarray*}
    where $w_i \eqdef \frac{\Prob{i\in \hat{S}}}{\ExpCond{L^1_{i,\hat{S}}}{i\in \hat{S}}}$.
    This in turn means that to reach the precision
    \begin{eqnarray*}
        \min_{k=0,\ldots,K-1} \sum_{i=1}^b \left[\frac{w_i}{\frac{1}{b} \sum_{l=1}^b w_l} \Exp{\norm{\nabla _i f(X^k)}_{(i) \star}}\right] \leq \varepsilon,
    \end{eqnarray*}
    it suffices to run the algorithm for
	\begin{eqnarray*}
        K &=& \left\lceil \frac{2 \delta^0}{\psi\parens{\varepsilon \parens{\frac{1}{b} \sum_{l=1}^b w_l}}} \right\rceil
        = \left\lceil \frac{2 \delta^0 \sum_{i=1}^b \frac{\Prob{i\in \hat{S}} \ExpCond{L^0_{i,\hat{S}}}{i\in \hat{S}}}{\parens{\ExpCond{L^1_{i,\hat{S}}}{i\in \hat{S}}}^2} + 2 \delta^0 \parens{\varepsilon \parens{\frac{1}{b} \sum_{l=1}^b w_l}}}{\parens{\varepsilon \parens{\frac{1}{b} \sum_{l=1}^b w_l}}^2} \right\rceil \\
        &=& \left\lceil \frac{2 \delta^0 \sum_{i=1}^b \frac{\Prob{i\in \hat{S}} \ExpCond{L^0_{i,\hat{S}}}{i\in \hat{S}}}{\parens{\ExpCond{L^1_{i,\hat{S}}}{i\in \hat{S}}}^2}}{\varepsilon^2 \parens{\frac{1}{b} \sum_{l=1}^b \frac{\Prob{l\in \hat{S}}}{\ExpCond{L^1_{l,\hat{S}}}{l\in \hat{S}}}}^2}
        + \frac{2 \delta^0}{\varepsilon \parens{\frac{1}{b} \sum_{l=1}^b \frac{\Prob{l \in \hat{S}}}{\ExpCond{L^1_{l,\hat{S}}}{l\in \hat{S}}}}} \right\rceil
    \end{eqnarray*}
    iterations.
\end{proof}

\newpage

\section{Optimizing the Cost -- Deterministic Gradient Case}\label{sec:cost_opt}

Let $S^k\subseteq\{1,\dots,b\}$ be the random subset sampled at iteration $k$, and
$s^k \eqdef \min S^k$ its smallest index. Recall the per-iteration cost
\begin{eqnarray*}
    \mathrm{cost}(S^k)
    = c_{\mathrm{ov}} + \sum_{i=s^k}^b c_i + \sum_{i\in S^k} c_i^{\sharp}.
\end{eqnarray*}
Define the two marginal probabilities
\begin{eqnarray*}
    F_i \eqdef \Prob{\hat{s} \leq i},\qquad
    Q_i \eqdef \Prob{i \in \hat{S}},
\end{eqnarray*}
where $\hat{s}$ is a random variable following the same distribution as $s^k$ (since $S^k \sim \cD$, $k \geq 0$, are i.i.d., the same holds for $s^k$). Since
\begin{align*}
    \Exp{\sum_{i=s^k}^b c_i}
    = \sum_{j=1}^b \Prob{s^k = j} \sum_{i=j}^b c_i
    = \sum_{i=1}^b c_i \sum_{j=1}^i \Prob{s^k = j}
    = \sum_{i=1}^b c_i \Prob{s^k \leq i}
\end{align*}
and
\begin{align*}
    \Exp{\sum_{i\in S^k} c_i^{\sharp}}
    = \Exp{\sum_{i=1}^b \I{i\in S^k} c_i^{\sharp}}
    = \sum_{i=1}^b c_i^{\sharp} \Prob{i\in S^k},
\end{align*}
the expected cost of one iteration is
\begin{eqnarray}\label{eq:cost_gen}
    \Exp{\mathrm{cost}(\hat{S})}
    = c_{\mathrm{ov}} + \sum_{i=1}^b c_i F_i + \sum_{i=1}^b c_i^{\sharp} Q_i.
\end{eqnarray}
Hence, to evaluate the expected cost, it suffices to compute the two marginals $F_i$ and $Q_i$ under the chosen sampling scheme.

In the sequel, we describe a few example sampling strategies considered in this work and analyze their costs under the layer-wise smooth setting (see \Cref{sec:cost_opt_smooth}) and the generalized layer-wise smooth setting (see \Cref{sec:cost_opt_l0l1}).
\begin{enumerate}
    \item \textbf{\algname{RPT}.}
    Sample $\hat{s}\in\{1,\dots,b\}$, where $p_i = \Prob{\hat{s}=i}$, and set $\hat{S}=\{\hat{s},\dots,b\}$. Then
    \begin{eqnarray*}
        F_i = \Prob{\hat{s}\leq i} = \sum_{j=1}^i p_j,
        \qquad
        Q_i = \Prob{i\in \hat{S}} = \Prob{\hat{s}\leq i} = F_i.
    \end{eqnarray*}
    Therefore,
    \begin{eqnarray*}
        \Exp{\mathrm{cost}(\hat{S})}
        = c_{\mathrm{ov}} + \sum_{i=1}^b (c_i + c_i^{\sharp}) F_i
        = c_{\mathrm{ov}} + \sum_{i=1}^b (c_i + c_i^{\sharp}) \sum_{j=1}^i p_j.
    \end{eqnarray*}

    \item \textbf{$\tau$-nice sampling.}
    Choose $\hat{S}$ uniformly from all subsets of $[b]$ of size $\tau$. Then
    \begin{eqnarray*}
        F_i &=& 1 - \Prob{\hat{s} > i}
        = 1 - \frac{\binom{b-i}{\tau}}{\binom{b}{\tau}}, \\
        Q_i &=& \frac{\binom{b-1}{\tau-1}}{\binom{b}{\tau}}
        = \frac{\tau}{b},
    \end{eqnarray*}
    where we use the convention $\binom{n}{k}=0$ for $n<k$ (so the formula for $F_i$ automatically gives $F_i=1$ when $b-i<\tau$).
    Hence
    \begin{eqnarray*}
        \Exp{\mathrm{cost}(\hat{S})}
        = c_{\mathrm{ov}} + \sum_{i=1}^b c_i \parens{1 - \frac{\binom{b-i}{\tau}}{\binom{b}{\tau}}}
        + \frac{\tau}{b} \sum_{i=1}^b c_i^{\sharp}.
    \end{eqnarray*}

    \item \textbf{$\tau$-submodel sampling.}
    Sample a starting index $\hat{s}\in\{1,\dots,b-\tau+1\}$ with probability $p_i = \Prob{\hat{s}=i}$ (where $\tau\in[b]$ is fixed), and set $\hat{S}=\{\hat{s},\dots,\hat{s}+\tau-1\}$ (i.e., a block of $\tau$ consecutive layers). Then the marginals are
    \begin{eqnarray*}
        F_i &=& \Prob{\hat{s} \leq i}
        = \sum_{j=1}^{\min\{i,b-\tau+1\}} p_j, \\
        Q_i &=& \Prob{i\in \hat{S}}
        = \sum_{j=\max\{1,i-\tau+1\}}^{\min\{i,b-\tau+1\}} p_j.
    \end{eqnarray*}
    The expected per-iteration cost is therefore
    \begin{eqnarray*}
        \Exp{\mathrm{cost}(\hat{S})}
        = c_{\mathrm{ov}} + \sum_{i=1}^b c_i \parens{\sum_{j=1}^{\min\{i,b-\tau+1\}} p_j}
        + \sum_{i=1}^b c_i^{\sharp} \parens{\sum_{j=\max\{1,i-\tau+1\}}^{\min\{i,b-\tau+1\}} p_j}.
    \end{eqnarray*}

    \item \textbf{Arbitrary submodel sampling.}
    Let $\{B_1,\dots,B_m\}$ be a partition of $[b]$ into disjoint blocks of arbitrary indices, i.e.,
    \begin{align*}
    	B_1 \cup \cdots \cup B_m = [b], \quad 
    	B_k \cap B_l = \emptyset \quad\textnormal{for } k\neq l.
    \end{align*}
    At each iteration, pick block $B_i$ with probability $p_i$ (where $\sum_{i=1}^m p_i = 1$) and set $\hat{S} = B_i$.
    
    For $i\in[b]$, let $k(i)$ denote the unique block with $i\in B_{k(i)}$, and let $\underline{b}_k \eqdef \min B_k$. Then
    \begin{align*}
    	F_i &= \Prob{\hat{s} \leq i}
    	= \sum_{j: \underline{b}_j \leq i} p_j, \\
    	Q_i &= \Prob{i\in \hat{S}}
    	= p_{k(i)}.
    \end{align*}
    The expected cost per iteration is
    \begin{align*}
    	\Exp{\mathrm{cost}(\hat{S})}
    	= c_{\mathrm{ov}} + \sum_{i=1}^b c_i \parens{\sum_{k: \underline{b}_k \leq i} p_k} + \sum_{i=1}^b c_i^{\sharp} p_{k(i)}.
    \end{align*}
\end{enumerate}

We now consider the algorithm's performance under the two smoothness regimes.

\subsection{Smooth Case}\label{sec:cost_opt_smooth}

According to \Cref{thm:rt_smooth_iter}, under \Cref{as:arbitrary_layer_smoothness}, \Cref{alg:rt_arbitrary} run with stepsizes $\gamma_i^k = \nicefrac{1}{L_{i,S^k}^0}$ guarantees that
\begin{eqnarray*}
    \parens{\min_{i\in[b]} w_i} \frac{1}{K} \sum_{k=0}^{K-1} \sum_{i=1}^b \Exp{\norm{\nabla_i f(X^k)}^2_{(i) \star}}
    &\leq& \frac{1}{K} \sum_{k=0}^{K-1} \sum_{i=1}^b w_i \Exp{\norm{\nabla_i f(X^k)}^2_{(i) \star}} \\
    &\leq& \frac{f(X^0) - f^{\star}}{K},
\end{eqnarray*}
where $w_i \eqdef \Exp{\frac{\I{i\in \hat{S}}}{2 L_{i,\hat{S}}^0}}$. Thus, to ensure that $\frac{1}{K} \sum_{k=0}^{K-1} \sum_{i=1}^b \Exp{\norm{\nabla_i f(X^k)}^2_{(i) \star}} \leq \varepsilon$, it suffices to run it for $$K = \ceil{\frac{f(X^0) - f^{\star}}{\varepsilon \parens{\min_{i\in[b]} w_i}}}$$ iterations. Now, recall from \eqref{eq:cost_gen} that the expected cost of a single iteration is
\begin{eqnarray*}
    \Exp{\mathrm{cost}(S^k)}
    = c_{\mathrm{ov}} + \sum_{i=1}^b c_i \Prob{\min S^k \leq i} + \sum_{i=1}^b c_i^{\sharp} \Prob{i\in S^k}.
\end{eqnarray*}
Hence, the expected cost of the entire optimization procedure can be written as
\begin{eqnarray*}
    \mathrm{cost}_{\varepsilon}(\cD)
    &=& K \times \Exp{\mathrm{cost}(\hat{S})} \\
    &=& K \times \parens{c_{\mathrm{ov}} + \sum_{i=1}^b c_i \Prob{\min \hat{S} \leq i} + \sum_{i=1}^b c_i^{\sharp} \Prob{i\in \hat{S}}} \\
    &\propto& \frac{c_{\mathrm{ov}} + \sum_{i=1}^b c_i \Prob{\min \hat{S} \leq i} + \sum_{i=1}^b c_i^{\sharp} \Prob{i\in \hat{S}}}{\min_{i\in[b]} \Exp{\frac{\I{i\in \hat{S}}}{2 L_{i,\hat{S}}^0}}},
\end{eqnarray*}
and the cost minimization problem to be solved is
\begin{eqnarray}\label{eq:cost_opt_problem}
    \min_{\cD: \mathfrak{P}([b]) \to [0,1], \sum_{S\subseteq[b]} \cD(S) = 1} \frac{c_{\mathrm{ov}} + \sum_{i=1}^b c_i \Prob{\min \hat{S} \leq i} + \sum_{i=1}^b c_i^{\sharp} \Prob{i\in \hat{S}}}{\min_{i\in[b]} \Exp{\frac{\I{i\in \hat{S}}}{2 L_{i,\hat{S}}^0}}}.
\end{eqnarray}

The task above is an optimization over probability distributions on the power set $\mathfrak{P}([b])$, which has dimension $2^b$, making a direct solution intractable for large $b$.
Instead of tackling it in full generality, we can restrict $\cD$ to some parametric family. For certain such families, the ratio objective simplifies to a linear–fractional program in the cumulative marginals, which has a closed form solution or can be solved efficiently (e.g., via the Dinkelbach algorithm \citep{dinkelbach1967nonlinear}).

Let us now consider some specific examples, starting with the procedure considered in the main part of this paper.

\subsubsection{Randomized Progressive Training}\label{sec:rpt_sol}

\begin{mdframed}[linecolor=lightgray!12, backgroundcolor=lightgray!12]
    Sample $\hat{s}\in\{1,\dots,b\}$, where $p_i = \Prob{\hat{s}=i}$, and set $\hat{S}=\{\hat{s},\dots,b\}$.
\end{mdframed}

We first consider the randomized progressive training setting introduced in \Cref{sec:rpt}.
Under this sampling strategy, we have
\begin{eqnarray*}
    F_i = \Prob{\hat{s}\leq i} = \sum_{j=1}^i p_j,
    \qquad
    Q_i = \Prob{i\in \hat{S}} = \Prob{\hat{s}\leq i} = F_i,
\end{eqnarray*}
and hence
\begin{eqnarray*}
    \Exp{\mathrm{cost}(\hat{S})}
    = c_{\mathrm{ov}} + \sum_{i=1}^b (c_i + c_i^{\sharp}) F_i
    = c_{\mathrm{ov}} + \sum_{i=1}^b (c_i + c_i^{\sharp}) \sum_{j=1}^i p_j.
\end{eqnarray*}
Combining it with the fact that
\begin{eqnarray*}
    \Exp{\frac{\I{i\in \hat{S}}}{2 L_{i,\hat{S}}^0}}
    = \sum_{s=1}^b \frac{\I{i \in \{s,\dots,b\}}}{2 L_{i, \{s,\dots,b\}}^0} p_s
    = \sum_{s=1}^i \frac{p_s}{2 L_{i, \{s,\dots,b\}}^0},
\end{eqnarray*}
we get
\begin{eqnarray}\label{eq:costK}
    \mathrm{cost}_{\varepsilon}(\cD)
    \propto \frac{\Exp{\mathrm{cost}(\hat{S})}}{\min_{i\in[b]} \Exp{\frac{\I{i\in \hat{S}}}{2 L_{i,\hat{S}}^0}}}
    = \frac{c_{\mathrm{ov}} + \sum_{i=1}^b (c_i + c_i^{\sharp}) \sum_{j=1}^i p_j}{\min_{i\in[b]} \brac{\sum_{s=1}^i \frac{p_s}{2 L_{i, \{s,\dots,b\}}^0}}}.
\end{eqnarray}

\paragraph{Optimal probabilities.}

First, note that the numerator can be rewritten as
\begin{align*}
    c_{\mathrm{ov}} + \sum_{i=1}^b (c_i + c_i^{\sharp}) \sum_{j=1}^i p_j
    =c_{\mathrm{ov}}+\sum_{j=1}^b \left[\sum_{i\geq j} (c_i + c_i^{\sharp})\right] p_j
    =\sum_{j=1}^b \left[c_{\mathrm{ov}}+\sum_{i\geq j} (c_i + c_i^{\sharp})\right] p_j,
\end{align*}
where the second equality follows from $\sum_{j=1}^b p_j=1$. Denote $$d_j \eqdef c_{\mathrm{ov}}+\sum_{i\geq j} (c_i + c_i^{\sharp})$$ for $j \in [b]$, and
let $$\delta_{i,s}\eqdef\frac{1}{2L_{i, \{s,\dots,b\}}}$$ for $i\in [b]$ and $s\leq i$.
Clearly 
\begin{align}\label{eq:orderofd}
    d_1 > d_2 > \ldots > d_b
\end{align}
and
\begin{align}\label{eq:orderofdelta}
    \delta_{i,1} \leq  \delta_{i,2} \leq  \ldots \leq  \delta_{i,i} \qquad \forall i\in [b].
\end{align}
Based on~\eqref{eq:costK}, the search for optimal probabilities reduces to solving the following linear fractional program:
\begin{equation}\label{eq:fl1}
    \begin{aligned}
        \min_{p, t} \quad & \frac{d_1p_1+\ldots+d_bp_b}{t} \\
        \text{s.t.} \quad & p_1,\ldots,p_b\geq 0\\
        & p_1+\ldots+p_b=1\\
        & t\geq 0 \\
        & t\leq \delta_{1,1}p_1 \\
        & \vdots\\
        & t\leq \delta_{i,1}p_1+\ldots+\delta_{i,i}p_i\\
        & \vdots\\
        & t\leq \delta_{b,1}p_1+\ldots+\delta_{b,b}p_b.
    \end{aligned}
\end{equation}
This program can be written equivalently as
\begin{equation}\label{eq:fl2}
    \begin{aligned}
        \min_{q} \quad & {d_1 q_1+\ldots+d_b q_b} \\
        \text{s.t.} \quad & q_1,\ldots,q_b\geq 0\\
        &  \delta_{1,1}q_1\geq 1 \\
        & \vdots\\
        &  \delta_{i,1}q_1+\ldots+\delta_{i,i}q_i\geq 1\\
        & \vdots\\
        &  \delta_{b,1}q_1+\ldots+\delta_{b,b}q_b \geq 1.
    \end{aligned}
\end{equation}

Based on that, we can derive a recursive prescription for the optimal probabilities.

First, we show that if $\sum_{s=1}^i \delta_{i,s} q_s>1$ and $q_i>0$ for some $i$, then one can shift a small amount of mass from $q_i$ to $q_{i+1}$ and strictly reduce the objective, without violating any constraints.

\begin{lemma}\label{lemma:exchange}
    Let $q$ be an optimal point of \eqref{eq:fl2} and fix $i \in [b-1]$.
    If $\sum_{s=1}^i \delta_{i,s} q_s>1$, then $q_i = 0$. Equivalently,
    \begin{align}\label{eq:exch}
        q_i > 0 \qquad\implies\qquad \sum_{s=1}^i \delta_{i,s} q_s = 1.
    \end{align}
\end{lemma}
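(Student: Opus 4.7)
The plan is to use a classical linear-programming exchange argument. Assuming the conclusion fails, I would construct a feasible perturbation of $q$ with strictly smaller objective by shifting a small amount of probability mass from coordinate $i$ to coordinate $i+1$ (which is well-defined exactly because $i \leq b-1$). The two structural facts that make this work are the strict ordering $d_i > d_{i+1}$ from \eqref{eq:orderofd}, which makes the swap \emph{beneficial}, and the monotonicity $\delta_{k,s} \leq \delta_{k,s+1}$ from \eqref{eq:orderofdelta} for each fixed $k$, which makes the swap \emph{feasible} with respect to all later constraints.

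Concretely, I would argue by contradiction. Suppose $q_i>0$ and $\sum_{s=1}^i \delta_{i,s} q_s > 1$. For $\epsilon \in (0, q_i]$ define $q'$ by $q'_i = q_i - \epsilon$, $q'_{i+1} = q_{i+1} + \epsilon$, and $q'_s = q_s$ otherwise. Then $q'\geq 0$, and the objective value changes by $\epsilon(d_{i+1}-d_i)$, which is strictly negative by \eqref{eq:orderofd}. It therefore suffices to check that $q'$ satisfies every constraint of \eqref{eq:fl2} for all sufficiently small $\epsilon>0$.

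The feasibility check splits naturally into three regimes. For $k < i$, the first $k$ coordinates of $q$ and $q'$ coincide, so the $k$-th constraint is untouched. For $k = i$, the left-hand side of the $i$-th constraint decreases by exactly $\epsilon\delta_{i,i}$; by the standing hypothesis there is strict slack $\sum_{s=1}^i \delta_{i,s} q_s > 1$, so the constraint still holds for $\epsilon$ small enough. For $k \geq i+1$, the left-hand side changes by $\epsilon(\delta_{k,i+1}-\delta_{k,i})$, which is nonnegative by \eqref{eq:orderofdelta}, so these constraints are only strengthened. Putting the three regimes together, $q'$ is feasible for sufficiently small $\epsilon>0$ and strictly improves the objective, contradicting the optimality of $q$.

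The argument is essentially routine; the only delicate point is the bookkeeping in the third regime, where one must observe that \eqref{eq:orderofdelta} is exactly what is needed to absorb the mass moved to index $i+1$ without violating any downstream constraint. The role of the hypothesis $i \leq b-1$ is also worth flagging: at $i=b$ there is no index to absorb the shifted mass, so the exchange step is undefined, which is precisely why the lemma excludes this case.
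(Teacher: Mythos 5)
Your proposal is correct and follows essentially the same exchange argument as the paper: both proofs perturb $q$ by moving $\varepsilon$ of mass from index $i$ to $i+1$, verify feasibility in the three regimes $k<i$, $k=i$, $k>i$ using the slack at constraint $i$ and the monotonicity $\delta_{k,i}\le\delta_{k,i+1}$, and derive a contradiction from $d_{i+1}<d_i$. The only cosmetic difference is that the paper writes out the explicit bound $\varepsilon\le\bigl(\sum_{s=1}^i \delta_{i,s}q_s-1\bigr)/\delta_{i,i}$ rather than appealing to ``sufficiently small $\varepsilon$.''
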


\begin{proof}
    Suppose that $i$ is such that $\sum_{s=1}^i \delta_{i,s} q_s>1$, but $q_i > 0$. Then there exists $\varepsilon>0$ such that $q_i \geq \varepsilon$ and
    \begin{align*}
        0<\varepsilon \leq \frac{\sum_{s=1}^i \delta_{i,s} q_s - 1}{\delta_{i,i}}.
    \end{align*}
    Define
    \begin{align*}
        \tilde{q_i} = q_i-\varepsilon,\quad \tilde{q}_{i+1} = q_{i+1} + \varepsilon, \quad \tilde{q}_s = q_s \text{ for } s\notin\{i,i+1\}.
    \end{align*}
    Then $\{\tilde{q}_i\}_{i\in[b]}$ satisfy all constraints:
    \begin{itemize}
        \item Constraint $k<i$: $\sum_{s=1}^k \delta_{k,s} \tilde{q}_s = \sum_{s=1}^k \delta_{k,s} q_s \geq 1$,
        
        \item Constraint $i$: $\sum_{s=1}^i \delta_{i,s} \tilde{q}_s = \sum_{s=1}^i \delta_{i,s} q_s - \delta_{i,i} \varepsilon \geq 1$ by the choice of $\varepsilon$,
        
        \item Constraint $k>i$: $\sum_{s=1}^k \delta_{k,s} \tilde{q}_s = \sum_{s=1}^k \delta_{k,s} q_s + \varepsilon(\delta_{k,i+1}-\delta_{k,i}) \geq \sum_{s=1}^k \delta_{k,s} q_s \geq 1$ using the monotonicity $\delta_{k,i} \leq \delta_{k,i+1}$.
    \end{itemize}
    At the same time, the value of the objective decreases, as $\sum_j d_j \tilde q_j - \sum_j d_j q_j = (d_{i+1}-d_i)\varepsilon < 0$, contradicting optimality of $q$. Therefore, we must have $q_i = 0$.
\end{proof}

Let us now use \Cref{lemma:exchange} to derive a recursive construction of the optimal probabilities. Let $q^\star = (q_1^\star, \ldots, q_b^\star)$ be a solution of \eqref{eq:fl2}.
As established in \eqref{eq:exch}, any positive coordinate forces its constraint to be tight.
Constraint $1$ is $\delta_{1,1} q_1^\star \geq 1$, meaning that $q_1^\star > 0$, and so $\delta_{1,1} q_1^\star = 1$. Thus
\begin{align*}
    q_1^\star = \frac{1}{\delta_{1,1}} = 2 L_{1, \{1,\dots,b\}}.
\end{align*}
Now, suppose that $q_1^\star,\dots,q_{i-1}^\star$ have already been determined. Then, constraint $i$ reads
\begin{align*}
    \sum_{s=1}^{i-1} \delta_{i,s} q_s^\star + \delta_{i,i} q_i^\star \geq 1.
\end{align*}
Define the residual
\begin{align*}
    r_i^\star \eqdef 1 - \sum_{s=1}^{i-1} \delta_{i,s} q_s^\star
    = 1 - \sum_{s=1}^{i-1} \frac{q_s^\star}{2L_{i, \{s,\dots,b\}}}.
\end{align*}
If $r_i^\star \le 0$, then $(q_1^\star,\dots,q_{i-1}^\star)$ already satisfy the constraint, so by \eqref{eq:exch}, we must have $q_i^\star=0$.
If $r_i^\star >0$, then for $(q_1^\star,\dots,q_i^\star)$ to satisfy the constraint, we need $q_i^\star>0$, and hence by \Cref{lemma:exchange}, we have $\sum_{s=1}^i \delta_{i,s} q_s^\star = 1$, meaning that
\begin{align*}
    q_i^\star = \frac{r_i^\star}{\delta_{i,i}}
    = 2 r_i^\star L_{i, \{i,\dots,b\}}.
\end{align*}
Combining the above yields the recursion
\begin{align*}
    q_1^\star &= 2 L_{1, \{1,\dots,b\}}, \\
    q_i^\star &= 2 [r_i^\star]_+ L_{i, \{i,\dots,b\}},\qquad r_i^\star = 1-\sum_{s=1}^{i-1} \frac{q_s^\star}{2L_{i, \{s,\dots,b\}}},\quad i\in\{2,\ldots,b\},
\end{align*}
where $[x]_+ \eqdef \max\{x,0\}$. The optimal probabilities are finally recovered by normalization
\begin{align*}
    p_i^\star = \frac{q_i^\star}{\sum_{j=1}^b q_j^\star}.
\end{align*}

\begin{remark}\label{rem:rpt_p1}
    Note that $(p_1^\star, p_2^\star, \ldots, p_b^\star)=(1, 0, \ldots, 0)$ if and only if $q_1^\star = \sum_{j=1}^b q_j^\star$, i.e., $q_2^\star = \ldots = q_b^\star = 0$. But for that to be the case, we need $r_i^\star \leq 0$ for all $i\in\{2,\ldots,b\}$, so
    \begin{align*}
        1 \leq \sum_{s=1}^{i-1} \delta_{i,s} q_s^\star
        = \delta_{i,1} q_1^\star
        = \frac{\delta_{i,1}}{\delta_{1,1}}
        = \frac{L_{1, \{1,\dots,b\}}}{L_{i, \{1,\dots,b\}}} \qquad \forall i\in\{2,\ldots,b\}.
    \end{align*}
    Therefore, choosing $p_1=1$ is optimal if and only if
    \begin{equation*}
        L_{1,\{1,\ldots,b\}}=\max_{i\in[b]} L_{i,\{1,\ldots,b\}},
    \end{equation*}
    which proves \Cref{thm:rpt_p1}.
\end{remark}

\subsubsection{$\tau$-nice Sampling}\label{sec:tau_nice_smooth}

\begin{mdframed}[linecolor=lightgray!12, backgroundcolor=lightgray!12]
    Choose $\hat{S}$ uniformly from all subsets of $[b]$ of size $\tau$.
\end{mdframed}

Every $\tau$-subset has probability $\binom{b}{\tau}^{-1}$. Thus
\begin{eqnarray*}
    \Exp{\frac{\I{i\in\hat{S}}}{2L^0_{i,\hat{S}}}}
    = \frac{1}{\binom{b}{\tau}} \sum_{\substack{S\subseteq[b]\\ |S|=\tau}} \frac{\I{i\in S}}{2L^0_{i,S}},
\end{eqnarray*}
and hence
\begin{eqnarray*}
    \mathrm{cost}_{\varepsilon}(\cD)
    &\propto& \frac{\Exp{\mathrm{cost}(\hat{S})}}{\min_{i\in[b]} \Exp{\frac{\I{i\in \hat{S}}}{2 L_{i,\hat{S}}^0}}}
    = \frac{c_{\mathrm{ov}} + \sum_{j=1}^b c_j \parens{1 - \frac{\binom{b-j}{\tau}}{\binom{b}{\tau}}} + \frac{\tau}{b} \sum_{j=1}^b c_j^{\sharp}}{\min_{i\in[b]} \frac{1}{\binom{b}{\tau}} \sum_{\substack{S\subseteq[b]\\ |S|=\tau}} \frac{\I{i\in S}}{2L^0_{i,S}}} \\
    &=& \frac{c_{\mathrm{ov}} \binom{b}{\tau} + \sum_{j=1}^b c_j \parens{\binom{b}{\tau} - \binom{b-j}{\tau}} + \binom{b}{\tau} \frac{\tau}{b} \sum_{j=1}^b c_j^{\sharp}}{\min_{i\in[b]} \brac{\sum_{\substack{S\subseteq[b]\\ |S|=\tau}} \frac{\I{i\in S}}{2L^0_{i,S}}}}.
\end{eqnarray*}
In general there is no closed-form expression for $\tau$ minimizing this cost because the denominator depends on $\{L^0_{i,S}\}_{S\subseteq[b]}$ in a highly  problem-specific way. That said, it can be shown that $\tau=b$ need not be optimal (indeed, it can be very sub-optimal).
For $\tau=b$, we always have $S=[b]$, and the cost becomes
\begin{eqnarray*}
    \mathrm{cost}_{\varepsilon}(\cD)
    &\propto& \frac{c_{\mathrm{ov}} + \sum_{j=1}^b c_j \parens{1 - \binom{b-j}{b}} + \sum_{j=1}^b c_j^{\sharp}}{\min_{i\in[b]} \frac{1}{2L^0_{i,[b]}}} \\
    &=& 2 \max_{i\in[b]} L^0_{i,[b]} \parens{c_{\mathrm{ov}} + \sum_{j=1}^b \parens{c_j + c_j^{\sharp}}}
\end{eqnarray*}
(recall that we use the convention $\binom{n}{k}=0$ for $n<k$).

Now, let us make a simplifying assumption that $L^0_{i,S}$ depends only on $i$ and $|S|$ and define $L^0_{i,\tau} \eqdef L^0_{i,S}$ for $|S|=\tau$. Then
\begin{eqnarray*}
    \mathrm{cost}_{\varepsilon}(\cD)
    &\propto& \frac{c_{\mathrm{ov}} \binom{b}{\tau} + \sum_{j=1}^b c_j \parens{\binom{b}{\tau} - \binom{b-j}{\tau}} + \binom{b}{\tau} \frac{\tau}{b} \sum_{j=1}^b c_j^{\sharp}}{\min_{i\in[b]} \brac{\sum_{\substack{S\subseteq[b]\\ |S|=\tau}} \frac{\I{i\in S}}{2L^0_{i,\tau}}}} \\
    &=& \frac{c_{\mathrm{ov}} \binom{b}{\tau} + \sum_{j=1}^b c_j \parens{\binom{b}{\tau} - \binom{b-j}{\tau}} + \binom{b}{\tau} \frac{\tau}{b} \sum_{j=1}^b c_j^{\sharp}}{\min_{i\in[b]} \brac{\frac{1}{2L^0_{i,\tau}} \binom{b-1}{\tau-1}}} \\
    &=& 2 \max_{i\in[b]} L^0_{i,\tau} \parens{\frac{b}{\tau} c_{\mathrm{ov}} + \sum_{j=1}^b c_j \parens{\frac{b}{\tau} - \frac{\binom{b-j}{\tau}}{\binom{b-1}{\tau-1}}} + \sum_{j=1}^b c_j^{\sharp}}.
\end{eqnarray*}
Define
\begin{align*}
	A(\tau) \eqdef \max_{i\in[b]} L^0_{i,\tau}, 
	\qquad 
	B(\tau) \eqdef \frac{b}{\tau} c_{\mathrm{ov}} + \sum_{j=1}^b c_j \parens{\frac{b}{\tau} - \frac{\binom{b-j}{\tau}}{\binom{b-1}{\tau-1}}} + \sum_{j=1}^b c_j^{\sharp}.
\end{align*}
Then, the objective function to be minimized is
\begin{align*}
	f(\tau) \eqdef A(\tau) B(\tau).
\end{align*}
By definition, $\tau=b$ is optimal if and only if $f(b) \leq f(\tau)$ for all $\tau \in \{1,\dots,b-1\}$.

First, we show that $B$ is decreasing in $\tau$. To this end, let $1 \leq \tau_1 < \tau_2 \leq b$ and consider the difference
\begin{eqnarray*}
	B(\tau_1) - B(\tau_2)
	&=& \parens{\frac{b}{\tau_1} - \frac{b}{\tau_2}} c_{\mathrm{ov}} + \sum_{j=1}^b c_j \parens{\frac{b}{\tau_1} - \frac{\binom{b-j}{\tau_1}}{\binom{b-1}{\tau_1-1}} - \frac{b}{\tau_2} + \frac{\binom{b-j}{\tau_2}}{\binom{b-1}{\tau_2-1}}} \\
	&=& \parens{\frac{b}{\tau_1} - \frac{b}{\tau_2}} c_{\mathrm{ov}}
	+ \sum_{j=1}^b c_j \parens{h_j(\tau_1) - h_j(\tau_2)},
\end{eqnarray*}
where $h_j(\tau) \eqdef \frac{b}{\tau} - \frac{\binom{b-j}{\tau}}{\binom{b-1}{\tau-1}}$. Clearly, the first term is positive. Let us focus on the second term.
Using Pascal's identity, we have
\begin{align*}
	h_{j+1}(\tau)-h_j(\tau)
	&=\frac{\binom{b-j}{\tau}-\binom{b-j-1}{\tau}}{\binom{b-1}{\tau-1}}
	=\frac{\binom{b-j-1}{\tau-1}}{\binom{b-1}{\tau-1}}.
\end{align*}
Therefore,
\begin{align*}
	h_j(\tau)
	= h_j(\tau) - h_0(\tau)
	= \sum_{m=0}^{j-1} \parens{h_{m+1}(\tau)-h_m(\tau)}
	= \sum_{m=0}^{j-1} \frac{\binom{b-m-1}{\tau-1}}{\binom{b-1}{\tau-1}},
\end{align*}
where
\begin{align*}
	\frac{\binom{b-m-1}{\tau-1}}{\binom{b-1}{\tau-1}}
	= \frac{b - \tau}{b - m - \tau} \frac{\binom{b-m-1}{\tau}}{\binom{b-1}{\tau}}
	\geq \frac{\binom{b-m-1}{\tau}}{\binom{b-1}{\tau}},
\end{align*}
and the inequality is strict for $m>0$. It follows that for any $j \in [b]$
\begin{align*}
	h_j(\tau)
	= \sum_{m=0}^{j-1} \frac{\binom{b-m-1}{\tau-1}}{\binom{b-1}{\tau-1}}
	\geq \sum_{m=0}^{j-1} \frac{\binom{b-m-1}{\tau}}{\binom{b-1}{\tau}}
	= h_j(\tau+1).
\end{align*}
Thus, except for the trivial case when $j=1$ (where $h_1(\tau)\equiv1$), the function $h_j(\tau)$ is strictly decreasing in $\tau$, implying that $h_j(\tau_1) - h_j(\tau_2) > 0$, which proves that $B(\tau_1) > B(\tau_2)$. Thus
\begin{align*}
	B(\tau) \geq B(b) \quad \forall \tau \leq b,
\end{align*}
with strict inequality if $c_{\mathrm{ov}} > 0$ and $\tau < b$.

Now, note that in general larger $\tau$ leads to a higher Lipschitz constant, and hence one may expect $A$ to be an increasing function of $\tau$. If $A(b)$ is much larger than $A(\tau)$, it may compensate for the decrease in $B(\tau)$, resulting in $f(\tau) = A(\tau) B(\tau) < A(b) B(b) = f(b)$, and so $\tau=b$ may not be optimal.
\\

As an example, suppose that $L^0_{i,\tau}$ scales linearly with $\tau$, i.e, $A(\tau) = \max_{i\in[b]} \tau L^0_{i}$ for some $L^0_{i} \geq 0$. Then
\begin{eqnarray*}
    \mathrm{cost}_{\varepsilon}(\cD)
    &\propto& \max_{i\in[b]} L^0_{i} \parens{b c_{\mathrm{ov}} + \sum_{j=1}^b c_j \parens{b - \tau \frac{\binom{b-j}{\tau}}{\binom{b-1}{\tau-1}}} + \tau \sum_{j=1}^b c_j^{\sharp}} \\
	&=& \max_{i\in[b]} L^0_{i} \parens{b c_{\mathrm{ov}}
	+ b \sum_{j=1}^b c_j
	+ \tau \sum_{j=1}^b \parens{c_j^{\sharp} - c_j \frac{\binom{b-j}{\tau}}{\binom{b-1}{\tau-1}}}}.
\end{eqnarray*}
Now, consider the function
\begin{eqnarray*}
	\Phi(\tau) \eqdef \tau \sum_{j=1}^b \parens{c_j^{\sharp} - c_j \frac{\binom{b-j}{\tau}}{\binom{b-1}{\tau-1}}}.
\end{eqnarray*}
Note that
\begin{eqnarray*}
	\Phi(\tau+1) - \Phi(\tau)
    &=& (\tau+1) \sum_{j=1}^b \parens{c_j^{\sharp} - c_j \frac{\binom{b-j}{\tau+1}}{\binom{b-1}{\tau}}}
    - \tau \sum_{j=1}^b \parens{c_j^{\sharp} - c_j \frac{\binom{b-j}{\tau}}{\binom{b-1}{\tau-1}}} \\
    &=& \sum_{j=1}^b c_j \parens{\tau \frac{\binom{b-j}{\tau}}{\binom{b-1}{\tau-1}} - (\tau+1) \frac{\binom{b-j}{\tau+1}}{\binom{b-1}{\tau}}}
    + \sum_{j=1}^b c_j^{\sharp} \\
    &=& \sum_{j=1}^b c_j \frac{j (b-j)! (b-\tau-1)!}{(b-1)! (b-j-\tau)!}
    + \sum_{j=1}^b c_j^{\sharp}
    \geq 0
\end{eqnarray*}
(with the convention that the right-hand side is $0$ when $b-j-\tau<0$).
Moreover, the increment is strictly positive if either $\sum_{j=1}^b c_j^{\sharp}>0$ or there exists $j$ with $c_j>0$ and $b-j-\tau\geq 0$. Hence $\Phi(\tau)$ is non-decreasing in $\tau$, and strictly increasing whenever one of these conditions holds, and the optimal choice is $\tau^{\star}=1$.

\subsubsection{$\tau$-Submodel Sampling}

\begin{mdframed}[linecolor=lightgray!12, backgroundcolor=lightgray!12]
    Sample a starting index $\hat{s}\in\{1,\dots,b-\tau+1\}$ with probability $p_i = \Prob{\hat{s}=i}$ and set $\hat{S}=\{\hat{s},\dots,\hat{s}+\tau-1\}$.
\end{mdframed}

The denominator is
\begin{eqnarray*}
    \Exp{\frac{\I{i\in\hat{S}}}{2L^0_{i,\hat{S}}}}
    = \sum_{j=1}^{b-\tau+1} p_j \frac{\I{i\in\{j,\dots,j+\tau-1\}}}{2L^0_{i,\{j,\dots,j+\tau-1\}}}
    = \sum_{j=\max\{1,i-\tau+1\}}^{\min\{i,b-\tau+1\}} \frac{p_j}{2L^0_{i,\{j,\dots,j+\tau-1\}}}.
\end{eqnarray*}
Hence the total expected cost is proportional to
\begin{eqnarray*}
    \mathrm{cost}_{\varepsilon}(\cD)
    &\propto& \frac{\Exp{\mathrm{cost}(\hat{S})}}
    {\min_{i\in[b]} \Exp{\frac{\I{i\in\hat{S}}}{2L^0_{i,\hat{S}}}}} \\
    &=& \frac{c_{\mathrm{ov}} + \sum_{j=1}^b c_j \left(\sum_{i=1}^{\min\{j,b-\tau+1\}} p_i\right)
    + \sum_{j=1}^b c_j^{\sharp} \left(\sum_{i=\max\{1,j-\tau+1\}}^{\min\{j,b-\tau+1\}} p_i\right)}
    {\min_{i\in[b]} \brac{\sum_{j=\max\{1,i-\tau+1\}}^{\min\{i,b-\tau+1\}} \frac{p_j}{2L^0_{i,\{j,\dots,j+\tau-1\}}}}}.
\end{eqnarray*}

\paragraph{Partitioned $\tau$-submodel sampling.}\label{sec:part_tau_submodel}

Let us consider a special case of the above sampling scheme where the submodels assigned non-zero probability \emph{partition} $[b]$. For simplicity, suppose that $b$ is divisible by $\tau$, let $m=\nicefrac{b}{\tau}$, and define the block start indices via
\begin{eqnarray*}
    s_k=(k-1)\tau+1,\qquad k=1,\dots,m.
\end{eqnarray*}
The algorithm then picks block $B_k \eqdef \{s_k, \dots, s_k+\tau-1\}$ with probability $p_{s_k}> 0$ (where $\sum_{k=1}^m p_{s_k} = 1$). 
This is equivalent to the submodel sampling with starting-index distribution satisfying
\begin{eqnarray*}
p_{s_k}=p_{(k-1)\tau+1} > 0 \quad (k=1,\dots,m), \qquad p_j=0 \text{ otherwise}.
\end{eqnarray*}
Plugging this choice into the general submodel expressions immediately gives
\begin{eqnarray*}
    F_i &=& \sum_{j=1}^{\min\{i, b-\tau+1\}} p_j
    = \sum_{k=1}^{\min\{m, \flr{\nicefrac{(i-1)}{\tau}} + 1\}} p_{s_k}, \\
    Q_i &=& \sum_{j=\max\{1, i-\tau+1\}}^{\min\{i, b-\tau+1\}} p_j
    = p_{s_{\ceil{\nicefrac{i}{\tau}}}}, \\
    \Exp{\frac{\I{i\in\hat{S}}}{2L^0_{i,\hat{S}}}}
    &=& \sum_{j=\max\{1,i-\tau+1\}}^{\min\{i,b-\tau+1\}} \frac{p_j}{2L^0_{i,\{j,\dots,j+\tau-1\}}}
    = \frac{p_{s_{\ceil{\nicefrac{i}{\tau}}}}}{2L^0_{i,B_{\ceil{\nicefrac{i}{\tau}}}}}.
\end{eqnarray*}
Therefore the full cost reduces to
\begin{eqnarray}\label{eq:part_tau_submodel_cost}
    \mathrm{cost}_{\varepsilon}(\cD)
    &\propto& \frac{c_{\mathrm{ov}} + \sum_{j=1}^b c_j \parens{\sum_{k=1}^{\min\{m, \flr{\nicefrac{(j-1)}{\tau}} + 1\}} p_{s_k}}
    + \sum_{j=1}^b c_j^{\sharp} p_{s_{\ceil{\nicefrac{j}{\tau}}}}}{\min_{i\in[b]} \brac{\frac{p_{s_{\ceil{\nicefrac{i}{\tau}}}}}{2L^0_{i,B_{\ceil{\nicefrac{i}{\tau}}}}}}} \nonumber \\
    &=& \frac{c_{\mathrm{ov}} + \sum_{j=1}^b c_j \parens{\sum_{k=1}^{\ceil{\nicefrac{j}{\tau}}} p_{s_k}}
    + \sum_{j=1}^b c_j^{\sharp} p_{s_{\ceil{\nicefrac{j}{\tau}}}}}{\min_{i\in[b]} \brac{\frac{p_{s_{\ceil{\nicefrac{i}{\tau}}}}}{2L^0_{i,B_{\ceil{\nicefrac{i}{\tau}}}}}}}.
\end{eqnarray}
Now, note that
\begin{align*}
    \sum_{j=1}^b c_j \parens{\sum_{k=1}^{\ceil{\nicefrac{j}{\tau}}} p_{s_k}}
    &= \sum_{j=1}^\tau c_j p_{s_1} + \sum_{j=\tau+1}^{2\tau} c_j \parens{p_{s_1} + p_{s_2}} + \ldots + \sum_{j=(m-1)\tau+1}^{m\tau} c_j \parens{p_{s_1} + \ldots + p_{s_m}} \\
    &= p_{s_1} \sum_{j=1}^\tau c_j + \parens{p_{s_1} + p_{s_2}} \sum_{j=\tau+1}^{2\tau} c_j + \ldots + \parens{p_{s_1} + \ldots + p_{s_m}} \sum_{j=(m-1)\tau+1}^{m\tau} c_j \\
    &= p_{s_1} \sum_{j=1}^{m \tau} c_j + p_{s_2} \sum_{j=\tau+1}^{m\tau} c_j + \ldots + p_{s_m} \sum_{j=(m-1)\tau+1}^{m\tau} c_j
\end{align*}
and
\begin{eqnarray*}
    \sum_{j=1}^b c_j^{\sharp} p_{s_{\ceil{\nicefrac{j}{\tau}}}}
    = \sum_{j=1}^\tau c_j^{\sharp} p_{s_1} + \sum_{j=\tau+1}^{2\tau} c_j^{\sharp} p_{s_2} + \ldots + \sum_{j=(m-1)\tau+1}^{m\tau} c_j^{\sharp} p_{s_m}.
\end{eqnarray*}
Hence
\begin{align*}
    &\mathrm{cost}_{\varepsilon}(\cD) \\
    &\propto \frac{1}{\min_{i\in[b]} \brac{\frac{p_{s_{\ceil{\nicefrac{i}{\tau}}}}}{2L^0_{i,B_{\ceil{\nicefrac{i}{\tau}}}}}}} \left[c_{\mathrm{ov}} + p_{s_1} \parens{\sum_{j=1}^{m \tau} c_j + \sum_{j=1}^\tau c_j^{\sharp}} + p_{s_2} \parens{\sum_{j=\tau+1}^{m\tau} c_j + \sum_{j=\tau+1}^{2\tau} c_j^{\sharp}} + \ldots \right. \\
    &\qquad\qquad\qquad\qquad\qquad\qquad\left.+ p_{s_m} \parens{\sum_{j=(m-1)\tau+1}^{m\tau} c_j + \sum_{j=(m-1)\tau+1}^{m\tau} c_j^{\sharp}}\right] \\
    &= \frac{p_{s_1} d_1 + p_{s_2} d_2 + \ldots + p_{s_m} d_m}{\min_{i\in[b]} \brac{\frac{p_{s_{\ceil{\nicefrac{i}{\tau}}}}}{2L^0_{i,B_{\ceil{\nicefrac{i}{\tau}}}}}}},
\end{align*}
where we used the fact that $\sum_{j=1}^{m} p_{s_j} = 1$ and denoted $d_i \eqdef c_{\mathrm{ov}} + \sum_{j=(i-1)\tau+1}^{m \tau} c_j + \sum_{j=(i-1)\tau+1}^{i \tau} c_j^{\sharp}$.

\paragraph{Optimal probabilities for fixed $\tau$.}
We follow an approach similar to that in \Cref{sec:rpt_sol} to find the optimal probabilities for a fixed choice of~$\tau$. For $i\in B_k$, define
\begin{eqnarray*}
	\delta_{i,k}\eqdef\frac{1}{2L^0_{i,B_k}}
\end{eqnarray*}
and set
\begin{eqnarray*}
	\delta_k^{\min}\eqdef \min_{i\in B_k}\delta_{i,k}
	= \frac{1}{2\max_{i\in B_k} L^0_{i,B_k}}.
\end{eqnarray*}
Then the expected cost can be represented as
\begin{eqnarray*}
	\mathrm{cost}_{\varepsilon}(\cD)
	\propto \frac{d_1 p_{s_1} + \cdots + d_m p_{s_m}}{\min_{i\in[b]} \brac{\frac{p_{s_{\ceil{\nicefrac{i}{\tau}}}}}{2L^0_{i,B_{\ceil{\nicefrac{i}{\tau}}}}}}}
	= \frac{d_1 p_{s_1} + \cdots + d_m p_{s_m}}{\min_{k\in[m]} \min_{i\in B_k} \{\delta_{i,k} p_{s_k}\}}.
\end{eqnarray*}
This corresponds to the linear fractional program
\begin{equation}\label{eq:partitioned_submod_lfp}
    \begin{aligned}
        \min_{p,t}\quad & \frac{d_1 p_{s_1} + \cdots + d_m p_{s_m}}{t} \\
        \text{s.t.}\quad & p_{s_1}, \ldots, p_{s_m} \geq 0, \\
        & p_{s_1}+\cdots+p_{s_m} = 1, \\
        & t \geq 0, \\
        & t \leq \delta_{i,k} p_{s_k}, \qquad k\in[m], i\in B_k.
    \end{aligned}
\end{equation}
The standard change of variables $q_k = p_{s_k} / t$ yields the equivalent linear program
\begin{equation}\label{eq:fl2_partitioned}
    \begin{aligned}
        \min_{q} \quad & d_1 q_1 + \cdots + d_m q_m \\
        \text{s.t.} \quad & q_1, \ldots, q_m \geq 0, \\
        & \delta_{i,k} q_k \geq 1 \qquad k\in[m], i\in B_k.
    \end{aligned}
\end{equation}
Because each block $k$ only appears in constraints of the form $\delta_{i,k} q_k \geq 1$ (for $i\in B_k$), the constraints for block $k$ reduce to the single constraint
\begin{eqnarray*}
    \delta_k^{\min} q_k \geq 1
    \quad\Longleftrightarrow\quad
    q_k \geq \frac{1}{\delta_k^{\min}}.
\end{eqnarray*}
Hence \eqref{eq:fl2_partitioned} is separable and its optimal solution is
\begin{eqnarray*}
    q_k^\star = \frac{1}{\delta_k^{\min}}, \qquad k\in[m],
\end{eqnarray*}
with optimal objective value
\begin{eqnarray*}
    \sum_{k=1}^m d_k q_k^\star = \sum_{k=1}^m \frac{d_k}{\delta_k^{\min}}.
\end{eqnarray*}

Now, let us introduce dual variables $\lambda_i\ge0$ for the constraints $\delta_{i,k} q_k \geq 1$, $i\in B_k$. The dual of \eqref{eq:fl2_partitioned} is
\begin{equation*}
    \begin{aligned}
        \max_{\lambda}\quad & \lambda_1 + \ldots + \lambda_b \\
        \text{s.t.}\quad & \lambda_1, \ldots, \lambda_b \geq 0, \\
        & d_k \geq \sum_{i\in B_k} \lambda_i \delta_{i,k}, \qquad k\in[m].
    \end{aligned}
\end{equation*}
To certify optimality of $q^\star$, for each block $k$, choose an index $i_k\in B_k$ attaining the minimum $\delta_k^{\min}$ and set
\begin{eqnarray*}
	\lambda_{i_k} = \frac{d_k}{\delta_{i_k,k}}, \qquad
	\lambda_i=0 \text{ for } i\not\in\{i_1,\dots,i_m\}.
\end{eqnarray*}
Then for each block $k$,
\begin{eqnarray*}
	\sum_{i\in B_k} \delta_{i,k} \lambda_i
	= \delta_{i_k,k} \lambda_{i_k}
	= d_k,
\end{eqnarray*}
so the dual constraints hold with equality and the dual objective equals
\begin{eqnarray*}
	\sum_{i=1}^b \lambda_i = \sum_{k=1}^m \frac{d_k}{\delta_k^{\min}},
\end{eqnarray*}
matching the primal objective. By strong duality $q^\star$ is optimal.

From $q_k^\star=\nicefrac{1}{\delta_k^{\min}}$ and $p_{s_k} = t q_k = q_k / \sum_{l=1}^m q_l$ we obtain the optimal block probabilities
\begin{eqnarray*}
    p_{s_k}^\star
    = \frac{\nicefrac{1}{\delta_k^{\min}}}{\sum_{l=1}^m \nicefrac{1}{\delta_l^{\min}}}
    = \frac{\max_{i\in B_k} L^0_{i,B_k}}{\sum_{l=1}^m \max_{i\in B_l} L^0_{i,B_l}}.
\end{eqnarray*}
so that each block's probability is proportional to the worst-case local smoothness constant inside that block.
For this choice, the minimal expected cost is
\begin{eqnarray*}
	\mathrm{cost}_{\varepsilon}(\cD) \propto \sum_{k=1}^m \frac{d_k}{\delta_k^{\min}}
	= 2\sum_{k=1}^m d_k \max_{i\in B_k} L^0_{i,B_k}.
\end{eqnarray*}

\paragraph{Choosing $\tau$.}
We now show that the cost is not necessarily minimized by choosing $\tau = b$. To this end, we want to minimize the function
\begin{eqnarray*}
    \Phi(\tau) \eqdef 2\sum_{k=1}^{m} d_k(\tau) \max_{i\in B_k} L^0_{i,B_k}
\end{eqnarray*}
(where we explicitly emphasize the dependence of $d_k$ on $\tau$).

To gain intuition about which extreme ($\tau = 1$ or $\tau = b$) may be preferable, assume that the costs are constant, i.e.,
\begin{eqnarray*}
    c_j \equiv c, \qquad c_j^\sharp \equiv c^\sharp,
\end{eqnarray*}
in which case
\begin{align*}
    \sum_{k=1}^m d_k(\tau)
    &= \sum_{k=1}^m \parens{c_{\mathrm{ov}} + \sum_{j=(k-1)\tau+1}^{m \tau} c + \sum_{j=(k-1)\tau+1}^{k\tau} c^{\sharp}} \\
	&= \sum_{k=1}^m \parens{c_{\mathrm{ov}} + c \parens{m \tau - (k-1) \tau} + c^{\sharp} \parens{k\tau - (k-1)\tau}} \\
	&= m c_{\mathrm{ov}} + c \tau \frac{m(m+1)}{2} + m \tau c^\sharp.
\end{align*}
Suppose in addition that the worst-case local smoothness per block does not depend on the block index, i.e.,
\begin{eqnarray*}
    \max_{i\in B_k} L^0_{i,B_k} \equiv L^0(\tau) \quad \forall k\in[m]
\end{eqnarray*}
for some non-decreasing function $L$. Then
\begin{align*}
    \Phi(\tau)
    = 2 L^0(\tau)\sum_{k=1}^{m} d_k(\tau)
	= 2 L^0(\tau) \parens{\frac{b}{\tau} \parens{c_{\mathrm{ov}}+\frac{bc}{2}} + \frac{bc}{2} + b c^\sharp}.
\end{align*}
Thus, the $\tau$-dependence of $\Phi$ is the product of a non-decreasing factor $L_0(\tau)$ and a factor that decreases like $1/\tau$ plus additive constants. Consequently:
\begin{itemize}
    \item If $L^0(\tau)$ is constant in $\tau$ (no worsening with larger blocks), then $\Phi(\tau)$ is strictly decreasing in $\tau$ and the minimizer is $\tau^\star = b$.

	\item If $L^0(\tau)$ grows sublinearly, then $L^0(\tau) \frac{b}{\tau} \parens{c_{\mathrm{ov}}+\frac{bc}{2}}$ is decreasing in $\tau$, while $L^0(\tau) \parens{\frac{c b}{2} + b c^\sharp}$ is increasing. Hence, the optimal $\tau$ may lie strictly between $1$ and $b$, depending on the relative magnitudes of the costs.

    \item If $L^0(\tau)$ increases at least linearly in $\tau$, then $\Phi(\tau)$ is increasing in $\tau$, and hence $\tau^\star = 1$.
\end{itemize}

\subsubsection{Arbitrary Submodel Sampling}\label{sec:arbitrary_submodel}

\begin{mdframed}[linecolor=lightgray!12, backgroundcolor=lightgray!12]
    Let $\{B_1,\dots,B_m\}$ be a partition of $[b]$. Set $\hat{S} = B_i$ with probability $p_i$ (where $\sum_{i=1}^m p_i = 1$).
    For $j\in[b]$, $k(j)$ denotes the unique block with $j\in B_{k(j)}$ and $\underline{b}_k \eqdef \min B_k$.
\end{mdframed}

First, note that the cost can be expressed block-wise as
\begin{align*}
	\Exp{\mathrm{cost}(\hat{S})}
    &= c_{\mathrm{ov}} + \sum_{i=1}^b c_i \parens{\sum_{k:\underline{b}_k \leq i} p_k} + \sum_{i=1}^b c_i^{\sharp} p_{k(i)} \\
    &= c_{\mathrm{ov}} + \sum_{k=1}^m \sum_{i \geq \underline{b}_k} c_i p_k + \sum_{k=1}^m \sum_{i\in B_k} c_i^{\sharp} p_{k} \\
    &= \sum_{k=1}^m d_k p_k,
\end{align*}
where $d_k \eqdef c_{\mathrm{ov}} + \sum_{j\geq \underline{b}_k} c_j + \sum_{j\in B_k} c_j^{\sharp}$.
We also have 
\begin{align*}
	\Exp{\frac{\I{i\in \hat{S}}}{2 L^0_{i,\hat{S}}}}
    = \frac{p_{k(i)}}{2 L^0_{i,B_{k(i)}}},
\end{align*}
and hence
\begin{align*}
	\mathrm{cost}_{\varepsilon}(\cD)
    \propto \frac{\Exp{\mathrm{cost}(\hat{S})}}
    {\min_{i\in[b]} \Exp{\frac{\I{i\in\hat{S}}}{2L^0_{i,\hat{S}}}}}
    = \frac{\sum_{k=1}^m d_k p_k}{\min_{k\in[m]} \min_{i\in B_k} \brac{\frac{p_k}{2 L^0_{i,B_k}}}}.
\end{align*}
Now, define 
\begin{align*}
	\delta_{i,k} \eqdef \tfrac{1}{2 L^0_{i,B_k}}, 
    \qquad 
    \delta_k^{\min} \eqdef \min_{i\in B_k} \delta_{i,k}.
\end{align*}
By the same linear fractional reduction as in \eqref{eq:partitioned_submod_lfp}, the unique optimal solution is
\begin{align*}
	p_k^\star = \frac{\nicefrac{1}{\delta_k^{\min}}}{\sum_{l=1}^m \nicefrac{1}{\delta_l^{\min}}} 
    = \frac{\max_{i\in B_k} L^0_{i,B_k}}{\sum_{l=1}^m \max_{i\in B_l} L^0_{i,B_l}}.
\end{align*}
With this choice, the objective is
\begin{eqnarray}\label{eq:cost_arbitrary_submod}
	\mathrm{cost}_{\varepsilon}(\cD)
    &\propto& 2 \sum_{k=1}^m d_k \max_{i\in B_k} L^0_{i,B_k}
    = 2\sum_{k=1}^m \parens{c_{\mathrm{ov}} + \sum_{j\geq \underline{b}_k} c_j + \sum_{j\in B_k} c_j^{\sharp}} \max_{i\in B_k} L^0_{i,B_k}.
\end{eqnarray}

Let us compare it with the cost for full model training
\begin{eqnarray*}
    \Exp{\mathrm{cost}_{\textnormal{full}}(K)}
    \propto 2 \max_{i\in[b]} L^0_{i,[b]} \parens{c_{\mathrm{ov}} + \sum_{j=1}^b \parens{c_j + c_j^{\sharp}}}.
\end{eqnarray*}

\begin{example}[Grouping layers by cost similarity]
    Suppose that the layers are partitioned into groups according to their similarity, so that within each block, all per-layer costs are (approximately) the same. Concretely, for every $k\in[m]$ and every $j\in B_k$ assume that
    \begin{align*}
        c_j \equiv \underline{c}_k, \qquad c_j^{\sharp}\equiv \underline{c}_k^{\sharp}.
    \end{align*}
    Then
    \begin{eqnarray*}
    	\mathrm{cost}_{\varepsilon}(\cD)
        &\propto& 2\sum_{k=1}^m \parens{c_{\mathrm{ov}} + \sum_{j\geq \underline{b}_k} c_j + \sum_{j\in B_k} c_j^{\sharp}} \max_{i\in B_k} L^0_{i,B_k} \\
        &=& 2 \sum_{k=1}^m \parens{c_{\mathrm{ov}} + \sum_{j\geq \underline{b}_k} c_j + |B_k| \underline{c}_k^{\sharp}} \max_{i\in B_k} L^0_{i,B_k}
    \end{eqnarray*}
    and
    \begin{eqnarray*}
        \Exp{\mathrm{cost}_{\textnormal{full}}(K)}
        \propto 2 \max_{i\in[b]} L^0_{i,[b]} \parens{c_{\mathrm{ov}} + \sum_{l=1}^m |B_l| \parens{\underline{c}_l + \underline{c}_l^{\sharp}}}.
    \end{eqnarray*}
    Therefore, partitioned arbitrary submodel sampling with the optimal probabilities is better than full model training if and only if
    \begin{eqnarray}\label{eq:partition_better_explicit}
        \sum_{k=1}^m \max_{i\in B_k} L^0_{i,B_k} \parens{c_{\mathrm{ov}} + \sum_{j\geq \underline{b}_k} c_j + |B_k| \underline{c}_k^{\sharp}}
        \leq \max_{i\in[b]} L^0_{i,[b]} \parens{c_{\mathrm{ov}} + \sum_{k=1}^m |B_k| \parens{\underline{c}_k + \underline{c}_k^{\sharp}}}.
    \end{eqnarray}
    If each block is much better conditioned than the full model, i.e. $\max_{i\in B_k} L^0_{i,B_k} \ll \max_{i\in[b]} L^0_{i,[b]}$ for all $k$, then the left side of \eqref{eq:partition_better_explicit} can be much smaller than the right side and partitioned submodel sampling will be advantageous even if the block-wise tail sums are moderately large. Indeed, if $\max_{i\in B_k} L^0_{i,B_k} \approx \frac{1}{m} \max_{i\in[b]} L^0_{i,[b]}$, then
    \begin{eqnarray*}
        &&\hspace{-1cm}\sum_{k=1}^m \max_{i\in B_k} L^0_{i,B_k} \parens{c_{\mathrm{ov}} + \sum_{j\geq \underline{b}_k} c_j + |B_k| \underline{c}_k^{\sharp}} \\
        &\approx& \frac{1}{m} \max_{i\in[b]} L^0_{i,[b]} \sum_{k=1}^m \parens{c_{\mathrm{ov}} + \sum_{j\geq \underline{b}_k} c_j + |B_k| \underline{c}_k^{\sharp}} \\
        &=& \max_{i\in[b]} L^0_{i,[b]} \parens{c_{\mathrm{ov}} + \frac{1}{m} \sum_{k=1}^m \sum_{j\geq \underline{b}_k} c_j + \frac{1}{m} \sum_{k=1}^m |B_k| \underline{c}_k^{\sharp}},
    \end{eqnarray*}
    which improves upon full model training if $\frac{1}{m} \sum_{k=1}^m \sum_{j\geq \underline{b}_k} c_j + \frac{1}{m} \sum_{k=1}^m |B_k| \underline{c}_k^{\sharp} \leq \sum_{k=1}^m |B_k| \parens{\underline{c}_k + \underline{c}_k^{\sharp}}$.
\end{example}

\begin{example}[Grouping layers across transformer blocks]
    In their general forms, \eqref{eq:cost_arbitrary_submod} and \eqref{eq:partition_better_explicit} are hard to interpret. Let us consider a simplified language model motivating example behind considering this sampling strategy. Suppose the network is composed of $T$ identical transformer blocks, each containing $L$ layers, so that $b = T L$, with the natural block structure: layer index $j\in[b]$ corresponds to position $l\in\{1,\dots,L\}$ inside transformer block $t\in[T]$
    via $j = l + (t-1) L$.
    
    Consider the partition that groups together \emph{same-position} layers across transformer blocks:
    \begin{align*}
        B_l = \{l, l+L, l+2L, \dots, l+(T-1)L\}, \qquad \forall l\in[L],
    \end{align*}
    so $m=L$ and $|B_l| = T$ for every $l$. For each block $B_l$ we have $\underline{b}_{l}=\min B_l = l$.
    
    In this setting, for block $B_l$ the tail sum that appears in $d_l$ simplifies to
    \begin{align*}
        \sum_{j\geq \underline b_l} c_j = \sum_{j=l}^{b} c_j.
    \end{align*}
    Note that we grouped the layers is such a way that the layers within each block are of the same type, and hence we may expect the costs to be roughly the same within each block. That is, we assume that
    \begin{align*}
        c_{l + tL} \equiv \underline{c}_l,
        \qquad c_{l + tL}^{\sharp}\equiv \underline{c}_l^{\sharp},
        \qquad \forall l\in[L],\ t\in\{0,\ldots,T-1\},
    \end{align*}
    in which case the tail sum further simplifies to
    \begin{align*}
        \sum_{j=l}^{b} c_j
        = \sum_{j=l}^{L} \underline{c}_j + (T-1) \sum_{j=1}^{L} \underline{c}_j.
    \end{align*}
    Hence, under the periodic costs assumption the block-wise cost constants become
    \begin{align*}
        d_l = c_{\mathrm{ov}} + \sum_{j\geq \underline b_l} c_j + \sum_{j\in B_l} c_j^{\sharp}
        = c_{\mathrm{ov}} + \sum_{j=l}^{L} \underline{c}_j + (T-1) \sum_{j=1}^{L} \underline{c}_j + T \underline{c}_l^{\sharp}.
    \end{align*}
    
    If the local smoothness within a block are also approximately homogeneous, i.e.,
    \begin{align*}
        L^0_{i,B_l} \equiv L^0_l \qquad \forall i\in B_l,
    \end{align*}
    then
    \begin{align*}
        \max_{i\in B_l} L^0_{i,B_l} = L^0_l,
    \end{align*}
    and the previously derived expected cost (for the optimal block probabilities) reduces to
    \begin{align*}
        \mathrm{cost}_{\varepsilon}(\cD)
        \propto 2 \sum_{l=1}^L L^0_l \parens{c_{\mathrm{ov}} + \sum_{j=l}^{L} \underline{c}_j + (T-1) \sum_{j=1}^{L} \underline{c}_j + T \underline{c}_l^{\sharp}}.
    \end{align*}
    
    Under the same periodicity assumptions, the full-model expected cost is
    \begin{eqnarray*}
        \Exp{\mathrm{cost}_{\textnormal{full}}(K)}
        \propto 2 \max_{i\in[b]} L^0_{i,[b]} \parens{c_{\mathrm{ov}} + T \sum_{j=1}^L \underline c_j + T \sum_{j=1}^L \underline c_j^\sharp}.
    \end{eqnarray*}
    Comparing the two, partitioned sampling is better if
    \begin{align}\label{eq:compare_exact}
        \sum_{l=1}^L L^0_l \parens{c_{\mathrm{ov}} + \sum_{j=l}^{L} \underline{c}_j + (T-1) \sum_{j=1}^{L} \underline{c}_j + T \underline{c}_l^{\sharp}}
        \leq \max_{i\in[b]} L^0_{i,[b]} \parens{c_{\mathrm{ov}} + T \sum_{j=1}^L \underline c_j + T \sum_{j=1}^L \underline c_j^\sharp}.
    \end{align}
    Now, let us denote
    \begin{align*}
        \underline{C} \eqdef \sum_{j=1}^L \underline{c}_j,\qquad
        \underline{C}^{\sharp} \eqdef \sum_{j=1}^L \underline c_j^\sharp.
    \end{align*}
    Then \eqref{eq:compare_exact} is equivalent to
    \begin{align}\label{eq:compare_rewrite}
        \sum_{l=1}^L L^0_l \parens{c_{\mathrm{ov}} + \sum_{j=l}^L \underline{c}_j}
        + T \sum_{l=1}^L L^0_l \parens{\frac{T-1}{T} \underline{C} + \underline{c}_l^{\sharp}}
        \leq T \parens{\underline{C} + \underline{C}^{\sharp}} \max_{i\in[b]} L^0_{i,[b]}
        + c_{\mathrm{ov}} \max_{i\in[b]} L^0_{i,[b]}.
    \end{align}
    For large $T$, the dominant terms are those multiplied by $T$. Then, the leading-order criterion becomes
    \begin{align}\label{eq:compare_asymp}
        \sum_{l=1}^L L^0_l \parens{\underline{C} + \underline{c}_l^\sharp}
        \leq \max_{i\in[b]} L^0_{i,[b]}(\underline{C} + \underline{C}^{\sharp}),
    \end{align}
    which can hold when $L^0_l \ll \max_{i\in[b]} L^0_{i,[b]}$.
\end{example}

\subsection{Generalized Smooth Case}\label{sec:cost_opt_l0l1}

According to \Cref{thm:rt_l0l1_iter}, under \Cref{as:arbitrary_layer_gen_smoothness}, \Cref{alg:rt_arbitrary} run with stepsizes $\gamma_i^k = \big(L_{i,S^k}^0 + L_{i,S^k}^1 \norm{\nabla_i f(X^k)}_{(i) \star}\big)^{-1}$ guarantees that
\begin{eqnarray}\label{eq:avndion}
	\min_{k=0,\ldots,K-1} \sum_{i=1}^b \left[\frac{w_i}{\frac{1}{b} \sum_{l=1}^b w_l} \Exp{\norm{\nabla _i f(X^k)}_{(i) \star}}\right] \leq \underline{\varepsilon},
\end{eqnarray}
after
\begin{eqnarray}\label{eq:oinqvr}
	K &=& \left\lceil \frac{2 \delta^0 \sum_{i=1}^b \frac{\Prob{i\in \hat{S}} \ExpCond{L^0_{i,\hat{S}}}{i\in \hat{S}}}{\parens{\ExpCond{L^1_{i,\hat{S}}}{i\in \hat{S}}}^2}}{\underline{\varepsilon}^2 \parens{\frac{1}{b} \sum_{l=1}^b \frac{\Prob{l\in \hat{S}}}{\ExpCond{L^1_{l,\hat{S}}}{l\in \hat{S}}}}^2}
	+ \frac{2 \delta^0}{\underline{\varepsilon} \parens{\frac{1}{b} \sum_{l=1}^b \frac{\Prob{l \in \hat{S}}}{\ExpCond{L^1_{l,\hat{S}}}{l\in \hat{S}}}}} \right\rceil
\end{eqnarray}
iterations, where $\delta^0 \eqdef f(X^0) - f^{\star}$ and $w_i \eqdef \frac{\Prob{i\in \hat{S}}}{\ExpCond{L^1_{i,\hat{S}}}{i\in \hat{S}}}$. To obtain a guarantee on an unweighted gradient sum, note that
\begin{eqnarray*}
	\underline{\varepsilon} &\geq& \min_{k=0,\ldots,K-1} \sum_{i=1}^b \left[\frac{w_i}{\frac{1}{b} \sum_{l=1}^b w_l} \Exp{\norm{\nabla _i f(X^k)}_{(i) \star}}\right] \\
	&\geq& \frac{\min_{i\in[b]} w_i}{\frac{1}{b} \sum_{l=1}^b w_l} \min_{k=0,\ldots,K-1} \sum_{i=1}^b \Exp{\norm{\nabla _i f(X^k)}_{(i) \star}},
\end{eqnarray*}
and hence, substituting in \eqref{eq:avndion} and \eqref{eq:oinqvr}, we have
\begin{eqnarray*}
	\min_{k=0,\ldots,K-1} \sum_{i=1}^b \Exp{\norm{\nabla _i f(X^k)}_{(i) \star}}
	\leq \frac{\underline{\varepsilon} \frac{1}{b} \sum_{l=1}^b w_l}{\min_{i\in[b]} w_i} \eqdef \varepsilon
\end{eqnarray*}
after
\begin{eqnarray*}
	K &=& \left\lceil \frac{2 \delta^0 \sum_{i=1}^b \frac{\Prob{i\in \hat{S}} \ExpCond{L^0_{i,\hat{S}}}{i\in \hat{S}}}{\parens{\ExpCond{L^1_{i,\hat{S}}}{i\in \hat{S}}}^2}}{\varepsilon^2 \min_{i\in[b]} \parens{\frac{\Prob{i\in \hat{S}}}{\ExpCond{L^1_{i,\hat{S}}}{i\in \hat{S}}}}^2}
	+ \frac{2 \delta^0}{\varepsilon \min_{i\in[b]} \frac{\Prob{i\in \hat{S}}}{\ExpCond{L^1_{i,\hat{S}}}{i\in \hat{S}}}} \right\rceil
\end{eqnarray*}
iterations. Moreover, recall from \eqref{eq:cost_gen} that the expected cost of a single iteration is
\begin{eqnarray*}
    \Exp{\mathrm{cost}(\hat{S})}
    = c_{\mathrm{ov}} + \sum_{i=1}^b c_i F_i + \sum_{i=1}^b c_i^{\sharp} Q_i.
\end{eqnarray*}
Hence, using the fact that
\begin{eqnarray*}
	\ExpCond{L^{\alpha}_{i,\hat{S}}}{i\in \hat{S}}
	= \frac{\Exp{L^{\alpha}_{i,\hat{S}} \I{i\in \hat{S}}}}{\Prob{i\in \hat{S}}}
\end{eqnarray*}
for $\alpha \in \{0,1\}$, the expected cost of the entire optimization procedure is
\begin{align*}
    &\mathrm{cost}_{\varepsilon}(\cD)
    = K \times \Exp{\mathrm{cost}(\hat{S})} \\
    &= \left\lceil \frac{2 \delta^0 \sum_{i=1}^b \frac{\Prob{i\in \hat{S}} \ExpCond{L^0_{i,\hat{S}}}{i\in \hat{S}}}{\parens{\ExpCond{L^1_{i,\hat{S}}}{i\in \hat{S}}}^2}}{\varepsilon^2 \min_{i\in[b]} \parens{\frac{\Prob{i\in \hat{S}}}{\ExpCond{L^1_{i,\hat{S}}}{i\in \hat{S}}}}^2}
	+ \frac{2 \delta^0}{\varepsilon \min_{i\in[b]} \frac{\Prob{i\in \hat{S}}}{\ExpCond{L^1_{i,\hat{S}}}{i\in \hat{S}}}} \right\rceil
    \parens{c_{\mathrm{ov}} + \sum_{i=1}^b c_i F_i + \sum_{i=1}^b c_i^{\sharp} Q_i} \\
    &= \left\lceil \frac{2 \delta^0 \sum_{i=1}^b \frac{\Prob{i\in \hat{S}}^2 \Exp{L^0_{i,\hat{S}} \I{i\in \hat{S}}}}{\parens{\Exp{L^1_{i,\hat{S}} \I{i\in \hat{S}}}}^2}}{\varepsilon^2 \min_{i\in[b]} \parens{\frac{\Prob{i\in \hat{S}}^2}{\Exp{L^1_{i,\hat{S}} \I{i\in \hat{S}}}}}^2}
	+ \frac{2 \delta^0}{\varepsilon \min_{i\in[b]} \frac{\Prob{i\in \hat{S}}^2}{\Exp{L^1_{i,\hat{S}} \I{i\in \hat{S}}}}} \right\rceil
    \parens{c_{\mathrm{ov}} + \sum_{i=1}^b c_i F_i + \sum_{i=1}^b c_i^{\sharp} Q_i}.
\end{align*}
We will consider two regimes.

\textbf{(1) The $\cO\parens{\nicefrac{1}{\varepsilon^2}}$ term dominates.}
Then the problem to be solved is
\begin{eqnarray}\label{eq:cost_opt_prob_e2}
    \min_{\cD: \mathfrak{P}([b]) \to [0,1], \sum_{S\subseteq[b]} \cD(S) = 1} \underbrace{\frac{\sum_{i=1}^b \frac{\Prob{i\in \hat{S}}^2 \Exp{L^0_{i,\hat{S}} \I{i\in \hat{S}}}}{\parens{\Exp{L^1_{i,\hat{S}} \I{i\in \hat{S}}}}^2}}{\min_{i\in[b]} \parens{\frac{\Prob{i\in \hat{S}}^2}{\Exp{L^1_{i,\hat{S}} \I{i\in \hat{S}}}}}^2}
	\parens{c_{\mathrm{ov}} + \sum_{i=1}^b c_i F_i + \sum_{i=1}^b c_i^{\sharp} Q_i}}_{\propto \, \Exp{\mathrm{cost}_{\varepsilon^2}(K)}}.
\end{eqnarray}

\textbf{(2) The $\cO\parens{\nicefrac{1}{\varepsilon}}$ term dominates.}
Then the problem to be solved is
\begin{eqnarray}\label{eq:cost_opt_prob_e}
    \min_{\cD: \mathfrak{P}([b]) \to [0,1], \sum_{S\subseteq[b]} \cD(S) = 1} \underbrace{\frac{1}{\min_{i\in[b]} \frac{\Prob{i\in \hat{S}}^2}{\Exp{L^1_{i,\hat{S}} \I{i\in \hat{S}}}}}
	\parens{c_{\mathrm{ov}} + \sum_{i=1}^b c_i F_i + \sum_{i=1}^b c_i^{\sharp} Q_i}}_{\propto \, \Exp{\mathrm{cost}_{\varepsilon}(K)}}.
\end{eqnarray}
As in \Cref{sec:cost_opt_smooth}, both tasks above involve optimization over probability distributions, which is intractable in general. Again, for certain parametric families, the objective simplifies to a linear-fractional program, which can be solved efficiently. Let us consider some specific examples.

\subsubsection{Randomized Progressive Training}\label{sec:rpt_sol_l0l1}

\begin{mdframed}[linecolor=lightgray!12, backgroundcolor=lightgray!12]
    Sample $\hat{s}\in\{1,\dots,b\}$, where $p_i = \Prob{\hat{s}=i}$, and set $\hat{S}=\{\hat{s},\dots,b\}$.
\end{mdframed}

In this case, $\hat{S} = \{s,\ldots,b\}$ with probability $p_s$ for all $s\in[b]$, and hence
\begin{eqnarray*}
	\Prob{i\in \hat{S}}
	= \Prob{s \leq i}
	= \sum_{s=1}^i p_s,
\end{eqnarray*}
and
\begin{eqnarray*}
	\Exp{L^{\alpha}_{i,\hat{S}} \I{i\in \hat{S}}}
	= \sum_{s=1}^i p_s L^{\alpha}_{i, \{s,\dots,b\}}
\end{eqnarray*}
for $\alpha\in\{0,1\}$.
Moreover, following the same steps as in \Cref{sec:rpt_sol},
\begin{eqnarray*}
    \Exp{\mathrm{cost}(\hat{S})}
    = c_{\mathrm{ov}} + \sum_{j=1}^b (c_j + c_j^{\sharp}) \sum_{i=1}^j p_i
    = \sum_{i=1}^b \left[c_{\mathrm{ov}}+\sum_{j\geq i} (c_j + c_j^{\sharp})\right] p_i.
\end{eqnarray*}
Hence, substituting into \eqref{eq:cost_opt_prob_e2} and \eqref{eq:cost_opt_prob_e}, the respective optimization problems reduce to minimizing
\begin{eqnarray*}
    \Exp{\mathrm{cost}_{\varepsilon^2}(K)}
	&\propto& \frac{\sum_{i=1}^b \frac{\Prob{i\in \hat{S}}^2 \Exp{L^0_{i,\hat{S}} \I{i\in \hat{S}}}}{\parens{\Exp{L^1_{i,\hat{S}} \I{i\in \hat{S}}}}^2}}{\min_{i\in[b]} \parens{\frac{\Prob{i\in \hat{S}}^2}{\Exp{L^1_{i,\hat{S}} \I{i\in \hat{S}}}}}^2}
	\parens{\sum_{i=1}^b \left[c_{\mathrm{ov}}+\sum_{j\geq i} (c_j + c_j^{\sharp})\right] p_i} \\
	&=& \frac{\sum_{i=1}^b \frac{\parens{\sum_{s=1}^i p_s}^2 \sum_{s=1}^i p_s L^0_{i, \{s,\dots,b\}}}{\parens{\sum_{s=1}^i p_s L^1_{i, \{s,\dots,b\}}}^2}}{\min_{i\in[b]} \parens{\frac{\parens{\sum_{s=1}^i p_s}^2}{\sum_{s=1}^i p_s L^1_{i, \{s,\dots,b\}}}}^2}
	\parens{\sum_{i=1}^b \left[c_{\mathrm{ov}}+\sum_{j\geq i} (c_j + c_j^{\sharp})\right] p_i}
\end{eqnarray*}
and
\begin{eqnarray*}
	\Exp{\mathrm{cost}_{\varepsilon}(K)}
	&\propto& \frac{1}{\min_{i\in[b]} \frac{\Prob{i\in \hat{S}}^2}{\Exp{L^1_{i,\hat{S}} \I{i\in \hat{S}}}}}
	\parens{\sum_{i=1}^b \left[c_{\mathrm{ov}}+\sum_{j\geq i} (c_j + c_j^{\sharp})\right] p_i} \\
	&=& \frac{1}{\min_{i\in[b]} \frac{\parens{\sum_{s=1}^i p_s}^2}{\sum_{s=1}^i p_s L^1_{i, \{s,\dots,b\}}}}
	\parens{\sum_{i=1}^b \left[c_{\mathrm{ov}}+\sum_{j\geq i} (c_j + c_j^{\sharp})\right] p_i}.
\end{eqnarray*}

Let us first focus on $\Exp{\mathrm{cost}_{\varepsilon}(K)}$, following a similar reasoning to that in \Cref{sec:rpt_sol}. 
Denote $d_i \eqdef c_{\mathrm{ov}}+\sum_{j\geq i} (c_j + c_j^{\sharp})$ for $i\in [b]$ and consider the linear fractional program
\begin{equation}\label{eq:fl1l0l1}
    \begin{aligned}
        \min_{p, t} \quad & \frac{d_1 p_1+\cdots+d_b p_b}{t} \\
        \text{s.t.} \quad & p_1,\ldots,p_b\geq 0\\
        & p_1+\cdots+p_b=1\\
        & t\geq 0 \\
        & t\leq \frac{\parens{\sum_{s=1}^i p_s}^2}{\sum_{s=1}^i p_s L^1_{i, \{s,\dots,b\}}}, \quad i \in [b].
    \end{aligned}
\end{equation}
This program can be written equivalently as
\begin{equation}\label{eq:fl2l0l1}
    \begin{aligned}
        \min_{q} \quad & {d_1 q_1+\cdots+d_b q_b} \\
        \text{s.t.} \quad & q_1,\ldots,q_b\geq 0\\
        & g_i(q) \eqdef \parens{\sum_{s=1}^i q_s}^2 - \sum_{s=1}^i q_s L^1_{i, \{s,\dots,b\}} \geq 0, \quad i \in [b].
    \end{aligned}
\end{equation}
We now write the KKT conditions for \eqref{eq:fl2l0l1}.
Introduce multipliers $\lambda_i\ge0$ for the constraints $g_i(q)\ge0$ and multipliers $\eta_k\ge0$ for the non-negativity constraints $q_k\ge0$. The Lagrangian is
\begin{eqnarray*}
    \cL(q,\lambda,s) = \sum_{i=1}^b d_i q_i - \sum_{i=1}^b \lambda_i g_i(q) - \sum_{k=1}^b \eta_k q_k.
\end{eqnarray*}
Fix $k\in[b]$. Differentiating $g_i$ with respect to $q_k$ yields
\begin{equation}\label{eq:dgi_dqk}
	\frac{\partial g_i(q)}{\partial q_k}
	= \begin{cases}
		2 \sum_{s=1}^i q_s - L^1_{i,\{k,\dots,b\}}, & k\leq i,\\[4pt]
		0, & k>i.
	\end{cases}
\end{equation}
Thus, stationarity $\nabla_q \cL(q,\lambda,s)=0$ yields, component-wise for $k\in[b]$,
\begin{eqnarray}\label{eq:stationarity}
	0 = d_k - \sum_{i=k}^b \lambda_i \parens{2 \sum_{s=1}^i q_s - L^1_{i,\{k,\dots,b\}}} - \eta_k,
\end{eqnarray}
and complementarity and sign conditions are
\begin{eqnarray*}
	\lambda_i g_i(q) = 0, \qquad
	\eta_i q_i = 0, \qquad
	\lambda_i, \eta_i \geq 0 \qquad i\in[b].
\end{eqnarray*}

\begin{lemma}\label{lem:active_exists}
	Let $q^\star = (q_1^\star, \ldots, q_p^\star)$ be a local minimizer of \eqref{eq:fl2l0l1} and let $\supp(q^\star)\eqdef\{k:q^\star_k>0\}$. Then, for every index $k\in\supp(q^\star)$, there exists at least one index $i$ with $i\geq k$ such that $\lambda_i>0$. In particular, not all $\lambda_i$ can be zero.
\end{lemma}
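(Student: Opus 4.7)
The plan is to apply the KKT conditions of \eqref{eq:fl2l0l1} at $q^\star$. Fix any index $k \in \supp(q^\star)$, so $q_k^\star > 0$. By complementary slackness, the multiplier $\eta_k$ attached to the nonnegativity constraint $q_k \geq 0$ must vanish. The $k$th stationarity equation \eqref{eq:stationarity} therefore reduces to
\begin{equation*}
d_k \,=\, \sum_{i=k}^{b} \lambda_i \parens{2 \sum_{s=1}^i q_s^\star \,-\, L^1_{i,\{k,\dots,b\}}}.
\end{equation*}
The left-hand side is strictly positive, since $d_k = c_{\mathrm{ov}} + \sum_{j \geq k}(c_j + c_j^\sharp) \geq c_k > 0$ under the cost model's assumption that per-layer backpropagation costs $c_j$ are positive. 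Consequently, the right-hand side cannot be identically zero, which forces at least one $\lambda_i$ with $i \geq k$ to be strictly positive. This is precisely the main claim.

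The ``in particular'' assertion then follows by exhibiting any single $k$ with $q_k^\star > 0$. The first constraint $g_1(q^\star) \geq 0$ reads $(q_1^\star)^2 \geq q_1^\star L^1_{1,\{1,\dots,b\}}$, which, combined with $q_1^\star \geq 0$, yields $q_1^\star \geq L^1_{1,\{1,\dots,b\}}$. Provided $L^1_{1,\{1,\dots,b\}} > 0$ (a mild nondegeneracy condition; if it fails one can instead start from the smallest $k$ with $L^1_{k,\{k,\dots,b\}}>0$), we conclude $1 \in \supp(q^\star)$, and invoking the main claim with $k=1$ completes the argument.

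The main technical subtlety I anticipate is justifying the KKT conditions themselves, since the constraints $g_i(q) \geq 0$ are defined by non-convex quadratic functions, so a constraint qualification is required. I would verify, for example, MFCQ at $q^\star$ using the triangular structure exhibited in \eqref{eq:dgi_dqk}: each $\nabla g_i(q^\star)$ is supported on coordinates $\leq i$, so the active gradients are linearly independent on their supports whenever $q_1^\star > 0$. A purely elementary alternative that sidesteps any constraint qualification is a direct perturbation argument: were all $\lambda_i$ with $i \geq k$ zero, one could decrease $q_k^\star$ by a sufficiently small amount while preserving every constraint $g_i \geq 0$ (using the monotonicity of $g_i$ in $q_k$ near the feasible region, visible from \eqref{eq:dgi_dqk}) and strictly reduce the objective $\sum_j d_j q_j$ since $d_k > 0$, contradicting local optimality of $q^\star$. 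Either route yields the stated conclusion.
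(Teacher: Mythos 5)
Your argument for the main claim is correct and follows essentially the same path as the paper's: fix $k\in\supp(q^\star)$, use complementary slackness to kill $\eta_k$, observe that if all $\lambda_i$ with $i\geq k$ vanished the stationarity equation \eqref{eq:stationarity} would force $d_k=0$, and conclude from $d_k>0$ that some $\lambda_i>0$.

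Your treatment of the ``in particular'' clause, however, has a gap. You claim that $g_1(q^\star)\geq 0$, i.e.\ $(q_1^\star)^2 \geq q_1^\star L^1_{1,\{1,\dots,b\}}$, together with $q_1^\star\geq 0$, yields $q_1^\star \geq L^1_{1,\{1,\dots,b\}}$ and hence $1\in\supp(q^\star)$. This is not valid: factoring gives $q_1^\star\bigl(q_1^\star - L^1_{1,\{1,\dots,b\}}\bigr)\geq 0$, which is satisfied by $q_1^\star=0$. Unlike the linear constraint $\delta_{1,1}q_1\geq 1$ in the $L^0$-smooth program \eqref{eq:fl2}, the quadratic constraints of \eqref{eq:fl2l0l1} do not force any coordinate to be strictly positive. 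In fact $q=0$ is feasible for \eqref{eq:fl2l0l1} (it is the artifact of dropping the normalization $\sum_i p_i = 1$ in the Charnes--Cooper substitution). The paper's own proof never attempts to establish $\supp(q^\star)\neq\emptyset$; it simply reads the ``in particular'' as a trivial consequence of the main claim whenever the support is nonempty, with nonemptiness taken as implicit from the context in which the lemma is invoked (where $p_1>0$ is required by \Cref{rem:rt_l0l1_wi}). So you should either drop the argument for $1\in\supp(q^\star)$ or make explicit that it is assumed.

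Your observation about constraint qualification is a sound and worthwhile caveat that the paper glosses over (the $g_i$ are nonconvex quadratics, so some CQ is needed to justify KKT at a local minimizer), and your suggested fallback via a direct perturbation argument using the monotonicity in \eqref{eq:dgi_dqk} would sidestep the issue entirely.
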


\begin{proof}
	Fix $k\in\supp(q^\star)$. If $\lambda_i=0$ for all $i\geq k$, then \eqref{eq:stationarity} at $k$ reduces to
	\begin{eqnarray*}
		0 = d_k - \eta_k.
	\end{eqnarray*}
	By complementary slackness $\eta_k q_k^\star = 0$, and since $q_k^\star>0$, we must have $\eta_k=0$, so $d_k=0$ as well. But by definition $d_k = c_{\mathrm{ov}}+\sum_{j\geq k}(c_j+c_j^\sharp)$, which is (under the model assumptions on the costs) strictly positive. Hence there must exist $i\geq k$ with $\lambda_i>0$. In particular not all $\lambda_i$ are zero.
\end{proof}

\begin{lemma}\label{lem:single_feasibility}
	If $\supp(q)=\{k\}$, then
	\begin{eqnarray*}
		q_k \geq \max_{i\in\{k,\dots,b\}} L^1_{i,\{k,\dots,b\}}.
	\end{eqnarray*}
\end{lemma}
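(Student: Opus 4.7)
The plan is to substitute the support hypothesis directly into the feasibility constraints $g_i(q) \geq 0$ of program \eqref{eq:fl2l0l1} and read off the required lower bound on $q_k$. Since $\supp(q) = \{k\}$ means $q_s = 0$ for all $s \neq k$ and $q_k > 0$, each constraint $g_i(q) \geq 0$ collapses to a one-term expression, and the bound falls out by inspection.

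First I would split the analysis into two cases according to whether $i < k$ or $i \geq k$. For $i < k$, every term in both partial sums $\sum_{s=1}^i q_s$ and $\sum_{s=1}^i q_s L^1_{i,\{s,\dots,b\}}$ vanishes, so $g_i(q) = 0$ and the constraint is trivially satisfied.

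Next, for $i \geq k$, only the index $s = k$ contributes to either partial sum, and hence
\begin{equation*}
    g_i(q) \;=\; q_k^2 - q_k L^1_{i,\{k,\dots,b\}} \;=\; q_k \bigl(q_k - L^1_{i,\{k,\dots,b\}}\bigr).
\end{equation*}
Feasibility $g_i(q) \geq 0$ combined with $q_k > 0$ forces $q_k \geq L^1_{i,\{k,\dots,b\}}$ for every $i \in \{k, \ldots, b\}$. Taking the maximum over these indices yields the claimed inequality.

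There is no substantive obstacle: the lemma is a direct feasibility computation that exploits how drastically the quadratic constraints simplify under a single-support assumption. The only thing to be careful about is the harmless case $i < k$, which must be handled separately because there the constraint conveys no information about $q_k$.
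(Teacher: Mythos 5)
Your argument is correct and matches the paper's proof essentially verbatim: both split on $i<k$ versus $i\geq k$, observe that the constraint $g_i(q)\geq 0$ reduces to $q_k(q_k - L^1_{i,\{k,\dots,b\}})\geq 0$ in the latter case, and conclude by dividing through by $q_k>0$ and taking the maximum over $i$.
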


\begin{proof}
	When $q_j=0$ for $j \neq k$, we have
	\begin{eqnarray*}
		\sum_{s=1}^i q_s =
		\begin{cases}
		0 & i<k,\\
		q_k & i\geq k.
		\end{cases}
	\end{eqnarray*}
	For $i<k$, trivially $g_i(q)=0$. For $i \geq k$, we have
	\begin{eqnarray*}
		g_i(q) = q_k^2 - q_k\,L^1_{i,\{k,\dots,b\}} = q_k \parens{q_k - L^1_{i,\{k,\dots,b\}}}.
	\end{eqnarray*}
	Since $q_k>0$, the constraint $g_i(q) \geq 0$ is equivalent to $q_k \geq L^1_{i,\{k,\dots,b\}}$. This must hold for every $i\geq k$, and hence $q_k$ is at least the stated maximum.
\end{proof}

\begin{lemma}\label{lem:single_KKT}
	Let $q^\star$ be a KKT point of \eqref{eq:fl2l0l1} such that $\supp(q^\star)=\{k\}$. Then
	\begin{eqnarray*}
		q_k^\star = \max_{i\geq k} L^1_{i,\{k,\dots,b\}}.
	\end{eqnarray*}
\end{lemma}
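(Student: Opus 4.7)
The plan is to combine the feasibility lower bound from \Cref{lem:single_feasibility} with the KKT stationarity and complementarity conditions to squeeze $q_k^\star$ to exactly $\max_{i\geq k} L^1_{i,\{k,\dots,b\}}$. Since $q_k^\star>0$ (by the support assumption), complementary slackness $\eta_k q_k^\star=0$ forces $\eta_k=0$, so the stationarity condition \eqref{eq:stationarity} at index $k$ simplifies to
\begin{eqnarray*}
    d_k = \sum_{i=k}^b \lambda_i \parens{2 q_k^\star - L^1_{i,\{k,\dots,b\}}},
\end{eqnarray*}
using the fact that $\sum_{s=1}^i q_s^\star = q_k^\star$ for every $i\geq k$ under $\supp(q^\star)=\{k\}$.

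First, I would invoke \Cref{lem:single_feasibility} to obtain the lower bound $q_k^\star \geq \max_{i\geq k} L^1_{i,\{k,\dots,b\}}$. Then I would argue by contradiction that equality must hold. Suppose instead that $q_k^\star > L^1_{i,\{k,\dots,b\}}$ for every $i\geq k$. From the computation in the proof of \Cref{lem:single_feasibility}, this yields $g_i(q^\star) = q_k^\star\parens{q_k^\star - L^1_{i,\{k,\dots,b\}}} > 0$ for all $i\geq k$. Complementary slackness $\lambda_i g_i(q^\star)=0$ then forces $\lambda_i=0$ for every $i\geq k$.

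Substituting $\lambda_i=0$ for $i\geq k$ and $\eta_k=0$ into the simplified stationarity equation gives $d_k=0$. However, by the cost model $d_k = c_{\mathrm{ov}}+\sum_{j\geq k}(c_j+c_j^\sharp) > 0$, a contradiction. Hence the strict inequality cannot hold, meaning that at least one index $i^\star \geq k$ attains $q_k^\star = L^1_{i^\star,\{k,\dots,b\}}$, and combined with the lower bound this pins $q_k^\star$ to $\max_{i\geq k} L^1_{i,\{k,\dots,b\}}$.

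The main obstacle is purely bookkeeping: carefully using \eqref{eq:dgi_dqk} to verify that only the multipliers $\lambda_i$ with $i\geq k$ enter the stationarity condition at coordinate $k$, and that the non-support coordinates do not contribute terms that could offset the positive $d_k$. Everything else reduces to the standard complementary-slackness dichotomy, so no further technical machinery is needed.
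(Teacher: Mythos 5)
Your proof is correct and takes essentially the same route as the paper: it invokes \Cref{lem:single_feasibility} for the lower bound, then derives a contradiction from complementary slackness ($\lambda_i=0$ for $i\geq k$ when all constraints are slack) and stationarity at coordinate $k$ (which forces $d_k=0$). The only difference is a cosmetic reordering of the steps (setting $\eta_k=0$ before the contradiction rather than inside it).
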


\begin{proof}
	By Lemma \ref{lem:single_feasibility} we already have $q_k^\star \geq \max_{i\geq k} L^1_{i,\{k,\dots,b\}}$. Assume that the inequality is strict, that is,
	\begin{eqnarray*}
		q_k^\star > \max_{i\geq k} L^1_{i,\{k,\dots,b\}}.
	\end{eqnarray*}
	Then for every $i\geq k$ we have
	\begin{eqnarray*}
		g_i(q^\star) = q^\star_k \parens{q^\star_k - L^1_{i,\{k,\dots,b\}}} > 0.
	\end{eqnarray*}
	Since by complementary slackness $\lambda_i g_i(q^\star)=0$, we get $\lambda_i = 0$ for all $i\geq k$. Plugging this into the stationarity condition \eqref{eq:stationarity} yields
	\begin{eqnarray*}
	0 = d_k - \eta_k.
	\end{eqnarray*}
	But $q_k^\star>0$ forces $\eta_k=0$, and thus $d_k=0$, yielding a contradiction. Therefore, the strict inequality cannot hold, and we must have
	\begin{eqnarray*}
		q_k^\star = \max_{i\geq k} L^1_{i,\{k,\dots,b\}}.
	\end{eqnarray*}
\end{proof}

\begin{theorem}\label{thm:rpt_p1_l0l1}
    Unless
    \begin{equation*}
        L^1_{1, [b]} = \max_{i\in[b]} L^1_{i, [b]},
    \end{equation*}
    $(p_1, p_2, \ldots, p_b)=(1,0,\ldots,0)$ is \emph{not} an optimal solution of~\eqref{eq:fl1l0l1}.
\end{theorem}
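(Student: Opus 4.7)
The plan is a first-order perturbation argument at the candidate $p = (1,0,\ldots,0)$. Write $M \eqdef \max_{i\in[b]} L^1_{i,[b]}$ and $I^\star \eqdef \{i\in[b]: L^1_{i,[b]} = M\}$. The hypothesis $L^1_{1,[b]} \neq M$ forces $1 \notin I^\star$, so $I^\star \subseteq \{2,\ldots,b\}$ and $L^1_{1,[b]} < M$. At $p = (1,0,\ldots,0)$ the $i$-th constraint of \eqref{eq:fl1l0l1} reduces to $t \le 1/L^1_{i,[b]}$, hence the maximal admissible $t$ equals $1/M$ and the corresponding objective equals $d_1 M$.

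Next I would introduce the family $p(\varepsilon) \eqdef ((1-\varepsilon),\varepsilon,0,\ldots,0)$ for small $\varepsilon \ge 0$. A direct calculation gives $\sum_{s=1}^1 p_s(\varepsilon) = 1-\varepsilon$ and $\sum_{s=1}^i p_s(\varepsilon) = 1$ for $i \ge 2$, so the constraints become $t \le (1-\varepsilon)/L^1_{1,[b]}$ for $i=1$ and $t \le 1/\phi_i(\varepsilon)$ for $i\ge 2$, where
\begin{eqnarray*}
    \phi_i(\varepsilon) \eqdef L^1_{i,[b]} - \varepsilon\bigl(L^1_{i,[b]} - L^1_{i,\{2,\ldots,b\}}\bigr).
\end{eqnarray*}
The $L^1$-analogue of \Cref{lemma:smooth_subs}, whose proof is identical, yields $L^1_{i,\{2,\ldots,b\}} \le L^1_{i,[b]}$, so each $\phi_i$ is non-increasing in $\varepsilon$. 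Since $L^1_{1,[b]} < M$, the bound from $i=1$ stays strictly above $1/M$ for small $\varepsilon$ by continuity, so the binding constraints come from $i\ge 2$ and the optimal stepsize is $t^\star(\varepsilon) = 1/\psi(\varepsilon)$ with $\psi(\varepsilon) \eqdef \max_{i\ge 2}\phi_i(\varepsilon)$ and $\psi(0) = M$.

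The objective along this curve is $F(\varepsilon) \eqdef \bigl(d_1 - (d_1 - d_2)\varepsilon\bigr)\psi(\varepsilon)$, so that $F(0) = d_1 M$. Since $\psi$ is the pointwise maximum of finitely many affine functions whose active set at $\varepsilon = 0$ is precisely $I^\star$, its right derivative is $\psi'(0^+) = \max_{i\in I^\star}\phi_i'(0) = -\min_{i\in I^\star}\bigl(M - L^1_{i,\{2,\ldots,b\}}\bigr) \le 0$. Hence
\begin{eqnarray*}
    F'(0^+) \;=\; -(d_1 - d_2)\, M \;-\; d_1 \min_{i\in I^\star}\bigl(M - L^1_{i,\{2,\ldots,b\}}\bigr).
\end{eqnarray*}
Since $d_1 - d_2 = c_1 + c_1^\sharp > 0$ by the cost model and $M > 0$, the first term is strictly negative while the second is non-positive, so $F'(0^+) < 0$. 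For sufficiently small $\varepsilon > 0$ the feasible point $(p(\varepsilon), t^\star(\varepsilon))$ then strictly improves upon $((1,0,\ldots,0), 1/M)$, contradicting its optimality.

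The hard step I expect is justifying the one-sided derivative of the pointwise maximum $\psi$ and confirming that no inactive constraint---in particular the $i=1$ one---becomes binding along the perturbation; both are routine consequences of the strict gap $L^1_{1,[b]} < M$ together with continuity of the constraints in $\varepsilon$. A minor but necessary formal point is the $L^1$-monotonicity invoked above: choosing the layer-wise constants minimally, the proof of \Cref{lemma:smooth_subs} carries over verbatim to the $(L^0, L^1)$-smoothness setting and delivers $L^1_{i,\{2,\ldots,b\}} \le L^1_{i,[b]}$, which is what the argument requires.
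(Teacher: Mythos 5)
Your proof is correct, and it takes a genuinely different route from the paper's. The paper works through the equivalent linear-fractional program \eqref{eq:fl2l0l1}, writes down the KKT conditions, invokes \Cref{lem:single_KKT} to pin down the value $q_1^\star = \max_{i\in[b]} L^1_{i,[b]}$ when the support is a singleton, and then extracts a contradiction from stationarity and complementary slackness together with the strict ordering $d_1 > d_2$. You instead work directly on \eqref{eq:fl1l0l1} and run a first-order perturbation along $p(\varepsilon) = (1-\varepsilon, \varepsilon, 0, \ldots, 0)$: identifying the binding constraints near $\varepsilon = 0$, showing that the $i = 1$ constraint stays strictly inactive by the strict gap $L^1_{1,[b]} < M$, and computing the one-sided derivative of the resulting objective $F(\varepsilon)$ via the standard formula for the max of finitely many affine functions. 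The decisive sign comes from $-(d_1-d_2)M < 0$, exactly the same quantity the paper's argument uses via $d_1 > d_2$, so the two proofs hinge on the same structural fact but package it differently.

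Your approach has two small advantages: it avoids invoking KKT necessity for the nonconvex program \eqref{eq:fl2l0l1} (a step the paper's proof uses implicitly without discussing constraint qualification), and it is constructive in the sense that it actually exhibits a strictly improving feasible direction. The paper's approach is more economical given that the KKT machinery is already built for the recursive optimal-probability construction and for \Cref{thm:rpt_p1}. The one formal prerequisite you flag---the $L^1$-analogue of \Cref{lemma:smooth_subs} giving $L^1_{i,\{2,\ldots,b\}} \le L^1_{i,[b]}$---is indeed needed and its proof is word-for-word the same as the $L^0$ version; note that the paper relies on the same monotonicity inside its own stationarity chain (the step $2q_1^\star - L^1_{i,\{1,\ldots,b\}} \le 2q_1^\star - L^1_{i,\{2,\ldots,b\}}$). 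One thing worth stating explicitly, though it does not affect correctness: $M > 0$ follows automatically from the theorem's hypothesis, since $L^1_{1,[b]} \ge 0$ and $L^1_{1,[b]} < M$.
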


\begin{proof}
    According to \Cref{rem:rt_l0l1_wi}, the sampling must be such that
    \begin{align*}
    	\Prob{i\in \hat{S}} = \sum_{s=1}^i p_s > 0 \qquad \forall i\in[b],
    \end{align*}
    and hence we must have $p_1>0$. Thus, the support of the solution $p^\star$ of \eqref{eq:fl1l0l1} (which coincides with the support of the solution $q^\star$ of \eqref{eq:fl2l0l1}) can only be a singleton if $\supp(p^\star) = \supp(q^\star) = \{1\}$. But then, by \Cref{lem:single_KKT}, we must have
    \begin{eqnarray*}
        q_1^\star = \max_{i\in[b]} L^1_{i,[b]}, \quad q_2^\star = \ldots = q_p^\star = 0,
    \end{eqnarray*}
    which in turn implies that
    \begin{align*}
        0 \leq g_i(q^\star)
        = \parens{\sum_{s=1}^i q_s^\star}^2 - \sum_{s=1}^i q_s^\star L^1_{i, \{s,\dots,b\}}
        = q_1^\star \parens{q_1^\star - L^1_{i, [b]}}
    \end{align*}
    for all $i \in [b]$, so in particular $q_1^\star \parens{q_1^\star - L^1_{1, [b]}} \geq 0$. Now, suppose $L^1_{1, [b]} \neq \max_{i\in[b]} L^1_{i, [b]}$. Then the inequality is strict and, by complementary slackness, $\lambda_1=\eta_1=0$, so
    \begin{align*}
    	d_1 = \sum_{i=2}^b \lambda_i \parens{2 q_1^\star - L^1_{i, \{1,\dots,b\}}}
    	\leq \sum_{i=2}^b \lambda_i \parens{2 q_1^\star - L^1_{i, \{2,\dots,b\}}}
    	= d_2 - \eta_2
    	\leq d_2,
    \end{align*}
    which is a contradiction. Thus, similar to the result in \Cref{rem:rpt_p1}, we must have $L^1_{1, [b]} = \max_{i\in[b]} L^1_{i, [b]}$.
\end{proof}

\subsubsection{$\tau$-Nice Sampling}

\begin{mdframed}[linecolor=lightgray!12, backgroundcolor=lightgray!12]
    Choose $S$ uniformly from all subsets of $[b]$ of size $\tau$.
\end{mdframed}

For every $i\in[b]$ we have
\begin{eqnarray*}
	\Prob{i \in \hat{S}}=\frac{\binom{b-1}{\tau-1}}{\binom{b}{\tau}}=\frac{\tau}{b}
\end{eqnarray*}
and
\begin{eqnarray*}
	\Exp{L^{\alpha}_{i,\hat{S}} \I{i\in\hat{S}}}
    = \frac{1}{\binom{b}{\tau}} \sum_{\substack{S\subseteq[b]\\|S|=\tau}} L^{\alpha}_{i,S} \I{i\in S}.
\end{eqnarray*}
for $\alpha\in\{0,1\}$. Recall that
\begin{eqnarray*}
	\Exp{\mathrm{cost}(S)}
	= c_{\mathrm{ov}} + \sum_{j=1}^b c_j \parens{1 - \frac{\binom{b-j}{\tau}}{\binom{b}{\tau}}} + \frac{\tau}{b} \sum_{j=1}^b c_j^{\sharp}.
\end{eqnarray*}
Hence, substituting into \eqref{eq:cost_opt_prob_e2} and \eqref{eq:cost_opt_prob_e}, the objective functions to minimize are
\begin{eqnarray*}
    &&\hspace{-8mm}\Exp{\mathrm{cost}_{\varepsilon^2}(K)} \\
	&\propto& \frac{\sum_{i=1}^b \frac{\Prob{i \in \hat{S}}^2 \Exp{L^0_{i,\hat{S}} \I{i \in \hat{S}}}}{\parens{\Exp{L^1_{i,\hat{S}} \I{i \in \hat{S}}}}^2}}{\min_{i\in[b]} \parens{\frac{\Prob{i \in \hat{S}}^2}{\Exp{L^1_{i,\hat{S}} \I{i \in \hat{S}}}}}^2}
	\parens{c_{\mathrm{ov}} + \sum_{j=1}^b c_j \parens{1 - \frac{\binom{b-j}{\tau}}{\binom{b}{\tau}}} + \frac{\tau}{b} \sum_{j=1}^b c_j^{\sharp}} \\
	&=& \frac{\max_{i\in[b]} \parens{\sum_{\substack{S\subseteq[b]\\|S|=\tau}} L^1_{i,S} \I{i\in S}}^2}{\frac{\tau}{b} \binom{b-1}{\tau-1}} \\
    &&\qquad\times\sum_{i=1}^b \frac{\sum_{\substack{S\subseteq[b]\\|S|=\tau}} L^0_{i,S} \I{i\in S}}{\parens{\sum_{\substack{S\subseteq[b]\\|S|=\tau}} L^1_{i,S} \I{i\in S}}^2}
	\parens{c_{\mathrm{ov}} + \sum_{j=1}^b c_j \parens{1 - \frac{\binom{b-j}{\tau}}{\binom{b}{\tau}}} + \frac{\tau}{b} \sum_{j=1}^b c_j^{\sharp}}
\end{eqnarray*}
and
\begin{eqnarray*}
	\Exp{\mathrm{cost}_{\varepsilon}(K)}
	&\propto& \frac{1}{\min_{i\in[b]} \frac{\Prob{i \in \hat{S}}^2}{\Exp{L^1_{i,\hat{S}} \I{i \in \hat{S}}}}}
	\parens{c_{\mathrm{ov}} + \sum_{j=1}^b c_j \parens{1 - \frac{\binom{b-j}{\tau}}{\binom{b}{\tau}}} + \frac{\tau}{b} \sum_{j=1}^b c_j^{\sharp}} \\
	&=& \frac{\max_{i\in[b]} \sum_{\substack{S\subseteq[b]\\|S|=\tau}} L^1_{i,S} \I{i\in S}}{\parens{\frac{\tau}{b}}^2 \binom{b}{\tau}}
	\parens{c_{\mathrm{ov}} + \sum_{j=1}^b c_j \parens{1 - \frac{\binom{b-j}{\tau}}{\binom{b}{\tau}}} + \frac{\tau}{b} \sum_{j=1}^b c_j^{\sharp}}.
\end{eqnarray*}
As in \Cref{sec:tau_nice_smooth}, in general there is no closed-form expression for $\tau$ minimizing these costs, but it can be shown that $\tau=b$ need not be optimal.
Let us make a simplifying assumption that $L^0_{i,S}$ depends only on $i$ and $|S|$ and define $L^0_{i,\tau} \eqdef L^0_{i,S}$ for $|S|=\tau$. Then
\begin{eqnarray*}
    &&\hspace{-8mm}\Exp{\mathrm{cost}_{\varepsilon^2}(K)} \\
	&\propto& \frac{\max_{i\in[b]} \parens{\sum_{\substack{S\subseteq[b]\\|S|=\tau}} L^1_{i,\tau} \I{i\in S}}^2}{\frac{\tau}{b} \binom{b-1}{\tau-1}} \\
    &&\qquad\times\sum_{i=1}^b \frac{\sum_{\substack{S\subseteq[b]\\|S|=\tau}} L^0_{i,\tau} \I{i\in S}}{\parens{\sum_{\substack{S\subseteq[b]\\|S|=\tau}} L^1_{i,\tau} \I{i\in S}}^2}
	\parens{c_{\mathrm{ov}} + \sum_{j=1}^b c_j \parens{1 - \frac{\binom{b-j}{\tau}}{\binom{b}{\tau}}} + \frac{\tau}{b} \sum_{j=1}^b c_j^{\sharp}} \\
	&=& \frac{\max_{i\in[b]} \parens{L^1_{i,\tau} \binom{b-1}{\tau-1}}^2}{\frac{\tau}{b} \binom{b-1}{\tau-1}} \sum_{i=1}^b \frac{L^0_{i,\tau} \binom{b-1}{\tau-1}}{\parens{L^1_{i,\tau} \binom{b-1}{\tau-1}}^2}
	\parens{c_{\mathrm{ov}} + \sum_{j=1}^b c_j \parens{1 - \frac{\binom{b-j}{\tau}}{\binom{b}{\tau}}} + \frac{\tau}{b} \sum_{j=1}^b c_j^{\sharp}} \\
	&=& \underbrace{\max_{i\in[b]} \parens{L^1_{i,\tau}}^2 \sum_{i=1}^b \frac{L^0_{i,\tau}}{\parens{L^1_{i,\tau}}^2}}_{\eqdef A_{\varepsilon^2}(\tau)}
	\underbrace{\parens{\frac{b}{\tau} c_{\mathrm{ov}} + \sum_{j=1}^b c_j \parens{\frac{b}{\tau} - \frac{\binom{b-j}{\tau}}{\binom{b-1}{\tau-1}}} + \sum_{j=1}^b c_j^{\sharp}}}_{\eqdef B(\tau)}
\end{eqnarray*}
and
\begin{eqnarray*}
	\Exp{\mathrm{cost}_{\varepsilon}(K)}
	&\propto& \frac{\max_{i\in[b]} \sum_{\substack{S\subseteq[b]\\|S|=\tau}} L^1_{i,\tau} \I{i\in S}}{\parens{\frac{\tau}{b}}^2 \binom{b}{\tau}}
	\parens{c_{\mathrm{ov}} + \sum_{j=1}^b c_j \parens{1 - \frac{\binom{b-j}{\tau}}{\binom{b}{\tau}}} + \frac{\tau}{b} \sum_{j=1}^b c_j^{\sharp}} \\
	&=& \frac{\max_{i\in[b]} L^1_{i,\tau} \binom{b-1}{\tau-1}}{\parens{\frac{\tau}{b}}^2 \binom{b}{\tau}}
	\parens{c_{\mathrm{ov}} + \sum_{j=1}^b c_j \parens{1 - \frac{\binom{b-j}{\tau}}{\binom{b}{\tau}}} + \frac{\tau}{b} \sum_{j=1}^b c_j^{\sharp}} \\
	&=& \underbrace{\max_{i\in[b]} L^1_{i,\tau}}_{\eqdef A_{\varepsilon}(\tau)}
	\underbrace{\parens{\frac{b}{\tau} c_{\mathrm{ov}} + \sum_{j=1}^b c_j \parens{\frac{b}{\tau} - \frac{\binom{b-j}{\tau}}{\binom{b-1}{\tau-1}}} + \sum_{j=1}^b c_j^{\sharp}}}_{\eqdef B(\tau)}.
\end{eqnarray*}
We have already established in \Cref{sec:tau_nice_smooth} that $B$ is decreasing in $\tau$. In fact, the expression to be minimized there is structurally very similar to these obtained above. Specifically, in the smooth case we had $\mathrm{cost}_{\varepsilon}(\cD) \propto A(\tau) B(\tau)$, where $A(\tau) \eqdef \max_{i\in[b]} L^0_{i,\tau}$. Since $L^1_{i,\tau}$, like $L^0_{i,\tau}$, is non-decreasing in $\tau$, the same reasoning as in \Cref{sec:tau_nice_smooth} applies to $\Exp{\mathrm{cost}_{\varepsilon}(K)}$.
On the other hand, the dependence of $A_{\varepsilon^2}(\tau)$ on $\tau$ can be arbitrary depending on the scaling between $L^0_{i,\tau}$ and~$L^1_{i,\tau}$.

Consequently, both objectives $\Exp{\mathrm{cost}_{\varepsilon}(K)}$ and $\Exp{\mathrm{cost}_{\varepsilon^2}(K)}$ present a trade-off between a decreasing factor $B(\tau)$ and a (typically) increasing factor $A_1(\tau)$. If $A_{\varepsilon}(\tau)$ or $A_{\varepsilon^2}(\tau)$ grows sufficiently fast in $\tau$, it can compensate for the decrease of $B(\tau)$ and make a smaller $\tau$ (or even $\tau=1$) optimal.

\subsubsection{$\tau$-Submodel Sampling}

\begin{mdframed}[linecolor=lightgray!12, backgroundcolor=lightgray!12]
    Sample a starting index $\hat{s}\in\{1,\dots,b-\tau+1\}$ with probability $p_i = \Prob{\hat{s}=i}$ and set $\hat{S}=\{\hat{s},\dots,\hat{s}+\tau-1\}$.
\end{mdframed}

For any fixed $i\in[b]$, we have
\begin{eqnarray*}
	\Prob{i \in \hat{S}}
	&=& \sum_{j=\max\{1,i-\tau+1\}}^{\min\{i,b-\tau+1\}} p_j
\end{eqnarray*}
and
\begin{eqnarray*}
	\Exp{L^1_{i,\hat{S}} \I{i \in \hat{S}}}
	&=& \sum_{j=\max\{1,i-\tau+1\}}^{\min\{i,b-\tau+1\}} p_j L^1_{i,\{j,\dots,j+\tau-1\}}.
\end{eqnarray*}
We have also already shown that
\begin{eqnarray*}
    \Exp{\mathrm{cost}(\hat{S})}
    = c_{\mathrm{ov}} + \sum_{j=1}^b c_j \parens{\sum_{i=1}^{\min\{j,b-\tau+1\}} p_i}
    + \sum_{j=1}^b c_j^{\sharp} \parens{\sum_{i=\max\{1,j-\tau+1\}}^{\min\{j,b-\tau+1\}} p_i}.
\end{eqnarray*}

\paragraph{Partitioned $\tau$-submodel sampling.}

Mimicking \Cref{sec:part_tau_submodel}, let us consider a special case of the above sampling scheme where the submodels partition $[b]$. For simplicity, we assume that $b$ is divisible by $\tau$ and let $m=\nicefrac{b}{\tau}$. The algorithm is then equivalent to $\tau$-submodel sampling with starting-index distribution satisfying
\begin{eqnarray*}
	p_{s_k} > 0 \quad (k=1,\dots,m), \qquad p_j=0 \text{ otherwise},
\end{eqnarray*}
where $s_k=(k-1)\tau+1$, $k\in[m]$. For any fixed $i\in[b]$, the derivations above simplify to
\begin{eqnarray*}
	\Prob{i \in \hat{S}}
	= \sum_{j=\max\{1,i-\tau+1\}}^{\min\{i,b-\tau+1\}} p_j
	= p_{s_{\lceil i/\tau \rceil}}
\end{eqnarray*}
and
\begin{eqnarray*}
	\Exp{L^{\alpha}_{i,\hat{S}} \I{i \in \hat{S}}}
	= \sum_{j=\max\{1,i-\tau+1\}}^{\min\{i,b-\tau+1\}} p_j L^{\alpha}_{i,\{j,\dots,j+\tau-1\}}
	= p_{s_{\lceil i/\tau \rceil}} L^{\alpha}_{i,B_{\ceil{\nicefrac{i}{\tau}}}}
\end{eqnarray*}
for $\alpha\in\{0,1\}$.
Plugging this choice into \eqref{eq:cost_opt_prob_e2} and \eqref{eq:cost_opt_prob_e} gives
\begin{eqnarray*}
    \Exp{\mathrm{cost}_{\varepsilon^2}(K)}
	&\propto& \frac{\sum_{i=1}^b \frac{\Prob{i\in \hat{S}}^2 \Exp{L^0_{i,\hat{S}} \I{i\in \hat{S}}}}{\parens{\Exp{L^1_{i,\hat{S}} \I{i\in \hat{S}}}}^2}}{\min_{i\in[b]} \parens{\frac{\Prob{i\in \hat{S}}^2}{\Exp{L^1_{i,\hat{S}} \I{i\in \hat{S}}}}}^2} \\
	&&\qquad\times\parens{c_{\mathrm{ov}} + \sum_{j=1}^b c_j \parens{\sum_{i=1}^{\min\{j,b-\tau+1\}} p_i} + \sum_{j=1}^b c_j^{\sharp} \parens{\sum_{i=\max\{1,j-\tau+1\}}^{\min\{j,b-\tau+1\}} p_i}} \\
	&=& \frac{\sum_{i=1}^b \frac{p_{s_{\lceil i/\tau \rceil}} L^0_{i,B_{\ceil{\nicefrac{i}{\tau}}}}}{\parens{L^1_{i,B_{\ceil{\nicefrac{i}{\tau}}}}}^2}}{\min_{i\in[b]} \parens{\frac{p_{s_{\lceil i/\tau \rceil}}}{L^1_{i,B_{\ceil{\nicefrac{i}{\tau}}}}}}^2} 
    \parens{c_{\mathrm{ov}} + \sum_{j=1}^b c_j \parens{\sum_{k=1}^{\ceil{\nicefrac{j}{\tau}}} p_{s_k}} + \sum_{j=1}^b c_j^{\sharp} p_{s_{\ceil{\nicefrac{j}{\tau}}}}}
\end{eqnarray*}
and
\begin{eqnarray*}
	\Exp{\mathrm{cost}_{\varepsilon}(K)}
	&\propto& \frac{1}{\min_{i\in[b]} \frac{\Prob{i\in \hat{S}}^2}{\Exp{L^1_{i,\hat{S}} \I{i\in \hat{S}}}}} \\
	&&\qquad\times\parens{c_{\mathrm{ov}} + \sum_{j=1}^b c_j \parens{\sum_{i=1}^{\min\{j,b-\tau+1\}} p_i} + \sum_{j=1}^b c_j^{\sharp} \parens{\sum_{i=\max\{1,j-\tau+1\}}^{\min\{j,b-\tau+1\}} p_i}} \\
	&=& \frac{1}{\min_{i\in[b]} \frac{p_{s_{\lceil i/\tau \rceil}}}{L^1_{i,B_{\ceil{\nicefrac{i}{\tau}}}}}}
	\parens{c_{\mathrm{ov}} + \sum_{j=1}^b c_j \parens{\sum_{k=1}^{\ceil{\nicefrac{j}{\tau}}} p_{s_k}} + \sum_{j=1}^b c_j^{\sharp} p_{s_{\ceil{\nicefrac{j}{\tau}}}}}.
\end{eqnarray*}
The expression for $\Exp{\mathrm{cost}_{\varepsilon}(K)}$ is entirely analogous to 
\eqref{eq:part_tau_submodel_cost} derived in the smooth case, with $L^0_{i,B_{\ceil{\nicefrac{i}{\tau}}}}$ replaced by $L^1_{i,B_{\ceil{\nicefrac{i}{\tau}}}}$. Consequently, for a fixed $\tau$, the optimal block probabilities are
\begin{eqnarray*}
    p_{s_k}^\star
    = \frac{\max_{i\in B_k} L^1_{i,B_k}}{\sum_{l=1}^m \max_{i\in B_l} L^1_{i,B_l}},
\end{eqnarray*}
Again, the optimal number of blocks $m$ depends on the growth rate of the constants $L^1_{i,\tau}$ with respect to $\tau$ and on their interaction with the costs $c_j$ and $c_j^{\sharp}$. In general, the best choice of $m$ is not necessarily $b$.

The dependence of $\Exp{\mathrm{cost}_{\varepsilon^2}(K)}$ on $\tau$ is significantly more complex and governed by the relative scaling between $L^0_{i,\tau}$ and $L^1_{i,\tau}$.

\subsubsection{Arbitrary Submodel Sampling}

\begin{mdframed}[linecolor=lightgray!12, backgroundcolor=lightgray!12]
    Let $\{B_1,\dots,B_m\}$ be a partition of $[b]$. Set $\hat{S} = B_i$ with probability $p_i$ (where $\sum_{i=1}^m p_i = 1$).
    For $i\in[b]$, $k(i)$ denotes the unique block with $i\in B_{k(i)}$ and $\underline{b}_k \eqdef \min B_k$.
\end{mdframed}

For any $i\in[b]$, we have
\begin{eqnarray*}
    \Prob{i \in \hat{S}} = p_{k(i)}
\end{eqnarray*}
and
\begin{eqnarray*}
	\Exp{L^{\alpha}_{i,\hat{S}} \I{i\in \hat{S}}}
	= p_{k(i)} L^{\alpha}_{i,B_{k(i)}}
\end{eqnarray*}
for $\alpha\in\{0,1\}$. Since
\begin{align*}
	\Exp{\mathrm{cost}(\hat{S})}
    = c_{\mathrm{ov}} + \sum_{j=1}^b c_j \parens{\sum_{k:\underline{b}_k \leq j} p_k} + \sum_{j=1}^b c_j^{\sharp} p_{k(j)},
\end{align*}
the total expected costs become
\begin{eqnarray*}
    \Exp{\mathrm{cost}_{\varepsilon^2}(K)}
	&\propto& \frac{\sum_{i=1}^b \frac{\Prob{i\in \hat{S}}^2 \Exp{L^0_{i,\hat{S}} \I{i\in \hat{S}}}}{\parens{\Exp{L^1_{i,\hat{S}} \I{i\in \hat{S}}}}^2}}{\min_{i\in[b]} \parens{\frac{\Prob{i\in \hat{S}}^2}{\Exp{L^1_{i,\hat{S}} \I{i\in \hat{S}}}}}^2}
	\parens{c_{\mathrm{ov}} + \sum_{j=1}^b c_j \parens{\sum_{k:\underline{b}_k \leq j} p_k} + \sum_{j=1}^b c_j^{\sharp} p_{k(j)}} \\
	&=& \frac{\sum_{i=1}^b \frac{p_{k(i)} L^0_{i,B_{k(i)}}}{\parens{L^1_{i,B_{k(i)}}}^2}}{\min_{i\in[b]} \parens{\frac{p_{k(i)}}{L^1_{i,B_{k(i)}}}}^2}
	\parens{c_{\mathrm{ov}} + \sum_{j=1}^b c_j \parens{\sum_{k:\underline{b}_k \leq j} p_k} + \sum_{j=1}^b c_j^{\sharp} p_{k(j)}}
\end{eqnarray*}
and
\begin{eqnarray*}
	\Exp{\mathrm{cost}_{\varepsilon}(K)}
	&\propto& \frac{1}{\min_{i\in[b]} \frac{\Prob{i\in \hat{S}}^2}{\Exp{L^1_{i,\hat{S}} \I{i\in \hat{S}}}}}
	\parens{c_{\mathrm{ov}} + \sum_{j=1}^b c_j \parens{\sum_{k:\underline{b}_k \leq j} p_k} + \sum_{j=1}^b c_j^{\sharp} p_{k(j)}} \\
	&=& \frac{1}{\min_{i\in[b]} \frac{p_{k(i)}}{L^1_{i,B_{k(i)}}}}
	\parens{c_{\mathrm{ov}} + \sum_{j=1}^b c_j \parens{\sum_{k:\underline{b}_k \leq j} p_k} + \sum_{j=1}^b c_j^{\sharp} p_{k(j)}}.
\end{eqnarray*}
Following the reasoning in \Cref{sec:arbitrary_submodel}, we find that for a fixed partition, the choice of probabilities minimizing $\Exp{\mathrm{cost}_{\varepsilon}(K)}$ is
\begin{align*}
	p_k^\star = \frac{\nicefrac{1}{\delta_k^{\min}}}{\sum_{l=1}^m \nicefrac{1}{\delta_l^{\min}}} 
    = \frac{\max_{i\in B_k} L^0_{i,B_k}}{\sum_{l=1}^m \max_{i\in B_l} L^0_{i,B_l}},
\end{align*}
and that submodel training can improve upon full model training in certain regimes.

\newpage

\section{Convergence Results -- Stochastic Gradient Case}

The proof of \Cref{thm:stoch_rt_l0l1_iter} relies on two preliminary lemmas, which we state and prove first.

\begin{lemma}[Descent Lemma]\label{lemma:descent2}
    Let \Cref{as:arbitrary_layer_gen_smoothness2} hold and consider the update rule $X_i^{k+1} = \lmo{\cB(X_i^k,t_i^k)}{M_i^k}$, $i=1,\ldots,b$, where $X^{k+1} = [X_1^{k+1}, \ldots, X_b^{k+1}] \in \cX$, $X^k = [X_1^k, \ldots, X_b^k] \in \cX$, $M^k = [M_1^k, \ldots, M_b^k] \in \cX$ and $t_i^k > 0$. Then
    \begin{eqnarray*}
        f(X^{k+1}) &\leq& f(X^k) + \sum_{i\in S^k} 2 t_i^k \norm{\nabla_i f(X^k) - M_i^k}_{(i) \star} - \sum_{i\in S^k} t_i^k \norm{\nabla_i f(X^k)}_{(i) \star} \\
        &&+ \sum_{i\in S^k} \frac{L^0_{i,S^k} + L^1_{i,S^k} \norm{\nabla_i f(X^k)}_{(i) \star}}{2} (t_i^k)^2.
    \end{eqnarray*}
\end{lemma}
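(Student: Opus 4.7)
The plan is to combine the descent inequality coming from the layer-wise $(L^0,L^1)$--smoothness with a careful analysis of the LMO step, where the gradient direction is replaced by the (possibly biased) momentum vector $M_i^k$. The main obstacle is that the update uses $M_i^k$ rather than $\nabla_i f(X^k)$, so the inner product $\langle \nabla_i f(X^k), X_i^{k+1}-X_i^k\rangle$ is not directly minimized by the LMO and must be related to $\|M_i^k\|_{(i)\star}$ and to the momentum error $\nabla_i f(X^k)-M_i^k$.

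First, I invoke the (upcoming) \Cref{lemma:arb_layer_gen_smooth} which lifts \Cref{as:arbitrary_layer_gen_smoothness2} to the quadratic form of \Cref{as:arbitrary_layer_gen_smoothness}. Applied to $X=X^k$ and $\Gamma = X^{k+1}-X^k$ (which is supported on $S^k$ by construction of the algorithm), this yields
\begin{eqnarray*}
    f(X^{k+1}) &\leq& f(X^k) + \sum_{i\in S^k} \inp{\nabla_i f(X^k)}{X_i^{k+1}-X_i^k}_{(i)} \\
    && + \sum_{i\in S^k} \frac{L^0_{i,S^k} + L^1_{i,S^k}\|\nabla_i f(X^k)\|_{(i)\star}}{2} \|X_i^{k+1}-X_i^k\|_{(i)}^2.
\end{eqnarray*}
Since $X_i^{k+1}\in\cB(X_i^k,t_i^k)$, we have $\|X_i^{k+1}-X_i^k\|_{(i)}\le t_i^k$, which handles the quadratic term exactly as in the statement.

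Next, I rewrite the inner product by inserting $\pm M_i^k$:
\begin{eqnarray*}
    \inp{\nabla_i f(X^k)}{X_i^{k+1}-X_i^k}_{(i)}
    = \inp{M_i^k}{X_i^{k+1}-X_i^k}_{(i)} + \inp{\nabla_i f(X^k)-M_i^k}{X_i^{k+1}-X_i^k}_{(i)}.
\end{eqnarray*}
The first term equals $-t_i^k \|M_i^k\|_{(i)\star}$ by the definition of $X_i^{k+1}$ as the LMO of $M_i^k$ over $\cB(X_i^k,t_i^k)$. The second term is bounded by $t_i^k \|\nabla_i f(X^k)-M_i^k\|_{(i)\star}$ via Hölder's inequality combined with $\|X_i^{k+1}-X_i^k\|_{(i)}\le t_i^k$.

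Finally, the reverse triangle inequality for the dual norm gives
\begin{eqnarray*}
    -t_i^k\|M_i^k\|_{(i)\star} \leq -t_i^k\|\nabla_i f(X^k)\|_{(i)\star} + t_i^k\|\nabla_i f(X^k)-M_i^k\|_{(i)\star},
\end{eqnarray*}
which absorbs the second error term and produces the factor $2$ in front of $t_i^k\|\nabla_i f(X^k)-M_i^k\|_{(i)\star}$. Summing over $i\in S^k$ and plugging back into the smoothness bound yields the claimed inequality. The only subtlety is tracking signs and making sure the momentum-error term is counted twice (once from Hölder, once from reverse triangle), which is exactly what produces the leading constant of $2$ in the statement.
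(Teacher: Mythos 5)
Your proposal is correct and follows essentially the same route as the paper's proof: invoke the quadratic upper bound from \Cref{lemma:arb_layer_gen_smooth}, split the inner product by inserting $\pm M_i^k$, use the LMO identity $\inp{M_i^k}{X_i^{k+1}-X_i^k}_{(i)}=-t_i^k\|M_i^k\|_{(i)\star}$ and H\"older for the error term, and finish with the reverse triangle inequality to produce the factor of $2$. The only cosmetic difference is that the paper uses the exact identity $\|X_i^{k+1}-X_i^k\|_{(i)}=t_i^k$ for the LMO step rather than the inequality $\le t_i^k$, which changes nothing in the bound.
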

\begin{proof}
    By \Cref{lemma:arb_layer_gen_smooth}
    \begin{eqnarray*}
        &&\hspace{-7mm}f(X^{k+1}) \\
        &\leq& f(X^k) + \inp{\nabla f(X^k)}{X^{k+1} - X^k} + \sum_{i\in S^k} \frac{L^0_{i,S^k} + L^1_{i,S^k} \norm{\nabla_i f(X^k)}_{(i) \star}}{2} \norm{X_i^k - X_i^{k+1}}_{(i)}^2 \\
        &\overset{\eqref{eq:normlmo}}{=}& f(X^k) + \sum_{i\in S^k} \parens{\inp{\nabla_i f(X^k) - M_i^k}{X_i^{k+1} - X_i^k}_{(i)} + \inp{M_i^k}{X_i^{k+1} - X_i^k}_{(i)}} \\
        &&+ \sum_{i\in S^k} \frac{L^0_{i,S^k} + L^1_{i,S^k} \norm{\nabla_i f(X^k)}_{(i) \star}}{2} (t_i^k)^2 \\
        &\overset{\eqref{eq:inplmo}}{=}& f(X^k) + \sum_{i\in S^k} \parens{\inp{\nabla_i f(X^k) - M_i^k}{X_i^{k+1} - X_i^k}_{(i)} - t_i^k \norm{M_i^k}_{(i) \star}} \\
        &&+ \sum_{i\in S^k} \frac{L^0_{i,S^k} + L^1_{i,S^k} \norm{\nabla_i f(X^k)}_{(i) \star}}{2} (t_i^k)^2 \\
        &\leq& f(X^k) + \sum_{i\in S^k} \Bigg(t_i^k \norm{\nabla_i f(X^k) - M_i^k}_{(i) \star} - t_i^k \norm{M_i^k}_{(i) \star} \\
        &&\qquad\qquad\qquad\quad+ \frac{L^0_{i,S^k} + L^1_{i,S^k} \norm{\nabla_i f(X^k)}_{(i) \star}}{2} (t_i^k)^2\Bigg),
    \end{eqnarray*}
    where in the last line we used the Cauchy-Schwarz inequality.
    Therefore, using triangle inequality, we get
    \begin{eqnarray*}
        &&\hspace{-7mm}f(X^{k+1}) \\
        &\leq& f(X^k) + \sum_{i\in S^k} \parens{t_i^k \norm{\nabla_i f(X^k) - M_i^k}_{(i) \star} + t_i^k \norm{\nabla_i f(X^k) - M_i^k}_{(i) \star} - t_i^k \norm{\nabla_i f(X^k)}_{(i) \star}} \\
        &&+ \sum_{i\in S^k} \frac{L^0_{i,S^k} + L^1_{i,S^k} \norm{\nabla_i f(X^k)}_{(i) \star}}{2} (t_i^k)^2 \\
        &=& f(X^k) + \sum_{i\in S^k} \parens{2 t_i^k \norm{\nabla_i f(X^k) - M_i^k}_{(i) \star} - t_i^k \norm{\nabla_i f(X^k)}_{(i) \star}} \\
        &&+ \sum_{i\in S^k} \frac{L^0_{i,S^k} + L^1_{i,S^k} \norm{\nabla_i f(X^k)}_{(i) \star}}{2} (t_i^k)^2
    \end{eqnarray*}
    as required.
\end{proof}

\begin{lemma}\label{lemma:nm_rec_m_l0l12}
    Let Assumptions \ref{as:arbitrary_layer_gen_smoothness2} and \ref{as:bounded_var} hold. Then, the iterates of \Cref{alg:rt_arbitrary_stoch} run with $t_i^k \equiv t_i$ satisfy
    \begin{eqnarray*}
        &&\hspace{-8mm}\Exp{\norm{M_i^{k+1} - \nabla_i f(X^{k+1})}_2} \\
        &\leq& \parens{1 - \Exp{\I{i\in \hat{S}}} \beta_i}^{k+1} \Exp{\norm{M_i^0 - \nabla_i f(X^0)}_2}
        + \frac{t_i \Exp{\parens{1-\I{i\in \hat{S}} \beta_i} L_{i,\hat{S}}^0}}{\underline{\rho}_i \beta_i \Exp{\I{i\in \hat{S}}}} \nonumber \\
        &&+ \frac{t_i}{\underline{\rho}_i} \Exp{\parens{1-\I{i\in \hat{S}} \beta_i} L_{i,\hat{S}}^1} \sum_{l=0}^{k} \parens{1 - \Exp{\I{i\in \hat{S}}} \beta_i}^{k-l} \Exp{\norm{\nabla_i f(X^l)}_{(i) \star}} \\
        &&+ \sigma_i \sqrt{\beta_i}.
    \end{eqnarray*}
\end{lemma}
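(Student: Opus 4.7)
The plan is to derive a scalar linear recursion for $\Exp{\norm{e_i^{k+1}}_2}$, where $e_i^k \eqdef M_i^k - \nabla_i f(X^k)$, and then unroll it. First, rewriting the momentum step uniformly over the active and frozen cases as $M_i^{k+1} = (1-\I{i\in S^{k+1}}\beta_i) M_i^k + \I{i\in S^{k+1}}\beta_i \nabla_i f(X^{k+1};\xi^{k+1})$ and subtracting $\nabla_i f(X^{k+1})$ (after adding and removing $\nabla_i f(X^k)$) yields the one-step identity
\begin{eqnarray*}
e_i^{k+1} = (1-\I{i\in S^{k+1}}\beta_i) \parens{e_i^k + D_i^{k,k+1}} + \I{i\in S^{k+1}}\beta_i\, \epsilon_i^{k+1},
\end{eqnarray*}
where $D_i^{k,k+1} \eqdef \nabla_i f(X^k) - \nabla_i f(X^{k+1})$ is the deterministic gradient drift induced by the LMO step at iteration $k$, and $\epsilon_i^{k+1} \eqdef \nabla_i f(X^{k+1};\xi^{k+1}) - \nabla_i f(X^{k+1})$ is the zero-mean stochastic noise controlled by \Cref{as:bounded_var}.

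Next, I would bound the drift via \Cref{as:arbitrary_layer_gen_smoothness2} applied with $S=S^k$ and $\Gamma=X^{k+1}-X^k$---whose $i$-th block has norm $t_i$ on $\{i\in S^k\}$ and vanishes otherwise---obtaining $\norm{D_i^{k,k+1}}_{(i)\star} \leq t_i(L^0_{i,S^k} + L^1_{i,S^k}\norm{\nabla_i f(X^k)}_{(i)\star})$, with the indicator $\I{i\in S^k}$ absorbed into the convention $L^\alpha_{i,S}=0$ for $i\notin S$. Conversion to the $\ell_2$ norm via $\norm{\cdot}_2 \leq \underline{\rho}_i^{-1} \norm{\cdot}_{(i)\star}$ then yields a clean bound on $\norm{D_i^{k,k+1}}_2$. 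Taking $\ell_2$-norms and the triangle inequality in the recursion, and conditioning on the $\sigma$-algebra $\cF_k$ recording everything up to $X^{k+1}$, I exploit that $S^{k+1}$ and $\xi^{k+1}$ are independent of $\cF_k$: the contraction factor averages to $1-\Exp{\I{i\in\hat S}}\beta_i$, and Jensen combined with \Cref{as:bounded_var} gives $\Exp{\norm{\epsilon_i^{k+1}}_2\mid\cF_k}\leq\sigma_i$. Passing to full expectation and factoring $\Exp{L^\alpha_{i,S^k}\norm{\nabla_i f(X^k)}_{(i)\star}} = \Exp{L^\alpha_{i,\hat S}}\Exp{\norm{\nabla_i f(X^k)}_{(i)\star}}$ by independence of $S^k$ and $X^k$ produces a scalar linear recursion. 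Unrolling, the $L^0$ drift telescopes geometrically into a factor of order $1/(\beta_i\Exp{\I{i\in\hat S}})$; the $L^1$ drift accumulates into the convolution $\sum_{l=0}^{k}(1-\Exp{\I{i\in\hat S}}\beta_i)^{k-l}\Exp{\norm{\nabla_i f(X^l)}_{(i)\star}}$ appearing in the statement; and the initial error enters with weight $(1-\Exp{\I{i\in\hat S}}\beta_i)^{k+1}$.

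The main obstacle will be extracting the sharp $\sigma_i\sqrt{\beta_i}$ scaling on the noise contribution---a naive first-moment estimate only produces a $\sigma_i$-sized term, which would be too crude for the $\cO(K^{-1/4})$ convergence rate eventually used in \Cref{thm:stoch_rt_l0l1_iter}. To recover the extra $\sqrt{\beta_i}$, I would isolate the ``pure-noise'' component $n_i^{k+1}$ of $e_i^{k+1}$---namely, the linear combination of the stochastic increments $\epsilon_i^1,\ldots,\epsilon_i^{k+1}$ weighted by the appropriate products of contraction factors $(1-\I{i\in S^l}\beta_i)$---and bound it in the second moment. Since $\{\epsilon_i^j\}_j$ are martingale differences relative to $\{\cF_j\}_j$, the cross terms in $\Exp{\norm{n_i^{k+1}}_2^2}$ vanish, reducing the estimate to a weighted sum of squared contractions of the form $\beta_i^2\Exp{\I{i\in\hat S}}(1-\Exp{\I{i\in\hat S}}\beta_i)^{2(k+1-j)}\sigma_i^2$; this geometric-in-square sum evaluates to at most $\beta_i\sigma_i^2$, and Jensen's inequality $\Exp{\norm{n_i^{k+1}}_2}\leq\sqrt{\Exp{\norm{n_i^{k+1}}_2^2}}$ then delivers the claimed $\sigma_i\sqrt{\beta_i}$ factor. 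Combining this sharpened noise estimate with the deterministic drift bound from the previous paragraph completes the argument.
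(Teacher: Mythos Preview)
Your proposal is correct and follows essentially the same approach as the paper: the paper likewise unrolls the error recursion, splits into initial/drift/noise components, bounds the drift via \Cref{as:arbitrary_layer_gen_smoothness2} and the norm conversion $\norm{\cdot}_2\leq\underline{\rho}_i^{-1}\norm{\cdot}_{(i)\star}$, and extracts the $\sigma_i\sqrt{\beta_i}$ noise factor through a second-moment martingale argument---exactly your ``pure-noise component'' idea. One small slip: the squared contraction weights in the noise sum are $\Exp{(1-\I{i\in\hat S}\beta_i)^2}^{k-l}$ rather than $(1-\Exp{\I{i\in\hat S}}\beta_i)^{2(k-l)}$, since the random factors $(1-\beta_i^m)$ are squared \emph{before} averaging by independence, but both forms collapse to the same $\sigma_i\sqrt{\beta_i}$ bound after summing the geometric series.
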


\begin{proof}
    The proof is inspired by \citet[Theorem 1]{cutkosky2020momentum}.
    First, using the momentum update rule, we have
    \begin{eqnarray*}
        M_i^{k+1} &=& \parens{1 - \I{i\in S^k} \beta_i} M_i^k + \I{i\in S^k} \beta_i \nabla_i f(X^{k+1}; \xi^{k+1}) \\
        &=& \parens{1 - \beta_i^k} \parens{M_i^k - \nabla_i f(X^k)} + \parens{1 - \beta_i^k} \parens{\nabla_i f(X^k) - \nabla_i f(X^{k+1})} \\
        &&+ \beta_i^k \parens{\nabla_i f(X^{k+1}; \xi^{k+1}) - \nabla_i f(X^{k+1})} + \nabla_i f(X^{k+1}).
    \end{eqnarray*}
    where $\beta_i^k \eqdef \I{i\in S^k} \beta_i$.
    To simplify the notation, define $U_{1,i}^k \eqdef M_i^k - \nabla_i f(X^k)$, $U_{2,i}^k \eqdef \nabla_i f(X^k) - \nabla_i f(X^{k+1})$ and $U_{3,i}^k \eqdef \nabla_i f(X^k; \xi^k) - \nabla_i f(X^k)$. Then the above can be written as
    \begin{eqnarray}\label{eq:rec_st}
        U_{1,i}^{k+1} = \parens{1 - \beta_i^k} U_{1,i}^k + \parens{1 - \beta_i^k} U_{2,i}^k + \beta_i^k U_{3,i}^{k+1}.
    \end{eqnarray}
    Unrolling the recursion in \eqref{eq:rec_st}, we get
    \begin{align*}
        U_{1,i}^{k+1} &= (1-\beta_i^k) U_{1,i}^k + (1-\beta_i^k) U_{2,i}^k + \beta_i^k U_{3,i}^{k+1} \\
        &= \parens{\prod_{m=0}^{k}(1-\beta_i^m)} U_{1,i}^0
        + \sum_{l=0}^{k} \parens{\prod_{m=l}^{k}(1-\beta_i^m)} U_{2,i}^l
        + \sum_{l=0}^{k} \parens{\beta_i^l\prod_{m=l+1}^{k}(1-\beta_i^m)} U_{3,i}^{l+1},
    \end{align*}
    where by convention $\prod_{m=a}^{b}(\cdot)=1$ if $a>b$. Hence, taking norms and using the triangle inequality,
    \begin{eqnarray}\label{eq:apieghrn}
        \Exp{\norm{U_{1,i}^{k+1}}_2}
        &\leq& \Exp{\norm{\parens{\prod_{m=0}^{k}(1-\beta_i^m)} U_{1,i}^0}_2}
        + \Exp{\norm{\sum_{l=0}^{k} \parens{\prod_{m=l}^{k}(1-\beta_i^m)} U_{2,i}^l}_2} \nonumber \\
        &&+ \Exp{\norm{\sum_{l=0}^{k} \parens{\beta_i^l\prod_{m=l+1}^{k}(1-\beta_i^m)} U_{3,i}^{l+1}}_2}.
    \end{eqnarray}
    Let us consider each of the terms separately. First, using the independence of $\{S^k\}_{k\geq0}$, we have
    \begin{eqnarray}\label{eq:niuytfvrths}
        \Exp{\norm{\parens{\prod_{m=0}^{k}(1-\beta_i^m)} U_{1,i}^0}_2}
        &\leq& \Exp{\parens{\prod_{m=0}^{k}(1-\beta_i^m)} \norm{U_{1,i}^0}_2} \nonumber \\
        &=& \Exp{\ExpCond{\parens{\prod_{m=0}^{k}(1-\beta_i^m)}}{X^k, M_i^k} \norm{U_{1,i}^0}_2} \nonumber \\
        &=& \prod_{m=0}^{k} \Exp{1-\beta_i^m} \Exp{\norm{U_{1,i}^0}_2}.
    \end{eqnarray}
    Next, by \Cref{as:arbitrary_layer_gen_smoothness2}
    \begin{eqnarray*}
        &&\hspace{-8mm}\Exp{\norm{\sum_{l=0}^{k} \parens{\prod_{m=l}^{k}(1-\beta_i^m)} U_{2,i}^l}_2} \\
        &\leq& \Exp{\sum_{l=0}^{k} \parens{\prod_{m=l}^{k}(1-\beta_i^m)} \norm{U_{2,i}^l}_2} \\
        &\leq& \frac{1}{\underline{\rho}_i} \sum_{l=0}^{k} \Exp{\parens{\prod_{m=l}^{k}(1-\beta_i^m)} \norm{\nabla_i f(X^l) - \nabla_i f(X^{l+1})}_{(i) \star}} \\
        &\overset{\eqref{as:arbitrary_layer_gen_smoothness}}{\leq}& \frac{1}{\underline{\rho}_i} \sum_{l=0}^{k} \Exp{\parens{\prod_{m=l}^{k}(1-\beta_i^m)} \parens{L_{i,S^l}^0 + L_{i,S^l}^1 \norm{\nabla_i f(X^l)}_{(i) \star}} \norm{X_i^l - X_i^{l+1}}_{(i)}}.
    \end{eqnarray*}
    Since the product $\prod_{m=l+1}^{k}(1-\beta_i^m)$ depends only on samplings at iterations $>l$, these factors are independent of the $\sigma$-algebra generated by $(X^l,S^l)$. Therefore,
    \begin{eqnarray*}
        &&\hspace{-8mm}\Exp{\norm{\sum_{l=0}^{k} \parens{\prod_{m=l}^{k}(1-\beta_i^m)} U_{2,i}^l}_2} \\
        &\leq& \frac{1}{\underline{\rho}_i} \sum_{l=0}^{k} \parens{\parens{\prod_{m=l+1}^{k} \Exp{1-\beta_i^m}} \Exp{(1-\beta_i^l) \parens{L_{i,S^l}^0 + L_{i,S^l}^1 \norm{\nabla_i f(X^l)}_{(i) \star}}} t_i},
    \end{eqnarray*}
    where
    \begin{eqnarray*}
        &&\hspace{-8mm}\Exp{(1-\beta_i^l) \parens{L_{i,S^l}^0 + L_{i,S^l}^1 \norm{\nabla_i f(X^l)}_{(i) \star}}} \\
        &=& \Exp{(1-\beta_i^l) L_{i,S^l}^0} + \Exp{\ExpCond{(1-\beta_i^l) L_{i,S^l}^1 \norm{\nabla_i f(X^l)}_{(i) \star}}{X^l}} \\
        &=& \Exp{(1-\beta_i^l) L_{i,S^l}^0} + \Exp{(1-\beta_i^l) L_{i,S^l}^1} \Exp{\norm{\nabla_i f(X^l)}_{(i) \star}}.
    \end{eqnarray*}
    Thus
    \begin{eqnarray}\label{eq:dukise}
        &&\hspace{-8mm}\Exp{\norm{\sum_{l=0}^{k} \parens{\prod_{m=l}^{k}(1-\beta_i^m)} U_{2,i}^l}_2} \\
        &\leq& \frac{t_i}{\underline{\rho}_i} \sum_{l=0}^{k} \parens{\prod_{m=l+1}^{k} \Exp{1-\beta_i^m}} \parens{\Exp{(1-\beta_i^l) L_{i,S^l}^0} + \Exp{(1-\beta_i^l) L_{i,S^l}^1} \Exp{\norm{\nabla_i f(X^l)}_{(i) \star}}}. \nonumber
    \end{eqnarray}
    Lastly, by Jensen's inequality, the last term can be bounded via
    \begin{eqnarray}\label{eq:aefawcrgt}
        &&\hspace{-8mm}\Exp{\norm{\sum_{l=0}^{k} \parens{\beta_i^l\prod_{m=l+1}^{k}(1-\beta_i^m)} U_{3,i}^{l+1}}_2} \nonumber \\
        &\leq& \sqrt{\Exp{\norm{\sum_{l=0}^{k} \underbrace{\parens{\beta_i^l\prod_{m=l+1}^{k}(1-\beta_i^m)}}_{\eqdef a_l} U_{3,i}^{l+1}}^2_2}}
        = \sqrt{\Exp{\sum_{l, r = 0}^{k} a_l a_r \inp{U_{3,i}^{l+1}}{U_{3,i}^{r+1}}}} \nonumber \\
        &\overset{\eqref{as:bounded_var}}{=}& \sqrt{\Exp{\sum_{l = 0}^{k} a_l^2 \norm{U_{3,i}^{l+1}}_2^2}}
        = \sqrt{\sum_{l = 0}^{k} \Exp{\ExpCond{a_l^2 \norm{U_{3,i}^{l+1}}_2^2}{\{S^r\}_{r=l}^{k}, X^{l+1}}}} \nonumber \\
        &=& \sqrt{\sum_{l = 0}^{k} \Exp{a_l^2 \ExpCond{\norm{U_{3,i}^{l+1}}_2^2}{\{S^r\}_{r=l}^{k}, X^{l+1}}}}
        \overset{\eqref{as:bounded_var}}{\leq} \sigma_i \sqrt{\sum_{l = 0}^{k} \Exp{a_l^2}} \nonumber \\
        &=& \sigma_i \sqrt{\sum_{l = 0}^{k} \parens{\Exp{(\beta_i^l)^2} \prod_{m=l+1}^{k} \Exp{(1-\beta_i^m)^2}}} \nonumber \\
        &=& \sigma_i \sqrt{\sum_{l = 0}^{k} \parens{\Exp{\I{i\in S^l} \beta_i^2} \prod_{m=l+1}^{k} \Exp{\parens{1-\I{i\in S^m} \beta_i}^2}}} \nonumber \\
        &=& \sigma_i \sqrt{\sum_{l = 0}^{k} \parens{\beta_i^2 \Exp{\I{i\in \hat{S}}} \Exp{\parens{1-\I{i\in \hat{S}} \beta_i}^2}^{k-l}}} \nonumber \\
        &\leq& \sigma_i \sqrt{\frac{\beta_i^2 \Exp{\I{i\in \hat{S}}}}{1 - \Exp{\parens{1-\I{i\in \hat{S}} \beta_i}^2}}} \nonumber \\
        &=& \sigma_i \beta_i \sqrt{\frac{\Exp{\I{i\in \hat{S}}}}{\Exp{\parens{2 - \beta_i} \I{i\in \hat{S}} \beta_i}}}
        = \frac{\sigma_i \beta_i}{\sqrt{\parens{2 - \beta_i} \beta_i}}
        \leq \sigma_i \sqrt{\beta_i}.
    \end{eqnarray}
    Applying \eqref{eq:niuytfvrths}, \eqref{eq:dukise} and \eqref{eq:aefawcrgt} in \eqref{eq:apieghrn} and noting that $\Exp{\beta_i^k} = \Exp{\I{i\in \hat{S}}} \beta_i$ yields
    \begin{eqnarray*}
        &&\hspace{-6mm}\Exp{\norm{U_{1,i}^{k+1}}_2} \\
        &\leq& \prod_{m=0}^{k} \Exp{1-\beta_i^m} \Exp{\norm{U_{1,i}^0}_2}
        + \frac{t_i}{\underline{\rho}_i} \sum_{l=0}^{k} \parens{\prod_{m=l+1}^{k} \Exp{1-\beta_i^m}} \Exp{(1-\beta_i^l) L_{i,S^l}^0} \nonumber \\
        &&+ \frac{t_i}{\underline{\rho}_i} \sum_{l=0}^{k} \parens{\prod_{m=l+1}^{k} \Exp{1-\beta_i^m}} \Exp{(1-\beta_i^l) L_{i,S^l}^1} \Exp{\norm{\nabla_i f(X^l)}_{(i) \star}} 
        + \sigma_i \sqrt{\beta_i} \\
        &=& \parens{1 - \Exp{\I{i\in \hat{S}}} \beta_i}^{k+1} \Exp{\norm{U_{1,i}^0}_2} \nonumber \\
        &&+ \frac{t_i}{\underline{\rho}_i} \Exp{\parens{1-\I{i\in \hat{S}} \beta_i} L_{i,\hat{S}}^0} \sum_{l=0}^{k} \parens{1 - \Exp{\I{i\in \hat{S}}} \beta_i}^{k-l} \nonumber \\
        &&+ \frac{t_i}{\underline{\rho}_i} \Exp{\parens{1-\I{i\in \hat{S}} \beta_i} L_{i,\hat{S}}^1} \sum_{l=0}^{k} \parens{1 - \Exp{\I{i\in \hat{S}}} \beta_i}^{k-l} \Exp{\norm{\nabla_i f(X^l)}_{(i) \star}} \nonumber \\
        &&+ \sigma_i \sqrt{\beta_i} \\
        &\leq& \parens{1 - \Exp{\I{i\in \hat{S}}} \beta_i}^{k+1} \Exp{\norm{U_{1,i}^0}_2}
        + \frac{t_i}{\underline{\rho}_i \beta_i \Exp{\I{i\in \hat{S}}}} \Exp{\parens{1-\I{i\in \hat{S}} \beta_i} L_{i,\hat{S}}^0} \nonumber \\
        &&+ \frac{t_i}{\underline{\rho}_i} \Exp{\parens{1-\I{i\in \hat{S}} \beta_i} L_{i,\hat{S}}^1} \sum_{l=0}^{k} \parens{1 - \Exp{\I{i\in \hat{S}}} \beta_i}^{k-l} \Exp{\norm{\nabla_i f(X^l)}_{(i) \star}} \nonumber \\
        &&+ \sigma_i \sqrt{\beta_i}.
    \end{eqnarray*}
\end{proof}

\begin{theorem}\label{thm:stoch_rt_l0l1_iter}
    Let Assumptions \ref{as:lower_bound}, \ref{as:arbitrary_layer_gen_smoothness2}, and \ref{as:bounded_var} hold. Let $\{X^k\}_{k=0}^{K-1}$, $K \geq 1$, be the iterates of \Cref{alg:rt_arbitrary_stoch} initialized with $M_i^0 = \nabla_i f(X^0; \xi^0)$ and run with $\beta_i \equiv \beta = \nicefrac{1}{(K+1)^{1/2}}$ and
    \begin{align*}
        0 < t_i^k \equiv t_i = \frac{\eta_i}{(K+1)^{3/4}}, \qquad i=1,\ldots,b,
    \end{align*}
    where $\eta_i^2 \leq \min\brac{\frac{(K+1)^{1/2}}{4 \Exp{\I{i\in \hat{S}} L_{i,\hat{S}}^1} \Exp{\max\limits_{i\in [b]} L_{i,\hat{S}}^1}}, \frac{\underline{\rho}_i \min\limits_{i\in [b]} \parens{\Exp{\I{i\in \hat{S}}}}}{16 \bar{\rho}_i \Exp{\I{i\in \hat{S}}} \Exp{\parens{1-\I{i\in \hat{S}} \beta_i} L_{i,\hat{S}}^1} \Exp{\max\limits_{i\in [b]} L_{i,\hat{S}}^1}}, 1}$.
    Then
    \begin{eqnarray*}
        &&\hspace{-8mm}\min_{k=0,\ldots,K} \sum_{i=1}^b \frac{\Exp{\I{i\in \hat{S}}} \eta_i}{\frac{1}{b} \sum_{l=1}^b \Exp{\I{l \in \hat{S}}} \eta_l} \Exp{\norm{\nabla_i f(X^k)}_{(i) \star}} \\
        &\leq& \frac{3 \delta^0}{(K+1)^{1/4} \frac{1}{b} \sum_{l=1}^b \Exp{\I{l \in \hat{S}}} \eta_l}
        + \frac{6}{(K+1)^{1/2}} \sum_{i=1}^b \frac{\eta_i \bar{\rho}_i \sigma_i}{\frac{1}{b} \sum_{l=1}^b \Exp{\I{l \in \hat{S}}} \eta_l} \\
        &&+ \sum_{i=1}^b \frac{\eta_i^2}{(K+1)^{1/4} \frac{1}{b} \sum_{l=1}^b \Exp{\I{l \in \hat{S}}} \eta_l} \frac{2 \bar{\rho}_i}{\underline{\rho}_i} \parens{\Exp{L_{i,\hat{S}}^0} + \Exp{L_{i,\hat{S}}^1} \Exp{\frac{L_{i,\hat{S}}^0}{L_{i,\hat{S}}^1}}} \\
        &&+ \sum_{i=1}^b \frac{\eta_i^2}{2 (K+1)^{3/4} \frac{1}{b} \sum_{l=1}^b \Exp{\I{l \in \hat{S}}} \eta_l} \\
        &&\qquad\quad\times \parens{\Exp{\I{i\in \hat{S}} L^0_{i,\hat{S}}} + \Exp{\I{i\in \hat{S}} L_{i,\hat{S}}^1} \Exp{\frac{L_{i,\hat{S}}^0}{L_{i,\hat{S}}^1}}} \\
        &&+ \sum_{i=1}^b \frac{2 \Exp{\I{i\in \hat{S}}} \eta_i}{(K+1)^{1/4} \frac{1}{b} \sum_{l=1}^b \Exp{\I{l \in \hat{S}}} \eta_l} \bar{\rho}_i \sigma_i.
    \end{eqnarray*}
\end{theorem}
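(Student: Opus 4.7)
The plan is to combine the two preparatory lemmas and then carry out a careful bookkeeping over the iteration index. First, I would apply the descent lemma (Lemma \ref{lemma:descent2}) with the constant stepsize $t_i^k \equiv t_i$, take expectation, and exploit the independence of $S^k$ from $X^k$ and $M^k$ so that $\mathbb{E}[\mathbb{I}\{i\in S^k\}(\cdot)]$ factors as $\mathbb{E}[\mathbb{I}\{i\in \hat S\} L^\alpha_{i,\hat S}]\cdot\mathbb{E}[\cdot]$ where appropriate. Summing over $k=0,\dots,K$ telescopes the $f(X^k)-f(X^{k+1})$ terms into $\delta^0 = f(X^0)-f^\star$, producing an inequality of the schematic form
\begin{align*}
\sum_{k=0}^K \sum_{i=1}^b t_i \mathbb{E}[\mathbb{I}\{i\in \hat S\}]\,\mathbb{E}\|\nabla_i f(X^k)\|_{(i)\star}
&\leq \delta^0 + 2\sum_{k=0}^K \sum_{i=1}^b t_i \mathbb{E}[\mathbb{I}\{i\in\hat S\}\,\|\nabla_i f(X^k)-M_i^k\|_{(i)\star}] \\
&\quad + \tfrac{1}{2}\sum_{k=0}^K \sum_{i=1}^b t_i^2 \Bigl(\mathbb{E}[\mathbb{I}\{i\in\hat S\}L^0_{i,\hat S}] + \mathbb{E}[\mathbb{I}\{i\in\hat S\}L^1_{i,\hat S}]\mathbb{E}\|\nabla_i f(X^k)\|_{(i)\star}\Bigr).
\end{align*}

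Next, I would use the norm-equivalence constants $\underline\rho_i,\bar\rho_i$ to pass from $\|\cdot\|_2$ to $\|\cdot\|_{(i)\star}$ and plug in Lemma \ref{lemma:nm_rec_m_l0l12} to replace each momentum-error term. This gives three contributions: an initial transient of the form $(1-\mathbb{E}[\mathbb{I}\{i\in\hat S\}]\beta)^{k}\mathbb{E}\|M_i^0-\nabla_i f(X^0)\|_2$, which sums geometrically into $\mathcal{O}(1/(\beta \mathbb{E}[\mathbb{I}\{i\in\hat S\}]))$; a noise contribution of order $\sigma_i\sqrt{\beta}$ per step, summing to $K\sigma_i\sqrt\beta$; and the crucial term $t_i\sum_{k}\sum_{l\leq k}(1-\mathbb{E}[\mathbb{I}\{i\in\hat S\}]\beta)^{k-l}\mathbb{E}\|\nabla_i f(X^l)\|_{(i)\star}$. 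I would swap the order of summation and bound the geometric series by $1/(\beta\mathbb{E}[\mathbb{I}\{i\in\hat S\}])$, producing a term proportional to $\tfrac{t_i}{\underline\rho_i\beta\mathbb{E}[\mathbb{I}\{i\in\hat S\}]}\mathbb{E}[\mathbb{I}\{i\in\hat S\}L^1_{i,\hat S}]\sum_k \mathbb{E}\|\nabla_i f(X^k)\|_{(i)\star}$, which has the same structure as the left-hand side.

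The key step is then the absorption argument: the choice $t_i = \eta_i/(K+1)^{3/4}$, $\beta = 1/(K+1)^{1/2}$, together with the two upper bounds on $\eta_i^2$ in the hypothesis, is precisely calibrated so that the coefficients of $\sum_k \mathbb{E}\|\nabla_i f(X^k)\|_{(i)\star}$ coming from the momentum-error bound and from the $t_i^2 L^1$ term are each at most a small fraction (say $1/4$ each) of the corresponding coefficient on the left. After moving these to the left, what remains can be rewritten as a minimum over $k$ of the weighted sum in the theorem statement, and dividing by $(K+1)$ and by $\tfrac{1}{b}\sum_l \mathbb{E}[\mathbb{I}\{l\in\hat S\}]\eta_l$ converts everything into the stated form. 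Finally, I would collect the residual $L^0$ contributions arising from $\mathbb{E}[\mathbb{I}\{i\in\hat S\}L^1_{i,\hat S}]\cdot\mathbb{E}[L^0_{i,\hat S}/L^1_{i,\hat S}]$-type inequalities and simplify the $K$-dependence of each term (they scale as $(K+1)^{-1/4}$, $(K+1)^{-1/2}$, and $(K+1)^{-3/4}$).

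The main obstacle is the absorption step: one has to choose the numerical constants in the bound on $\eta_i^2$ to exactly match the worst-case geometric sum $1/(\beta\mathbb{E}[\mathbb{I}\{i\in\hat S\}])$ that appears after unrolling Lemma \ref{lemma:nm_rec_m_l0l12}, and simultaneously control the cross-coordinate coupling that arises because the coefficient for layer $i$ on the right involves $\max_{j}L^1_{j,\hat S}$ (reflecting that $\|\nabla_i f(X^l)\|$ depends on updates made to all layers). Getting these constants to line up so that a clean factor of $3\delta^0$ remains at the end--rather than, say, $4\delta^0$ or an uncontrolled residual--is where most of the technical care goes, and is the reason for the somewhat intricate form of the stepsize bound.
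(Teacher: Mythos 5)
Your roadmap (descent lemma, plug in the momentum-error bound, swap summations, then ``absorb'' the resulting gradient-norm terms directly into the left-hand side) is conceptually coherent, but it is a genuinely different argument from the paper's, and as written it does not land on the stated theorem.

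The paper never performs a direct absorption. Instead, after plugging Lemma~\ref{lemma:nm_rec_m_l0l12} into the descent inequality, it applies Lemma~\ref{lemma:layer_gen_smooth3} to convert every $\sum_i x_i \Exp{\norm{\nabla_i f(X^l)}_{(i)\star}}$ term on the right into $4\max_i(x_i L^1_{i,S})\,\delta^l + \sum_i \nicefrac{x_i L^0_{i,S}}{L^1_{i,S}}$. This step is essential and is the source of two features of the statement you would not reproduce: (i) the factor $\Exp{\max_{i\in[b]}L^1_{i,\hat S}}$ in the stepsize conditions, which comes from the $\max$ in Lemma~\ref{lemma:layer_gen_smooth3}, not from any ``cross-coordinate coupling'' in the descent step itself; and (ii) the $\Exp{L^1_{i,\hat S}}\Exp{\nicefrac{L^0_{i,\hat S}}{L^1_{i,\hat S}}}$ cross terms in the final bound, which are precisely the $\nicefrac{x_i L^0}{L^1}$ residuals from that lemma. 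The paper then has a recursion $\delta^{k+1}\le (1+c_1)\delta^k + c_2\sum_{l<k}(\cdots)^{k-1-l}\delta^l - (\cdots)$ in the function gaps, which it resolves by introducing a weighting sequence $w^k = w^{k-1}\big(1+c_1+\nicefrac{c_2}{\min_i(\Exp{\I{i\in\hat S}}\beta_i)}\big)^{-1}$; the constant $3$ in $3\delta^0$ is exactly the bound $W^K \ge \nicefrac{(K+1)}{3}$ obtained from $(K+1)(c_1+\nicefrac{c_2}{\min(\cdots)})\le 1$, which is where the quadratic (in $\eta_i$) form of the stepsize conditions comes from.

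Your absorption route sidesteps both Lemma~\ref{lemma:layer_gen_smooth3} and the weighting sequence. Carried out carefully it would give a valid convergence guarantee, and it is arguably cleaner: each layer's gradient norm is absorbed against itself, so the $\max_j L^1_{j,\hat S}$ coupling never appears, and the stepsize condition comes out linear in $\eta_i$ (roughly $\eta_i \lesssim \nicefrac{(K+1)^{1/4}\underline\rho_i\,\Exp{\I{i\in\hat S}}}{\bar\rho_i\,\Exp{(1-\I{i\in\hat S}\beta)L^1_{i,\hat S}}}$ after plugging in $t_i$ and $\beta$). But that is a different hypothesis than the one in the theorem, the residual $L^0$ terms would appear simply as $\Exp{L^0_{i,\hat S}}$ and $\Exp{\I{i\in\hat S}L^0_{i,\hat S}}$ without the $\Exp{L^1}\Exp{L^0/L^1}$ structure, and the constant in front of $\delta^0$ would be $2$ rather than $3$. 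So you would be proving a sibling theorem, not this one. If your goal is to reproduce the stated result, you need to bring in Lemma~\ref{lemma:layer_gen_smooth3} to pass from gradient norms to function gaps and then handle the resulting $\delta^k$-recursion with the telescoping weight sequence; your proposal currently mentions neither.
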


\begin{remark}
    For \Cref{thm:stoch_rt_l0l1_iter} to be meaningful, the sampling must be such that $\Exp{\I{i\in \hat{S}}} \eta_i > 0$ for every $i\in[b]$.
    Thus, every layer has to be sampled with positive probability.
\end{remark}

\begin{remark}\label{rem:stoch_rpt_l0l1_iter}
    Note that under \Cref{as:arbitrary_layer_gen_smoothness2}, without loss of generality we can set $L^0_{i,S} = L^1_{i,S} = 0$ whenever $i\not\in S$. Hence, in the case of \algnamesmall{RPT} (see \Cref{sec:rpt}), we have
    \begin{eqnarray*}
        \Exp{\I{i\in \hat{S}}}
        &=& \sum_{s=1}^i p_s, \\
        \Exp{\max_{i\in[b]} L^1_{i,\hat{S}}}
        &=& \sum_{s=1}^b p_s \max_{i\in[b]} L^1_{i,\{s,\dots,b\}}, \\
        \Exp{\frac{L^0_{i,\hat{S}}}{L^1_{i,\hat{S}}}}
        &=& \sum_{s=1}^b p_s \frac{L^0_{i,\{s,\dots,b\}}}{L^1_{i,\{s,\dots,b\}}}
        = \sum_{s=1}^i p_s \frac{L^0_{i,\{s,\dots,b\}}}{L^1_{i,\{s,\dots,b\}}}, \\
        \Exp{\parens{1-\I{i\in\hat{S}} \beta_i} L^1_{i,\hat{S}}}
        &=& \sum_{s=1}^b p_s (1-\I{i\in\{s,\dots,b\}} \beta_i) L^1_{i,\{s,\dots,b\}} \\
        &=& (1-\beta_i)\sum_{s=1}^i p_s L^1_{i,\{s,\dots,b\}}, \\
        \Exp{L^{\alpha}_{i,\hat{S}}}
        &=& \sum_{s=1}^b p_s L^{\alpha}_{i,\{s,\dots,b\}}
        = \sum_{s=1}^i p_s L^{\alpha}_{i,\{s,\dots,b\}}, \\
    	\Exp{L^{\alpha}_{i,\hat{S}} \I{i\in \hat{S}}}
    	&=& \sum_{s=1}^i p_s L^{\alpha}_{i, \{s,\dots,b\}}
    \end{eqnarray*}
    for $\alpha\in\{0,1\}$.
    Substituting these expressions into the rate yields the result in \Cref{thm:stoch_rpt_l0l1_iter}.
\end{remark}

\begin{proof}[Proof of \Cref{thm:stoch_rt_l0l1_iter}]
    Taking expectation conditional on $[X^k, \{M_i^k\}_{i\in[b]}]$ in \Cref{lemma:descent2} gives
    \begin{align*}
        &\ExpCond{f(X^{k+1})}{X^k, \{M_i^k\}_{i\in[b]}} \\
        &\leq f(X^k) + \ExpCond{\sum_{i\in S^k} 2 t_i \norm{\nabla_i f(X^k) - M_i^k}_{(i) \star} - \sum_{i\in S^k} t_i \norm{\nabla_i f(X^k)}_{(i) \star}}{X^k, \{M_i^k\}_{i\in[b]}} \\
        &\quad+ \ExpCond{\sum_{i\in S^k} \frac{L^0_{i,S} + L_{i,S}^1 \norm{\nabla_i f(X^k)}_{(i) \star}}{2} t_i^2}{X^k, \{M_i^k\}_{i\in[b]}} \\
        &= f(X^k) + \sum_{i=1}^b \ExpCond{\I{i\in S^k} \parens{2 t_i \norm{\nabla_i f(X^k) - M_i^k}_{(i) \star} - t_i \norm{\nabla_i f(X^k)}_{(i) \star}}}{X^k, \{M_i^k\}_{i\in[b]}} \\
        &\quad+ \sum_{i=1}^b \ExpCond{\I{i\in S^k} \frac{L^0_{i,S} + L_{i,S}^1 \norm{\nabla_i f(X^k)}_{(i) \star}}{2} t_i^2}{X^k, \{M_i^k\}_{i\in[b]}} \\
        &\leq f(X^k) + \sum_{i=1}^b \Exp{\I{i\in \hat{S}}} \parens{2 t_i \bar{\rho}_i \norm{\nabla_i f(X^k) - M_i^k}_2 - t_i \norm{\nabla_i f(X^k)}_{(i) \star}} \\
        &\quad+ \frac{1}{2} \sum_{i=1}^b \Exp{\I{i\in \hat{S}} L^0_{i,\hat{S}}} t_i^2
        + \frac{1}{2} \sum_{i=1}^b \Exp{\I{i\in \hat{S}} L_{i,\hat{S}}^1} t_i^2 \norm{\nabla_i f(X^k)}_{(i) \star}.
    \end{align*}
    Hence, taking full expectation
    \begin{eqnarray*}
        &&\hspace{-6mm}\Exp{f(X^{k+1})} \\
        &\leq& \Exp{f(X^k)} + \sum_{i=1}^b 2 t_i \bar{\rho}_i \Exp{\I{i\in \hat{S}}} \Exp{\norm{\nabla_i f(X^k) - M_i^k}_2} \\
        &&- \sum_{i=1}^b \Exp{\I{i\in \hat{S}}} t_i \Exp{\norm{\nabla_i f(X^k)}_{(i) \star}} \\
        &&+ \frac{1}{2} \sum_{i=1}^b \Exp{\I{i\in \hat{S}} L^0_{i,\hat{S}}} t_i^2
        + \frac{1}{2} \sum_{i=1}^b \Exp{\I{i\in \hat{S}} L_{i,\hat{S}}^1} t_i^2 \Exp{\norm{\nabla_i f(X^k)}_{(i) \star}}.
    \end{eqnarray*}
    To simplify the notation, let $\delta^k \eqdef \Exp{f(X^k) - f^{\star}}$ and $P_i^k \eqdef \Exp{\norm{\nabla_i f(X^k) - M_i^k}_2}$.
    Then, according to the descent inequality above and \Cref{lemma:nm_rec_m_l0l12}
    \begin{eqnarray}
        \delta^{k+1}
        &\leq& \delta^k - \sum_{i=1}^b t_i \Exp{\I{i\in \hat{S}}} \Exp{\norm{\nabla_i f(X^k)}_{(i) \star}} + \sum_{i=1}^b 2 t_i \bar{\rho}_i \Exp{\I{i\in \hat{S}}} P_i^k \nonumber \\
        &&+ \frac{1}{2} \sum_{i=1}^b t_i^2 \Exp{\I{i\in \hat{S}} L^0_{i,\hat{S}}}
        + \frac{1}{2} \sum_{i=1}^b t_i^2 \Exp{\I{i\in \hat{S}} L_{i,\hat{S}}^1} \Exp{\norm{\nabla_i f(X^k)}_{(i) \star}}, \label{eq:i1} \\
        P_i^k
        &\leq& \parens{1 - \Exp{\I{i\in \hat{S}}} \beta_i}^k P_i^0
        + \frac{t_i \Exp{\parens{1-\I{i\in \hat{S}} \beta_i} L_{i,\hat{S}}^0}}{\underline{\rho}_i \beta_i \Exp{\I{i\in \hat{S}}}} \nonumber \\
        &&+ \frac{t_i}{\underline{\rho}_i} \Exp{\parens{1-\I{i\in \hat{S}} \beta_i} L_{i,\hat{S}}^1} \sum_{l=0}^{k-1} \parens{1 - \Exp{\I{i\in \hat{S}}} \beta_i}^{k-1-l} \Exp{\norm{\nabla_i f(X^l)}_{(i) \star}} \nonumber \\
        &&+ \sigma_i \sqrt{\beta_i}. \label{eq:i2}
    \end{eqnarray}
    Applying \eqref{eq:i2} in \eqref{eq:i1},
    \begin{eqnarray*}
        &&\hspace{-6mm}\delta^{k+1} \\
        &\leq& \delta^k - \sum_{i=1}^b t_i \Exp{\I{i\in \hat{S}}} \Exp{\norm{\nabla_i f(X^k)}_{(i) \star}} \\
        &&+ \sum_{i=1}^b 2 t_i \bar{\rho}_i \Exp{\I{i\in \hat{S}}} \parens{\parens{1 - \Exp{\I{i\in \hat{S}}} \beta_i}^k P_i^0
        + \frac{t_i \Exp{\parens{1-\I{i\in \hat{S}} \beta_i} L_{i,\hat{S}}^0}}{\underline{\rho}_i \beta_i \Exp{\I{i\in \hat{S}}}}} \\
        &&+ \sum_{i=1}^b \Bigg(\frac{2 t_i^2 \bar{\rho}_i}{\underline{\rho}_i} \Exp{\I{i\in \hat{S}}} \Exp{\parens{1-\I{i\in \hat{S}} \beta_i} L_{i,\hat{S}}^1} \\
        &&\qquad\qquad\times \sum_{l=0}^{k-1} \parens{1 - \Exp{\I{i\in \hat{S}}} \beta_i}^{k-1-l} \Exp{\norm{\nabla_i f(X^l)}_{(i) \star}}\Bigg) \\
        &&+ \sum_{i=1}^b 2 t_i \bar{\rho}_i \Exp{\I{i\in \hat{S}}} \sigma_i \sqrt{\beta_i} \\
        &&+ \frac{1}{2} \sum_{i=1}^b t_i^2\Exp{\I{i\in \hat{S}} L^0_{i,\hat{S}}}
        + \frac{1}{2} \sum_{i=1}^b t_i^2 \Exp{\I{i\in \hat{S}} L_{i,\hat{S}}^1} \Exp{\norm{\nabla_i f(X^k)}_{(i) \star}} \\
        &=& \delta^k - \sum_{i=1}^b t_i \Exp{\I{i\in \hat{S}}} \Exp{\norm{\nabla_i f(X^k)}_{(i) \star}} \\
        &&+ \sum_{i=1}^b 2 t_i \bar{\rho}_i \Exp{\I{i\in \hat{S}}} \parens{1 - \Exp{\I{i\in \hat{S}}} \beta_i}^k P_i^0 \\
        &&+ \sum_{i=1}^b \Bigg(\frac{2 t_i^2 \bar{\rho}_i}{\underline{\rho}_i} \Exp{\I{i\in \hat{S}}} \Exp{\parens{1-\I{i\in \hat{S}} \beta_i} L_{i,\hat{S}}^1} \\
        &&\qquad\qquad\times \sum_{l=0}^{k-1} \parens{1 - \Exp{\I{i\in \hat{S}}} \beta_i}^{k-1-l} \Exp{\norm{\nabla_i f(X^l)}_{(i) \star}}\Bigg) \\
        &&+ \frac{1}{2} \sum_{i=1}^b t_i^2 \Exp{\I{i\in \hat{S}} L_{i,\hat{S}}^1} \Exp{\norm{\nabla_i f(X^k)}_{(i) \star}} \\
        &&+ \frac{1}{2} \sum_{i=1}^b t_i^2\Exp{\I{i\in \hat{S}} L^0_{i,\hat{S}}}
        + \sum_{i=1}^b \frac{2 t_i^2 \bar{\rho}_i \Exp{\parens{1-\I{i\in \hat{S}} \beta_i} L_{i,\hat{S}}^0}}{\beta_i \underline{\rho}_i} \\
        &&+ \sum_{i=1}^b 2 t_i \bar{\rho}_i \Exp{\I{i\in \hat{S}}} \sigma_i \sqrt{\beta_i}.
    \end{eqnarray*}
    Let us now look at the terms involving the gradient norms. Using \Cref{lemma:layer_gen_smooth3}, we get
    \begin{eqnarray*}
        &&\hspace{-6mm}\sum_{i=1}^b \Bigg(t_i^2 \underbrace{\frac{2 \bar{\rho}_i}{\underline{\rho}_i} \Exp{\I{i\in \hat{S}}} \Exp{\parens{1-\I{i\in \hat{S}} \beta_i} L_{i,\hat{S}}^1}}_{\eqdef b_i}  \\
        &&\qquad\qquad\times \sum_{l=0}^{k-1} \parens{1 - \Exp{\I{i\in \hat{S}}} \beta_i}^{k-1-l} \Exp{\norm{\nabla_i f(X^l)}_{(i) \star}}\Bigg) \\
        &=& \sum_{l=0}^{k-1} \sum_{i=1}^b t_i^2 b_i \parens{1 - \Exp{\I{i\in \hat{S}}} \beta_i}^{k-1-l} \Exp{\norm{\nabla_i f(X^l)}_{(i) \star}} \\
        &\leq& \sum_{l=0}^{k-1} \Exp{4 \max_{i\in [b]} \parens{t_i^2 b_i \parens{1 - \Exp{\I{i\in \hat{S}}} \beta_i}^{k-1-l} L_{i,S^l}^1} \parens{f(X^l) - f^{\star}}} \\
        &&+ \Exp{\sum_{i=1}^b \frac{t_i^2 b_i \parens{1 - \Exp{\I{i\in \hat{S}}} \beta_i}^{k-1-l} L_{i,S^l}^0}{L_{i,S^l}^1}} \\
        &\leq& 4 \sum_{l=0}^{k-1} \Exp{\ExpCond{\max_{i\in [b]} \parens{t_i^2 b_i \parens{1 - \Exp{\I{i\in \hat{S}}} \beta_i}^{k-1-l}} \max_{i\in [b]} \parens{L_{i,S^l}^1} \parens{f(X^l) - f^{\star}}}{X^l}} \\
        &&+ \sum_{l=0}^{k-1} \sum_{i=1}^b t_i^2 b_i \parens{1 - \Exp{\I{i\in \hat{S}}} \beta_i}^{k-1-l} \Exp{\frac{L_{i,\hat{S}}^0}{L_{i,\hat{S}}^1}} \\
        &=& 4 \sum_{l=0}^{k-1} \max_{i\in [b]} \parens{t_i^2 b_i \parens{1 - \Exp{\I{i\in \hat{S}}} \beta_i}^{k-1-l}} \Exp{\max_{i\in [b]} L_{i,\hat{S}}^1} \delta^l \\
        &&+ \sum_{i=1}^b t_i^2 b_i \Exp{\frac{L_{i,\hat{S}}^0}{L_{i,\hat{S}}^1}} \sum_{l=0}^{k-1} \parens{1 - \Exp{\I{i\in \hat{S}}} \beta_i}^{k-1-l} \\
        &\leq& 4 \max_{i\in [b]} \parens{t_i^2 b_i} \Exp{\max_{i\in [b]} L_{i,\hat{S}}^1} \sum_{l=0}^{k-1} \max_{i\in [b]} \parens{\parens{1 - \Exp{\I{i\in \hat{S}}} \beta_i}^{k-1-l}} \delta^l \\
        &&+ \sum_{i=1}^b \frac{t_i^2 b_i}{\Exp{\I{i\in \hat{S}}} \beta_i} \Exp{\frac{L_{i,\hat{S}}^0}{L_{i,\hat{S}}^1}} \\
        &=& 4 \max_{i\in [b]} \parens{t_i^2 b_i} \Exp{\max_{i\in [b]} L_{i,\hat{S}}^1} \sum_{l=0}^{k-1} \parens{1 - \min_{i\in [b]} \parens{\Exp{\I{i\in \hat{S}}} \beta_i}}^{k-1-l} \delta^l \\
        &&+ \sum_{i=1}^b \frac{t_i^2 b_i}{\Exp{\I{i\in \hat{S}}} \beta_i} \Exp{\frac{L_{i,\hat{S}}^0}{L_{i,\hat{S}}^1}},
    \end{eqnarray*}
    and
    \begin{eqnarray*}
        &&\hspace{-6mm}\sum_{i=1}^b t_i^2 \Exp{\I{i\in \hat{S}} L_{i,\hat{S}}^1} \Exp{\norm{\nabla_i f(X^k)}_{(i) \star}} \\
        &\leq& \Exp{4 \max_{i\in [b]} \parens{t_i^2 \Exp{\I{i\in \hat{S}} L_{i,\hat{S}}^1} L_{i,S^k}^1} \parens{f(X^k) - f^{\star}} + \sum_{i=1}^b \frac{t_i^2 \Exp{\I{i\in \hat{S}} L_{i,\hat{S}}^1} L_{i,S^k}^0}{L_{i,S^k}^1}} \\
        &\leq& 4 \max_{i\in [b]} \parens{\Exp{\I{i\in \hat{S}} L_{i,\hat{S}}^1}} \Exp{\max_{i\in [b]} \parens{t_i^2 L_{i,S^k}^1} \parens{f(X^k) - f^{\star}}} \\
        &&+ \sum_{i=1}^b t_i^2 \Exp{\I{i\in \hat{S}} L_{i,\hat{S}}^1} \Exp{\frac{L_{i,S^k}^0}{L_{i,S^k}^1}} \\
        &=& 4 \max_{i\in [b]} \parens{\Exp{\I{i\in \hat{S}} L_{i,\hat{S}}^1}} \Exp{\ExpCond{\max_{i\in [b]} \parens{t_i^2 L_{i,S^k}^1} \parens{f(X^k) - f^{\star}}}{X^k}} \\
        &&+ \sum_{i=1}^b t_i^2 \Exp{\I{i\in \hat{S}} L_{i,\hat{S}}^1} \Exp{\frac{L_{i,\hat{S}}^0}{L_{i,\hat{S}}^1}} \\
        &=& 4 \max_{i\in [b]} \parens{\Exp{\I{i\in \hat{S}} L_{i,\hat{S}}^1}} \Exp{\max_{i\in [b]} \parens{t_i^2 L_{i,\hat{S}}^1}} \delta^k \\
        &&+ \sum_{i=1}^b t_i^2 \Exp{\I{i\in \hat{S}} L_{i,\hat{S}}^1} \Exp{\frac{L_{i,\hat{S}}^0}{L_{i,\hat{S}}^1}}.
    \end{eqnarray*}
    Thus
    \begin{eqnarray*}
        &&\hspace{-6mm}\delta^{k+1} \\
        &\leq& \delta^k - \sum_{i=1}^b t_i \Exp{\I{i\in \hat{S}}} \Exp{\norm{\nabla_i f(X^k)}_{(i) \star}} \\
        &&+ \sum_{i=1}^b 2 t_i \bar{\rho}_i \Exp{\I{i\in \hat{S}}} \parens{1 - \Exp{\I{i\in \hat{S}}} \beta_i}^k P_i^0 \\
        &&+ 4 \max_{i\in [b]} \parens{t_i^2 b_i} \Exp{\max_{i\in [b]} L_{i,\hat{S}}^1} \sum_{l=0}^{k-1} \parens{1 - \min_{i\in [b]} \parens{\Exp{\I{i\in \hat{S}}} \beta_i}}^{k-1-l} \delta^l \\
        &&+ \sum_{i=1}^b \frac{t_i^2 b_i}{\Exp{\I{i\in \hat{S}}} \beta_i} \Exp{\frac{L_{i,\hat{S}}^0}{L_{i,\hat{S}}^1}}
        + 2 \max_{i\in [b]} \parens{\Exp{\I{i\in \hat{S}} L_{i,\hat{S}}^1}} \Exp{\max_{i\in [b]} \parens{t_i^2 L_{i,\hat{S}}^1}} \delta^k \\
        &&+ \frac{1}{2} \sum_{i=1}^b t_i^2 \Exp{\I{i\in \hat{S}} L_{i,\hat{S}}^1} \Exp{\frac{L_{i,\hat{S}}^0}{L_{i,\hat{S}}^1}} \\
        &&+ \frac{1}{2} \sum_{i=1}^b t_i^2\Exp{\I{i\in \hat{S}} L^0_{i,\hat{S}}}
        + \sum_{i=1}^b \frac{2 t_i^2 \bar{\rho}_i \Exp{\parens{1-\I{i\in \hat{S}} \beta_i} L_{i,\hat{S}}^0}}{\beta_i \underline{\rho}_i} \\
        &&+ \sum_{i=1}^b 2 t_i \bar{\rho}_i \Exp{\I{i\in \hat{S}}} \sigma_i \sqrt{\beta_i} \\
        &\leq& \parens{1 + 2 \max_{i\in [b]} \parens{\Exp{\I{i\in \hat{S}} L_{i,\hat{S}}^1}} \Exp{\max_{i\in [b]} \parens{t_i^2 L_{i,\hat{S}}^1}}} \delta^k \\
        &&- \sum_{i=1}^b t_i \Exp{\I{i\in \hat{S}}} \Exp{\norm{\nabla_i f(X^k)}_{(i) \star}} \\
        &&+ 4 \max_{i\in [b]} \parens{t_i^2 b_i} \Exp{\max_{i\in [b]} L_{i,\hat{S}}^1} \sum_{l=0}^{k-1} \parens{1 - \min_{i\in [b]} \parens{\Exp{\I{i\in \hat{S}}} \beta_i}}^{k-1-l} \delta^l \\
        &&+ \sum_{i=1}^b 2 t_i \bar{\rho}_i \Exp{\I{i\in \hat{S}}} \parens{1 - \Exp{\I{i\in \hat{S}}} \beta_i}^k P_i^0 \\
        &&+ \sum_{i=1}^b t_i^2 \parens{\frac{2 \bar{\rho}_i \Exp{\parens{1-\I{i\in \hat{S}} \beta_i} L_{i,\hat{S}}^1}}{\beta_i \underline{\rho}_i} + \frac{\Exp{\I{i\in \hat{S}} L_{i,\hat{S}}^1}}{2}} \Exp{\frac{L_{i,\hat{S}}^0}{L_{i,\hat{S}}^1}} \\
        &&+ \sum_{i=1}^b t_i^2 \parens{\frac{2 \bar{\rho}_i \Exp{\parens{1-\I{i\in \hat{S}} \beta_i} L_{i,\hat{S}}^0}}{\beta_i \underline{\rho}_i} + \frac{\Exp{\I{i\in \hat{S}} L^0_{i,\hat{S}}}}{2}} \\
        &&+ 2 \sum_{i=1}^b t_i \Exp{\I{i\in \hat{S}}} \bar{\rho}_i \sigma_i \sqrt{\beta_i} \\
        &=& \parens{1 + c_1} \delta^k
        - \sum_{i=1}^b t_i \Exp{\I{i\in \hat{S}}} \Exp{\norm{\nabla_i f(X^k)}_{(i) \star}} \\
        &&+ c_2 \sum_{l=0}^{k-1} \parens{1 - \min_{i\in [b]} \parens{\Exp{\I{i\in \hat{S}}} \beta_i}}^{k-1-l} \delta^l \\
        &&+ \sum_{i=1}^b 2 t_i \bar{\rho}_i \Exp{\I{i\in \hat{S}}} \parens{1 - \Exp{\I{i\in \hat{S}}} \beta_i}^k P_i^0
        + \sum_{i=1}^b t_i^2 c_{3,i}
        + \sum_{i=1}^b t_i c_{4,i},
    \end{eqnarray*}
    where
    \begin{eqnarray*}
        c_1 &\eqdef& 2 \max_{i\in [b]} \parens{\Exp{\I{i\in \hat{S}} L_{i,\hat{S}}^1}} \Exp{\max_{i\in [b]} \parens{t_i^2 L_{i,\hat{S}}^1}}, \\
        c_2 &\eqdef& 4 \max_{i\in [b]} \parens{t_i^2 b_i} \Exp{\max_{i\in [b]} L_{i,\hat{S}}^1}, \\
        c_{3,i} &\eqdef& \parens{\frac{2 \bar{\rho}_i \Exp{\parens{1-\I{i\in \hat{S}} \beta_i} L_{i,\hat{S}}^1}}{\beta_i \underline{\rho}_i} + \frac{\Exp{\I{i\in \hat{S}} L_{i,\hat{S}}^1}}{2}} \Exp{\frac{L_{i,\hat{S}}^0}{L_{i,\hat{S}}^1}} \\
        &&+ \parens{\frac{2 \bar{\rho}_i \Exp{\parens{1-\I{i\in \hat{S}} \beta_i} L_{i,\hat{S}}^0}}{\beta_i \underline{\rho}_i} + \frac{\Exp{\I{i\in \hat{S}} L^0_{i,\hat{S}}}}{2}}, \\
        c_{4,i} &\eqdef& 2 \Exp{\I{i\in \hat{S}}} \bar{\rho}_i \sigma_i \sqrt{\beta_i}.
    \end{eqnarray*}
    Now, let us introduce a weighting sequence $w^k \eqdef w^{k-1} \parens{1 + c_1 + \frac{c_2}{\min_{i\in [b]} \parens{\Exp{\I{i\in \hat{S}}} \beta_i}}}^{-1}$, where $w^{-1} = 1$ and $W^K \eqdef \sum_{k=0}^{K} w^k$. Then, multiplying the inequality above by $w^k$ and summing over the first $K+1$ iterations gives
    \begin{eqnarray*}
        \sum_{k=0}^{K} w^k \delta^{k+1}
        &\leq& \sum_{k=0}^{K} w^k \parens{1 + c_1} \delta^k
        - \sum_{k=0}^{K} w^k \sum_{i=1}^b t_i \Exp{\I{i\in \hat{S}}} \Exp{\norm{\nabla_i f(X^k)}_{(i) \star}} \\
        &&+ \sum_{k=0}^{K} w^k c_2 \sum_{l=0}^{k-1} \parens{1 - \min_{i\in [b]} \parens{\Exp{\I{i\in \hat{S}}} \beta_i}}^{k-1-l} \delta^l \\
        &&+ \sum_{k=0}^{K} w^k \sum_{i=1}^b 2 t_i \bar{\rho}_i \Exp{\I{i\in \hat{S}}} \parens{1 - \Exp{\I{i\in \hat{S}}} \beta_i}^k P_i^0 \\
        &&+ \sum_{k=0}^{K} w^k \sum_{i=1}^b t_i^2 c_{3,i}
        + \sum_{k=0}^{K} w^k \sum_{i=1}^b t_i c_{4,i} \\
        &\leq& \parens{1 + c_1} \sum_{k=0}^{K} w^k \delta^k
        - \sum_{k=0}^{K} w^k \sum_{i=1}^b t_i \Exp{\I{i\in \hat{S}}} \Exp{\norm{\nabla_i f(X^k)}_{(i) \star}} \\
        &&+ c_2 \sum_{l=0}^{K-1} \sum_{k=l+1}^{K} w^k \parens{1 - \min_{i\in [b]} \parens{\Exp{\I{i\in \hat{S}}} \beta_i}}^{k-1-l} \delta^l \\
        &&+ \sum_{i=1}^b 2 t_i \bar{\rho}_i \Exp{\I{i\in \hat{S}}} P_i^0 \sum_{k=0}^{K} \parens{1 - \Exp{\I{i\in \hat{S}}} \beta_i}^k \\
        &&+ W^K \sum_{i=1}^b t_i^2 c_{3,i}
        + W^K \sum_{i=1}^b t_i c_{4,i},
    \end{eqnarray*}
    where in the last line we used the fact that $w^k \leq w^{k-1} \leq w^{-1} = 1$. Therefore,
    \begin{eqnarray*}
        \sum_{k=0}^{K} w^k \delta^{k+1}
        &\leq& \parens{1 + c_1} \sum_{k=0}^{K} w^k \delta^k
        - \sum_{k=0}^{K} w^k \sum_{i=1}^b t_i \Exp{\I{i\in \hat{S}}} \Exp{\norm{\nabla_i f(X^k)}_{(i) \star}} \\
        &&+ c_2 \sum_{l=0}^{K-1} \sum_{k=l+1}^{K} w^l \parens{1 - \min_{i\in [b]} \parens{\Exp{\I{i\in \hat{S}}} \beta_i}}^{k-1-l} \delta^l \\
        &&+ \sum_{i=1}^b \frac{2 t_i \bar{\rho}_i \Exp{\I{i\in \hat{S}}}}{\Exp{\I{i\in \hat{S}}} \beta_i} P_i^0
        + W^K \sum_{i=1}^b t_i^2 c_{3,i}
        + W^K \sum_{i=1}^b t_i c_{4,i} \\
        &\leq& \parens{1 + c_1} \sum_{k=0}^{K} w^k \delta^k
        - \sum_{k=0}^{K} w^k \sum_{i=1}^b t_i \Exp{\I{i\in \hat{S}}} \Exp{\norm{\nabla_i f(X^k)}_{(i) \star}} \\
        &&+ c_2 \sum_{l=0}^{K-1} w^l \delta^l \sum_{k=l+1}^{K} \parens{1 - \min_{i\in [b]} \parens{\Exp{\I{i\in \hat{S}}} \beta_i}}^{k-1-l} \\
        &&+ \sum_{i=1}^b \frac{2 t_i \bar{\rho}_i}{\beta_i} P_i^0
        + W^K \sum_{i=1}^b t_i^2 c_{3,i}
        + W^K \sum_{i=1}^b t_i c_{4,i} \\
        &\leq& \parens{1 + c_1} \sum_{k=0}^{K} w^k \delta^k
        - \sum_{k=0}^{K} w^k \sum_{i=1}^b t_i \Exp{\I{i\in \hat{S}}} \Exp{\norm{\nabla_i f(X^k)}_{(i) \star}} \\
        &&+ c_2 \sum_{l=0}^{K-1} w^l \delta^l \sum_{k=0}^{\infty} \parens{1 - \min_{i\in [b]} \parens{\Exp{\I{i\in \hat{S}}} \beta_i}}^k \\
        &&+ \sum_{i=1}^b \frac{2 t_i \bar{\rho}_i}{\beta_i} P_i^0
        + W^K \sum_{i=1}^b t_i^2 c_{3,i}
        + W^K \sum_{i=1}^b t_i c_{4,i} \\
        &\leq& \parens{1 + c_1 + \frac{c_2}{\min_{i\in [b]} \parens{\Exp{\I{i\in \hat{S}}} \beta_i}}} \sum_{k=0}^{K} w^k \delta^k \\
        &&- \sum_{k=0}^{K} w^k \sum_{i=1}^b t_i \Exp{\I{i\in \hat{S}}} \Exp{\norm{\nabla_i f(X^k)}_{(i) \star}} \\
        &&+ \sum_{i=1}^b \frac{2 t_i \bar{\rho}_i}{\beta_i} P_i^0
        + W^K \sum_{i=1}^b t_i^2 c_{3,i}
        + W^K \sum_{i=1}^b t_i c_{4,i} \\
        &=& \sum_{k=0}^{K} w^{k-1} \delta^k
        - \sum_{k=0}^{K} w^k \sum_{i=1}^b t_i \Exp{\I{i\in \hat{S}}} \Exp{\norm{\nabla_i f(X^k)}_{(i) \star}} \\
        &&+ \sum_{i=1}^b \frac{2 t_i \bar{\rho}_i}{\beta_i} P_i^0
        + W^K \sum_{i=1}^b t_i^2 c_{3,i}
        + W^K \sum_{i=1}^b t_i c_{4,i}.
    \end{eqnarray*}
    Rearranging the terms and dividing by $W^K$, we get
    \begin{eqnarray*}
        &&\hspace{-8mm}\min_{k=0,\ldots,K} \sum_{i=1}^b t_i \Exp{\I{i\in \hat{S}}} \Exp{\norm{\nabla_i f(X^k)}_{(i) \star}} \\
        &\leq& \sum_{k=0}^{K} \sum_{i=1}^b \frac{w^k}{W^K} t_i \Exp{\I{i\in \hat{S}}} \Exp{\norm{\nabla_i f(X^k)}_{(i) \star}} \\
        &\leq& \frac{1}{W^K} \sum_{k=0}^{K} \parens{w^{k-1} \delta^k - w^k \delta^{k+1}}
        + \frac{1}{W^K} \sum_{i=1}^b \frac{2 t_i \bar{\rho}_i}{\beta_i} P_i^0
        + \sum_{i=1}^b t_i^2 c_{3,i}
        + \sum_{i=1}^b t_i c_{4,i} \\
        &\leq& \frac{\delta^0}{W^K}
        + \frac{1}{W^K} \sum_{i=1}^b \frac{2 t_i \bar{\rho}_i}{\beta_i} P_i^0
        + \sum_{i=1}^b t_i^2 c_{3,i}
        + \sum_{i=1}^b t_i c_{4,i}.
    \end{eqnarray*}
    Now, note that taking $t_i = \frac{\eta_i}{(K+1)^{3/4}}$, where $$\textstyle\eta_i^2 \leq \min\brac{\frac{(K+1)^{1/2}}{4 \Exp{\I{i\in \hat{S}} L_{i,\hat{S}}^1} \Exp{\max_{i\in [b]} L_{i,\hat{S}}^1}}, \frac{(K+1)^{1/2} \underline{\rho}_i \min_{i\in [b]} \parens{\Exp{\I{i\in \hat{S}}} \beta_i}}{16 \bar{\rho}_i \Exp{\I{i\in \hat{S}}} \Exp{\parens{1-\I{i\in \hat{S}} \beta_i} L_{i,\hat{S}}^1} \Exp{\max_{i\in [b]} L_{i,\hat{S}}^1}}, 1},$$ we have
    \begin{align*}
        2 (K+1) \max_{i\in [b]} \parens{\Exp{\I{i\in \hat{S}} L_{i,\hat{S}}^1}} \Exp{\max_{i\in [b]} \parens{t_i^2 L_{i,\hat{S}}^1}} &\leq \frac{1}{2}, \\
        (K+1) \frac{4 \max_{i\in [b]} \parens{t_i^2 b_i} \Exp{\max_{i\in [b]} L_{i,\hat{S}}^1}}{\min_{i\in [b]} \parens{\Exp{\I{i\in \hat{S}}} \beta_i}} &\leq \frac{1}{2},
    \end{align*}
    and hence $(K+1) \parens{c_1 + \frac{c_2}{\min_{i\in [b]} \parens{\Exp{\I{i\in \hat{S}}} \beta_i}}} \leq 1$. Thus, the weights satisfy
    \begin{align*}
        W^K &= \sum_{k=0}^{K} w^k
        \geq (K+1) w^K
        = \frac{(K+1) w^{-1}}{\parens{1 + c_1 + \frac{c_2}{\min_{i\in [b]} \parens{\Exp{\I{i\in \hat{S}}} \beta_i}}}^{K+1}} \\
        &\geq \frac{K+1}{\exp\parens{(K+1) \parens{c_1 + \frac{c_2}{\min_{i\in [b]} \parens{\Exp{\I{i\in \hat{S}}} \beta_i}}}}}
        \geq \frac{K+1}{\exp(1)} \geq \frac{K+1}{3},
    \end{align*}
    meaning that
    \begin{align*}
        &\min_{k=0,\ldots,K} \sum_{i=1}^b t_i \Exp{\I{i\in \hat{S}}} \Exp{\norm{\nabla_i f(X^k)}_{(i) \star}} \\
        &\leq \frac{3 \delta^0}{K+1}
        + \frac{6}{K+1} \sum_{i=1}^b \frac{t_i \bar{\rho}_i}{\beta_i} P_i^0
        + \sum_{i=1}^b t_i^2 c_{3,i}
        + \sum_{i=1}^b t_i c_{4,i} \\
        &= \frac{3 \delta^0}{K+1}
        + \frac{6}{K+1} \sum_{i=1}^b \frac{\eta_i \bar{\rho}_i}{\beta_i (K+1)^{3/4}} P_i^0 \\
        &\quad+ \sum_{i=1}^b \frac{\eta_i^2}{(K+1)^{3/2}} \frac{2 \bar{\rho}_i}{\beta_i \underline{\rho}_i} \parens{\Exp{\parens{1-\I{i\in \hat{S}} \beta_i} L_{i,\hat{S}}^0} + \Exp{\parens{1-\I{i\in \hat{S}} \beta_i} L_{i,\hat{S}}^1} \Exp{\frac{L_{i,\hat{S}}^0}{L_{i,\hat{S}}^1}}} \\
        &\quad+ \sum_{i=1}^b \frac{\eta_i^2}{2 (K+1)^{3/2}} \parens{\Exp{\I{i\in \hat{S}} L^0_{i,\hat{S}}} + \Exp{\I{i\in \hat{S}} L_{i,\hat{S}}^1} \Exp{\frac{L_{i,\hat{S}}^0}{L_{i,\hat{S}}^1}}} \\
        &\quad+ \sum_{i=1}^b \frac{2 \eta_i}{(K+1)^{3/4}} \Exp{\I{i\in \hat{S}}} \bar{\rho}_i \sigma_i \sqrt{\beta_i}.
    \end{align*}
    Dividing by $\frac{1}{b} \sum_{l=1}^b \Exp{\I{l \in \hat{S}}} t_l = \frac{1}{(K+1)^{3/4}} \frac{1}{b} \sum_{l=1}^b \Exp{\I{l \in \hat{S}}} \eta_l$ gives
    \begin{align*}
        &\min_{k=0,\ldots,K} \sum_{i=1}^b \frac{\Exp{\I{i\in \hat{S}}} \eta_i}{\frac{1}{b} \sum_{l=1}^b \Exp{\I{l \in \hat{S}}} \eta_l} \Exp{\norm{\nabla_i f(X^k)}_{(i) \star}} \\
        &\leq \frac{3 \delta^0}{(K+1)^{1/4} \frac{1}{b} \sum_{l=1}^b \Exp{\I{l \in \hat{S}}} \eta_l}
        + \frac{6}{K+1} \sum_{i=1}^b \frac{\eta_i \bar{\rho}_i}{\beta_i \frac{1}{b} \sum_{l=1}^b \Exp{\I{l \in \hat{S}}} \eta_l} P_i^0 \\
        &\quad+ \sum_{i=1}^b \frac{\eta_i^2}{(K+1)^{3/4} \frac{1}{b} \sum_{l=1}^b \Exp{\I{l \in \hat{S}}} \eta_l} \frac{2 \bar{\rho}_i}{\beta_i \underline{\rho}_i} \\
        &\qquad+\times\parens{\Exp{\parens{1-\I{i\in \hat{S}} \beta_i} L_{i,\hat{S}}^0} + \Exp{\parens{1-\I{i\in \hat{S}} \beta_i} L_{i,\hat{S}}^1} \Exp{\frac{L_{i,\hat{S}}^0}{L_{i,\hat{S}}^1}}} \\
        &\quad+ \sum_{i=1}^b \frac{\eta_i^2}{2 (K+1)^{3/4} \frac{1}{b} \sum_{l=1}^b \Exp{\I{l \in \hat{S}}} \eta_l} \parens{\Exp{\I{i\in \hat{S}} L^0_{i,\hat{S}}} + \Exp{\I{i\in \hat{S}} L_{i,\hat{S}}^1} \Exp{\frac{L_{i,\hat{S}}^0}{L_{i,\hat{S}}^1}}} \\
        &\quad+ \sum_{i=1}^b \frac{2 \Exp{\I{i\in \hat{S}}} \eta_i}{\frac{1}{b} \sum_{l=1}^b \Exp{\I{l \in \hat{S}}} \eta_l} \bar{\rho}_i \sigma_i \sqrt{\beta_i} \\
        &\leq \frac{3 \delta^0}{(K+1)^{1/4} \frac{1}{b} \sum_{l=1}^b \Exp{\I{l \in \hat{S}}} \eta_l}
        + \frac{6}{(K+1)^{1/2}} \sum_{i=1}^b \frac{\eta_i \bar{\rho}_i}{\frac{1}{b} \sum_{l=1}^b \Exp{\I{l \in \hat{S}}} \eta_l} P_i^0 \\
        &\quad+ \sum_{i=1}^b \frac{\eta_i^2}{(K+1)^{1/4} \frac{1}{b} \sum_{l=1}^b \Exp{\I{l \in \hat{S}}} \eta_l} \frac{2 \bar{\rho}_i}{\underline{\rho}_i} \parens{\Exp{L_{i,\hat{S}}^0} + \Exp{L_{i,\hat{S}}^1} \Exp{\frac{L_{i,\hat{S}}^0}{L_{i,\hat{S}}^1}}} \\
        &\quad+ \sum_{i=1}^b \frac{\eta_i^2}{2 (K+1)^{3/4} \frac{1}{b} \sum_{l=1}^b \Exp{\I{l \in \hat{S}}} \eta_l} \parens{\Exp{\I{i\in \hat{S}} L^0_{i,\hat{S}}} + \Exp{\I{i\in \hat{S}} L_{i,\hat{S}}^1} \Exp{\frac{L_{i,\hat{S}}^0}{L_{i,\hat{S}}^1}}} \\
        &\quad+ \sum_{i=1}^b \frac{2 \Exp{\I{i\in \hat{S}}} \eta_i}{(K+1)^{1/4} \frac{1}{b} \sum_{l=1}^b \Exp{\I{l \in \hat{S}}} \eta_l} \bar{\rho}_i \sigma_i,
    \end{align*}
    where in the last equality we set $\beta_i = \frac{1}{(K+1)^{1/2}}$. Finally, the initialization $M_i^0 = \nabla_i f(X^0; \xi^0)$ guarantees that
    \begin{eqnarray*}
        P_i^0 \eqdef \Exp{\norm{\nabla_i f(X^0) - M_i^0}_2}
        \leq \sqrt{\Exp{\norm{\nabla_i f(X^0) - \nabla_i f(X^0; \xi^0)}^2_2}}
        \overset{\eqref{as:bounded_var}}{\leq} \sigma_i.
    \end{eqnarray*}
\end{proof}

\newpage

\section{Experiments}\label{sec:experiments_app}

We evaluate {\newalgsmall} on three standard benchmarks--\texttt{MNIST}, \texttt{Fashion-MNIST}, and \texttt{CIFAR-10}--using 3-layer convolutional neural networks (CNNs) of varying capacity.
Our goal is to study how partial layer updates of {\newalgsmall} accelerate training relative to standard \algnamesmall{Muon} (a special case of {\newalgsmall} with full-network updates).

Experiments were run on various NVIDIA GPUs based on availability.
\texttt{MNIST} and \texttt{Fashion-MNIST} experiments were run on Tesla V100-SXM2-32GB. \texttt{CIFAR-10} experiments were performed on a mix of A100-SXM4-80GB, Tesla P100-PCIE-16GB, and GeForce GTX 1080 Ti GPUs.

\subsection{Key Training Parameters}\label{subsec:grid_search}

We identify several key training parameters and conduct a grid search across these parameters to systematically explore their effects.

\paragraph{Layer sampling distribution.}  
The choice of layer distribution determines which layers are updated at each iteration and is central to the behavior of {\newalgsmall}. We considered the following strategies:

\begin{itemize}
    \item \textbf{Uniform Distribution:} Each layer is sampled with equal probability ($p_i = \nicefrac{1}{b}$ for all $i\in[b]$ in the notation from \Cref{sec:rpt}).
    \item \textbf{Linear Distribution:} Sampling probability increases linearly with the layer index, favoring deeper layers. Empirically, this distribution performed poorly.
    \item \textbf{Quadratic Distribution:} Probability increases quadratically with depth, strongly biasing toward deeper layers. Similarly, this performed poorly.
    \item \textbf{Exponential Distribution:} Layer probability grows exponentially with depth, emphasizing deeper layers. This also proved suboptimal.
    \item \textbf{Epoch-Shift Distribution:} The distribution biases the sampling towards shallow layers at the beginning of training, gradually shifting towards deeper layers as the training progresses (see more details in \Cref{sec:epoch_shift}). This dynamic adjustment might help in balancing early feature extraction and later complex feature learning during the optimization process--see example in \Cref{fig:distribution_epoch_shift}.
\end{itemize}

\paragraph{Batch size and learning rate.}  
We test batch sizes $\{64, 512, 8192, 16384, 32768\}$ and learning rates $\{0.1, 0.01, 0.001\}$.

\paragraph{Model depth and capacity.}  
To assess the impact of network complexity, we train multiple 3-layer CNNs with varying channel configurations: $[8,16,32]$, $[16,32,64]$, $[64,128,256]$, $[128,256,512]$, $[64,16,8]$, and $[256,128,64]$.

\paragraph{Fixed parameters.} We fix the number of epochs to $20$ or $50$, momentum to $0.5$, and the number of Newton-Schulz iterations to $5$ (see \Cref{sec:muon}).

This grid search allows us to systematically evaluate {\newalgsmall}. While many combinations performed poorly (and are therefore omitted in the results that follow), the experiments highlighted configurations--particularly with uniform or epoch-shift distributions--that consistently reduce training time while maintaining final accuracy.

\subsection{Epoch-Shift Distribution}\label{sec:epoch_shift}

\begin{figure}[t]
    \centering
    \includegraphics[width=0.7\linewidth]{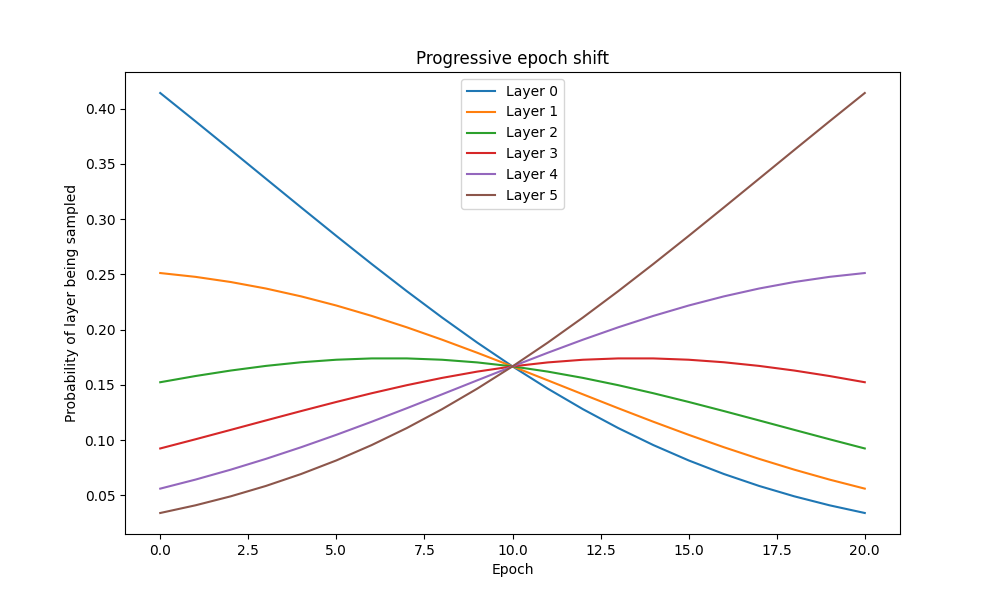}
    \caption{Evolution of the layer sampling distribution as a function of the epochs. Shallow layers are more trained in the first epochs but their probabilities of being sampled decrease with the epochs. This effect can be amplified or reduced by varying the value of $\alpha$; here we chose $\alpha=0.5$.}
    \label{fig:distribution_epoch_shift}
\end{figure}

{\newalgsmall} can dynamically adjust which layers are updated at each iteration. The strategy we consider is the \emph{epoch-shift} distribution, which biases training towards shallow layers in the early epochs, gradually shifting focus to deeper layers as training progresses (see \Cref{fig:distribution_epoch_shift}). Such a schedule balances early feature extraction with later complex feature learning, improving convergence in practice.

Formally, let $b$ denote the total number of layers, $\alpha$ be a constant controlling the sharpness of the bias, and define the training progress as
\[
\text{progress} = \frac{\text{epoch}}{\text{max\_epochs}} \in [0,1].
\]
The weight for each layer $i$ is given by
\[
w_i = \exp \left( \alpha \left[ (1 - \text{progress})(b - 1 - i) + \text{progress} \cdot i \right] \right).
\]
We then normalize the weights to obtain a valid probability distribution:
\[
W = \sum_{i=0}^{b-1} w_i, \qquad 
p_i = \frac{w_i}{W}.
\]

By adjusting $\alpha$, one can control the rate at which training emphasis shifts from shallow to deeper layers. In \Cref{fig:distribution_epoch_shift}, we set $\alpha = 0.5$.

\subsection{Evaluation Metrics}

We repeat every experiment over multiple random seeds. When reporting aggregated results, we use two complementary procedures:

\begin{itemize}
    \item \textbf{Normalized curve averaging:} We normalize wall-clock time for each run so that \algnamesmall{Muon} always ends at $t=1$, interpolate both \algnamesmall{Muon} and {\newalgsmall} curves onto this grid, and average over seeds.
    \item \textbf{Time-to-target evaluation:} We measure the wall-clock time to reach fixed accuracy thresholds (e.g. $60\%$,$70\%$,$80\%$,...), reporting the ratio between \algnamesmall{Muon} and {\newalgsmall}. This metric is interpolation-free and directly quantifies practical speedup.
\end{itemize}

\subsection{Results on MNIST}

We first evaluate {\newalgsmall} on the \texttt{MNIST} dataset. Figure \ref{fig:MNIST_unif_tr} shows a representative run comparing standard full-network \algnamesmall{Muon} training with {\newalgsmall} using a uniform layer sampling distribution.
Although the per-epoch train accuracy of {\newalgsmall} is initially lower than that of full-layer \algnamesmall{Muon}, the wall-clock training time tells a different story: for training times up to approximately 150 seconds, {\newalgsmall} consistently achieves higher accuracy. In practice, this means that reaching a train accuracy of $95\%$ or less is faster with {\newalgsmall}, highlighting its efficiency advantage.

To account for variability across independent runs, we evaluate two aggregation strategies. \Cref{fig:MNIST_normalized_uniform} shows normalized curve averaging, where each run is rescaled to a common time grid. Figure \ref{fig:MNIST_FashionMNIST_epochshift_bar} (left) presents averaged time-to-target evaluation across multiple seeds, reporting the ratio of \algnamesmall{Muon} time to {\newalgsmall} time for fixed accuracy thresholds.

The time-to-target results clearly demonstrate a speedup of up to $1.4\times$ across thresholds of $60\%, 70\%, 80\%$, and $99\%$, with slightly lower speedup for the $90\%$ threshold. We note variability across seeds, emphasizing the stochastic nature of layer sampling. Despite this, {\newalgsmall} consistently provides practical acceleration across training.

\begin{figure}[t]
    \centering
    \includegraphics[width=\linewidth]{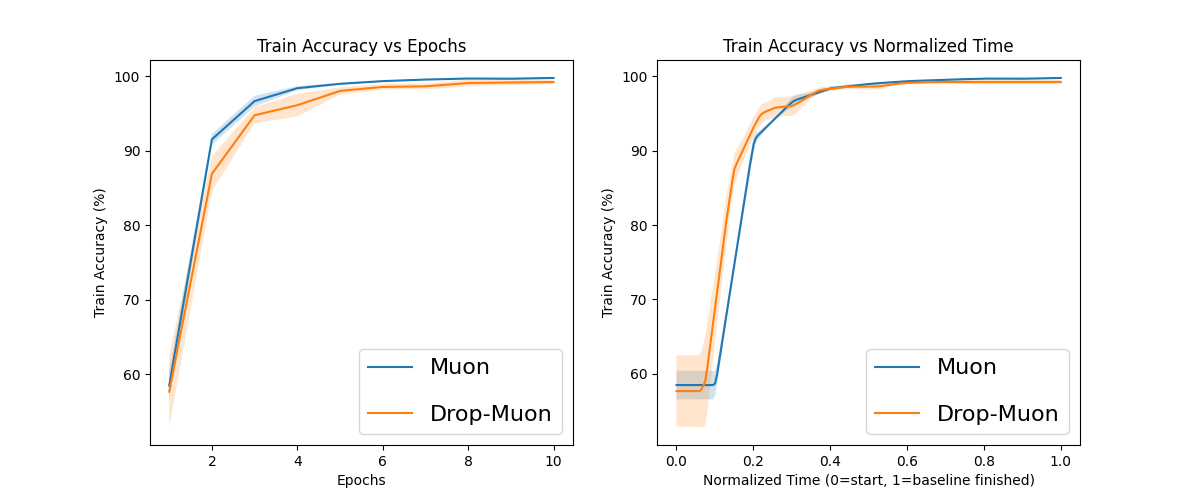}
    \caption{Normalized curve averaging of several runs of  \algnamesmall{Muon} and {\newalgsmall} with uniform index sampling on \texttt{MNIST}.
    Batch size $=8192$, learning rate $=0.1$, channels $=[64,128,256]$.}
    \label{fig:MNIST_normalized_uniform}
\end{figure}

\subsection{Results on Fashion-MNIST}

Figure \ref{fig:images//FashionMNIST/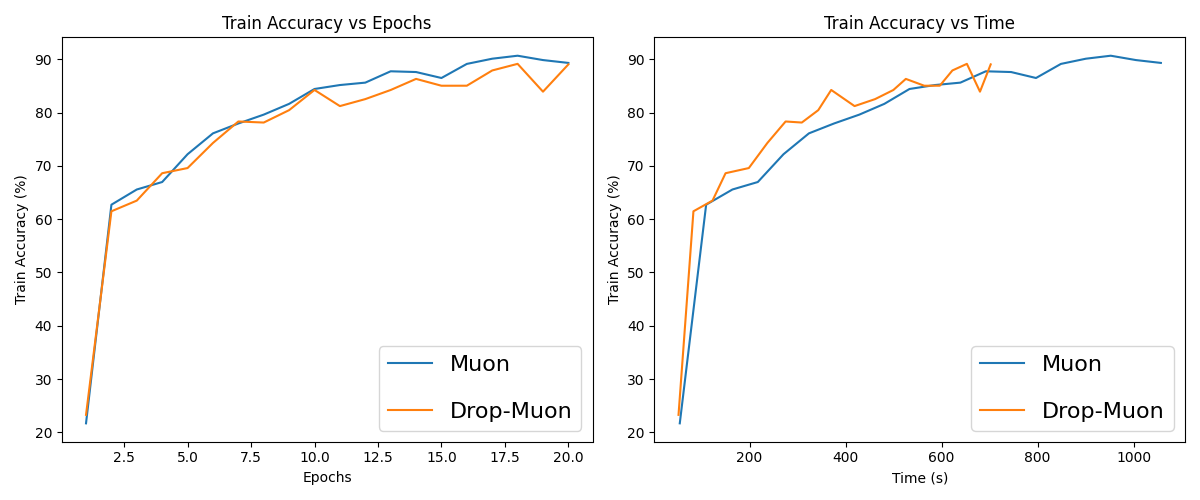} shows a typical run comparing standard \algnamesmall{Muon} training with {\newalgsmall} using the epoch-shift layer sampling distribution. Early in training, {\newalgsmall} achieves faster progress in wall-clock time, even if per-epoch accuracy is slightly lower.

\begin{figure}[t]
    \centering
    \includegraphics[width=0.9\linewidth]{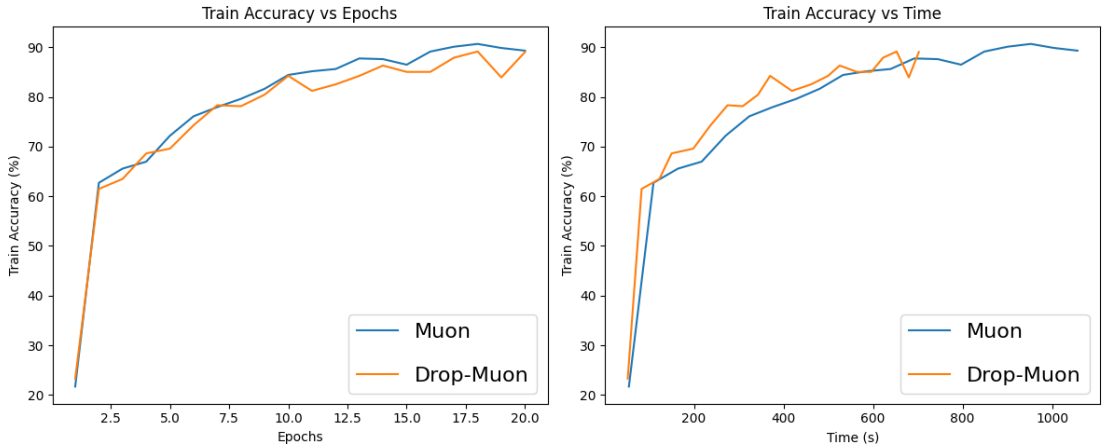}
    \caption{Evolution of the training accuracy for \algnamesmall{Muon} and {\newalgsmall} with epoch-shift index sampling on \texttt{Fashion-MNIST}. Batch size $=32768$, learning rate $=0.1$, channels $=[64,128,256]$.}
    \label{fig:images//FashionMNIST/train_epochs_and_train_time_bs32768_lr0.1_ch64-128-256_cutoffepoch-shift.png}
\end{figure}

Normalized curve averaging across multiple seeds is presented in \Cref{fig:images//FashionMNIST/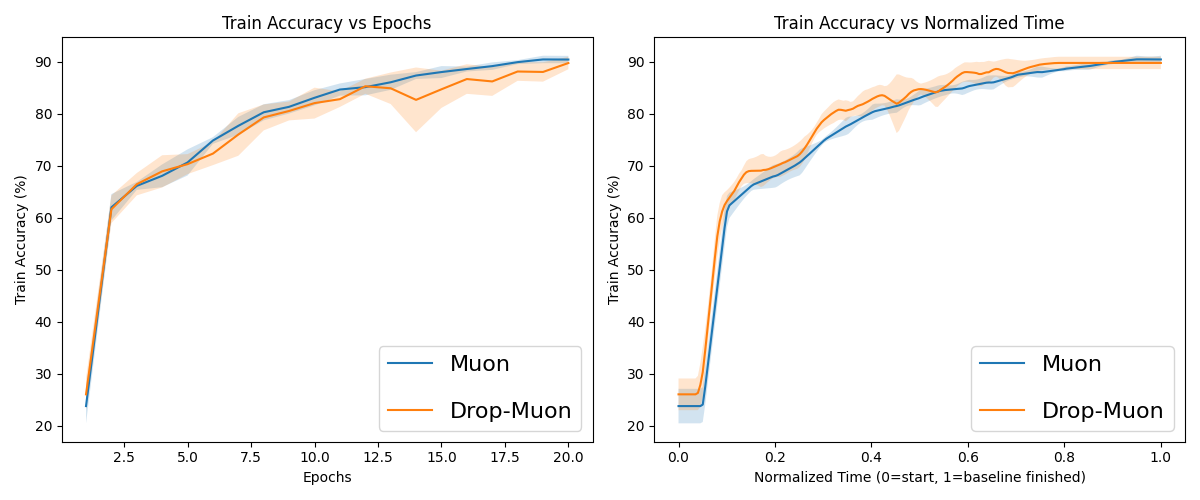}. Although the curves appear smoother at higher accuracy, this should not be interpreted as a reduction in variance; early-stage training is inherently noisier due to layer sampling, whereas later training reflects fewer updates per wall-clock time.

\begin{figure}[t]
    \centering
    \includegraphics[width=0.9\linewidth]{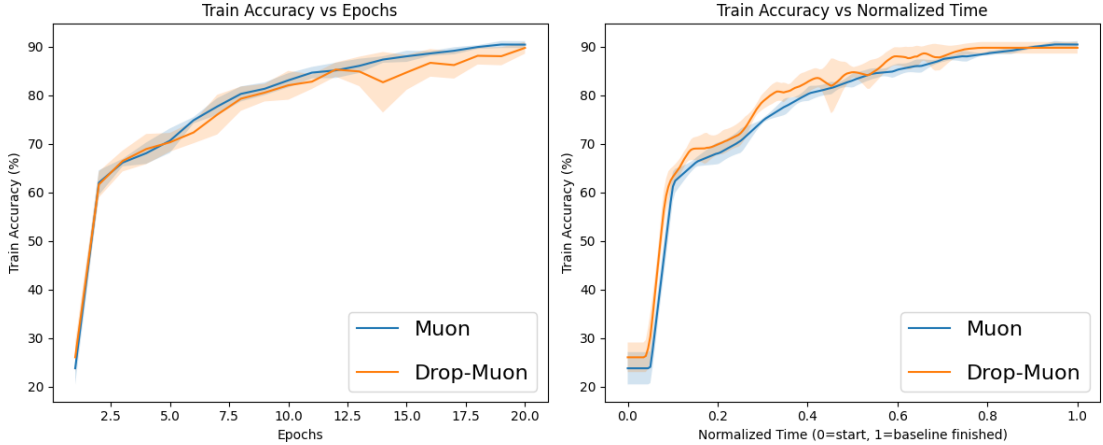}
    \caption{Normalized curve averaging of several runs of  \algnamesmall{Muon} and {\newalgsmall} with epoch-shift index sampling on \texttt{Fashion-MNIST}.
    Batch size $=32768$, learning rate $=0.1$, channels $=[64,128,256]$.}
    \label{fig:images//FashionMNIST/normalized_curve_averaging_bs32768_lr0.1_ch64-128-256_cutoffepoch-shift.png}
\end{figure}

Figure \ref{fig:MNIST_FashionMNIST_epochshift_bar} (right) reports the averaged time-to-target evaluation. {\newalgsmall} consistently achieves speedups of approximately $1.2$ across all considered accuracy thresholds. The $99\%$ threshold is omitted as neither \algnamesmall{Muon} nor {\newalgsmall} reach this level within the plotted runs.

Overall, \texttt{Fashion-MNIST} experiments reinforce that {\newalgsmall} provides consistent practical speedups, similar to \texttt{MNIST}, while maintaining comparable final accuracy.

\subsection{Results on CIFAR-10}

\begin{figure}[t]
    \centering
    \includegraphics[width=0.9\linewidth]{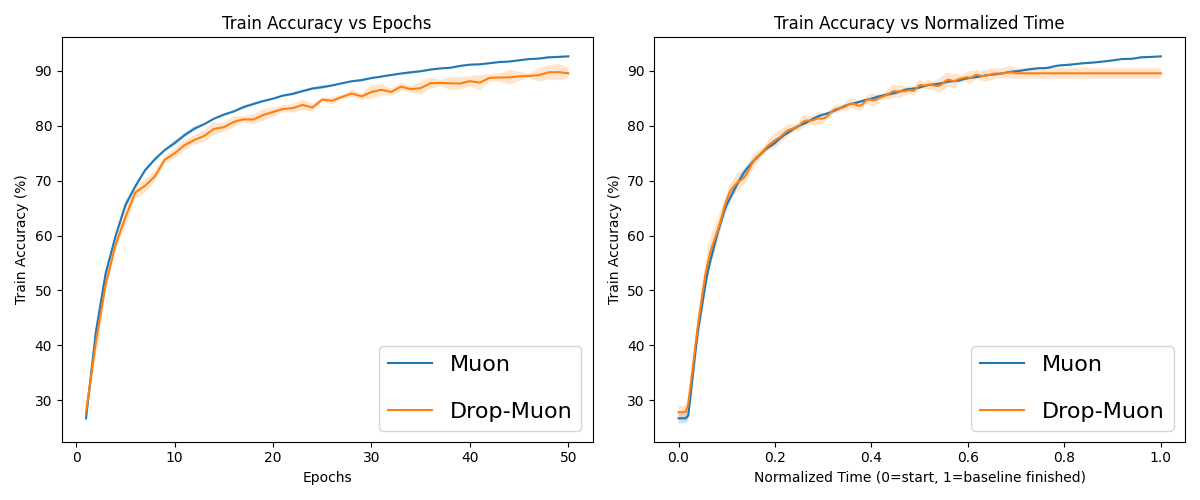}
    \caption{Normalized curve averaging of several runs of  \algnamesmall{Muon} and {\newalgsmall} with epoch-shift index sampling on \texttt{CIFAR-10}.
    Batch size $=8192$, learning rate $=0.1$, channels $=[128,256,512]$.}
    \label{fig:normalized_curve_averaging_bs8192_lr0.1_ch128-256-512_cutoffepoch-shift_batch_normFalse}
\end{figure}

\texttt{CIFAR-10} requires more sophisticated network architectures. We use a CNN with channels $[128,256,512]$ and insert batch normalization layers between each convolution and ReLU activation to improve training stability.

\Cref{fig:cifar10_epochshift_tr} shows an example run comparing standard full-layer \algnamesmall{Muon} training with {\newalgsmall} using the epoch-shift layer sampling distribution. While {\newalgsmall} has slightly lower per-epoch training accuracy, it achieves faster progress in terms of wall-clock time.
\Cref{fig:normalized_curve_averaging_bs8192_lr0.1_ch128-256-512_cutoffepoch-shift_batch_normFalse} shows normalized curve averaging across multiple seeds (the flat end of the curve reflects the earlier completion of {\newalgsmall}).

Time-to-target results in \Cref{fig:cifar10_epochshift_bar} indicate a notable speedup at the $90\%$ train accuracy threshold. However, for lower thresholds ($60\%, 70\%, 80\%$), speedups are minimal or absent for this configuration. We attribute this limitation primarily to sub-optimal hyperparameter choices (see \Cref{sec:experiments}).

\begin{figure}
    \centering
    \includegraphics[width=0.8\linewidth]{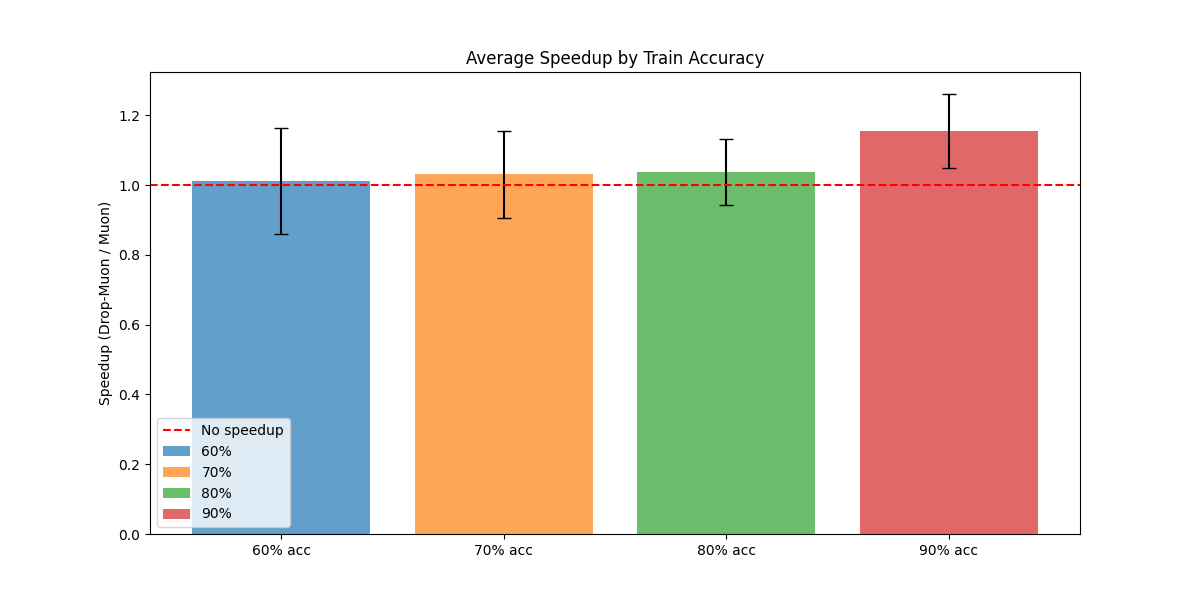}
    \caption{Averaged time-to-target speed-up over multiple runs comparing \algnamesmall{Muon} and {\newalgsmall} with epoch-shift index sampling on \texttt{CIFAR-10} with batch size $8192$, learning rate $0.1$, and channels $[128,256,512]$.}
    \label{fig:cifar10_epochshift_bar}
\end{figure}

\subsection{Discussion and Practical Remarks}

We summarize several key observations and practical considerations arising from our experiments:
\begin{itemize}
\item \textbf{Simplicity of implementation:} {\newalgsmall} can be implemented in just a few lines of code, making it easy to integrate into existing training pipelines.
\item \textbf{Further benefits from tuning:} Performance can be further improved through dedicated tuning of hyperparameters (see \Cref{sec:experiments}).
\item \textbf{Potential for implementation improvements:}
\begin{itemize}
\item Turning off gradient computations at every iteration can be costly for large models. Sampling the cutoff layer only every few iterations can reduce overhead and improve efficiency.
\item When the batch size or model size is small, the gradient computation may be dominated by the Newton-Schulz routine. In such cases, computing all gradients but orthogonalizing only a subset can be advantageous.
\end{itemize}
\item \textbf{Variance across seeds:} The method exhibits non-negligible variance, particularly on smaller datasets. Stabilization mechanisms could mitigate this.
\item \textbf{Adaptive learning of sampling distributions:} A promising future direction is to learn the layer sampling distribution online, so that it automatically adapts to the training dynamics of a given dataset and architecture.
\end{itemize}

\newpage

\section{Useful Facts}

For all $X,Y \in \cX$ and $t>0$, we have:
\begin{align}
    \label{eq:normlmo}
    \norm{\lmo{\cB(0,t)}{G}} &= t \\
    \label{eq:inplmo}
    \inp{G}{\lmo{\cB(X,t)}{G}} &= -t \norm{G}_\star \\
    \label{eq:inpsharp}
    \inp{X}{X^\sharp} &= \norm{X^\sharp}^2, \\
    \label{eq:normsharp}
    \norm{X}_{\star} &= \norm{X^\sharp}, \\
    \label{eq:subdiff}
    \inp{H}{X} &= \norm{X}_{\star}, \quad \norm{H} = 1, \qquad\forall X\neq0.
\end{align}
where $H \in \partial\norm{\cdot}_{\star}(X)$ belongs to the subdifferential of the dual norm.

\begin{lemma}[Variance decomposition]\label{lemma:vardecomp}
    For any random vector $X\in\cX$ and any non-random $c\in\cX$, we have
    \begin{align*}
       \Exp{\norm{X-c}_2^2} = \Exp{\norm{X - \Exp{X}}_2^2} + \norm{\Exp{X}-c}_2^2.
    \end{align*}
\end{lemma}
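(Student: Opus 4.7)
The plan is to carry out the standard \emph{add-and-subtract the mean} trick. Specifically, I would write
\begin{equation*}
    X - c = (X - \Exp{X}) + (\Exp{X} - c),
\end{equation*}
and then expand $\norm{X - c}_2^2$ using the bilinearity of the inner product that induces $\norm{\cdot}_2$. This produces three terms: the squared norm of the centered random vector, the squared norm of the deterministic displacement $\Exp{X} - c$, and a cross term $2\inp{X - \Exp{X}}{\Exp{X} - c}$.

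Next, I would take expectations on both sides. The first term becomes $\Exp{\norm{X - \Exp{X}}_2^2}$ directly, and the third term $\norm{\Exp{X} - c}_2^2$ is unchanged since it is non-random. The only step requiring a small argument is the cross term: because $\Exp{X} - c$ is deterministic, linearity of expectation lets me pull it outside the inner product, yielding $2 \inp{\Exp{X - \Exp{X}}}{\Exp{X} - c} = 2\inp{0}{\Exp{X} - c} = 0$.

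I do not expect any real obstacle here; the proof is two or three lines and relies only on linearity of expectation together with the fact that $\norm{\cdot}_2$ is induced by an inner product (so the parallelogram-style expansion is valid). No earlier results from the paper need to be invoked.
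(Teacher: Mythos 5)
Your argument is correct and is the standard proof of the variance (or bias--variance) decomposition: expand $\|X-c\|_2^2 = \|(X-\Exp{X}) + (\Exp{X}-c)\|_2^2$ via the inner product, take expectations, and observe that the cross term $2\inp{\Exp{X-\Exp{X}}}{\Exp{X}-c}$ vanishes. The paper states this lemma in its ``Useful Facts'' section without supplying a proof, so there is no alternative argument to compare against; your two-to-three-line derivation is exactly what a reader would fill in.
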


\begin{lemma}[\citet{riabinin2025gluon}, Lemma 3]\label{lemma:ineq}
    Suppose that $x_1, \ldots, x_p, y_1, \ldots, y_p \in \R$, $\max_{i\in[b]} |x_i| > 0$ and $z_1, \ldots, z_p > 0$. Then
    \begin{align*}
        \sum_{i=1}^b \frac{y_i^2}{z_i} \geq \frac{\parens{\sum_{i=1}^b x_i y_i}^2}{\sum_{i=1}^b z_i x_i^2}.
    \end{align*}
\end{lemma}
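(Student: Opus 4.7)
The plan is to derive this as a direct instance of the Cauchy--Schwarz inequality in $\R^b$, applied to two cleverly chosen vectors. Recall that Cauchy--Schwarz gives $\bigl(\sum_{i=1}^b a_i b_i\bigr)^2 \le \bigl(\sum_{i=1}^b a_i^2\bigr)\bigl(\sum_{i=1}^b b_i^2\bigr)$ for any real sequences $\{a_i\}$, $\{b_i\}$. The goal is to pick $a_i, b_i$ so that $a_i b_i = x_i y_i$, $a_i^2 = y_i^2/z_i$, and $b_i^2 = z_i x_i^2$.

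The natural choice is
\begin{equation*}
    a_i \eqdef \frac{y_i}{\sqrt{z_i}}, \qquad b_i \eqdef x_i \sqrt{z_i},
\end{equation*}
which is well-defined because $z_i > 0$ by assumption. With this choice, $a_i b_i = x_i y_i$, $a_i^2 = y_i^2/z_i$, and $b_i^2 = x_i^2 z_i$, so Cauchy--Schwarz yields
\begin{equation*}
    \Bigl(\sum_{i=1}^b x_i y_i\Bigr)^2 \le \Bigl(\sum_{i=1}^b \frac{y_i^2}{z_i}\Bigr) \Bigl(\sum_{i=1}^b z_i x_i^2\Bigr).
\end{equation*}

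The assumption $\max_{i\in[b]} |x_i| > 0$ combined with $z_i > 0$ for all $i$ guarantees that $\sum_{i=1}^b z_i x_i^2 > 0$, so division by this quantity is legitimate. Dividing both sides by $\sum_{i=1}^b z_i x_i^2$ produces the claimed inequality. There is no real obstacle here; the only thing to check is the strict positivity of the denominator used to rearrange, which is exactly what the hypothesis on $\max_{i} |x_i|$ ensures.
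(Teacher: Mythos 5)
Your proof is correct and is the canonical way to establish this weighted Engel-form (Titu-type) inequality via Cauchy--Schwarz with the split $a_i = y_i/\sqrt{z_i}$, $b_i = x_i\sqrt{z_i}$. The paper does not supply its own proof of this lemma---it is cited directly from \citet{riabinin2025gluon}, Lemma~3---so there is nothing to compare against within this paper, but your argument is exactly the standard one, and your remark that $\max_i |x_i| > 0$ together with $z_i > 0$ guarantees $\sum_i z_i x_i^2 > 0$ (so the division is legitimate) correctly identifies the only point that needs checking.
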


\begin{lemma}\label{lemma:arb_layer_gen_smooth}
    Let \Cref{as:arbitrary_layer_gen_smoothness2} hold and let $S\subseteq[b]$. Then, for any vectors $X = [X_1, \ldots, X_b]\in \cX$ and $\Gamma = [\Gamma_1, \ldots, \Gamma_b] \in \cX$ such that $\Gamma_i = 0$ for all $i\not\in S$,
    \begin{eqnarray*}
		\left|f(X + \Gamma) - f(X) - \inp{\nabla f(X)}{\Gamma}\right|
		\leq \sum_{i\in S} \frac{L_{i,S}^0 + L_{i,S}^1 \norm{\nabla_i f(X)}_{(i) \star}}{2}\norm{\Gamma_i}_{(i)}^2.
	\end{eqnarray*}
\end{lemma}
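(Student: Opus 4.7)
The plan is to follow the standard descent-lemma template used to derive a quadratic upper bound from a Lipschitz-like gradient condition, adapted to the block-wise, generalized-smooth setting of \Cref{as:arbitrary_layer_gen_smoothness2}. Concretely, I would start from the fundamental theorem of calculus applied to $\phi(t) \eqdef f(X + t\Gamma)$ on $[0,1]$, which yields
\begin{eqnarray*}
f(X+\Gamma) - f(X) - \inp{\nabla f(X)}{\Gamma}
= \int_0^1 \inp{\nabla f(X + t\Gamma) - \nabla f(X)}{\Gamma}\,dt.
\end{eqnarray*}
Taking absolute values and pulling them inside the integral gives an immediate entry point for bounding the integrand.

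Next, I would exploit the block structure: since $\Gamma_i = 0$ for all $i \notin S$, the global inner product collapses to a sum over $S$ only,
\begin{eqnarray*}
\inp{\nabla f(X + t\Gamma) - \nabla f(X)}{\Gamma}
= \sum_{i \in S} \inp{\nabla_i f(X + t\Gamma) - \nabla_i f(X)}{\Gamma_i}_{(i)}.
\end{eqnarray*}
Applying the standard duality bound $|\inp{A}{B}_{(i)}| \leq \norm{A}_{(i)\star}\norm{B}_{(i)}$ term by term reduces the task to controlling the dual norm of each gradient difference.

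Here is where \Cref{as:arbitrary_layer_gen_smoothness2} enters. The perturbation $t\Gamma$ still has support in $S$ (scaling by $t \in [0,1]$ preserves the zero entries), so the assumption is directly applicable and gives
\begin{eqnarray*}
\norm{\nabla_i f(X + t\Gamma) - \nabla_i f(X)}_{(i)\star}
\leq \bigl(L_{i,S}^0 + L_{i,S}^1 \norm{\nabla_i f(X)}_{(i)\star}\bigr)\, t\, \norm{\Gamma_i}_{(i)}.
\end{eqnarray*}
Plugging this back, pulling the $t$-independent factors out, and evaluating $\int_0^1 t\,dt = 1/2$ yields exactly the claimed bound.

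I do not anticipate a serious obstacle: the argument is essentially the textbook integration trick, and the only subtlety is the mild bookkeeping needed to verify that (i) the inner product really restricts to indices in $S$ because of the zero entries of $\Gamma$, and (ii) the perturbation $t\Gamma$ continues to satisfy the support condition required to invoke \Cref{as:arbitrary_layer_gen_smoothness2} along the whole segment. Both are immediate. The only minor cosmetic choice is whether to bound $|\int \cdots|$ via $\int|\cdots|$ up front (to handle the absolute value in the conclusion) or to derive the inequality without absolute values first and then argue symmetrically; I would choose the former since it gives both sides in one shot.
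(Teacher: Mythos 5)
Your proposal is correct and follows essentially the same argument as the paper: the fundamental theorem of calculus applied along the segment, block-wise decomposition using $\Gamma_i = 0$ for $i \notin S$, Cauchy--Schwarz per block, the Lipschitz-type bound from \Cref{as:arbitrary_layer_gen_smoothness2} applied to the perturbation $\tau\Gamma$ (whose support is still in $S$), and integrating $\int_0^1 \tau\,d\tau = 1/2$. No substantive differences.
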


\begin{proof}
	For all $X \in \cX$ we have
	\begin{align*}
	   f(X + \Gamma) &= f(X) + \int_0^1 \inp{\nabla f(X + \tau \Gamma)}{\Gamma} d \tau \\
    	&= f(X) + \int_0^1 \inp{\nabla f(X + \tau \Gamma) - \nabla f(X)}{\Gamma} d \tau + \inp{\nabla f(X)}{\Gamma}.
	\end{align*}
	Therefore, using the Cauchy-Schwarz inequality
	\begin{eqnarray*}
		\left|f(X + \Gamma) - f(X) - \inp{\nabla f(X)}{\Gamma}\right|
		&=&\left|\int_0^1 \sum_{i=1}^b \inp{\nabla_i f(X + \tau \Gamma)-\nabla_i f(X)}{\Gamma_i}_{(i)} d \tau \right| \\
		&=&\left|\int_0^1 \sum_{i\in S} \inp{\nabla_i f(X + \tau \Gamma)-\nabla_i f(X)}{\Gamma_i}_{(i)} d \tau \right| \\
		&\leq&\int_0^1 \sum_{i\in S} \left| \inp{\nabla_i f(X + \tau \Gamma)-\nabla_i f(X)}{\Gamma_i}_{(i)} \right| d \tau \\ 
		&\leq& \int_0^1 \sum_{i\in S} \norm{\nabla_i f(X + \tau \Gamma)-\nabla_i f(X)}_{(i) \star} \norm{\Gamma_i}_{(i)} d \tau \\
		&\overset{\eqref{as:arbitrary_layer_gen_smoothness2}}{\leq}& \int_0^1 \sum_{i\in S} \tau \parens{L_{i,S}^0 + L_{i,S}^1 \norm{\nabla_i f(X)}_{(i) \star}} \norm{\Gamma_i}_{(i)}^2 d \tau\\
		&=& \sum_{i\in S} \frac{L_{i,S}^0 + L_{i,S}^1 \norm{\nabla_i f(X)}_{(i) \star}}{2}\norm{\Gamma_i}_{(i)}^2.
	\end{eqnarray*}
\end{proof}

\begin{lemma}\label{lemma:layer_gen_smooth}
    Let Assumptions \ref{as:lower_bound} and \Cref{as:arbitrary_layer_gen_smoothness} hold and let $S\subseteq[b]$. Then
    \begin{eqnarray*}
        \sum_{i\in S} \frac{\norm{\nabla_i f(X)}^2_{(i) \star}}{2 \parens{L_{i,S}^0 + L_{i,S}^1 \norm{\nabla_i f(X)}_{(i) \star}}} \leq f(X) - f^{\star}
    \end{eqnarray*}
    for all $X \in \cX$.
\end{lemma}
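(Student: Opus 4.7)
The plan is to produce this bound by performing a single ``virtual'' descent step from $X$ along layer-wise sharp directions restricted to $S$ and then invoking the global lower bound $f^\star$ on the left-hand side. Concretely, I would fix an arbitrary $X\in\cX$, set $\gamma_i \eqdef \big(L_{i,S}^0 + L_{i,S}^1\|\nabla_i f(X)\|_{(i)\star}\big)^{-1}$ for $i\in S$, and define $\Gamma = [\Gamma_1,\ldots,\Gamma_b]\in\cX$ by $\Gamma_i = -\gamma_i (\nabla_i f(X))^{\sharp}$ for $i\in S$ and $\Gamma_i = 0$ otherwise. This $\Gamma$ has the support structure required by \Cref{as:arbitrary_layer_gen_smoothness}.

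Next, I would apply \Cref{as:arbitrary_layer_gen_smoothness} to this $\Gamma$ to get
\begin{align*}
f(X+\Gamma) - f(X) \leq \sum_{i\in S}\Big(\inp{\nabla_i f(X)}{\Gamma_i}_{(i)} + \tfrac{L_{i,S}^0 + L_{i,S}^1\|\nabla_i f(X)\|_{(i)\star}}{2}\|\Gamma_i\|_{(i)}^2\Big).
\end{align*}
Plugging in the definition of $\Gamma_i$ and invoking the sharp-operator identities \eqref{eq:inpsharp} and \eqref{eq:normsharp}, namely $\inp{\nabla_i f(X)}{(\nabla_i f(X))^{\sharp}}_{(i)} = \|\nabla_i f(X)\|_{(i)\star}^2$ and $\|(\nabla_i f(X))^{\sharp}\|_{(i)} = \|\nabla_i f(X)\|_{(i)\star}$, the summand for each $i\in S$ reduces to a scalar quadratic in $\gamma_i$ whose minimum (attained at our choice of $\gamma_i$) equals $-\tfrac{\|\nabla_i f(X)\|_{(i)\star}^2}{2(L_{i,S}^0 + L_{i,S}^1\|\nabla_i f(X)\|_{(i)\star})}$.

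Combining these and using $f(X+\Gamma)\geq f^\star$ from \Cref{as:lower_bound}, I rearrange to obtain the claimed inequality. This argument is essentially a one-step, unweighted, deterministic specialization of the opening calculation in the proof of \Cref{thm:rt_l0l1_iter}. I do not anticipate a real obstacle: the only points needing slight care are (i) ensuring the denominators are strictly positive, which holds whenever $\|\nabla_i f(X)\|_{(i)\star}\neq 0$ (the inequality being trivially satisfied with the corresponding summand zero otherwise, with the convention $0/0 \eqdef 0$), and (ii) noting that $(\nabla_i f(X))^\sharp$ may be set-valued when $\nabla_i f(X)=0$, but any selection yields $\Gamma_i = 0$ and the argument goes through unchanged.
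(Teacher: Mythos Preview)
Your proposal is correct and follows essentially the same approach as the paper: both construct a virtual descent step supported on $S$, apply the quadratic upper bound from \Cref{as:arbitrary_layer_gen_smoothness}, minimize over the stepsize, and invoke $f\geq f^\star$. The only cosmetic difference is that the paper moves along $\|\nabla_i f(X)\|_{(i)\star} H_i$ with $H_i\in\partial\|\cdot\|_{(i)\star}(\nabla_i f(X))$ (using \eqref{eq:subdiff}) while you use $(\nabla_i f(X))^\sharp$ (using \eqref{eq:inpsharp}--\eqref{eq:normsharp}); these directions share the same norm and inner-product identities, so the computations are identical.
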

\begin{proof}
    Let $Y = [Y_1, \ldots, Y_b] \in \cX$, where $Y_i = X_i - \frac{\norm{\nabla_i f(X)}_{(i) \star}}{L_{i,S}^0 + L_{i,S}^1 \norm{\nabla_i f(X)}_{(i) \star}} H_i$ for some $H_i \in \partial \norm{\cdot}_{(i) \star} (\nabla_i f(X))$ for $i\in S$ and $Y_i=X_i$ otherwise. By \Cref{lemma:arb_layer_gen_smooth}
    \begin{eqnarray*}
        f(Y) &\leq& f(X) + \inp{\nabla f(X)}{Y-X} + \sum_{i\in S} \frac{L^0_{i,S} + L_{i,S}^1 \norm{\nabla_i f(X)}_{(i) \star}}{2} \norm{X_i - Y_i}_{(i)}^2 \\
        &=& f(X) + \sum_{i\in S} \inp{\nabla_i f(X)}{Y_i - X_i}_{(i)} + \sum_{i\in S} \frac{L^0_{i,S} + L_{i,S}^1 \norm{\nabla_i f(X)}_{(i) \star}}{2} \norm{X_i - Y_i}_{(i)}^2 \\
        &=& f(X) - \sum_{i\in S} \frac{\norm{\nabla_i f(X)}_{(i) \star}}{L_{i,S}^0 + L_{i,S}^1 \norm{\nabla_i f(X)}_{(i) \star}} \inp{\nabla_i f(X)}{H_i}_{(i)} \\
        &&+ \sum_{i\in S} \parens{\frac{L^0_{i,S} + L_{i,S}^1 \norm{\nabla_i f(X)}_{(i) \star}}{2} \frac{\norm{\nabla_i f(X)}^2_{(i) \star}}{\parens{L_{i,S}^0 + L_{i,S}^1 \norm{\nabla_i f(X)}_{(i) \star}}^2} \norm{H_i}_{(i)}^2} \\
        &\overset{\eqref{eq:subdiff}}{=}& f(X) + \sum_{i\in S} \parens{- \frac{\norm{\nabla_i f(X)}^2_{(i) \star}}{L_{i,S}^0 + L_{i,S}^1 \norm{\nabla_i f(X)}_{(i) \star}} + \frac{\norm{\nabla_i f(X)}^2_{(i) \star}}{2 \parens{L_{i,S}^0 + L_{i,S}^1 \norm{\nabla_i f(X)}_{(i) \star}}}} \\
        &=& f(X) - \sum_{i\in S} \frac{\norm{\nabla_i f(X)}^2_{(i) \star}}{2 \parens{L_{i,S}^0 + L_{i,S}^1 \norm{\nabla_i f(X)}_{(i) \star}}},
    \end{eqnarray*}
    and hence
    \begin{eqnarray*}
        \sum_{i\in S} \frac{\norm{\nabla_i f(X)}^2_{(i) \star}}{2 \parens{L_{i,S}^0 + L_{i,S}^1 \norm{\nabla_i f(X)}_{(i) \star}}} \leq f(X) - f(Y) \leq f(X) - f^{\star}.
    \end{eqnarray*}
\end{proof}

\begin{lemma}\label{lemma:layer_gen_smooth3}
    Let Assumptions \ref{as:lower_bound} and \Cref{as:arbitrary_layer_gen_smoothness} hold and let $S\subseteq[b]$. Then, for any $x_i >0$, $i\in[b]$, we have
    \begin{eqnarray*}
        \sum_{i\in S} x_i \norm{\nabla_i f(X)}_{(i) \star}
        \leq 4 \max_{i\in S} (x_i L_{i,S}^1) \parens{f(X) - f^{\star}} + \sum_{i\in S} \frac{x_i L_{i,S}^0}{L_{i,S}^1}
    \end{eqnarray*}
    for all $X \in \cX$.
\end{lemma}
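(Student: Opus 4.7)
The plan is to invoke \Cref{lemma:layer_gen_smooth}, which already gives
$$
\sum_{i\in S} \frac{g_i^2}{2(L^0_{i,S} + L^1_{i,S} g_i)} \leq f(X) - f^{\star},
$$
where I write $g_i \eqdef \norm{\nabla_i f(X)}_{(i)\star}$ for brevity. The right-hand side of the target inequality has two pieces -- one proportional to $f(X)-f^\star$ and one that is gradient-free of the form $\sum x_i L^0_{i,S}/L^1_{i,S}$. This strongly suggests a dichotomy argument on each index $i\in S$, comparing the two terms $L^0_{i,S}$ and $L^1_{i,S} g_i$ in the denominator above.

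Concretely, I would partition $S$ into $S_1 \eqdef \{i\in S : L^1_{i,S} g_i \leq L^0_{i,S}\}$ and $S_2 \eqdef S \setminus S_1$. On $S_1$ the bound $g_i \leq L^0_{i,S}/L^1_{i,S}$ is immediate, so
$$
\sum_{i\in S_1} x_i g_i \leq \sum_{i\in S_1} \frac{x_i L^0_{i,S}}{L^1_{i,S}} \leq \sum_{i\in S} \frac{x_i L^0_{i,S}}{L^1_{i,S}},
$$
which is exactly the gradient-free term. On $S_2$ we have $L^0_{i,S} + L^1_{i,S} g_i < 2 L^1_{i,S} g_i$, hence
$$
g_i < \frac{2 L^1_{i,S} g_i^2}{L^0_{i,S} + L^1_{i,S} g_i} = 4 L^1_{i,S} \cdot \frac{g_i^2}{2(L^0_{i,S}+L^1_{i,S}g_i)}.
$$
Multiplying by $x_i$ and pulling out the worst case,
$$
x_i g_i \leq 4 \max_{j\in S}(x_j L^1_{j,S}) \cdot \frac{g_i^2}{2(L^0_{i,S} + L^1_{i,S} g_i)},
$$
so summing over $i\in S_2$ and extending the sum to all of $S$ (the summand is non-negative) gives
$$
\sum_{i\in S_2} x_i g_i \leq 4 \max_{j\in S}(x_j L^1_{j,S}) \sum_{i\in S} \frac{g_i^2}{2(L^0_{i,S}+L^1_{i,S}g_i)} \leq 4 \max_{j\in S}(x_j L^1_{j,S}) (f(X)-f^\star)
$$
by \Cref{lemma:layer_gen_smooth}. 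Adding the two partial sums yields the claim.

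There is no real obstacle here beyond correctly arranging the factor of $2$ in the Case 2 estimate so that the constant in the final inequality comes out as $4$; the rest is a routine case split exploiting the structure of the $(L^0,L^1)$ denominator already present in \Cref{lemma:layer_gen_smooth}. Degenerate edge cases (e.g.\ $L^1_{i,S}=0$) are implicitly excluded by the statement, since $L^1_{i,S}$ appears in a denominator on the right-hand side.
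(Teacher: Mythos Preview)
Your proof is correct, but it follows a different route from the paper's. The paper first applies the Cauchy--Schwarz--type inequality of \Cref{lemma:ineq} to the entire sum from \Cref{lemma:layer_gen_smooth}, obtaining a single ratio $\frac{(\sum_{i\in S} x_i g_i)^2}{\sum_{i\in S} x_i^2 L^0_{i,S} + \max_{i\in S}(x_i L^1_{i,S})\sum_{i\in S} x_i g_i}$, and then performs a \emph{global} case split on which of the two denominator terms dominates; the final constant $4$ and the term $\sum_{i\in S} x_i L^0_{i,S}/L^1_{i,S}$ fall out after bounding $\frac{\sum x_i^2 L^0_{i,S}}{\max(x_i L^1_{i,S})}$ term by term. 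Your argument instead performs the dichotomy \emph{per index}, splitting $S$ according to whether $L^1_{i,S} g_i$ or $L^0_{i,S}$ is larger, and then sums. This is more elementary---it bypasses \Cref{lemma:ineq} entirely---and arguably cleaner, since each index lands directly in the ``right'' bucket rather than being processed through a global quadratic inequality. Both routes yield the same constant~$4$, and neither dominates the other in generality; the paper's approach is perhaps more in the spirit of its other proofs (which lean on \Cref{lemma:ineq} repeatedly), while yours is self-contained.
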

\begin{proof}
    Applying \Cref{lemma:layer_gen_smooth} and \Cref{lemma:ineq} with $y_i = \norm{\nabla_i f(X)}_{(i) \star}$, $z_i = L_{i,S}^0 + L_{i,S}^1 \norm{\nabla_i f(X)}_{(i) \star}$ and any $x_i>0$, we have
    \begin{eqnarray*}
        2 \parens{f(X) - f^{\star}}
        &\geq& \sum_{i\in S} \frac{\norm{\nabla_i f(X)}^2_{(i) \star}}{L_{i,S}^0 + L_{i,S}^1 \norm{\nabla_i f(X)}_{(i) \star}} \\
        &\geq& \frac{\parens{\sum_{i\in S} x_i \norm{\nabla_i f(X)}_{(i) \star}}^2}{\sum_{i\in S} x_i^2 L_{i,S}^0 + \sum_{i\in S} x_i^2 L_{i,S}^1 \norm{\nabla_i f(X)}_{(i) \star}} \\
        &\geq& \frac{\parens{\sum_{i\in S} x_i \norm{\nabla_i f(X)}_{(i) \star}}^2}{\sum_{i\in S} x_i^2 L_{i,S}^0 + \max_{i\in S} (x_i L_{i,S}^1) \sum_{i\in S} x_i \norm{\nabla_i f(X)}_{(i) \star}} \\
        &\geq& \begin{cases}
            \frac{\parens{\sum_{i\in S} x_i \norm{\nabla_i f(X)}_{(i) \star}}^2}{2 \sum_{i\in S} x_i^2 L_{i,S}^0} & \textnormal{if } \frac{\sum_{i\in S} x_i^2 L_{i,S}^0}{\max_{i\in S} (x_i L_{i,S}^1)} \geq \sum_{i\in S} x_i \norm{\nabla_i f(X)}_{(i) \star}, \\
            \frac{\sum_{i\in S} x_i \norm{\nabla_i f(X)}_{(i) \star}}{2 \max_{i\in S} (x_i L_{i,S}^1)} & \textnormal{otherwise}.
        \end{cases}
    \end{eqnarray*}
    Therefore,
    \begin{eqnarray*}
        \sum_{i\in S} x_i \norm{\nabla_i f(X)}_{(i) \star}
        &\leq& \max\brac{4 \max_{i\in S} (x_i L_{i,S}^1) \parens{f(X) - f^{\star}}, \frac{\sum_{i\in S} x_i^2 L_{i,S}^0}{\max_{i\in S} (x_i L_{i,S}^1)}} \\
        &\leq& 4 \max_{i\in S} (x_i L_{i,S}^1) \parens{f(X) - f^{\star}} + \frac{\sum_{i\in S} x_i^2 L_{i,S}^0}{\max_{i\in S} (x_i L_{i,S}^1)} \\
        &\leq& 4 \max_{i\in S} (x_i L_{i,S}^1) \parens{f(X) - f^{\star}} + \sum_{i\in S} \frac{x_i L_{i,S}^0}{L_{i,S}^1}.
    \end{eqnarray*}
\end{proof}

\end{document}